\newtheorem{theorem}{Theorem}[section]
\newtheorem{lemma}[theorem]{Lemma}
\newtheorem{remark}{Remark}[section]
\newenvironment{proof}{\paragraph{Proof:}}{\hfill$\square$}
\newtheorem{theo}{Theorem}[section]
\newtheorem{definition}{Definition}[section]
\newtheorem{proposition}{Proposition}[section]
\newcommand{\xmark}{\ding{55}}
\newenvironment{customthm}[1]
  {\innercustomthm}
  {\endinnercustomthm}
\newenvironment{customcor}[1]
  {\innercustomcor}
  {\endinnercustomcor}
\newcommand{\tph}{\tau}
\newcommand{\qedwhite}{\hfill \ensuremath{\Box}}
\newcommand\cut[1]{}
\newcommand{\norm}[1]{\left\lVert#1\right\rVert}
\newcommand{\squishlist}{
   \begin{list}{$\bullet$}
    { \setlength{\itemsep}{0pt}      \setlength{\parsep}{3pt}
      \setlength{\topsep}{3pt}       \setlength{\partopsep}{0pt}
      \setlength{\leftmargin}{1.5em} \setlength{\labelwidth}{1em}
      \setlength{\labelsep}{0.5em} } }
\newcommand{\squishlisttwo}{
   \begin{list}{$\bullet$}
    { \setlength{\itemsep}{0pt}    \setlength{\parsep}{0pt}
      \setlength{\topsep}{0pt}     \setlength{\partopsep}{0pt}
      \setlength{\leftmargin}{2em} \setlength{\labelwidth}{1.5em}
      \setlength{\labelsep}{0.5em} } }
\newcommand{\squishend}{
    \end{list}  }
\newtheorem{cor}{Corollary}[section]
\DeclareMathOperator{\Tr}{Tr}
\newcommand{\myvec}[1]{\mathbf{#1}}
\newcommand{\myvecsym}[1]{\boldsymbol{#1}}
\newcommand{\vphi}{\myvecsym{\phi}}
\newcommand{\vtheta}{\myvecsym{\theta}}
\newcommand{\vxi}{\myvecsym{\xi}}
\newcommand{\vx}{\myvec{x}}
\newcommand{\vz}{\myvec{z}}
\newcommand{\be}{\begin{equation}}
\newcommand{\ee}{\end{equation}}
\newcommand{\bea}{\begin{eqnarray}}
\newcommand{\eea}{\end{eqnarray}}
\newcommand{\beaa}{\begin{eqnarray*}}
\newcommand{\eeaa}{\end{eqnarray*}}
\DeclareMathAlphabet{\mathpzc}{OT1}{pzc}{m}{n}
\icmltitlerunning{Discretization Drift in Two-Player Games}
\begin{document}

\twocolumn[
\icmltitle{Discretization Drift in Two-Player Games}

\begin{icmlauthorlist}
\icmlauthor{Mihaela Rosca}{dm,ucl}
\icmlauthor{Yan Wu}{dm}
\icmlauthor{Benoit Dherin}{google}
\icmlauthor{David G.T. Barrett}{dm}
\end{icmlauthorlist}

\icmlaffiliation{dm}{DeepMind, London, UK}
\icmlaffiliation{ucl}{Center for Artificial Intelligence, University College London}
\icmlaffiliation{google}{Google, Dublin, Ireland}

\icmlcorrespondingauthor{Mihaela Rosca}{mihaelacr@deepmind.com}

\icmlkeywords{Discretization Drift, Optimization, GANs, ICML}

\vskip 0.3in
]

\printAffiliationsAndNotice{\icmlEqualContribution} %

\addtocontents{toc}{\protect\setcounter{tocdepth}{0}}

\begin{abstract}
Gradient-based methods for two-player games produce rich dynamics that can solve challenging problems, yet can be difficult to stabilize and understand. Part of this complexity originates from the discrete update steps given by simultaneous or alternating gradient descent, which causes each player to drift away from the continuous gradient flow
 ---  a phenomenon we call discretization drift. Using backward error analysis, we derive modified continuous dynamical systems that closely follow the discrete dynamics. These modified dynamics provide an insight into the notorious challenges associated with zero-sum games, including Generative Adversarial Networks. In particular, we identify distinct components of the discretization drift that can alter performance and in some cases destabilize the game. Finally, quantifying discretization drift allows us to identify regularizers that explicitly cancel harmful forms of drift or strengthen beneficial forms of drift, and thus improve performance of GAN training.
\end{abstract}

\section{Introduction}
\label{introduction}

The fusion of deep learning with two-player games has produced a wealth of breakthroughs in recent years, from Generative Adversarial Networks (GANs)~\cite{goodfellow2014generative} through to model-based reinforcement learning~\cite{sutton2018reinforcement,rajeswaran2020game}. Gradient descent methods are widely used across these settings, partly because these algorithms scale well to high-dimensional models and large datasets. However, much of the recent progress in our theoretical understanding of two-player differentiable games builds upon the analysis of continuous differential equations that model the dynamics of training~\cite{singh2000nash,heusel2017gans,nagarajan2017gradient}, leading to a gap between theory and practice.
Our aim is to take a step forward in our understanding of two player games by finding continuous systems which better match the gradient descent updates used in practice.

Our work builds upon~\citet{igr}, who use backward error analysis to quantify the discretization drift induced by using gradient descent in supervised learning. We extend their work and use backward error analysis to understand the impact of discretization in the training dynamics of two-player games.
More specifically, we quantify the \textit{Discretization Drift} (DD), the difference between the solutions of the original ODEs defining the game and the discrete steps of the numerical scheme used to approximate them. To do so, we construct modified continuous systems that closely follow the discrete updates.
While in supervised learning DD has a beneficial regularization effect~\citep{igr}, we find that the interaction between players in DD can have a destabilizing effect in adversarial games.

\textbf{Contributions}: Our primary contribution, Theorems \ref{thm:2_players_sim} and \ref{thm:alt}, provides the continuous modified systems which quantify the discretization drift in simultaneous and alternating gradient descent for general two-player differentiable games. Both theorems are novel in their scope and generality, as well as their application toward understanding the effect of the discretization drift in two-player games parametrized with neural networks. Theorems \ref{thm:2_players_sim} and \ref{thm:alt} allow us to use dynamical system analysis to describe GD without ignoring discretization drift, which we then use to:
\begin{itemize}
\item Provide new stability analysis tools (Section \ref{section:stability}). %
\item Motivate explicit regularization methods which drastically improve the performance of simultaneous gradient descent in GAN training (Section \ref{section:explicit_regularization}).
\item Pinpoint optimal regularization coefficients and shed new light on existing explicit regularizers (Table \ref{tab:methods_comp}).
\item Pinpoint the best performing learning rate ratios for alternating updates (Sections \ref{sec:common_payoff} and \ref{section:learning_rate_ratios}).
\item Explain previously observed but unexplained phenomena such as the difference in performance and stability between alternating and simultaneous updates in GAN training (Section~\ref{section:zero_sum}).
\end{itemize}

\begin{comment}
Our contributions are the following:
\begin{itemize}
  \item We quantify the discretization drift in simultaneous and alternating gradient descent for two-player differentiable games.
  \item We show that the discretization drift can change the equilibrium of the training dynamics, providing insight into the convergence properties of these games.
  \item We use the modified dynamics to understand the differences between simultaneous and alternating gradient descent in two-player games, explaining why alternating updates are more stable in zero-sum games.
  \item We use zero-sum GANs to show that adjusting the continuous system using explicit regularization suggested by our analysis can improve image generation performance on CIFAR10.%
  \footnote{All code will be open sourced upon acceptance.}.
\end{itemize}
\end{comment}

\section{Background}

{\bf Backward Error Analysis:} Backward error analysis was devised to study the long-term error incurred by following an ODE numerical solver instead of an exact ODE solution \cite{backward_lifespan,GNI}.
The general idea is to find a modified version of the original ODE that follows the steps of the numerical solver exactly.
Recently, ~\citet{igr} used this technique to uncover a form of DD, called \emph{implicit gradient regularization}, arising in supervised learning for models trained with gradient descent. They showed that for a model with parameters  $\vtheta$ and loss $L(\vtheta)$ optimized with gradient descent $\vtheta_t = \vtheta_{t-1} - h\nabla_{\vtheta} L(\vtheta)$, the first order modified equation is $\dot{\vtheta} =-\nabla_{\vtheta} \tilde L(\vtheta)$, with modified loss
\begin{align}
\tilde L(\vtheta) = L(\vtheta) + \frac h4 \norm{\nabla_{\vtheta} L(\vtheta)}^2
\label{eq:sup_learning_igr}
\end{align}
This shows that there is a hidden implicit regularization effect, dependent on learning rate $h$ that biases learning toward flat regions, where test errors are typically smaller~\citep{igr}.

{\bf Two-player games:} A well developed strategy for understanding two-player games in gradient-based learning is to analyze the continuous dynamics of the game~\cite{singh2000nash,heusel2017gans}. Tools from dynamical systems theory have been used to explain convergence behaviors and to improve training using modifications of learning objectives or learning rules~\citep{nagarajan2017gradient,balduzzi2018mechanics,mazumdar2019finding}. Many of the insights from continuous two-player games apply to games that are trained with discrete updates. However, discrepancies are often observed between the continuous analysis and discrete training~\citep{mescheder2018training}. In this work, we extend backward error analysis from the supervised learning setting -- a special case of a one-player game -- to the more general two-player game setting, thereby bridging this gap between the continuous systems that are often analyzed in theory and the discrete numerical methods used in practice.

\section{Discretization drift}
\label{sec:general_theorems}

Throughout the paper we will denote by $\vphi \in \mathbb{R}^m$ and $\vtheta \in \mathbb{R}^n$ the row vectors representing the parameters of the first and second player, respectively.
The players update functions will be denoted correspondingly by $f(\vphi, \vtheta): \mathbb{R}^m \times \mathbb{R}^n \rightarrow \mathbb{R}^m$ and by
$g(\vphi, \vtheta): \mathbb{R}^m \times \mathbb{R}^n \rightarrow \mathbb{R}^n$.
In this setting, the partial derivative $\nabla_{\vtheta} f(\vphi, \vtheta)$ is the $n\times m$ matrix
$\left(\partial_{\theta_i}f_j\right)_{ij}$ with $i = 1, \dots, n$ and  $j = 1, \dots, m$.

We aim to understand the impact of discretizing the ODEs
\begin{align}
 \dot{\vphi} &=  f( \vphi, \vtheta)   \label{eq:ode1}, \\
 \dot{\vtheta}  &= g( \vphi, \vtheta) \label{eq:ode2},
\end{align}
using either simultaneous  or alternating Euler updates. We derive a modified continuous system of the form:
\begin{align}
 \dot{\vphi} &=  f( \vphi, \vtheta)  + h f_1( \vphi, \vtheta) \\
 \dot{\vtheta}  &= g( \vphi, \vtheta) + h g_1( \vphi, \vtheta)
\end{align}
that closely follows the discrete Euler update steps; the local error between a discrete update and the modified system is of order $\mathcal O(h^3)$ instead of $\mathcal O(h^2)$ as is the case for the original continuous system given by Equations \eqref{eq:ode1} and \eqref{eq:ode2}.
If we can neglect errors of order $\mathcal O(h^3)$, the terms $f_1$ and $g_1$ above characterize the DD of the discrete scheme, which can be used to help us understand the impact of DD. For example, it allows us to compare the use of simultaneous and alternating Euler updates, by comparing the dynamics of their associated modified systems as characterized by the DD terms $f_1$ and $g_1$.

We can specialize these modified equations using $f = -\nabla_\phi L_1$ and $g = -\nabla_\theta L_2$, where $L_1$ and $L_2$ are the loss functions for the two players. We will use this setting later to investigate the modified dynamics of \textit{simultaneous} or \textit{alternating gradient descent}. We can further specialize the form of these updates for common-payoff games ($L_1 = L_2  = E$) and zero-sum games ($L_1 = -E$, $L_2 = E$).

\begin{figure}[t]
\begin{tikzpicture}[every text node part/.style={align=center,inner sep=0,outer sep=0}][overlay]
\coordinate (theta_t_minus_1) at (0,0);
\coordinate (theta_t) at (2.2,-1);
\coordinate (theta_t_plus_1) at (4.4, -0.5);

\node(draw) at ($(theta_t_minus_1) + (+0.3,-0.32)$) {$\theta_{t-1}$};
\node(draw) at ($(theta_t) + (-0.2,-0.2)$) {$\theta_{t}$};
\node(draw) at ($(theta_t_plus_1) + (-0.05,-0.35)$) {$\theta_{t+1}$};

\coordinate (first_time_transition) at ($(theta_t_minus_1) + (+0.55,1.6)$);
\coordinate (second_time_transition) at ($(first_time_transition) + (3,0)$);

\node(draw) at (first_time_transition) {$t -1 \rightarrow t $};
\node(draw) at (second_time_transition) {$t \rightarrow t + 1$};

\coordinate (cont_theta_t) at (2.2,0.8);
\coordinate (cont_theta_t_plus_1) at (4.4, 1.05);

\coordinate (mod_cont_theta_t) at (2.2, -0.6);
\coordinate (mod_cont_theta_t_plus_1) at (4.4, -0.2);

\draw [NavyBlue,thick,dashed] (theta_t_minus_1) -- (theta_t);
\draw [NavyBlue,thick,dashed] (theta_t) -- (theta_t_plus_1);

\draw [OliveGreen,thick]  (theta_t_minus_1) to[out=50,in=180] node[midway,above] {$\dot{\theta}$} (cont_theta_t);
\draw [OliveGreen,thick]  (theta_t) to[out=50,in=180] node[midway,above]{$\dot{\theta}$} (cont_theta_t_plus_1);

\draw [Plum,thick]  (theta_t_minus_1) to[out=50,in=180]  node[near end,above] {$\dot{\tilde{\theta}}$} (mod_cont_theta_t);
\draw [Plum,thick]  (theta_t) to[out=30,in=170] node[near end,above] {$\dot{\tilde{\theta}}$} (mod_cont_theta_t_plus_1);

\draw [black,thick,dashed] ($(theta_t) + (0,-0.5)$) -- ($(cont_theta_t) + (0,0.7)$);

\draw [
    thick,
    decoration={
        brace,
        mirror,
        raise=0.1cm
    },
    decorate
] (theta_t_plus_1) -- (mod_cont_theta_t_plus_1)
node [pos=0.5,anchor=west,xshift=0.15cm,yshift=-0.2cm] {\scriptsize Modified ODE error \\ \scriptsize $\mathcal{O}(h^3)$};

\draw [
    thick,
    decoration={
        brace,
        mirror,
        raise=2.3cm
    },
    decorate
] (theta_t_plus_1) -- (cont_theta_t_plus_1)
node [pos=0.5,anchor=west,xshift=2.35cm,yshift=-0.2cm] {\scriptsize ODE error \\ \scriptsize $\mathcal{O}(h^2)$};
\vspace{-2em}
\end{tikzpicture}
 \caption{Visualization of our approach, given by backward error analysis. For each player (we show only $\vtheta$ for simplicity), we find the modified ODE $\dot{\tilde{\vtheta}}$ which captures the change in parameters introduced by the discrete updates with an error of $\mathcal{O}(h^3)$. The modified ODE follows the discrete update more closely than the original ODE $\dot{\vtheta}$, which has an error of $\mathcal{O}(h^2)$.}
  \label{fig:idd_graphic}
\end{figure}
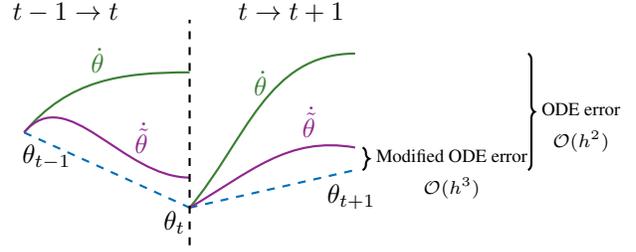

\subsection{DD for simultaneous Euler updates}

{The \textit{simultaneous Euler updates} with learning rates $\alpha h$ and $\lambda h$ respectively are given by
\begin{align}
\vphi_t &= \vphi_{t-1} + \alpha h f(\vtheta_{t-1}, \vphi_{t-1} ) \label{eq:simup1}\\
\vtheta_{t} &= \vtheta_{t-1} + \lambda h g(\vtheta_{t-1}, \vphi_{t-1}) \label{eq:simup2}
\end{align}
\begin{theo} \label{thm:2_players_sim} The discrete \emph{simultaneous} Euler updates
in \eqref{eq:simup1} and \eqref{eq:simup2} follow the continuous system
\label{thm:sim}
\begin{align*}
\dot{\vphi} &= f - \frac{\alpha h}{2} \left(f \nabla_{\vphi} f + g \nabla_{\vtheta} f \right) \\
\dot{\vtheta} &= g - \frac{\lambda h}{2} \left(f \nabla_{\vphi} g + g \nabla_{\vtheta} g \right)
\end{align*}
with an error of size $\mathcal O(h^3)$  after one update step.
\end{theo}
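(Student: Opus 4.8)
The plan is to apply backward error analysis in the same spirit as the one-player result \eqref{eq:sup_learning_igr}, but now to the coupled system. I posit modified fields $\tilde f = f + h f_1 + \mathcal O(h^2)$ and $\tilde g = g + h g_1 + \mathcal O(h^2)$, so that the modified system $\dot{\vphi} = \tilde f$, $\dot{\vtheta} = \tilde g$ is a small perturbation of \eqref{eq:ode1}--\eqref{eq:ode2}, and I determine $f_1, g_1$ by forcing the exact flow of this modified system to reproduce one discrete step of \eqref{eq:simup1}--\eqref{eq:simup2} up to $\mathcal O(h^3)$. The matching is carried out order by order in $h$: the $\mathcal O(h)$ terms agree automatically because the modified fields reduce to $f$ and $g$ at leading order, so all the content lives in the $\mathcal O(h^2)$ coefficient.

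For the first player, I Taylor-expand the flow of the \emph{full} modified system started at $(\vphi_{t-1},\vtheta_{t-1})$ and evaluate it at the effective integration time $\alpha h$ dictated by that player's learning rate (and at $\lambda h$ for the second player). The first derivative contributes $\alpha h\,(f + h f_1)$; the second derivative is obtained by the chain rule, and the key point is that \emph{both} players move simultaneously, so $\ddot{\vphi} = \dot{\vphi}\,\nabla_{\vphi} f + \dot{\vtheta}\,\nabla_{\vtheta} f$ with velocities $\dot{\vphi} = f$, $\dot{\vtheta} = g$ to leading order (the $h f_1$, $h g_1$ corrections only affect $\mathcal O(h^3)$ terms). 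This yields
\[
\vphi(\alpha h) = \vphi_{t-1} + \alpha h\,(f + h f_1) + \tfrac{(\alpha h)^2}{2}\big(f\nabla_{\vphi} f + g\nabla_{\vtheta} f\big) + \mathcal O(h^3),
\]
and since the discrete update is simply $\vphi_{t-1} + \alpha h f$, cancelling the $\mathcal O(h^2)$ remainder forces $f_1 = -\tfrac{\alpha}{2}\big(f\nabla_{\vphi} f + g\nabla_{\vtheta} f\big)$. The identical computation for $\vtheta$ at time $\lambda h$ gives $g_1 = -\tfrac{\lambda}{2}\big(f\nabla_{\vphi} g + g\nabla_{\vtheta} g\big)$, which are exactly the drift terms in the statement.

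The main thing to get right is the cross term: the coupling $g\nabla_{\vtheta} f$ (and symmetrically $f\nabla_{\vphi} g$) arises precisely because in simultaneous updates each player's second-order drift feels the other player's motion, evaluated at the \emph{shared} base point $(\vphi_{t-1},\vtheta_{t-1})$; this is the feature that distinguishes the two-player setting from the single-player case, and it must be tracked carefully through the row-vector/matrix conventions of the paper, where $f\nabla_{\vphi} f$ and $g\nabla_{\vtheta} f$ are $1\times m$ directional derivatives. The remaining bookkeeping is routine: collecting the discarded Taylor tail together with the $h$-corrections confirms that the local error is $\mathcal O(h^3)$, one order better than the $\mathcal O(h^2)$ error of the unmodified ODEs \eqref{eq:ode1}--\eqref{eq:ode2}. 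I expect the only genuine subtlety, rather than an obstacle, to be that the two players are integrated over different effective times $\alpha h$ and $\lambda h$; this is what makes both drift prefactors carry the \emph{own} learning rate of each player, and it is exactly where the alternating-update analysis (Theorem~\ref{thm:alt}) will later diverge, since there the second player would instead be evaluated at the already-updated $\vphi_t$.
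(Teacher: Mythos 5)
Your proposal is correct and follows essentially the same route as the paper's proof: posit modified vector fields, Taylor-expand the flow of the coupled modified system, match against the discrete update at the effective times $\alpha h$ and $\lambda h$, and solve for the first-order corrections (your $f_1$ simply absorbs the factor $\alpha$ that the paper keeps in its expansion parameter $\tau_\phi = \alpha h$). The cross terms $g\nabla_{\vtheta} f$ and $f\nabla_{\vphi} g$, and the fact that each drift prefactor carries the player's own learning rate, come out exactly as in the paper's Lemma A.1 and Step 3 matching.
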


\textit{Remark}: A special case of Theorem~\ref{thm:2_players_sim} for zero-sum games with equal learning rates can be found in~\citet{lu2021resolution}.

\subsection{DD for alternating Euler updates}

For \textit{alternating Euler updates}, the players take turns to update their parameters, and can perform multiple updates each. We denote the number of alternating updates of the first  player (resp. second player) by $m$ (resp. $k$).
We scale the learning rates by the number of updates, leading to the following updates $ \vphi_{t} := \vphi_{m,t} $ and $\vtheta_{t} := \vtheta_{k, t}$ where
\begin{align}
 \vphi_{i, t} &=  \vphi_{i-1, t} + \frac{\alpha h}{m}   f(\vphi_{i-1,t}, \vtheta_{t-1}) , \hspace{1em} i = 1 \dots m, \label{eq:altup1} \\
 \vtheta_{j, t} &=  \vtheta_{j-1, t} + \frac{\lambda h}{k} g(\vphi_{m, t}, \vtheta_{j-1, t}), \hspace{1em} j = 1 \dots k. \label{eq:altup2}
\end{align}
\begin{theo} The discrete \emph{alternating} Euler updates in \eqref{eq:altup1} and \eqref{eq:altup2} follow the continuous system
\label{thm:alt}
\begin{align*}
\dot{\vphi} &= f - \frac{\alpha h}{2} \left(\frac{1}{m}f \nabla_{\vphi} f + g \nabla_{\vtheta} f \right) \\
\dot{\vtheta} &= g - \frac{\lambda h}{2} \left((1- \frac {2\alpha} {\lambda})f \nabla_{\vphi} g + \frac{1}{k} g \nabla_{\vtheta} g \right)
\end{align*}
with an error of size $\mathcal O(h^3)$ after one update step.
\end{theo}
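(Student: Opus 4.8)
The plan is to run backward error analysis one turn at a time: expand each player's discrete update to second order in $h$, then match these expansions against the Taylor expansion of the flow of the proposed modified system, integrated for each player over its effective step ($\alpha h$ for the first player, $\lambda h$ for the second, exactly as in the proof of Theorem~\ref{thm:2_players_sim}). The two structural differences from the simultaneous case are that each player now takes several inner steps, and that the second player updates at the \emph{already updated} first-player parameters $\vphi_{m,t}$; these are precisely the two effects that must be tracked, and they account respectively for the $\tfrac1m,\tfrac1k$ self-interaction factors and for the modified cross coefficient $1-\tfrac{2\alpha}{\lambda}$.

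First I would expand the first player's update. During its turn $\vtheta$ is frozen at $\vtheta_{t-1}$, so \eqref{eq:altup1} is simply $m$ Euler steps of size $\alpha h/m$ for the autonomous field $\vphi \mapsto f(\vphi,\vtheta_{t-1})$. Writing $f_0 = f(\vphi_{t-1},\vtheta_{t-1})$ and expanding each inner iterate as $\vphi_{i,t} = \vphi_{t-1} + i\,\tfrac{\alpha h}{m} f_0 + \mathcal{O}(h^2)$, the telescoped sum produces the arithmetic factor $\sum_{i=0}^{m-1} i = m(m-1)/2$, giving
\begin{align*}
\vphi_{t} = \vphi_{t-1} + \alpha h f_0 + \tfrac{\alpha^2 h^2}{2}\big(1-\tfrac1m\big) f_0 \nabla_{\vphi} f_0 + \mathcal{O}(h^3).
\end{align*}
There is no $g\nabla_{\vphi}f$ term because $\vtheta$ does not move during this turn; the factor $1-\tfrac1m$ is the only trace of the inner steps.

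Next I would expand the second player's update. The identical inner-step argument applied to \eqref{eq:altup2}, holding $\vphi=\vphi_{m,t}$ fixed, yields $\vtheta_{t} = \vtheta_{t-1} + \lambda h\,\bar g + \tfrac{\lambda^2 h^2}{2}(1-\tfrac1k)\bar g\nabla_{\vtheta}\bar g + \mathcal{O}(h^3)$ with $\bar g = g(\vphi_{m,t},\vtheta_{t-1})$. The essential step is then to reintroduce the coupling: substituting $\vphi_{m,t} = \vphi_{t-1} + \alpha h f_0 + \mathcal{O}(h^2)$ and Taylor expanding gives $\bar g = g_0 + \alpha h\, f_0\nabla_{\vphi} g_0 + \mathcal{O}(h^2)$, so that
\begin{align*}
\vtheta_{t} = \vtheta_{t-1} + \lambda h g_0 + \lambda\alpha h^2 f_0\nabla_{\vphi} g_0 + \tfrac{\lambda^2 h^2}{2}\big(1-\tfrac1k\big) g_0\nabla_{\vtheta} g_0 + \mathcal{O}(h^3).
\end{align*}
Here one checks that the $\mathcal{O}(h^2)$ error in $\vphi_{m,t}$ feeds into $\vtheta_t$ only at order $h^3$, so it is safely dropped, and that $\bar g\nabla_{\vtheta}\bar g = g_0\nabla_{\vtheta}g_0 + \mathcal{O}(h)$ suffices inside the $h^2$ term.

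Finally I would match. Expanding the flow of $\dot{\vphi} = f + h f_1$ over time $\alpha h$ and of $\dot{\vtheta} = g + h g_1$ over time $\lambda h$ to order $h^2$ (using the leading coupled dynamics $\dot{\vphi}=f,\ \dot{\vtheta}=g$ to evaluate the second derivatives $f\nabla_{\vphi}f + g\nabla_{\vtheta}f$ and $f\nabla_{\vphi}g + g\nabla_{\vtheta}g$) and equating with the two expansions above determines $f_1$ and $g_1$. For $\vphi$, the $g\nabla_{\vtheta}f$ contribution cancels against the flow's second-derivative term exactly as in the simultaneous proof, leaving only the self term rescaled by $1/m$. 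For $\vtheta$, the extra $\lambda\alpha h^2 f_0\nabla_{\vphi} g_0$ produced by the updated-$\vphi$ substitution is what fixes the cross coefficient $c$: solving $\tfrac{\lambda^2}{2}(1-c) = \lambda\alpha$ gives $c = 1 - \tfrac{2\alpha}{\lambda}$, the stated factor. I expect this cross-term bookkeeping in the $\vtheta$ equation to be the main obstacle — one must combine the alternating coupling term, the inner-step rescaling, and the flow's own second-order term without dropping or double counting any $\mathcal{O}(h^2)$ piece, while keeping careful track of which residuals are genuinely $\mathcal{O}(h^3)$.
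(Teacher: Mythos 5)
Your proposal is correct and follows essentially the same route as the paper's proof: expand the $m$ (resp. $k$) inner Euler steps of each player to second order (the paper does this by induction, you by telescoping the sum — same content), Taylor-expand $g$ at the already-updated $\vphi_{m,t}$ to produce the $\lambda\alpha h^2 f\nabla_{\vphi}g$ coupling term, and match against the second-order flow expansion of the modified system over the effective steps $\alpha h$ and $\lambda h$. All coefficients you derive ($\tfrac{1}{2}(1-\tfrac1m)$, $\tfrac{1}{2}(1-\tfrac1k)$, and $c=1-\tfrac{2\alpha}{\lambda}$ from $\tfrac{\lambda^2}{2}(1-c)=\lambda\alpha$) agree with the paper's.
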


\textit{Remark}: Equilibria of the original continuous systems (i.e., points where $f = \myvec{0}$ and $g=\myvec{0}$) remain equilibria of the modified continuous systems.

\begin{definition}
\normalfont{The discretization drift for each player has two terms: one term containing a player's own update function only - terms we will call \textit{self terms} - and a term that also contains the other player's update function - which we will call \textit{interaction terms}.}
\end{definition}

\subsection{Sketch of the proofs}

Following backward error analysis, the idea is to modify the original continuous system by adding corrections in powers of the learning rate: $\tilde f = f + h f_1 + h^2 f_2 + \cdots$ and $\tilde g = g + h g_1 + h^2 g_2 + \cdots$, where for simplicity in this proof sketch, we use the same learning rate for both players (detailed derivations can be found in the Supplementary Material). We want to find corrections $f_i$, $g_i$ such that the modified system $\dot \vphi = \tilde f$ and $\dot \vtheta = \tilde g$ follows the discrete update steps exactly. To do that we proceed in three steps:

{\bf Step 1:} We expand the numerical scheme to find a relationship between $\vphi_t$ and $\vphi_{t-1}$ and $\vtheta_t$ and $\vtheta_{t-1}$ up to order $\mathcal{O}(h^2)$.
In the case of the simultaneous Euler updates this does not require any change to Equations~\eqref{eq:simup1} and~\eqref{eq:simup2}, while for alternating updates we have to expand the intermediate steps of the integrator using Taylor series.

{\bf Step 2:} We compute the Taylor's series of the modified equations solution, yielding:
\begin{align*}
\tilde \vphi(h)   &=  \vphi_{t-1}   + h f + h^2 (f_1 + \frac 12 (f\nabla_\phi f + g\nabla_\theta f)) + \mathcal{O}(h^3)\\
\tilde \vtheta(h) &=  \vtheta_{t-1} +  h g +  h^2 (g_1 + \frac 12 (f\nabla_\phi g + g\nabla_\theta g)) + \mathcal{O}(h^3),
\end{align*}
where all the $f$'s, $g$'s, and their derivatives are evaluated at ($\vphi_{t-1}$, $\vtheta_{t-1}$).

{\bf Step 3:} We match the terms of equal power in $h$ so that the solution of modified equations coincides with the discrete update after one step.
 This amounts to finding the corrections, $f_i$'s and $g_i$'s, so that all the terms of order higher than $\mathcal{O}(h)$ in the modified equation solution above will vanish; this yields the first order drift terms $f_1$ and $g_1$ in terms of $f$, $g$, and their derivatives. For simultaneous updates we obtain $f_1 =  - \tfrac{1}{2} (f\nabla_{\vphi} f + g\nabla_{\vtheta} f)$ and $g_1 = - \tfrac{1}{2}(f\nabla_{\vphi} g + g\nabla_{\vtheta} g)$, and by construction, we have obtained the modified truncated system which follows the discrete updates exactly up to order $\mathcal O(h^3)$, leading to Theorem~\ref{thm:sim}.

\emph{Remark:} The modified equations in Theorems \ref{thm:2_players_sim} and \ref{thm:alt} closely follow the discrete updates only for learning rates where errors of size $\mathcal O(h^3)$ can be neglected. Beyond this, higher order corrections are likely to contribute to the DD.

\subsection{Visualizing trajectories}
To illustrate the effect of DD in two-player games, we use a simple example adapted from~\citet{balduzzi2018mechanics}:
\begin{align}
\dot{\phi} = f(\phi, \theta) =  - \epsilon_1 \phi  + \theta; \hspace{1em} \dot{\theta} = g(\phi, \theta) = \epsilon_2 \theta  - \phi
\label{eq:simple_odes}
\end{align}
In Figure~\ref{fig:igr_dynamics_toy} we validate our theory empirically by visualizing the trajectories of the discrete
Euler steps for simultaneous and alternating updates, and show that they closely match the trajectories of the
corresponding modified continuous systems that we have derived. To visualize the trajectories of the original unmodified continuous system, we use Runge-Kutta 4 (a fourth-order numerical integrator that has no DD up to $\mathcal{O}(h^5)$ in the case where the same learning rates are used for the two players --- see \citet{backward_lifespan} and Supplementary Material for proofs). We will use Runge-Kutta 4 as a baseline throughout the paper.

\begin{figure}[t]
  \centering
  \includegraphics[width=\columnwidth]{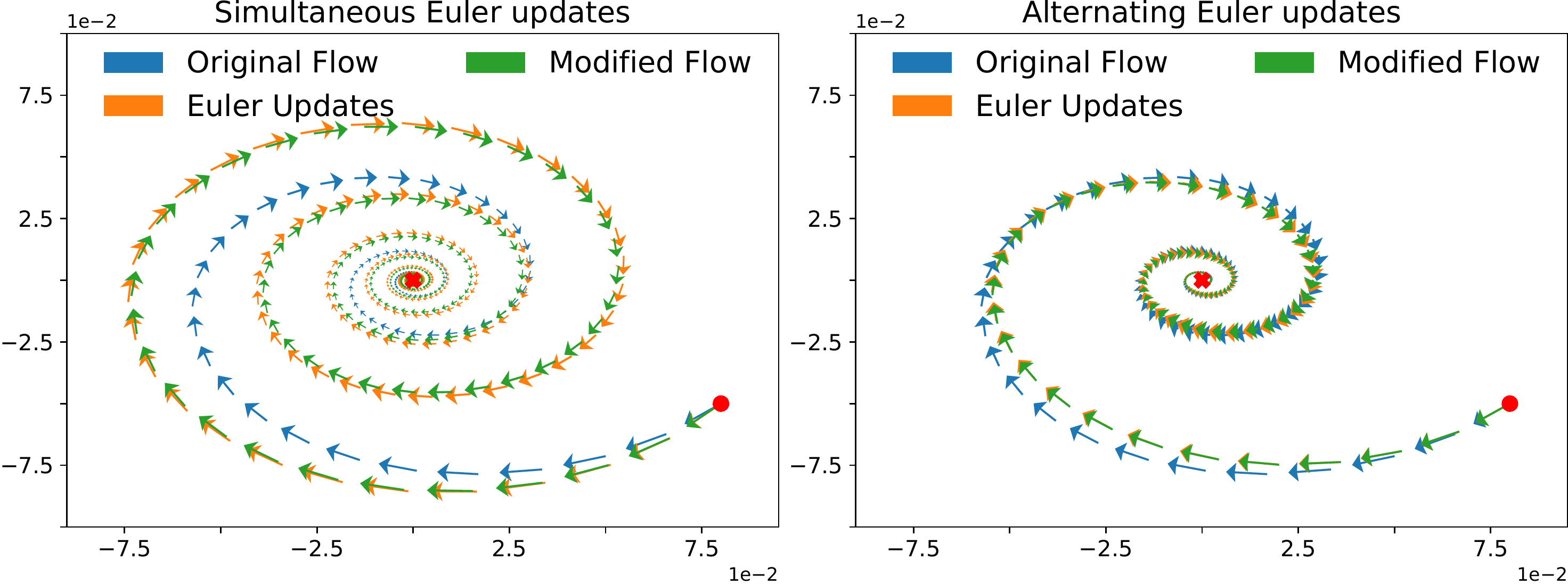}
  \caption{The modified continuous flow captures the effect of DD in simultaneous and alternating Euler updates.}
  \label{fig:igr_dynamics_toy}
\end{figure}

\section{The stability of DD}
\label{section:stability}

The long-term behavior of gradient-based training can be characterized by the stability of its equilibria.
Using stability analysis of a continuous system to understand discrete dynamics in two-player games has a fruitful history; however, prior work ignored discretization drift, since they analyse the stability of the original game ODEs.
 Using the modified ODEs given by backward error analysis provides two benefits here: 1) they account for the discretization drift, and 2) they provide \textit{different} ODEs for simultaneous and for alternating updates, capturing the specificities of the two optimizers.

\begin{figure}[t]
  \centering
  \includegraphics[width=0.95\columnwidth]{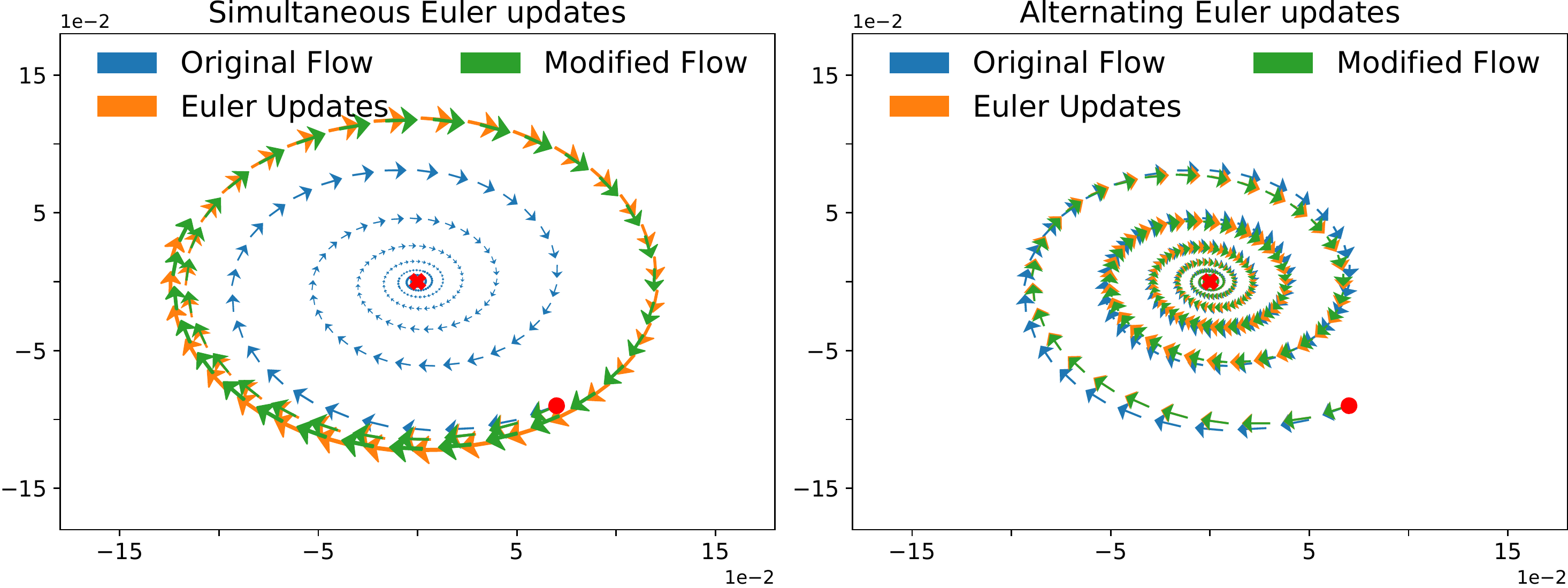}
  \caption{Discretization drift can change the stability of a game. $\epsilon_1 = \epsilon_2 = 0.09$ and a learning rate $0.2$.}
  \label{fig:igr_divergence}
\end{figure}

The modified ODE approach gives us a method to analyze the stability of the discrete updates: 1) Choose the system and the update type -- simultaneous or alternating -- to be analyzed; 2) write the modified ODEs for the chosen system as given by Theorems~\ref{thm:sim} and~\ref{thm:alt}; 3) write the corresponding modified Jacobian, and evaluate it at an equilibrium; 4) determine the stability of the equilibrium by computing the eigenvalues of the modified Jacobian.

Steps 1 and 2 are easy, and step 4 is required in the stability analysis of any continuous system. For step 3, we provide a general form of the modified Jacobian at the equilibrium point of the original system, where $f=\myvec{0}$ and $g=\myvec{0}$:
\begin{gather}
 \widetilde{J}
 =
  \begin{bmatrix}
    \nabla_{\vphi}\widetilde{f} & \nabla_{\vtheta}\widetilde{f} \\
    \nabla_{\vphi}\widetilde{g}  &
   \nabla_{\vtheta}\widetilde{g}
   \end{bmatrix}
   = J - \frac{h}{2} K
\end{gather}
where $J$ is the unmodified Jacobian and $K$ is a matrix that depends on the update type. For simultaneous updates (see Supplementary Material for alternating updates) $K=$
\begin{gather*}
  \begin{bmatrix}
    \alpha ( \nabla_{\vphi} f)^2 +  \alpha \nabla_{\vphi}g\nabla_{\theta}f   & \alpha \nabla_{\vtheta}f\nabla_{\vphi}f +  \alpha \nabla_{\vtheta}g\nabla_{\vtheta}f\\
   \lambda \nabla_{\vphi}g\nabla_{\vtheta}g +  \lambda \nabla_{\vphi}f\nabla_{\vphi}g&
   \lambda( \nabla_{\vtheta} g)^2 +  \lambda \nabla_{\vtheta}f\nabla_{\vphi}g
   \end{bmatrix}.
\end{gather*}

Using the method above we show (in the Supplementary Material), that, in two-player games, the drift can change a stable equilibrium into an unstable one, which is not the case for supervised learning~\citep{igr}. For simultaneous updates with equal learning rates, this recovers a result of~\citet{daskalakis2018limit} derived in the context of zero-sum games. We show this by example: consider the game given by the system of ODEs in Equation~\eqref{eq:simple_odes} with $\epsilon_1 = \epsilon_2 = 0.09$. The stability analysis of its modified ODEs for the simultaneous Euler updates shows they diverge when $\alpha h = \lambda h =0.2$. We use the same example to illustrate the difference in behavior between simultaneous and alternating updates: the stability analysis shows the modified ODEs for alternating updates converge to a stable equilibrium. In both cases, the results obtained using the stability analysis of the modified ODEs is consistent with empirical outcomes obtained by following the corresponding discrete updates, as shown in Figure~\ref{fig:igr_divergence}; this would not have been the case had we used the original system to do stability analysis, which would have always predicted convergence to an equilibrium.

The modified ODEs help us bridge the gap between theory and practice: they allow us to extend the reach of stability analysis to a wider range of techniques used for training, such as alternating gradient descent. We hope the method we provide will be used in the context of GANs, to expand prior work such as that of~\citet{nagarajan2017gradient} to alternating updates. However, the modified ODEs are not without limitations: they ignore discretization errors smaller than $\mathcal{O}(h^3)$, and thus they are not equivalent to the discrete updates; methods that directly assess the convergence of discrete updates (e.g. ~\citet{mescheder2017numerics}) remain an indispensable tool for understanding discrete systems.

\section{Common-payoff games}
\label{sec:common_payoff}

When the players share a common loss, as in \emph{common-payoff games}, we recover supervised learning with a single loss $E$, but with the extra-freedom of training the weights corresponding to different parts of the model with possibly different learning rates and update strategies (see for instnace \citet{imagenet_in_minutes} where a per-layer learning rate is used to obtain extreme training speedups at equal levels of test accuracy).
A special case occurs when the two players with equal learning rates ($\alpha = \lambda$) perform simultaneous gradient descent. In this case, both modified losses exactly recover Equation~\eqref{eq:sup_learning_igr}. \citet{igr} argue that minimizing the loss-gradient norm, in this case, has a beneficial effect.

In this section, we instead focus on alternating gradient descent. We partition a neural network into two parts, corresponding to two sets of parameters, $\vphi$ for the parameters closer to the input and $\vtheta$ for the parameters closer to the output. This procedure freezes one part of the network while training the other part and alternating between the two parts.
This scenario may arise in a distributed training setting, as a form of block coordinate descent.
In the case of common-payoff games, we have the following as a special case of Theorem~\ref{thm:alt} by substituting $f = -\nabla_{\vphi} E$ and $g = -\nabla_{\vtheta} E$:

\begin{cor} In a two-player common-payoff game with common loss $E$, \textit{alternating} gradient descent -- as described in equations~\eqref{eq:altup1} and ~\eqref{eq:altup2} - with one update per player follows a gradient flow given by the modified losses
\begin{align}
\tilde L_1&= E + \frac{\alpha h}{4} \left(\norm{\nabla_{\vphi} E}^2  + \norm{\nabla_{\theta} E}^2\right) \\
\tilde L_2&=  E + \frac{\lambda h}{4} \left((1 - \frac {2 \alpha} {\lambda}) \norm{\nabla_{\vphi} E}^2  + \norm{\nabla_{\theta} E}^2\right)
\label{eq:cooperating_zero_sum}
\end{align}
with an error of size $\mathcal O(h^3)$  after one update step.
\end{cor}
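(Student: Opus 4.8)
The plan is to derive this corollary directly from Theorem~\ref{thm:alt} by specializing to one update per player ($m = k = 1$) and substituting the common-payoff update functions $f = -\nabla_{\vphi} E$ and $g = -\nabla_{\vtheta} E$. Under these substitutions the modified drift terms of Theorem~\ref{thm:alt} become first-order differential expressions in the single loss $E$, and the entire content of the proof is to recognize each of these expressions as exactly half of the gradient of a squared-norm penalty, so that the modified dynamics can be rewritten as the gradient flow $\dot{\vphi} = -\nabla_{\vphi}\tilde L_1$ and $\dot{\vtheta} = -\nabla_{\vtheta}\tilde L_2$.

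Concretely, I would first plug $f = -\nabla_{\vphi} E$, $g = -\nabla_{\vtheta} E$, and $m = k = 1$ into the two lines of Theorem~\ref{thm:alt}, obtaining $\dot{\vphi} = -\nabla_{\vphi} E - \tfrac{\alpha h}{2}(f\nabla_{\vphi} f + g\nabla_{\vtheta} f)$ and $\dot{\vtheta} = -\nabla_{\vtheta} E - \tfrac{\lambda h}{2}((1 - \tfrac{2\alpha}{\lambda})f\nabla_{\vphi} g + g\nabla_{\vtheta} g)$. The key lemma I need is the four identities $\nabla_{\vphi}\norm{\nabla_{\vphi} E}^2 = 2 f\nabla_{\vphi} f$, $\nabla_{\vphi}\norm{\nabla_{\vtheta} E}^2 = 2 g\nabla_{\vtheta} f$, $\nabla_{\vtheta}\norm{\nabla_{\vphi} E}^2 = 2 f\nabla_{\vphi} g$, and $\nabla_{\vtheta}\norm{\nabla_{\vtheta} E}^2 = 2 g\nabla_{\vtheta} g$. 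I would verify each in index notation, following the row-vector and partial-derivative conventions introduced in Section~\ref{sec:general_theorems}: for instance, writing $(f\nabla_{\vphi} f)_k = \sum_i (\partial_{\phi_i} E)(\partial_{\phi_i}\partial_{\phi_k} E)$ and comparing with $\partial_{\phi_k}\norm{\nabla_{\vphi} E}^2 = 2\sum_i (\partial_{\phi_i} E)(\partial_{\phi_i}\partial_{\phi_k} E)$ gives the first identity, and the remaining three follow by the same contraction.

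Substituting these identities back collapses the drift, giving $\dot{\vphi} = -\nabla_{\vphi} E - \tfrac{\alpha h}{4}\nabla_{\vphi}(\norm{\nabla_{\vphi} E}^2 + \norm{\nabla_{\vtheta} E}^2) = -\nabla_{\vphi}\tilde L_1$, and analogously $\dot{\vtheta} = -\nabla_{\vtheta}\tilde L_2$ with the coefficient $(1 - \tfrac{2\alpha}{\lambda})$ carried along unchanged from the interaction term. The $\mathcal{O}(h^3)$ local error is inherited verbatim from Theorem~\ref{thm:alt}, since we have only rewritten the same modified vector field in gradient form rather than altering it.

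I expect the only genuine subtlety to be the two cross identities, e.g. $\nabla_{\vtheta}\norm{\nabla_{\vphi} E}^2 = 2 f\nabla_{\vphi} g$, which hinge on the symmetry of mixed second partials $\partial_{\theta_k}\partial_{\phi_i} E = \partial_{\phi_i}\partial_{\theta_k} E$ (Schwarz's theorem, valid since $E$ is assumed smooth). This commutativity is exactly what makes the common-payoff drift integrable into a potential; in a general two-player game the analogous terms involve derivatives of two distinct losses and need not be gradients at all, which is why the gradient-flow conclusion is special to the common-payoff setting. Beyond tracking the matrix/row-vector conventions carefully so that each contraction index lands on the symmetric Hessian slot, the computation is routine.
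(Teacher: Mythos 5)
Your proposal is correct and follows essentially the same route as the paper's own proof: substituting $f=-\nabla_{\vphi}E$, $g=-\nabla_{\vtheta}E$ into Theorem~\ref{thm:alt} and using the same four gradient identities (stated in the paper's appendix) to recognize the drift terms as gradients of squared norms, with the $\mathcal{O}(h^3)$ error inherited from the theorem. The only cosmetic difference is that the paper keeps general update counts $m,k$ throughout and sets $m=k=1$ at the end, whereas you specialize at the outset; this changes nothing of substance.
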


The term $(1 - \frac {2 \alpha} {\lambda}) \norm{\nabla_{\vphi} E}^2$ in Eq.~\eqref{eq:cooperating_zero_sum} is negative when the learning rates are equal, seeking to maximize the gradient norm of player $\vphi$. According to \citet{igr}, we expect less stable training and worse performance in this case.
This prediction is verified in Figure~\ref{fig:alternating_updates_supervised_learning}, where we compare simultaneous and alternating gradient descent for a MLP trained on MNIST with a common learning rate.

\begin{figure}[t]
  \includegraphics[width=0.48\columnwidth]{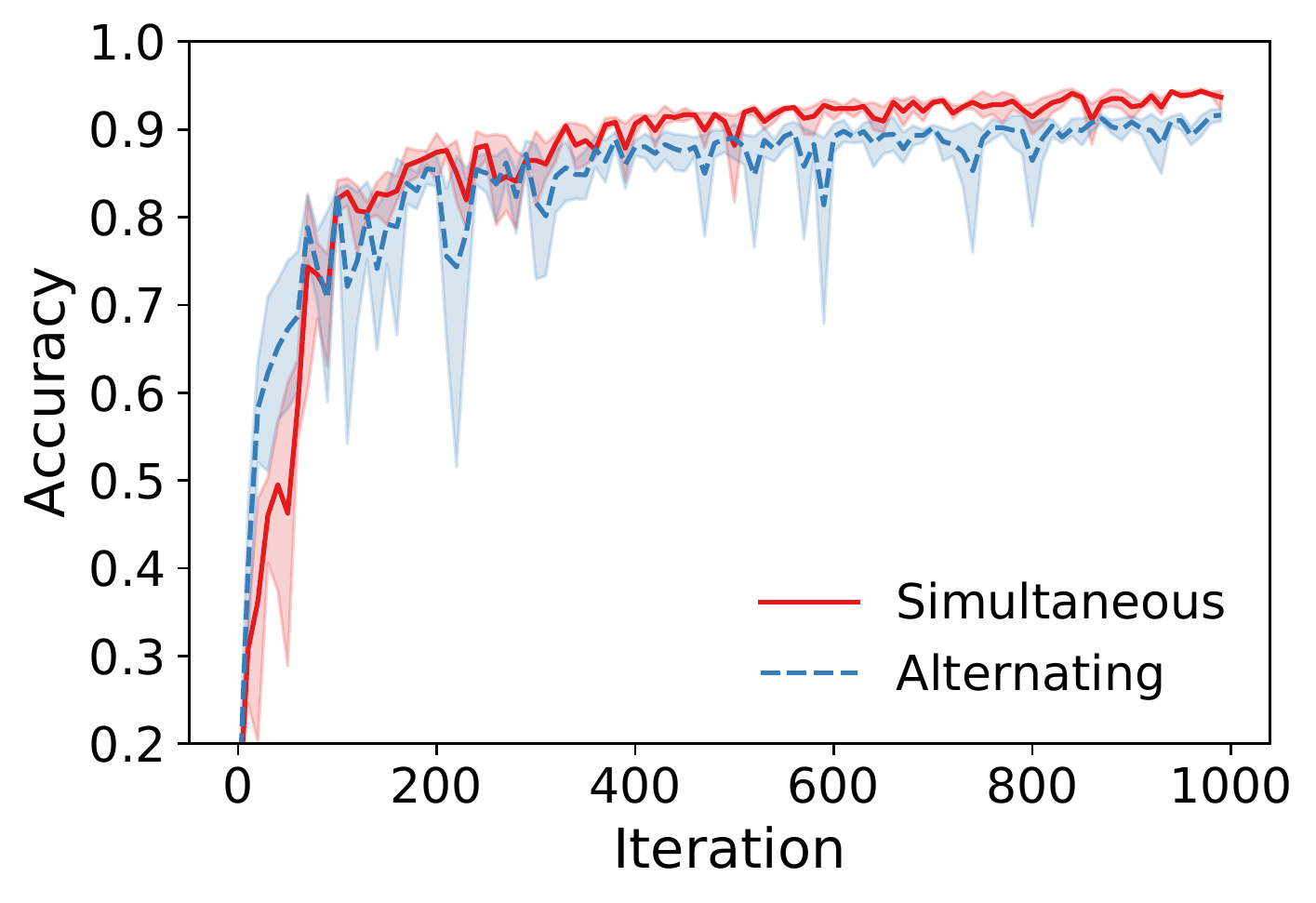}
  \includegraphics[width=0.48\columnwidth]{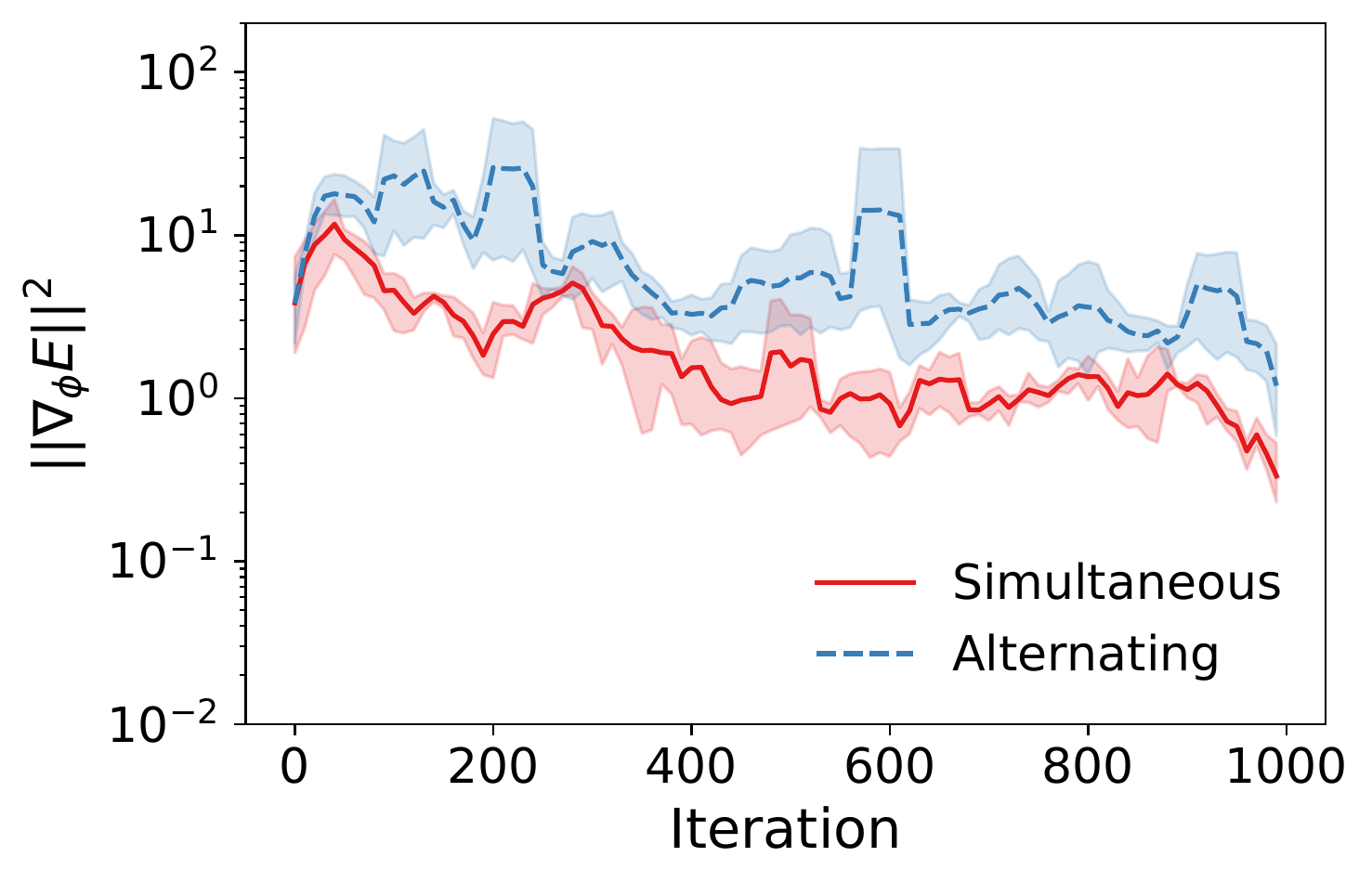}
  \caption{In common-payoff games alternating updates lead to higher gradient norms and unstable training.}
  \label{fig:alternating_updates_supervised_learning}
\end{figure}

\section{Analysis of zero-sum games}
\label{section:zero_sum}

We now study zero-sum games, where the two adversarial players have opposite losses $ - L_1 = L_2 = E$.
We substitute the updates $f = \nabla_{\vphi} E$ and $g = -\nabla_{\vtheta} E$ in the Theorems in Section~\ref{sec:general_theorems} and obtain:

\begin{cor} In a zero-sum two-player differentiable game, \textit{simultaneous} gradient descent updates - as described in equations~\eqref{eq:simup1} and ~\eqref{eq:simup2} - follows a gradient flow given by the modified losses
\label{cor:zs-sim}
\begin{align}
\tilde L_1&= - E + \frac{\alpha h}{4} \norm{\nabla_{\vphi} E}^2  -  \frac{\alpha h}{4} \norm{\nabla_{\theta} E}^2
\label{eq:min_min_simultaneous_zero_sum1}, \\
\tilde L_2&= E -  \frac{\lambda h}{4} \norm{\nabla_{\vphi} E}^2  +  \frac{\lambda h}{4} \norm{\nabla_{\theta} E}^2,
\label{eq:min_min_simultaneous_zero_sum2}
\end{align}
with an error of size $\mathcal O(h^3)$ after one update step.
\end{cor}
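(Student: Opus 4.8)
The plan is to obtain this corollary as a direct specialization of Theorem~\ref{thm:2_players_sim}, where the only real work is recognizing the drift vector field as a gradient. First I would fix the zero-sum update functions: from $-L_1 = L_2 = E$ together with the convention $f = -\nabla_{\vphi} L_1$ and $g = -\nabla_{\vtheta} L_2$, I get $f = \nabla_{\vphi} E$ and $g = -\nabla_{\vtheta} E$. Substituting these into the two lines of Theorem~\ref{thm:2_players_sim} yields $\dot{\vphi} = \nabla_{\vphi} E - \tfrac{\alpha h}{2}\left(f\nabla_{\vphi} f + g\nabla_{\vtheta} f\right)$ and $\dot{\vtheta} = -\nabla_{\vtheta} E - \tfrac{\lambda h}{2}\left(f\nabla_{\vphi} g + g\nabla_{\vtheta} g\right)$, so everything reduces to rewriting the four products $f\nabla_{\vphi} f$, $g\nabla_{\vtheta} f$, $f\nabla_{\vphi} g$, and $g\nabla_{\vtheta} g$.

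Second, I would use the Hessian chain-rule identity that, in the paper's row-vector convention, $\tfrac12\nabla_{\vphi}\norm{\nabla_{\vphi} E}^2 = \nabla_{\vphi} E\,\nabla_{\vphi}\nabla_{\vphi} E$ (a row vector times the symmetric Hessian block). Applied to the self terms this gives $f\nabla_{\vphi} f = \tfrac12\nabla_{\vphi}\norm{\nabla_{\vphi} E}^2$ and $g\nabla_{\vtheta} g = \nabla_{\vtheta} E\,\nabla_{\vtheta}\nabla_{\vtheta} E = \tfrac12\nabla_{\vtheta}\norm{\nabla_{\vtheta} E}^2$, where the two minus signs carried by $g$ cancel. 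For the interaction terms I would invoke equality of mixed partials (Schwarz), which makes $\nabla_{\vtheta}\nabla_{\vphi} E$ and $\nabla_{\vphi}\nabla_{\vtheta} E$ mutual transposes; this is exactly what lets $g\nabla_{\vtheta} f = -\nabla_{\vtheta} E\,\nabla_{\vtheta}\nabla_{\vphi} E = -\tfrac12\nabla_{\vphi}\norm{\nabla_{\vtheta} E}^2$ be written as a $\vphi$-gradient, and symmetrically $f\nabla_{\vphi} g = -\tfrac12\nabla_{\vtheta}\norm{\nabla_{\vphi} E}^2$ as a $\vtheta$-gradient.

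Third, I would collect terms. The $\dot{\vphi}$ equation becomes $\nabla_{\vphi}\bigl(E - \tfrac{\alpha h}{4}\norm{\nabla_{\vphi} E}^2 + \tfrac{\alpha h}{4}\norm{\nabla_{\vtheta} E}^2\bigr) = -\nabla_{\vphi}\tilde L_1$, and the $\dot{\vtheta}$ equation becomes $-\nabla_{\vtheta}\bigl(E - \tfrac{\lambda h}{4}\norm{\nabla_{\vphi} E}^2 + \tfrac{\lambda h}{4}\norm{\nabla_{\vtheta} E}^2\bigr) = -\nabla_{\vtheta}\tilde L_2$, reading off precisely the $\tilde L_1$ and $\tilde L_2$ of Equations~\eqref{eq:min_min_simultaneous_zero_sum1} and~\eqref{eq:min_min_simultaneous_zero_sum2}. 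The $\mathcal O(h^3)$ error is inherited verbatim from Theorem~\ref{thm:2_players_sim}, since the substitution and the regrouping are exact.

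I expect the only real obstacle to be bookkeeping rather than anything conceptual: keeping the $n\times m$ versus $m\times n$ shapes straight under the row-vector convention, and tracking the signs from $g = -\nabla_{\vtheta} E$ together with the transpose relating the two mixed-partial blocks. The single load-bearing fact is the symmetry of second derivatives, which is what guarantees the interaction drift is itself a gradient and hence that the modified dynamics can be written as a gradient flow at all.
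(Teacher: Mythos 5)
Your proposal is correct and follows essentially the same route as the paper's own proof: substituting $f = \nabla_{\vphi} E$, $g = -\nabla_{\vtheta} E$ into Theorem~\ref{thm:2_players_sim}, rewriting the four drift products via the squared-gradient-norm identities (whose cross-term cases rest on the symmetry of mixed partials, which the paper uses implicitly), and factoring the result into $-\nabla_{\vphi}\tilde L_1$ and $-\nabla_{\vtheta}\tilde L_2$ with the $\mathcal{O}(h^3)$ error inherited from the theorem. All signs and coefficients in your bookkeeping check out against the paper's derivation.
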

\textit{Remark}: Discretization drift preserves the adversarial structure of the game,
with the \textit{interaction terms} maximizing the gradient norm of the opposite player, while the \textit{self terms} are minimizing the player's own gradient norm.

\textit{Remark}: The modified losses of zero-sum games trained with simultaneous gradient descent with different learning rates are no longer zero-sum.
\begin{cor} In a zero-sum two-player differentiable game, \textit{alternating} gradient descent - as described in equations~\eqref{eq:altup1} and ~\eqref{eq:altup2} - follows a gradient flow given by the modified losses
\label{cor:zs-alt}
\begin{align}
\tilde L_1&= -E + \frac{\alpha h}{4m} \norm{\nabla_{\vphi} E}^2  - \frac{\alpha h}{4} \norm{\nabla_{\theta} E}^2
\label{eq:min_min_alt_zero_sum1}
\\
\tilde L_2&= E - \frac{\lambda h}{4} (1 - \frac{2 \alpha}{\lambda}) \norm{\nabla_{\vphi} E}^2  + \frac{\lambda h}{4k}\norm{\nabla_{\theta} E}^2
\label{eq:min_min_alt_zero_sum2}
\end{align}
with an error of size $\mathcal O(h^3)$ after one update step.
\end{cor}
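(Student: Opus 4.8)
The plan is to obtain this corollary as a direct specialization of Theorem~\ref{thm:alt}, so no new backward-error analysis is needed. In a zero-sum game with $-L_1 = L_2 = E$, the update functions are $f = -\nabla_{\vphi}L_1 = \nabla_{\vphi}E$ and $g = -\nabla_{\vtheta}L_2 = -\nabla_{\vtheta}E$. I would substitute these into the two modified ODEs of Theorem~\ref{thm:alt} and then show that the resulting drift vector fields are exactly $-\nabla_{\vphi}\tilde L_1$ and $-\nabla_{\vtheta}\tilde L_2$ for the claimed $\tilde L_1,\tilde L_2$. The whole argument thus reduces to rewriting the four matrix--vector products $f\nabla_{\vphi}f$, $g\nabla_{\vtheta}f$, $f\nabla_{\vphi}g$, $g\nabla_{\vtheta}g$ as gradients of squared gradient-norms.

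First I would handle the two \emph{self terms}. Since $f=\nabla_{\vphi}E$, the Jacobian $\nabla_{\vphi}f$ is the Hessian $\nabla_{\vphi}^2 E$, which is symmetric; componentwise $(f\nabla_{\vphi}f)_i=\sum_a f_a\,\partial_{\vphi_a}f_i$, and symmetry of mixed partials gives $\partial_{\vphi_a}f_i=\partial_{\vphi_i}f_a$, so this equals $\tfrac12\,\partial_{\vphi_i}\norm{f}^2$. Hence $f\nabla_{\vphi}f=\tfrac12\nabla_{\vphi}\norm{\nabla_{\vphi}E}^2$, and by the identical computation $g\nabla_{\vtheta}g=\tfrac12\nabla_{\vtheta}\norm{\nabla_{\vtheta}E}^2$. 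These reproduce the $\tfrac{\alpha h}{4m}\norm{\nabla_{\vphi}E}^2$ and $\tfrac{\lambda h}{4k}\norm{\nabla_{\vtheta}E}^2$ terms once the $\tfrac1m$ and $\tfrac1k$ prefactors from Theorem~\ref{thm:alt} are carried through.

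The crux — and the step I expect to be the main obstacle for sign bookkeeping — is the two \emph{interaction terms}, where the equality of mixed partials of $E$ must be combined with the minus sign in $g=-\nabla_{\vtheta}E$. Writing $(g\nabla_{\vtheta}f)_j=\sum_i g_i\,\partial_{\vtheta_i}f_j=-\sum_i(\partial_{\vtheta_i}E)(\partial_{\vtheta_i}\partial_{\vphi_j}E)$ and comparing with $\tfrac12\,\partial_{\vphi_j}\norm{\nabla_{\vtheta}E}^2=\sum_i(\partial_{\vtheta_i}E)(\partial_{\vphi_j}\partial_{\vtheta_i}E)$ gives $g\nabla_{\vtheta}f=-\tfrac12\nabla_{\vphi}\norm{\nabla_{\vtheta}E}^2$; symmetrically, $f\nabla_{\vphi}g=-\tfrac12\nabla_{\vtheta}\norm{\nabla_{\vphi}E}^2$. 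The negative signs here are precisely what flip the interaction drift into a term that \emph{maximizes} the opponent's gradient norm, matching the adversarial structure noted in the remark after Corollary~\ref{cor:zs-sim}.

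Finally I would assemble the pieces. For the first player this yields $\dot{\vphi}=\nabla_{\vphi}E-\tfrac{\alpha h}{4m}\nabla_{\vphi}\norm{\nabla_{\vphi}E}^2+\tfrac{\alpha h}{4}\nabla_{\vphi}\norm{\nabla_{\vtheta}E}^2=-\nabla_{\vphi}\tilde L_1$, and for the second player, after distributing the factor $(1-\tfrac{2\alpha}{\lambda})$ over the interaction term, $\dot{\vtheta}=-\nabla_{\vtheta}E+\tfrac{\lambda h}{4}(1-\tfrac{2\alpha}{\lambda})\nabla_{\vtheta}\norm{\nabla_{\vphi}E}^2-\tfrac{\lambda h}{4k}\nabla_{\vtheta}\norm{\nabla_{\vtheta}E}^2=-\nabla_{\vtheta}\tilde L_2$, which are exactly the gradient flows of the stated $\tilde L_1$ and $\tilde L_2$. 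The $\mathcal O(h^3)$ error claim is inherited verbatim from Theorem~\ref{thm:alt}, since the substitution is exact and introduces no further approximation.
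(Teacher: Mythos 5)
Your proposal is correct and follows essentially the same route as the paper: the paper also proves this corollary by substituting $f=\nabla_{\vphi}E$, $g=-\nabla_{\vtheta}E$ into the drift terms of Theorem~\ref{thm:alt} and invoking the gradient identities $\nabla_{\vphi}E\,\nabla_{\vphi}\nabla_{\vphi}E=\tfrac12\nabla_{\vphi}\norm{\nabla_{\vphi}E}^2$, $\nabla_{\vtheta}E\,\nabla_{\vtheta}\nabla_{\vphi}E=\tfrac12\nabla_{\vphi}\norm{\nabla_{\vtheta}E}^2$, etc., noting that the zero-sum case is a sign-flip of the common-payoff computation. Your componentwise derivation of those identities via symmetry of mixed partials and your sign bookkeeping for the interaction terms are exactly the content of the paper's argument, so there is nothing to add.
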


\textit{Remark}: The modified losses of zero-sum games trained with alternating gradient descent are not zero sum.

\textit{Remark}: Since $(1 - \frac{2 \alpha}{\lambda})  < 1$ in the alternating case there is always less weight on the term encouraging maximizing $\norm{\nabla_{\vphi} E}^2$ compared to the simultaneous case under the same learning rates. For
$\frac{\alpha}{\lambda} > \frac{1}{2}$ both players minimize $\norm{\nabla_{\vphi} E}^2$.

\subsection{Dirac-GAN: an illustrative example}\label{sec:DiracGAN}

\citet{mescheder2018training} introduce the Dirac-GAN as an example to illustrate the often complex training dynamics in zero-sum games. We follow this example to provide an intuitive understanding of DD.
The generative adversarial network~\citep{goodfellow2014generative} is an example of two-player game that has been successfully used for distribution learning. The
generator $G$ with parameters $\vtheta$ learns a mapping from samples of the latent distribution $\vz \sim p(\vz)$ to the data space, while the discriminator $D$ with parameters $\vphi$
learns to distinguish these samples from data.
Dirac-GAN aims to learn a Dirac delta distribution with mass at zero;
the generator is modeling a Dirac with parameter $\theta$: $G(z) = \theta$ and the discriminator is a linear model on the input with
parameter $\phi$: $D_{\phi}(x) = \phi x$. This results in the zero-sum game given by:
\begin{align}
 E = l(\theta \phi) + l(0)
\label{eq:dirac_gan}
\end{align}
where $l$ depends on the GAN formulation used ($l(z) = - \log (1 + e^{-z})$ for instance).
The unique equilibrium point is $\theta = \phi = 0$. All proofs for this section are in the Supplementary Material, together with visualizations.

\textit{Reconciling discrete and continuous updates in Dirac-GAN}: The continuous dynamics induced by the gradient field from Equation~\eqref{eq:dirac_gan} preserve $\theta^2 + \phi^2$, while for simultaneous gradient descent $\theta^2 + \phi^2$ increases with each update~\citep{mescheder2018training}; different conclusions are reached when analyzing the dynamics of the original continuous system versus the discrete updates.
 We show that the modified ODEs given by equations~\eqref{eq:min_min_simultaneous_zero_sum1} and~\eqref{eq:min_min_simultaneous_zero_sum2} resolve this discrepancy, since simultaneous gradient descent \emph{modifies} the continuous dynamics to increase $\theta^2 + \phi^2$, leading to consistent conclusions from the modified continuous and discrete dynamics.

\textit{Explicit regularization stabilizes Dirac-GAN:} To counteract the instability in the Dirac-GAN induced by the interaction terms we can add explicit regularization with the same functional form: $L_1  = - E + u{\| \nabla_{\theta} E\|}^2$ and $L_2  = E + \nu {\| \nabla_{\phi} E\|}^2$  where $u, \nu$ are of $\mathcal{O}(h)$. We find that the modified Jacobian for this modified system with explicit regularization is negative definite if  $u > h\alpha /4$ and $\nu > h\lambda /4$, so the system is asymptotically stable and converges to the optimum. Notably, by quantifying DD, we are able to find the regularization coefficients which guarantee convergence and show that they depend on learning rates.

\section{Experimental analysis of GANs} \label{section:experimental}

To understand the effect of DD on more complex adversarial games, we analyze GANs trained for image generation on the CIFAR10 dataset. We follow the model architectures from Spectral-Normalized GANs~\citep{miyato2018spectral}.
Both players have millions of parameters. We employ the original GAN formulation,
where the discriminator is a binary classifier trained to classify data from the dataset as real and model samples as fake; the generator tries to generate samples that the discriminator considers as real. This can be formulated as a zero-sum game:
\begin{align*}
E = \mathbb{E}_{p^*(\vx)} \log D_{\vphi}(\vx) + \mathbb{E}_{p_{\vtheta}(\vz)} \log( 1 - D_{\vphi}(G_{\theta}(\vz))
\end{align*}

When it comes to gradient descent, GAN practitioners often use alternating, not simultaneous updates: the discriminator is updated first, followed by the generator. However, recent work shows higher-order numerical integrators can work well with simultaneous updates~\citep{odegan}. We will show that DD can be seen as the culprit behind some of the challenges in simultaneous gradient descent in zero-sum GANs, indicating ways to improve training performance in this setting.
For a clear presentation of the effects of DD, we employ a minimalist training setup. Instead of using popular adaptive optimizers such as Adam~\citep{kingma2014adam}, we train all the models with vanilla stochastic gradient descent,
 without momentum or variance reduction methods.

We use the Inception Score (IS)~\citep{improved_techniques_gans}
for evaluation.
Our training curves contain a horizontal line at the Inception Score of $7.5$, obtained with the same architectures we use, but with the Adam optimizer (the score reported by~\citet{miyato2018spectral} is $7.42$). \textit{All learning curves correspond to the best $10\%$ of models} for the corresponding training setting from a sweep over learning rates and 5 seeds - for a discussion of variance across seeds and corresponding plots, as well as best top $20\%$ and $30\%$ of models see the Supplementary Material. We also report box plots showing the performance quantiles across all hyperparameter and seeds, together with the top $10\%$ of models. For SGD we use learning rates $\{ 5 \times 10^{-2}, 1\times 10^{-2}, 5 \times 10^{-3}, 1 \times 10^{-3}\}$ for each player; for Adam, we use learning rates  $\{ 1 \times 10^{-4}, 2\times 10^{-4}, 3 \times 10^{-4}, 4 \times 10^{-4}\}$, which have been widely used in the literature~\citep{miyato2018spectral}. When comparing to Runge-Kutta (RK4) to assess the effect of DD we always use the same learning rates for both players. We present results using additional losses, via LS-GAN~\citep{ls_gan}, and report FID results ~\citep{heusel2017gans} and full experimental details in the Supplementary Material. The code associated with this work can be found at \url{https://github.com/deepmind/deepmind-research/dd_two_player_games}.

\subsection{Does DD affect training?}
We start our experimental analysis by showing the effect of DD on zero-sum games. We compare simultaneous gradient descent, alternating gradient descent and Runge-Kutta 4 updates,
 since they follow different continuous dynamics given by the modified equations we have derived up to $\mathcal{O}(h^3)$ error. Figure~\ref{fig:idd_zero_sum_games} shows simultaneous gradient descent performs substantially worse than alternating updates. When compared to Runge-Kutta 4, which has a DD error of $\mathcal{O}(h^5)$ when the two players have equal learning rates, we see that Runge-Kutta performs better, and that removing the drift improves training. Multiple updates also affect training, either positively or negatively, depending on learning rates - see Supplementary Material.

 \begin{figure}[t]
 \centering
  \includegraphics[width=0.48\columnwidth]{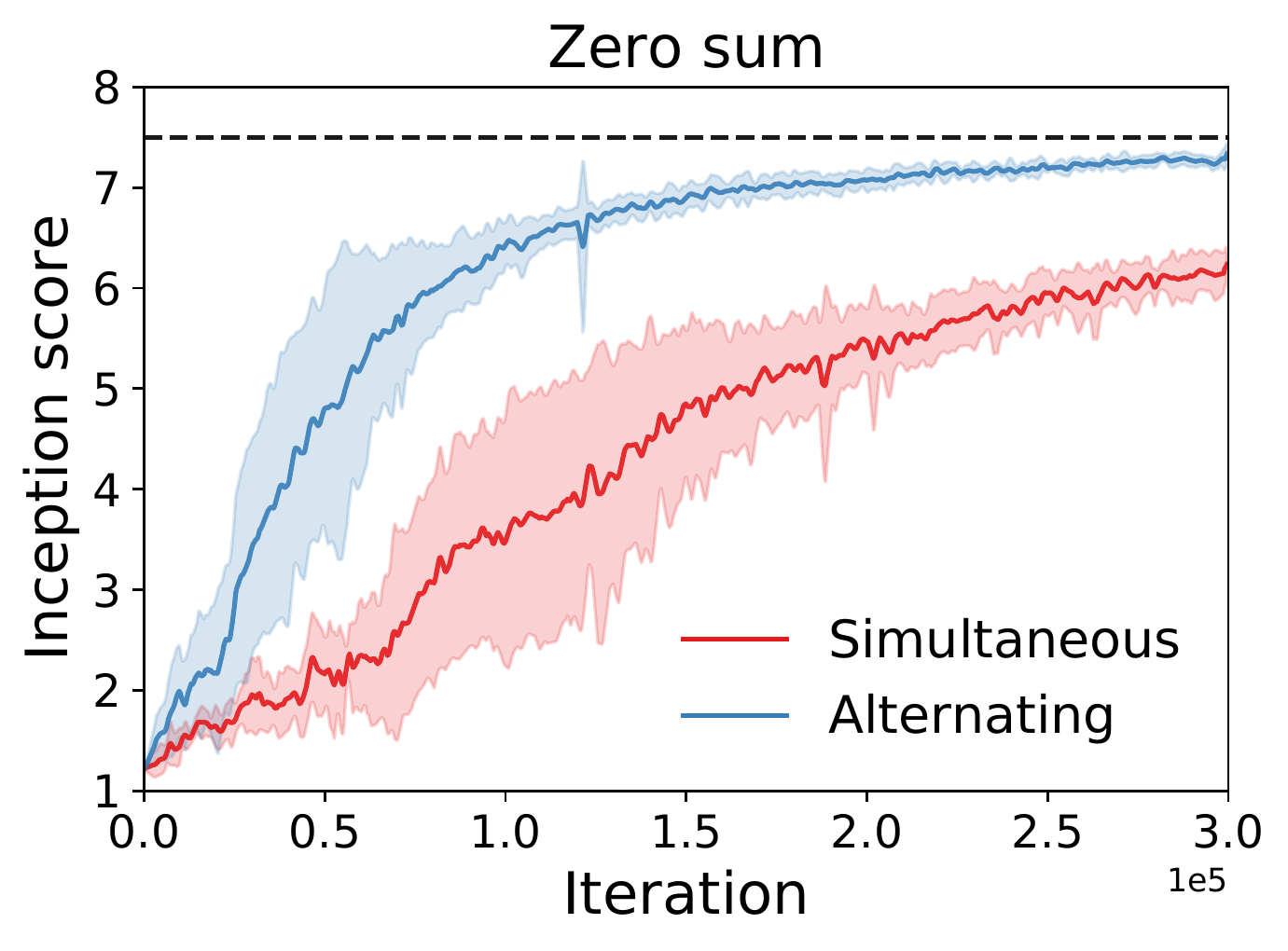}
  \includegraphics[width=0.48\columnwidth]{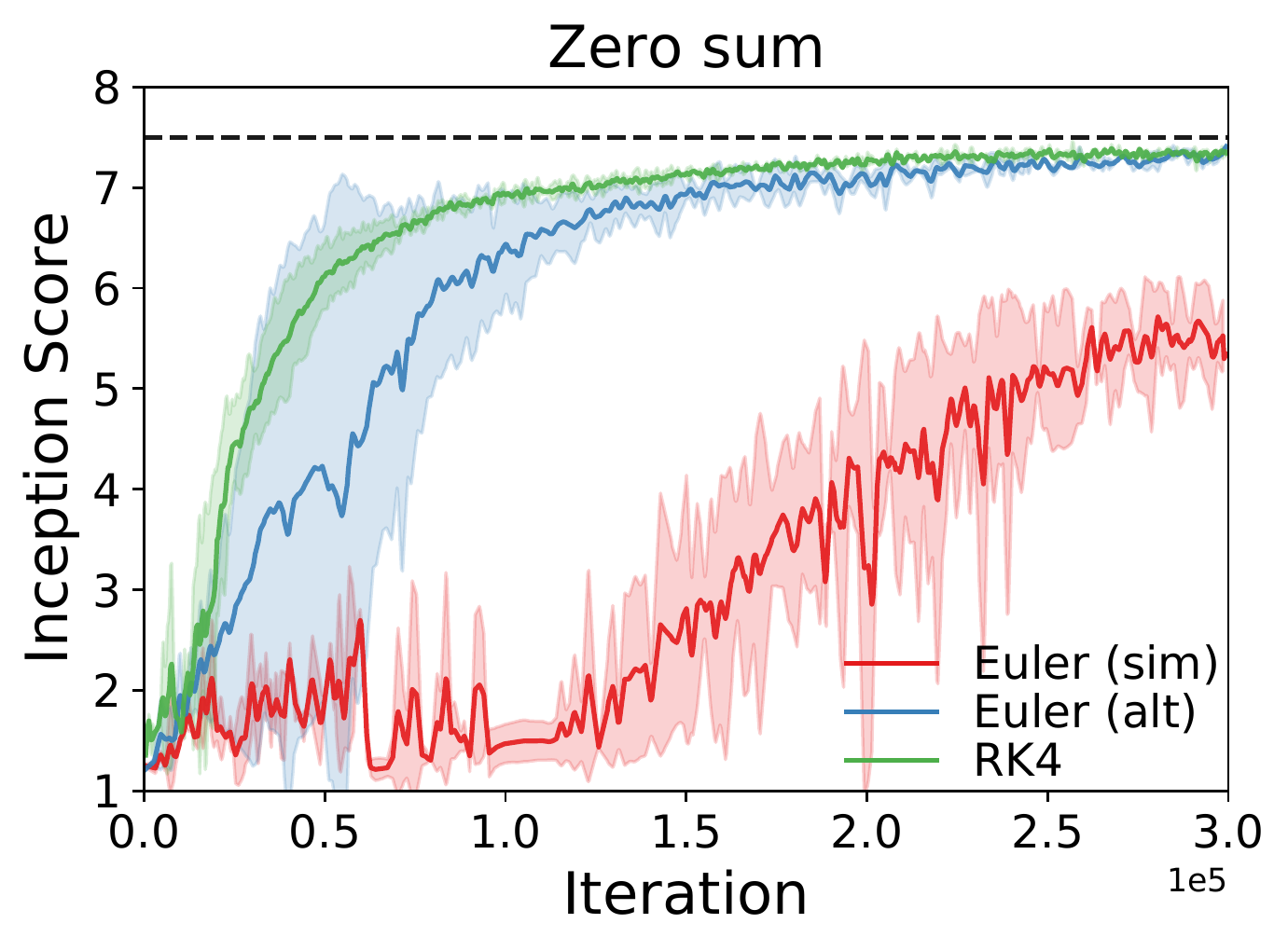}
    \caption{The effect of DD on zero-sum games. Alternating updates perform better (left), and with equal learning rates (right), RK4 has $\mathcal{O}(h^5)$ drift and performs better.}
   \label{fig:idd_zero_sum_games}
\end{figure}

\subsection{The importance of learning rates in DD}
\label{section:learning_rate_ratios}

While in simultaneous updates (Eqs~\eqref{eq:min_min_simultaneous_zero_sum1} and~\eqref{eq:min_min_simultaneous_zero_sum2}) the interaction terms of both players maximize the gradient norm of the other player, alternating gradient descent (Eqs~\eqref{eq:min_min_alt_zero_sum1} and~\eqref{eq:min_min_alt_zero_sum2}) exhibits less pressure on the second player (generator) to maximize the norm of the first player (discriminator). In alternating updates, when the ratio between the discriminator and generator learning rates exceeds $0.5$, both players are encouraged to minimize the discriminator's gradient norm. To understand the effect of learning rate ratios in training, we performed a sweep where the discriminator learning rates $\alpha$ are sampled uniformly between $[0.001, 0.01]$, and the learning rate ratios $\alpha / \lambda$ are in $\{0.1, 0.2, 0.5, 1, 2, 5\}$, with results shown in Figure~\ref{fig:sim_vs_alt_zero_sum_learning_rates}. Learning rate ratios greater than $0.5$ perform best for alternating updates,
and enjoy a substantial increase in performance compared to simultaneous updates.

\begin{figure}[t]
  \includegraphics[width=0.48\columnwidth]{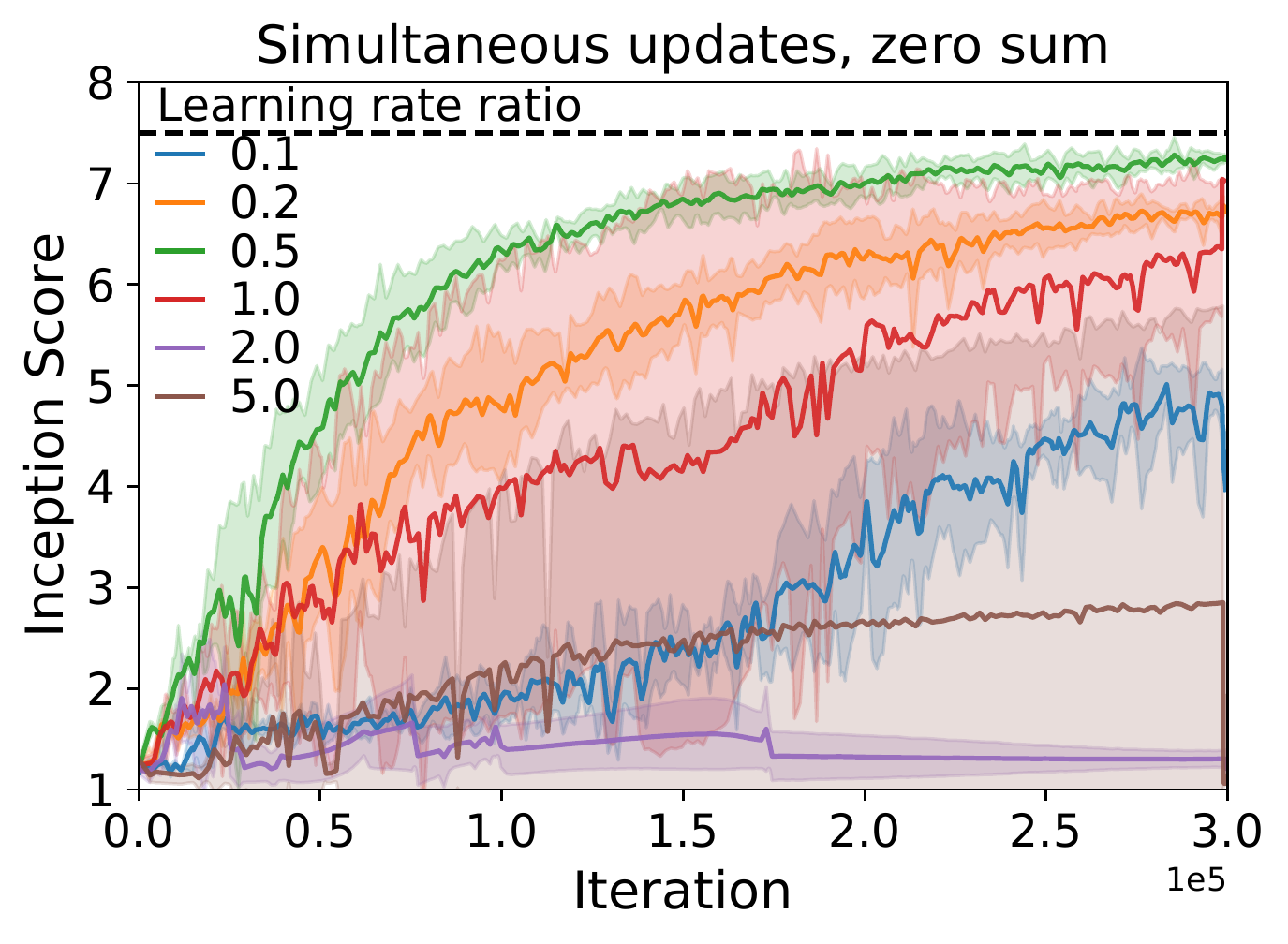}
  \includegraphics[width=0.48\columnwidth]{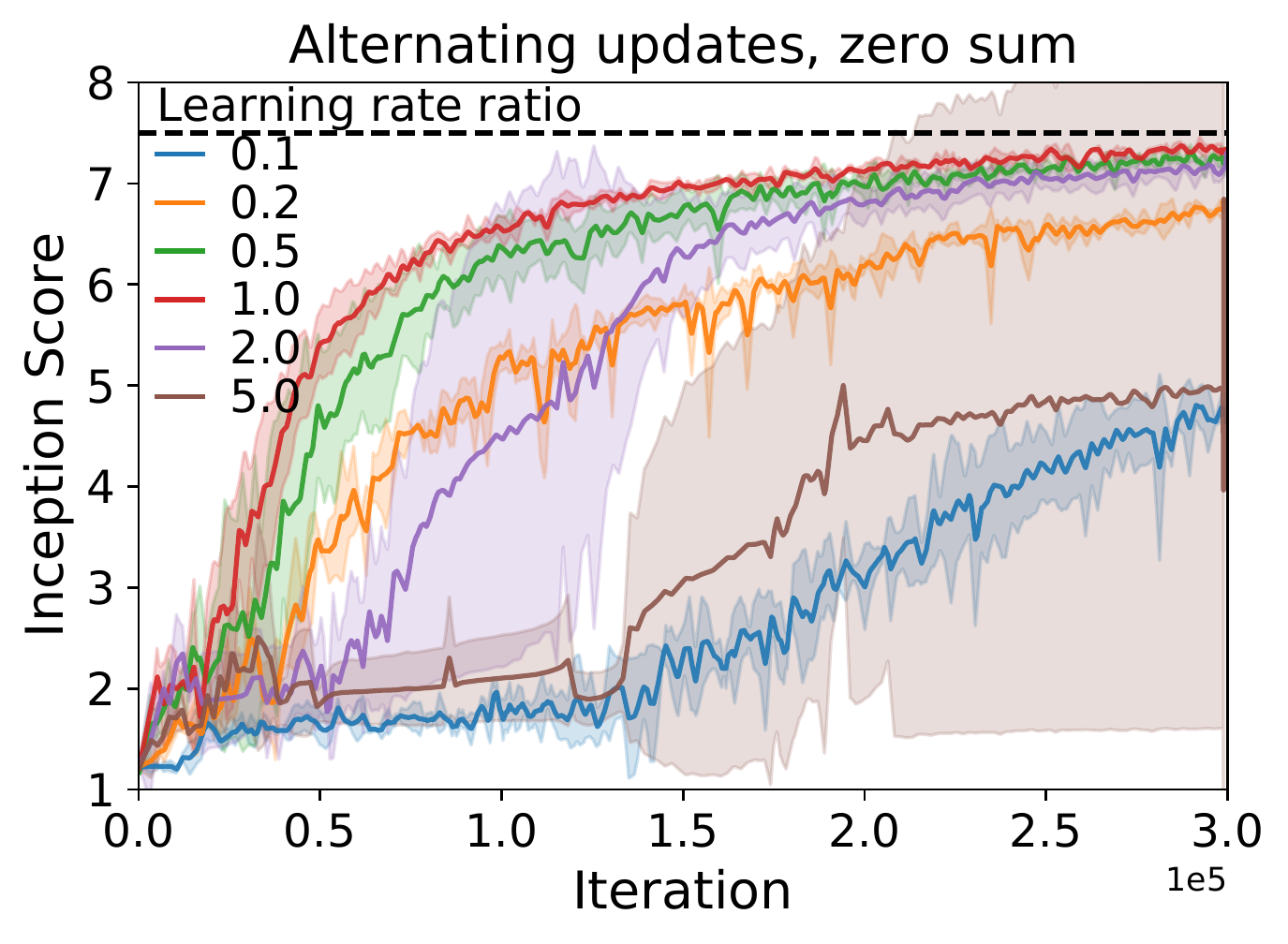}
   \caption{Alternating gradient descent performs better for learning rate ratios which reduce the adversarial nature of DD. The same learning rate ratios show no advantage in the simultaneous case.}
   \label{fig:sim_vs_alt_zero_sum_learning_rates}
\end{figure}

\subsection{Improving performance by explicit regularization}
\label{section:explicit_regularization}

We investigate whether \textit{canceling the interaction terms} between the two players can improve training stability and performance in zero-sum games trained using simultaneous gradient descent. We train models using the losses:
\begin{align}
L_1 &= - E + c_1 \norm{\nabla_{\vtheta} E}^2 \\
L_2 &= E + c_2 \norm{\nabla_{\vphi} E}^2
\label{eq:explicit_regularization_cancel_interaction}
\end{align}

If $c_1, c_2$ are $\mathcal{O}(h)$ we can ignore the DD from these regularization terms, since their effect on DD will be of order $\mathcal{O}(h^3)$.
We can set coefficients to be the negative of the coefficients present in DD, namely $c_1 = \alpha h/4$ and $c_2 = \lambda h/4$; we thus cancel the interaction terms which maximize the gradient norm of the other player, while keeping the self terms, which minimize the player's own gradient norm.
We show results in Figure~\ref{fig:sgd_adam_sim_comparison}: canceling the interaction terms leads to substantial improvement compared to SGD, obtains the same peak performance as Adam (though requires more training iterations) and recovers the performance of Runge-Kutta 4 (Figure~\ref{fig:idd_zero_sum_games}). Unlike Adam, we do not observe a decrease in performance when trained for longer but report higher variance in performance across seeds - see Supplementary Material.

\begin{figure}[t]
 \centering
  \begin{subfigure}{
  \includegraphics[width=0.45\columnwidth]{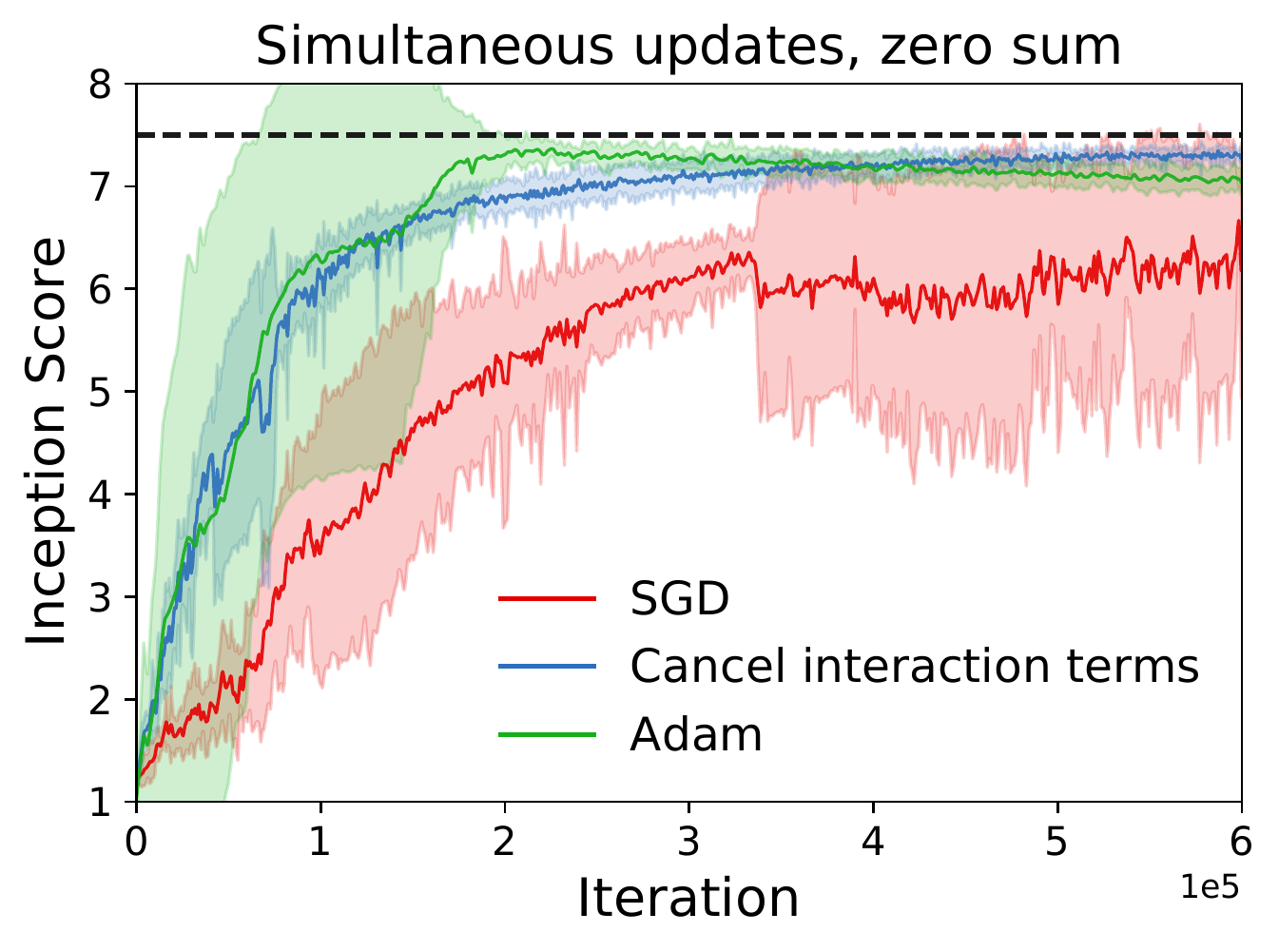}
  \label{fig:sgd_comp_main_paper}
} \end{subfigure}
 \begin{subfigure}{
  \includegraphics[width=0.45\columnwidth]{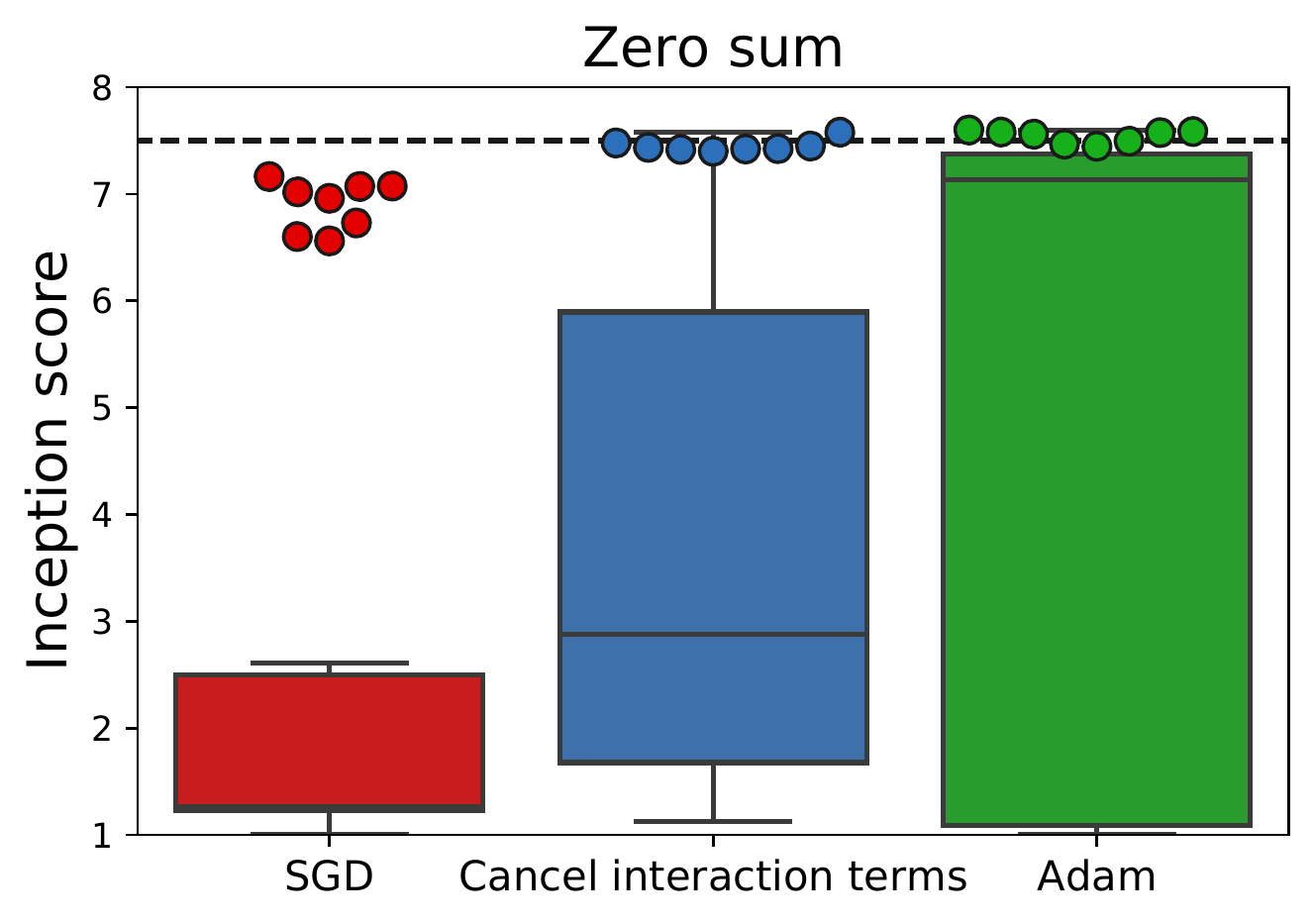}
  \label{fig:cancel_explicit_alt}
} \end{subfigure}
  \caption{Simultaneous updates: Explicit regularization canceling the interaction terms of DD improves performance, both for the best performing models (left) and across a sweep (right).}
  \label{fig:sgd_adam_sim_comparison}
\end{figure}

\begin{figure}[t]
 \centering
  \begin{subfigure}[SGA comparison.]{
  \includegraphics[width=0.465\columnwidth]{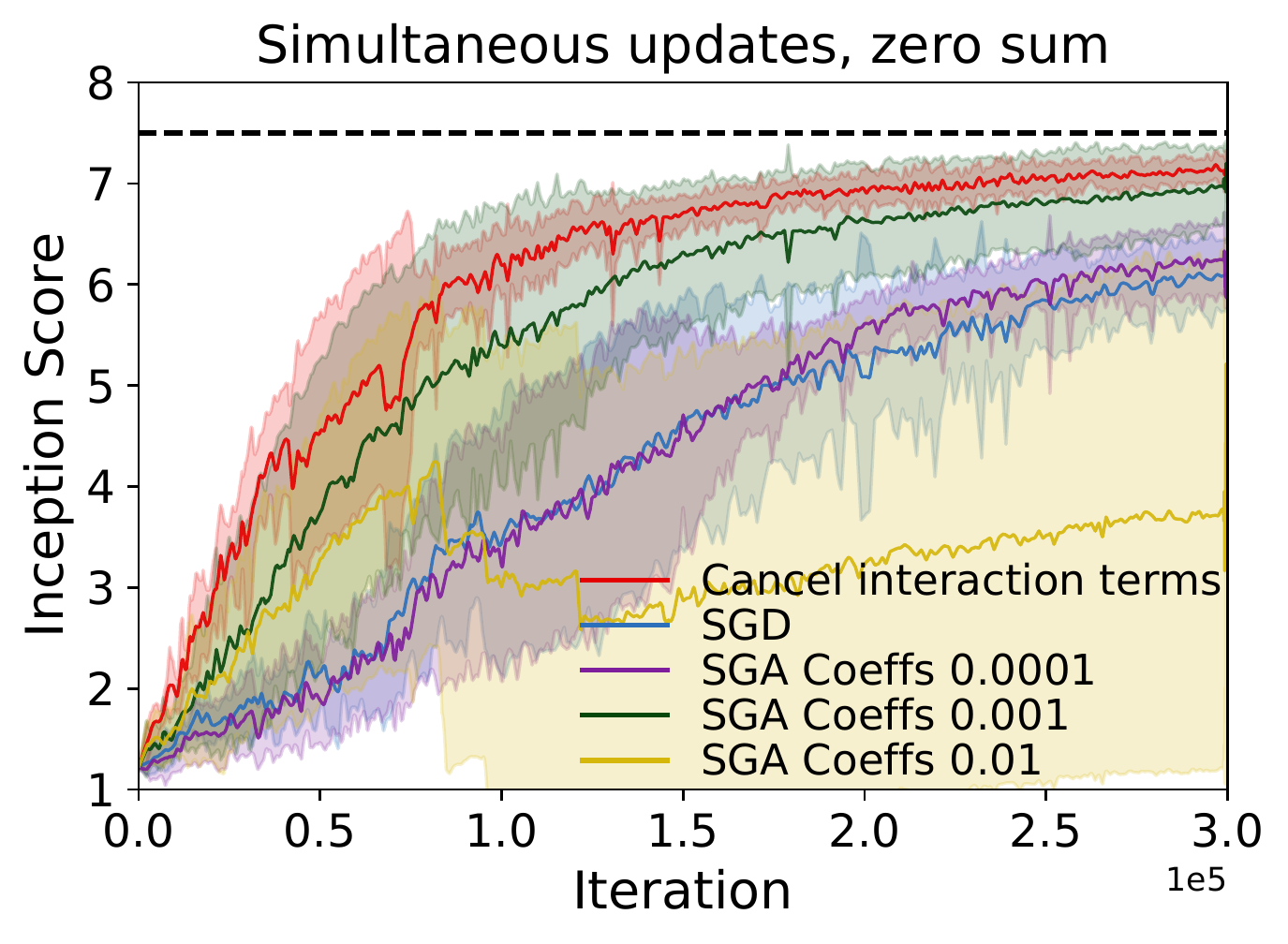}
  \label{fig:sga_comp}
} \end{subfigure}
 \begin{subfigure}[CO comparison.]{
 \includegraphics[width=0.465\columnwidth]{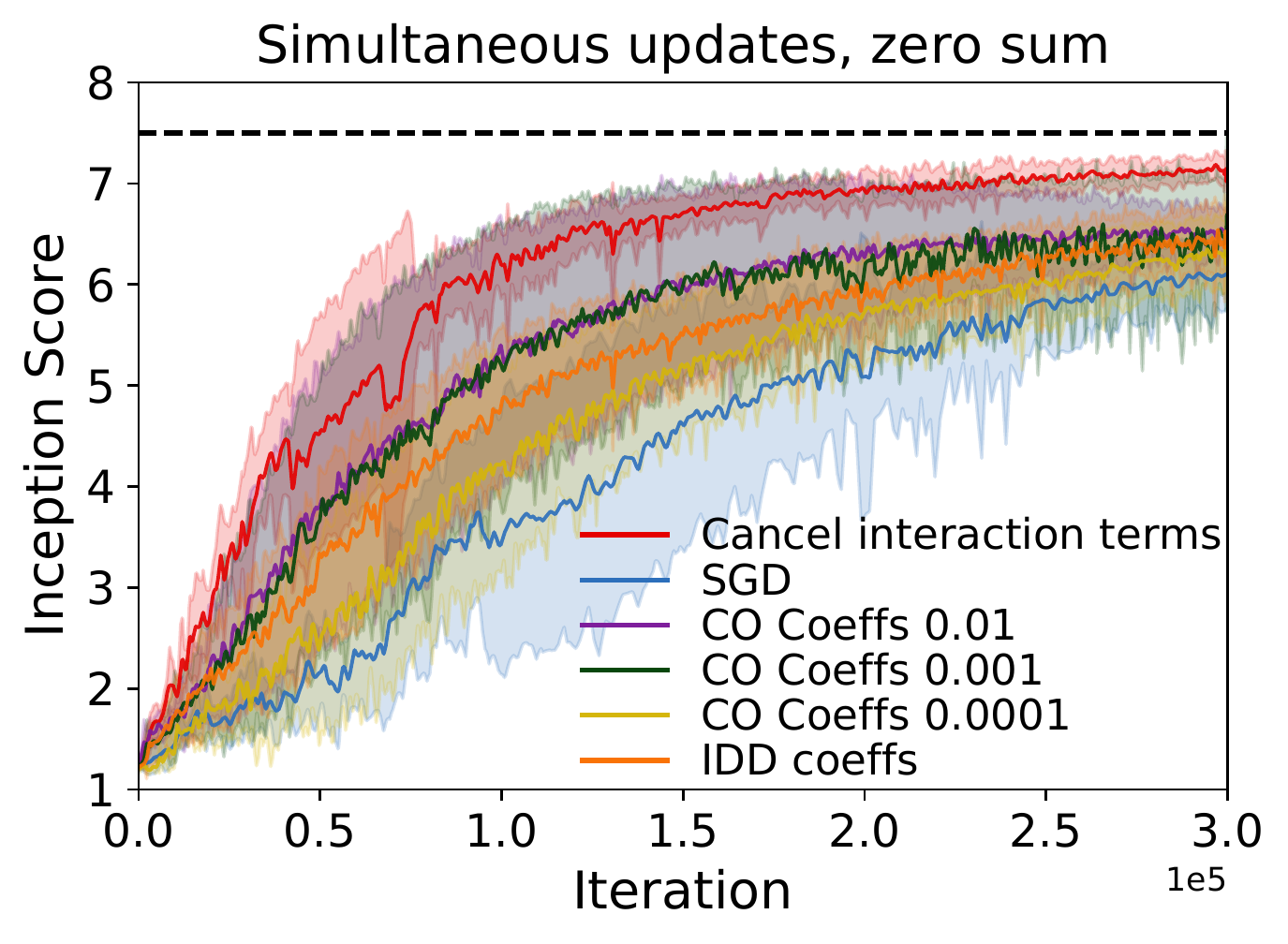}
  \label{fig:co_comp}
} \end{subfigure}
  \caption{Comparison with Symplectic Gradient Adjustment (SGA) and Consensus Optimization (CO): DD motivates  the form of explicit regularization and provides the optimal regularization coefficients, without requiring a larger hyperparameter sweep.}
  \label{fig:sga_consensus_opt_main_paper}
\end{figure}

\textit{Connection with Symplectic Gradient Adjustment (SGA)}: \citet{balduzzi2018mechanics} proposed SGA to improve the dynamics of gradient-based method for games, by counter-acting the rotational force of the vector field. Adjusting the gradient field can be viewed as modifying the losses as in Equation~\eqref{eq:explicit_regularization_cancel_interaction}; the modification from SGA cancels the interaction terms we identified. However, it is unclear whether the fixed coefficients of $c_1 = c_2 = \frac{1}{2}$ used by SGA are always optimal while our analysis shows the strength of DD changes with the exact discretization scheme, such as the step size. Indeed, as our experimental results in Figure~\ref{fig:sga_comp} show, adjusting the coefficient in SGA strongly affects training.

\textit{Connection with Consensus Optimization (CO)}: \citet{mescheder2017numerics} analyze the discrete dynamics of gradient descent in zero-sum games and prove that, under certain assumptions, adding explicit regularization that encourages the players to minimize the gradient norm of both players guarantees convergence to a local Nash equilibrium. Their approach includes canceling the interaction terms, but also requires \textit{strengthening the self terms}, using losses:
\begin{align}
L_1 &= - E + c_1 \norm{\nabla_{\vtheta} E}^2 + s_1 \norm{\nabla_{\vphi} E}^2 \\
L_2 &= E + s_2 \norm{\nabla_{\vtheta} E}^2 + c_2 \norm{\nabla_{\vphi} E}^2,
\label{eq:explicit_regularization_self_terms}
\end{align}
where they use $s_1 = s_2 = c_1 = c_2 = \gamma$ where $\gamma$ is a hyperparameter. In order to understand the effect of the self and interaction terms, we compare to CO, as well as a similar approach where we use coefficients proportional to the drift, namely $s_1 = \alpha h/4$ and $s_2 = \lambda h/4$; this effectively doubles the strength of the self terms in DD. We show results in Figure~\ref{fig:co_comp}. We first notice that CO can improve results over vanilla SGD. However, similarly to what we observed with SGA, the regularization coefficient is important and thus requires a hyperparameter sweep, unlike our approach which uses the coefficients provided by the DD. We further notice that strengthening the norms using the DD coefficients can improve training, but performs worse compared to only canceling the interaction terms. This shows the importance of finding the right training regime, and that strengthening the self terms does not always improve performance.

\textit{Alternating updates}: We perform the same exercise for alternating updates, where $c_1 = \alpha h/4$ and $c_2 = \lambda h / 4 ( 1 - \frac{2 \alpha}{\lambda}) $. We also study the performance obtained by only canceling the discriminator interaction term, since when $\alpha/\lambda > 0.5$ the generator interaction term minimizes, rather than maximizes, the discriminator gradient norm and thus the generator interaction term might not have a strong destabilizing force.
We observe that adding explicit regularization mainly brings the benefit of reduced variance when canceling the discriminator interaction term (Figure~\ref{fig:sgd_adam_sim_alt_updates}). As for simultaneous updates, we find that knowing the form of DD guides us to a choice of explicit regularization: for alternating updates canceling both interaction terms can hurt training, but the form of the modified losses suggests that we should only cancel the discriminator interaction term, with which we can obtain some gains.

\begin{figure}[t]
 \centering
  \begin{subfigure}{
  \includegraphics[width=0.45\columnwidth]{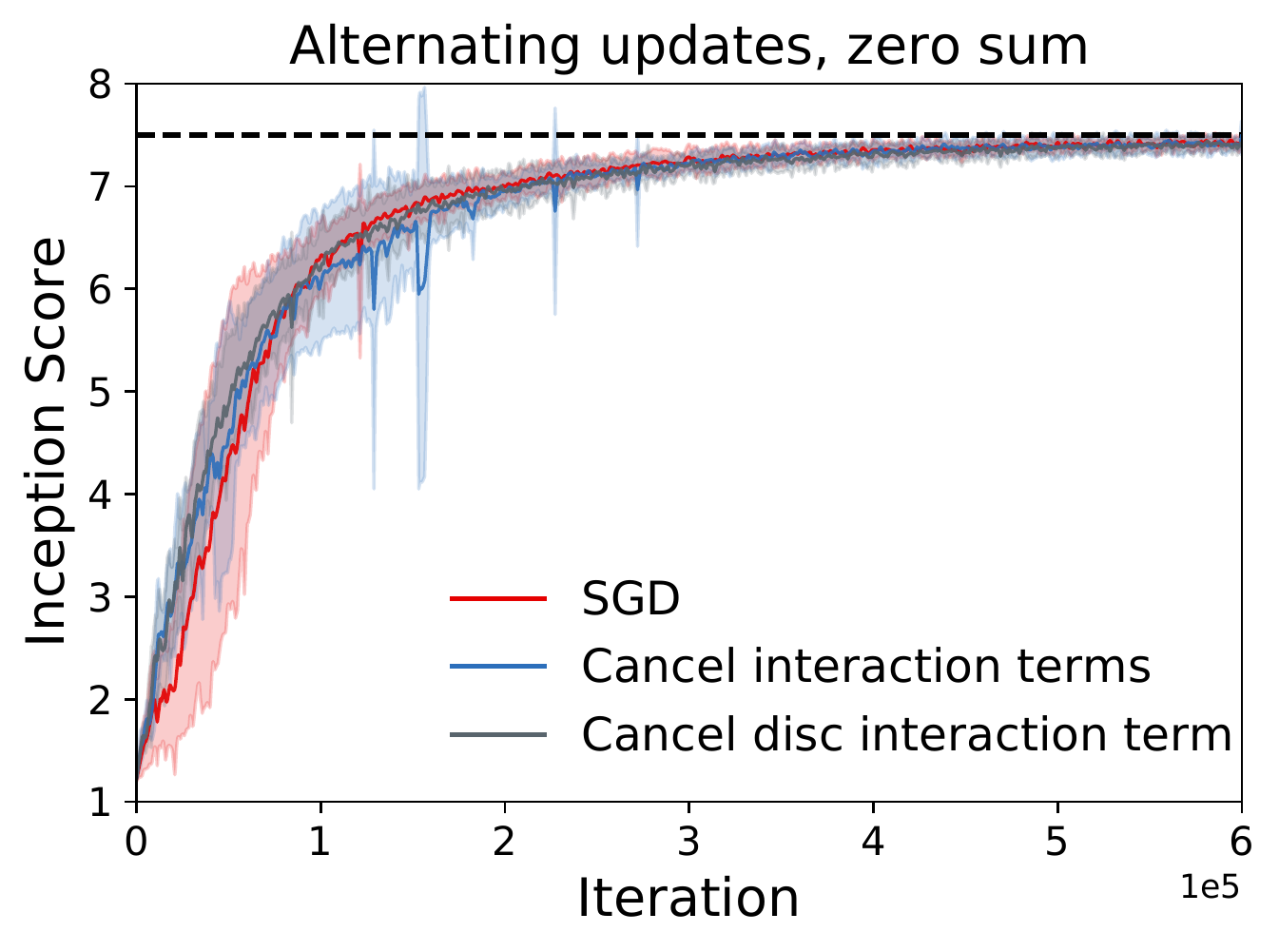}
  \label{fig:ch2}
} \end{subfigure}
 \begin{subfigure}{
  \includegraphics[width=0.45\columnwidth]{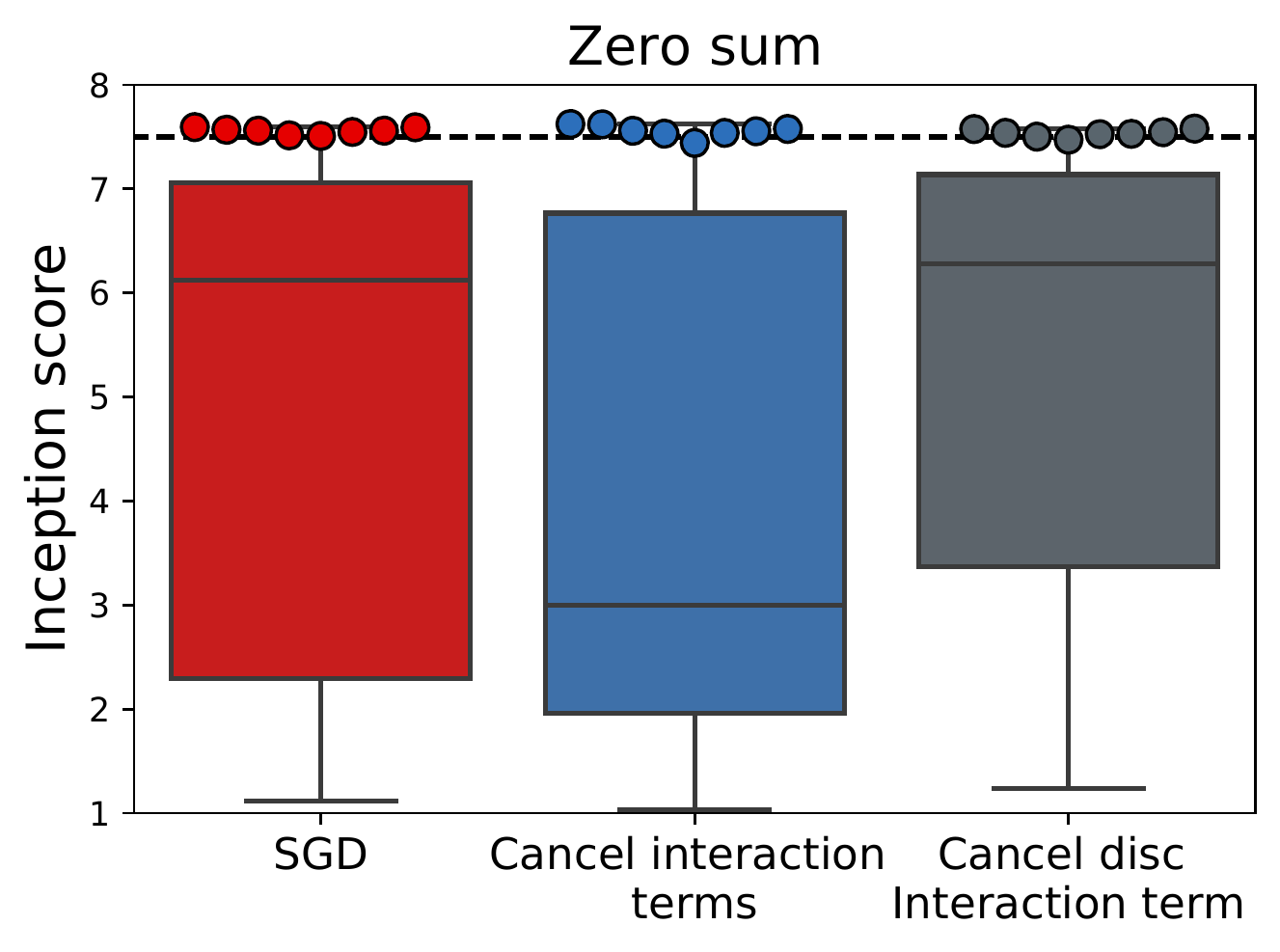}
  \label{fig:ch1}
} \end{subfigure}
  \caption{Alternating updates: DD tells us which explicit regularization terms to use; canceling only the discriminator interaction term improves sensitivity across hyperparameters (right).}
  \label{fig:sgd_adam_sim_alt_updates}
\end{figure}

\textit{Does canceling the interaction terms help for every choice of learning rates?} The substantial performance and stability improvement we observe applies to the performance obtained across a learning rate sweep. For individual learning rate choices however, canceling the interaction terms is not guaranteed to improve learning.

\subsection{Extension to non-zero-sum GANs}

Finally, we extend our analysis to GANs with the non-saturating loss for the generator $E_G = -\log D_\phi(G_\theta(\vz))$ introduced by \citet{goodfellow2014generative}, while keeping the discriminator loss unchanged as $E_D = \mathbb{E}_{p^*(\vx)} \log D_{\vphi}(\vx) + \mathbb{E}_{p_{\vtheta}(\vz)} \log( 1 - D_{\vphi}(G_{\theta}(\vz))$.
In contrast with the dynamics from zero-sum GANs we analyzed earlier, changing from simultaneous to alternating updates results in little change in performance - as can be seen in Figure~\ref{fig:non_saturating}. Despite having the same adversarial structure and the same discriminator loss, changing the generator loss changes the relative performance of the different discrete update schemes.
Since the effect of DD strongly depends on the game, we recommend analyzing the performance of discrete numerical schemes on a case by case basis. Indeed, while for general two-player games we cannot always write modified losses as for the zero-sum case -- see the Supplementary Material for a discussion -- we can use Theorems~\ref{thm:sim} and ~\ref{thm:alt} to understand the effect of the drift for specific choices of loss functions.

\begin{figure}[t]
  \centering
  \includegraphics[width=0.48\columnwidth]{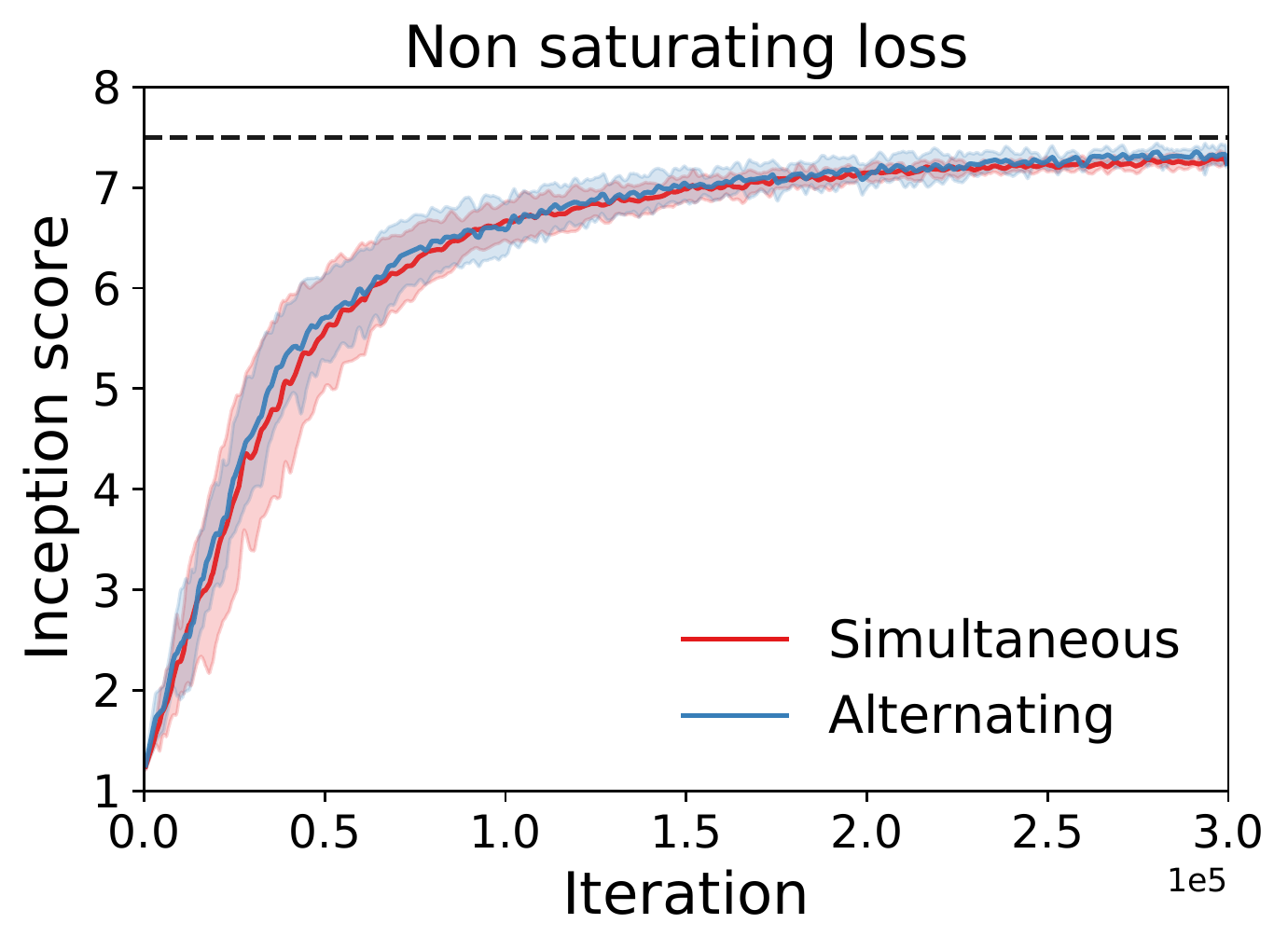}
  \includegraphics[width=0.48\columnwidth]{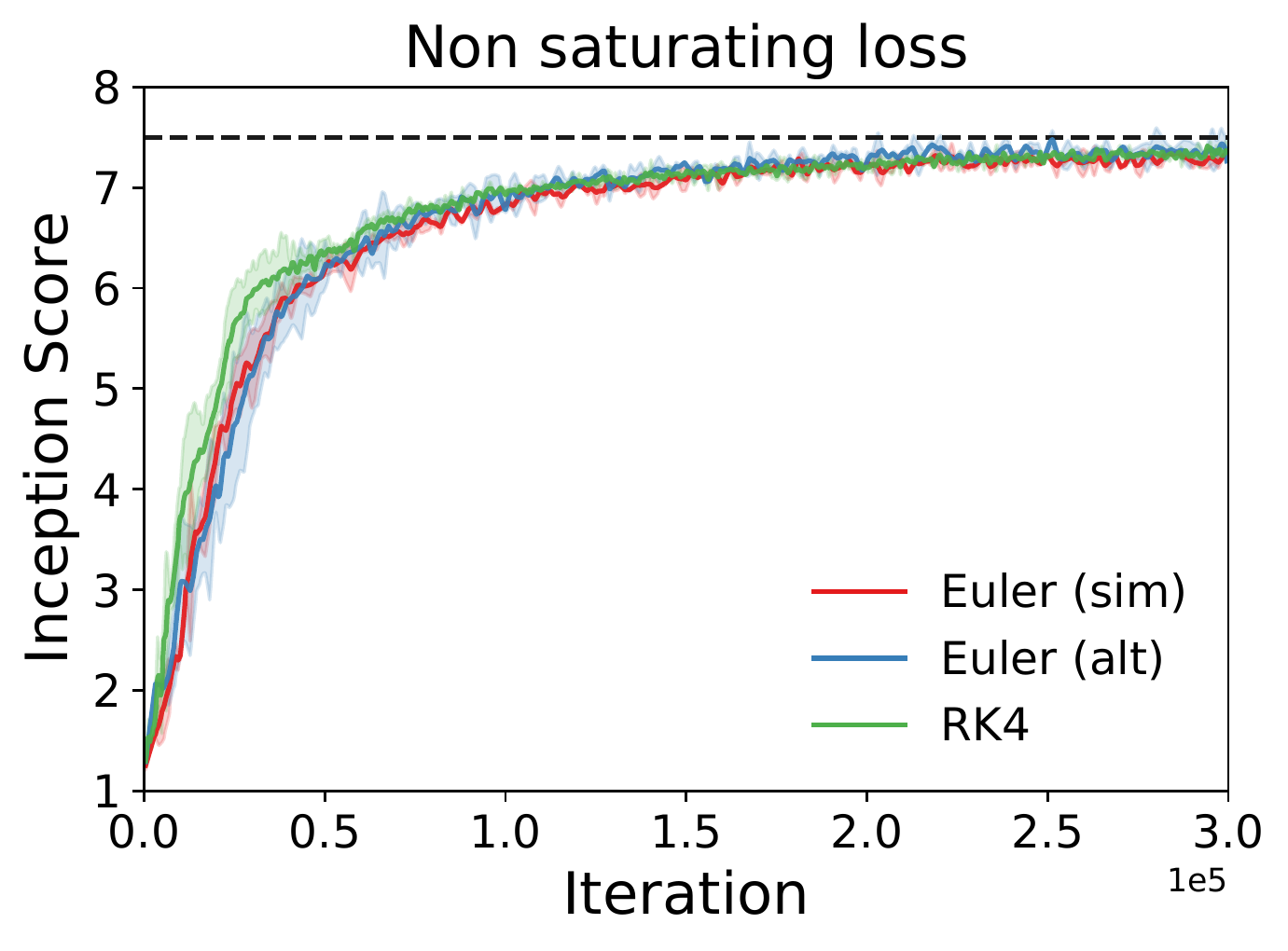}
  \caption{The effect of DD depends on the game: its effect is less strong for the non-saturating loss.}
  \label{fig:non_saturating}
\end{figure}

\section{Related work}

\begin{table*}[h]
\centering
\begin{tabular}{ |c|c|c|c| }
 \hline
  & Explicit & First player ($\vphi$) & Second player ($\vtheta$)\\
 \hline
 \hline
  DD (S) & \xmark & $\frac{\alpha h}{4}\norm{\nabla_{\vphi} E}^2 - \frac{\alpha h}{4}\norm{\nabla_{\vtheta} E}^2$  & $- \frac{\lambda h}{4}\norm{\nabla_{\vphi} E}^2 + \frac{\lambda h}{4}\norm{\nabla_{\vtheta} E}^2$ \\
 DD (A) & \xmark & $\frac {\alpha h}{4m}\norm{\nabla_{\vphi} E}^2 - \frac{\alpha h}{4}\norm{\nabla_{\vtheta} E}^2$ & $ \frac{(2 \alpha - \lambda) h}{4}\norm{\nabla_{\vphi} E}^2 + \frac{\lambda h}{4k}\norm{\nabla_{\vtheta} E}^2$  \\
  \hline
  Cancel DD interaction terms (S) & \checkmark &  $\frac{\alpha h}{4}\norm{\nabla_{\vtheta} E}^2$  & $ \frac{\lambda h}{4}\norm{\nabla_{\vphi} E}^2$ \\
 Cancel DD interaction terms (A) & \checkmark &  $\frac{\alpha h}{4}\norm{\nabla_{\vtheta} E}^2$  & $ -\frac{(2 \alpha - \lambda) h}{4}\norm{\nabla_{\vphi} E}^2$ \\
 SGA (S)  & \checkmark &$ \frac{1}{2} \norm{\nabla_{\vtheta} E}^2$ &  $ \frac{1}{2} \norm{\nabla_{\vphi} E}^2$ \\
 Consensus optimization (S) &\checkmark & $\eta \norm{\nabla_{\vphi} E}^2 + \eta \norm{\nabla_{\vtheta} E}^2$ & $\eta \norm{\nabla_{\vphi} E}^2 + \eta \norm{\nabla_{\vtheta} E}^2$ \\
Locally stable GAN (S) & \checkmark & \text{\sffamily X} &  $\eta \norm{\nabla_{\vphi} E}^2$ \\
 ODE-GAN (S)& \checkmark& $ \eta \norm{\nabla_{\vtheta} E}^2$ &  \text{\sffamily X} \\
 \hline
\end{tabular}
\caption{Comparing DD with explicit regularization methods in zero-sum games. SGA (without alignment, see the Supplementary Material and \citealt{balduzzi2018mechanics}), Consensus Optimization~\citep{mescheder2017numerics}, Locally Stable GAN~\citep{nagarajan2017gradient}, ODE-GAN~\citep{odegan}. We assume a $\min_{\vtheta} \min_{\vphi}$ game with learning rates $\alpha h$ and $\lambda h$  and number of updates $m$ and $k$, respectively. S and A denote simultaneous and alternating updates.}
\label{tab:methods_comp}
\end{table*}

{\bf Backward error analysis:} There have been a number of recent works applying backward error analysis to machine learning in the one-player case (supervised learning), which can potentially be extended to the two-players settings. In particular, \citet{igr_sgd} extended the analysis of~\citet{igr} to stochastic gradient descent.
\citet{symmetry} used modified gradient flow equations to show that discrete updates break certain conservation laws present in the gradient flows of deep learning models. \citet{franca2020} compare momentum and Nesterov acceleration using their modified equations. \citet{dissipative} used backward error analysis to help devise optimizers with a control on their stability and convergence rates. \citet{stochastic_adaptive_sgd} used modified equation techniques in the context of stochastic differential equations to devise optimizers with adaptive learning rates.
Other recent works such as \citet{grad_reg_avi} and \citet{fisher_explosion} noticed the strong impact of implicit gradient regularization in the training of over-parametrized models with SGD using different approaches. These works provide evidence that DD has a significant impact in deep learning, and that backward error analysis is a powerful tool to quantify it.
At last, note that a special case of Theorem~\ref{thm:2_players_sim} for zero-sum games with equal learning rates can be found in~\citet{lu2021resolution}.

{\bf Two-player games:} As one of the best-known examples at the interface of game theory and deep learning, GANs have been powered by gradient-based optimizers as other deep neural networks. The idea of an implicit regularization effect induced by simultaneous gradient descent in GAN training was first discussed in~\citet{schafer2019implicit}; the authors show empirically that the implicit regularization induced by gradient descent can have a positive effect on GAN training, and take a game theoretic approach to strengthening it using competitive gradient descent~\citep{competitive_sgd}. By quantifying the DD induced by gradient descent using backward error analysis, we shed additional light on the regularization effect that discrete updates have on GAN training, and show it has both beneficial and detrimental components (as also shown in ~\citet{daskalakis2018limit} with different methods in the case of simultaneous GD). Moreover, we show that the explicit regularization inspired by DD results in drastically improved performance.
The form of the modified losses we have derived are related to explicit regularization for GANs,
which has been one of the most efficient methods for stabilizing GANs as well as other games. Some of the regularizers constrain the complexity of the players~\citep{miyato2018spectral,gulrajani2017improved,biggan}, while others modify the dynamics for better convergence properties~\citep{odegan,mescheder2018training,nagarajan2017gradient,balduzzi2018mechanics,wang2019solving,mazumdar2019finding}.
Our approach is orthogonal in that, with backward analysis, we start from understanding the most basic gradient steps, underpinning any further modifications of the losses or gradient fields.
Importantly, we discovered a relationship between learning rates and the underlying regularization.
Since merely canceling the effect of DD is insufficient in practice (as also observed by \citet{odegan}), our approach complements regularizers that are explicitly designed to improve convergence.
We leave to future research to further study other regularizers in combination with our analysis. We summarize some of these regularizers in Table~\ref{tab:methods_comp}.

 To our knowledge, this is the first work towards finding continuous systems which better match the gradient descent updates used in two player games. Studying discrete algorithms using continuous systems has a rich history in optimization~\citep{su2016differential,wibisono2016variational}. Recently,
 models that directly parametrize differential equations demonstrated additional potential from bridging the discrete and continuous perspectives~\citep{chen2018neural,grathwohl2018ffjord}.

\section{Discussion}
We have shown that using modified continuous systems to quantify the discretization drift induced by gradient descent updates can help bridge the gap between the discrete optimization dynamics used in practice and the analysis of the original systems through continuous methods. This allowed us to cast a new light on the stability and performance of games trained using gradient descent,
and guided us towards explicit regularization strategies inspired by these modified systems.
We note however that DD merely modifies a game's original dynamics, and that the DD terms alone can not fully characterize a discrete scheme. In this sense, our approach only complements works analyzing the underlying game \citep{shoham2008multiagent}. Also, it is worth noting that our analysis is valid for learning rates small enough that errors of size $\mathcal O(h^3)$ can be neglected.

We have focused our empirical analysis on a few classes of two-player games, but the effect of DD will be relevant for all games trained using gradient descent. Our method can be expanded beyond gradient descent to other optimization methods such as Adam~\citep{kingma2014adam}, as well as to the stochastic setting as shown in~\citet{igr_sgd}.
We hope that our analysis of gradient descent provides %
a useful building block to further the understanding and performance of two-player games.

\textbf{Acknowledgments}. We would like to thank the reviewers and area chair for useful suggestions, and Alex Goldin, Chongli Qin, Jeff Donahue, Marc Deisenroth, Michael Munn, Patrick Cole, Sam Smith and Shakir Mohamed for useful discussions and support throughout this work.

\clearpage
\bibliography{merged_paper}
\bibliographystyle{icml2021_style/icml2021}

\appendix

\onecolumn

\icmltitle{Supplementary material for Discretization Drift in Two-Player Games}

\tableofcontents
\addtocontents{toc}{\protect\setcounter{tocdepth}{1}}

\renewcommand{\theequation}{A.\arabic{equation}}
\renewcommand\thefigure{A.\arabic{figure}}
\renewcommand\thetable{A.\arabic{table}}
\setcounter{figure}{0}
\setcounter{table}{0}
\setcounter{equation}{0}

\section{Proof of the main theorems}

\textbf{Notation}: We use $\vphi \in \mathbb{R}^m$ to denote the parameters of the first player
and $\vtheta \in \mathbb{R}^n$ for the second player. We assume that the players parameters are updated simultaneously
with learning rates $\alpha h$ and $\lambda h$ respectively.
We consider the vector fields $f(\vphi, \vtheta): \mathbb{R}^m \times \mathbb{R}^n \rightarrow \mathbb{R}^m$ and
$g(\vphi, \vtheta): \mathbb{R}^m \times \mathbb{R}^n \rightarrow \mathbb{R}^n$. Unless otherwise specified, we assume that vectors are row vectors.
We denote $\nabla_{\vx} f$ the transpose of the Jacobian of $f$ with respect to $\vx \in \{\vphi, \vtheta\}$, and similarly for $g$.
Thus $\nabla_{\vtheta} f(\vphi, \vtheta) \in \mathbb{R}^{n, m}$ denotes the matrix with entries $\big[\nabla_{\vtheta} f(\vphi, \vtheta)\big]_{i, j} = \frac{d f_j}{d \vtheta_{i}}$ with $i \in \{1, ..., n\}, j \in\{1, ..., m\}$. We use bold notation to denote vectors ---  $\vx$ is a vector while $x$ is a scalar.

We now prove Theorem 3.1 and Theorem 3.2 in the main paper, corresponding to the \textit{simultaneous} Euler updates and to the \textit{alternating} Euler updates, respectively.
In both cases, our goal is to find corrections $f_1$ and $g_1$ to the original system
\begin{align}
 \dot{\vphi} &=  f( \vphi, \vtheta), \label{eq:ode-1}\\
 \dot{\vtheta}  &= g( \vphi, \vtheta), \label{eq:ode-2}
\end{align}
such that the modified continuous system
\begin{align}
 \dot{\vphi} &=  f( \vphi, \vtheta)  + h f_1( \vphi, \vtheta), \\
 \dot{\vtheta}  &= g( \vphi, \vtheta) + h g_1( \vphi, \vtheta),
\end{align}
follows the discrete steps of the method with a local error of order $\mathcal O(h^3)$.
More precisely, if $(\vphi_{t}, \vtheta_{t})$ denotes the discrete step of the method at time $t$, and $(\tilde \vphi(s), \tilde \vtheta(s))$ corresponds to the continuous solution of the modified system above starting at $(\vphi_{t-1}, \vtheta_{t-1})$, we want that the local errors for both players, i.e.,
$$
\| \vphi_{t} - \tilde \vphi(\alpha h) \|
\qquad \textrm{ and } \qquad
\| \vtheta_{t} - \tilde \vtheta(\lambda h) \|
$$
to be of order $\mathcal O(h^3)$. In the expression above, $\alpha h$ and $\lambda h$ are the effective learning rates (or step-sizes) for the first and the second player respectively for both the simultaneous and alternating Euler updates.

Our proofs from backward error analysis follow the same steps:
\begin{enumerate}
  \item Expand the discrete updates to find a relationship between $\vphi_t$ and $\vphi_{t-1}$ and $\vtheta_t$ and $\vtheta_{t-1}$ up to order $\mathcal{O}(h^2)$.
  \item Expand the changes in continuous time of the modified ODE given by backward error analysis.
  \item Find the first order Discretization Drift (DD) by matching the discrete and continuous updates up to second order in learning rates.
\end{enumerate}

\textbf{Notation}: To avoid cluttered notations, we use $f_{(t)}$ to denote the $f(\vphi_t, \vtheta_t)$ and $g_{(t)}$ to denote $g(\vphi_t, \vtheta_t)$ for all $t$. If no index is specified, we use $f$ to denote $f(\vphi, \vtheta)$, where $\vphi$ and $\vtheta$ are variables in a continuous system.

\subsection{Simultaneous updates (Theorem 3.1)}
\label{sec:sim_updates}

Here we prove Theorem 3.1 in the main paper, which we reproduce here:

The simultaneous Euler updates with learning rates $\alpha h$ and $\lambda h$ respectively are given by:
\begin{align}
\vphi_t &= \vphi_{t-1} + \alpha h f(\vphi_{t-1}, \vtheta_{t-1} ) \label{eq:supp_simup1}\\
\vtheta_{t} &= \vtheta_{t-1} + \lambda h g(\vphi_{t-1},\vtheta_{t-1}) \label{eq:supp_simup2}
\end{align}
\begin{customthm}{3.1} \label{thm:supp_2_players_sim} The discrete \emph{simultaneous} Euler updates
in \eqref{eq:supp_simup1} and \eqref{eq:supp_simup2} follow the continuous system
\label{thm:supp_sim}
\begin{align*}
\dot{\vphi} &= f - \frac{\alpha h}{2} \left(f \nabla_{\vphi} f + g \nabla_{\vtheta} f \right) \\
\dot{\vtheta} &= g - \frac{\lambda h}{2} \left(f \nabla_{\vphi} g + g \nabla_{\vtheta} g \right)
\end{align*}
with an error of size $\mathcal O(h^3)$  after one update step.
\end{customthm}

\textbf{Step 1: Expand the updates per player via a Taylor expansion.} \\
We expand the numerical scheme to find a relationship between $\vphi_t$ and $\vphi_{t-1}$ and $\vtheta_t$ and $\vtheta_{t-1}$ up to order $\mathcal{O}(h^2)$.
Here in the case of the simultaneous Euler updates, this does not require any change to Equations~\eqref{eq:supp_simup1} and~\eqref{eq:supp_simup2}.
For the first player, the discrete Euler updates are:
\begin{align}
 \vphi_t &= \vphi_{t-1} + \alpha h f(\vphi_{t-1}, \vtheta_{t-1})
\label{eq:sim_disc_phi}
\end{align}

For the second player, the discrete Euler update has the same form:
\begin{align}
 \vtheta_t &= \vtheta_{t-1} + \lambda h g(\vphi_{t-1}, \vtheta_{t-1})
\label{eq:sim_disc_theta}
\end{align}

\textbf{Step 2: Expand the continuous time changes for the modified ODE given by backward error analysis} \\
We expand the changes in time of the modified ODE of the form:
\begin{align*}
\dot{\vphi} = \tilde{f}(\vphi, \vtheta) \\
\dot{\vtheta} = \tilde{g}(\vphi, \vtheta)\\
\end{align*}

where
\begin{align*}
  \tilde{f}(\vphi, \vtheta) = f(\vphi, \vtheta) + \sum_{i=1} {\tau_{\vphi}}^i f_i (\vphi, \vtheta) \\
  \tilde{g}(\vphi, \vtheta) = g(\vphi, \vtheta) + \sum_{i=1} {\tau_{\theta}}^i g_i (\vphi, \vtheta)
\end{align*}

our aim is to find $f_i$ and $g_i$ which match the discrete updates we have found above.

\begin{lemma}
If:
\begin{align*}
\dot{\vphi} = \tilde{f}(\vphi, \vtheta) \\
\dot{\vtheta} = \tilde{g}(\vphi, \vtheta)\\
\end{align*}
where
\begin{align*}
\tilde{f}(\vphi, \vtheta) = f(\vphi, \vtheta) + \sum_{i=1} {\tau_\phi}^i f_i (\vphi, \vtheta) \\
\tilde{g}(\vphi, \vtheta) = g(\vphi, \vtheta) + \sum_{i=1} {\tau_\theta}^i g_i (\vphi, \vtheta)
\end{align*}
and $\tau_{\theta}$ and $\tau_{\phi}$ are scalars. Then --- for ease of notation, we drop the argument $\tph$ on the evaluations of $\vphi$ and $\theta$ on the RHS:
\begin{align*}
\vphi(\tph + \tau_\phi) =  \vphi(\tph) + \tau_\phi f + \tau_\phi^2 f_1 + \tau_\phi^2\frac{1}{2} f \nabla_{\vphi} f + \tau_\phi^2\frac{1}{2} g \nabla_{\vtheta} f + \mathcal{O}(\tau_\phi^3)
\end{align*}
\label{thm:cont}
\end{lemma}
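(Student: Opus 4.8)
The plan is to prove the lemma by a direct second-order Taylor expansion of the modified flow in the elapsed time $\tau_\phi$, using the modified ODE itself to evaluate the time derivatives. Concretely, I would write $\vphi(\tph + \tau_\phi) = \vphi(\tph) + \tau_\phi\,\dot{\vphi}(\tph) + \tfrac{1}{2}\tau_\phi^2\,\ddot{\vphi}(\tph) + \mathcal{O}(\tau_\phi^3)$, which is justified by Taylor's theorem provided $f$ and $g$ (hence $\tilde{f},\tilde{g}$) are sufficiently smooth ($C^2$ suffices for the stated remainder). The two ingredients I then need are the first and second time derivatives of $\vphi$ at the base point $(\vphi(\tph),\vtheta(\tph))$.

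For the first derivative I would substitute the defining series directly: $\dot{\vphi} = \tilde{f} = f + \tau_\phi f_1 + \tau_\phi^2 f_2 + \cdots$, so that $\tau_\phi\dot{\vphi} = \tau_\phi f + \tau_\phi^2 f_1 + \mathcal{O}(\tau_\phi^3)$, which already produces the $\tau_\phi f$ and $\tau_\phi^2 f_1$ terms of the claim with the remaining series tail absorbed into the remainder. For the second derivative I would differentiate $\dot{\vphi} = \tilde{f}(\vphi,\vtheta)$ once more in time and apply the chain rule together with $\dot{\vphi} = \tilde{f}$, $\dot{\vtheta} = \tilde{g}$. In the paper's row-vector convention this yields $\ddot{\vphi} = \tilde{f}\,\nabla_{\vphi}\tilde{f} + \tilde{g}\,\nabla_{\vtheta}\tilde{f}$, whose $j$-th component is $\sum_i \tilde{f}_i\,\partial_{\phi_i}\tilde{f}_j + \sum_i \tilde{g}_i\,\partial_{\theta_i}\tilde{f}_j$, matching the claimed index structure. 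Since this term is multiplied by $\tfrac12\tau_\phi^2$, I only need $\ddot{\vphi}$ to leading order: replacing $\tilde{f}\to f$, $\tilde{g}\to g$, $\nabla\tilde{f}\to\nabla f$ incurs only $\mathcal{O}(h)$ corrections, giving $\tfrac12\tau_\phi^2\,\ddot{\vphi} = \tfrac12\tau_\phi^2\big(f\nabla_{\vphi} f + g\nabla_{\vtheta} f\big) + \mathcal{O}(h^3)$. Adding the two contributions reproduces the statement exactly.

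The main bookkeeping subtlety — the point I would state most carefully — is that $\tau_\phi$ plays a \emph{double role}: it is simultaneously the integration time (the expansion variable) and a coefficient appearing explicitly inside $\tilde{f}$ through the correction series, while the companion flow $\vtheta$ evolves under $\tilde{g}$ with its own scale $\tau_\theta$. The clean way to handle this is to fix at the outset that $\tau_\phi = \alpha h$ and $\tau_\theta = \lambda h$ are of common order $\mathcal{O}(h)$; then every discarded quantity — the $\tau_\phi^3 f_2$ tail of the first derivative, the $\mathcal{O}(h)$ corrections to $\ddot{\vphi}$ multiplied by $\tau_\phi^2$, and any cross terms mixing $\tau_\phi$ with $\tau_\theta$ — is uniformly $\mathcal{O}(h^3) = \mathcal{O}(\tau_\phi^3)$, exactly the claimed remainder. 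Making this asymptotic identification explicit at the start renders the single symbol $\mathcal{O}(\tau_\phi^3)$ in the statement unambiguous, after which the remaining computation is routine. This expansion is precisely what Step 3 later matches against the discrete update to solve for $f_1$.
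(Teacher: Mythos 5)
Your proposal is correct and follows essentially the same route as the paper's proof: a second-order Taylor expansion of the modified flow in time, the chain rule giving $\ddot{\vphi} = \tilde{f}\nabla_{\vphi}\tilde{f} + \tilde{g}\nabla_{\vtheta}\tilde{f}$, substitution of the correction series into the first-order term, and replacement of $\tilde{f},\tilde{g}$ by $f,g$ in the quadratic term at leading order. Your explicit identification $\tau_\phi = \alpha h$, $\tau_\theta = \lambda h$ is simply a sharper statement of the paper's closing assumption that $\tau_\phi$ and $\tau_\theta$ are of the same order of magnitude.
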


\begin{proof}
We expand to see what happens after 1 time step of $\tau_\phi$ by doing a Taylor expansion:
\begin{align*}
\vphi(\tph + \tau_\phi) &= \vphi + \tau_\phi\dot{\vphi} + \tau_\phi^2\frac{1}{2}\ddot{\vphi} + \mathcal{O}(\tau_\phi^3)\\
               &= \vphi + \tau_\phi \tilde{f} + \tau_\phi^2\frac{1}{2} \dot{\tilde{f}} +  \mathcal{O}(\tau_\phi^3) \\
               &= \vphi + \tau_\phi \tilde{f} + \tau_\phi^2\frac{1}{2} \left(\tilde{f} \nabla_{\vphi} \tilde{f} + \tilde{g} \nabla_{\vtheta} \tilde{f}\right)  + \mathcal{O}(\tau_\phi^3)\\
              &= \vphi + \tau_\phi (f + \tau_\phi f_1 + \mathcal{O}(\tau_\phi^2))+ \tau_\phi^2\frac{1}{2} \left(\tilde{f} \nabla_{\vphi} \tilde{f} + \tilde{g} \nabla_{\vtheta} \tilde{f}\right)  + \mathcal{O}(\tau_\phi^3)\\
              &= \vphi + \tau_\phi f + \tau_\phi^2 f_1 + \tau_\phi^2\frac{1}{2} \left(\tilde{f} \nabla_{\vphi} \tilde{f} + \tilde{g} \nabla_{\vtheta} \tilde{f}\right)  + \mathcal{O}(\tau_\phi^3)\\
              &= \vphi + \tau_\phi f + \tau_\phi^2 f_1 + \tau_\phi^2\frac{1}{2} \left( \left(f + \tau_\phi f_1 + \mathcal{O}(\tau_\phi^2)\right) \nabla_{\vphi} \tilde{f} + \left(g + \tau_\theta g_1 + \mathcal{O}(\tau_\theta^2)\right)\nabla_{\vtheta} \tilde{f}\right)  + \mathcal{O}(\tau_\phi^3)\\
              &= \vphi + \tau_\phi f + \tau_\phi^2 f_1 + \tau_\phi^2\frac{1}{2} f \nabla_{\vphi} \tilde{f} + \tau_\phi^2\frac{1}{2} g \nabla_{\vtheta} \tilde{f} + \mathcal{O}(\tau_\phi^3)\\
              &= \vphi + \tau_\phi f + \tau_\phi^2 f_1 + \tau_\phi^2\frac{1}{2} f \nabla_{\vphi} f + \tau_\phi^2\frac{1}{2} g \nabla_{\vtheta} f + \mathcal{O}(\tau_\phi^3)\\
\end{align*}
where we assumed that $\tau_\phi$ and $\tau_\theta$ are in the same order of magnitude.
\end{proof}

\textbf{Step 3: Matching the discrete and modified continuous updates.} \\
From Lemma~\ref{thm:cont}, we model how the continuous updates of the two players change in time for the modified  ODEs given by backward error analysis. To do so, we substitute the \textit{current values} as those given by the discrete updates, namely $\vphi_{t-1}$ and $\vtheta_{t-1}$, in order to calculate the displacement according to the continuous updates:

\begin{align}
\vphi(\tph + \tau_\phi) &= \vphi_{t-1} + \tau_\phi f_{(t-1)} + \tau_\phi^2 f_1(\vphi_{t-1}, \vtheta_{t-1}) + \frac{1}{2} \tau_\phi^2  f_{(t-1)} \nabla_{\vphi}  f_{(t-1)} + \frac{1}{2}\tau_\phi^2 g_{(t-1)} \nabla_\theta  f_{(t-1)} + \mathcal{O}(\tau_\phi^3)
\label{eq:sim_ode_phi} \\
\vtheta(\tph + \tau_\theta) &= \vtheta_{t-1} + \tau_\theta g_{(t-1)} + \tau_\theta^2 g_1(\vphi_{t-1}, \vtheta_{t-1}) + \frac{1}{2} \tau_\theta^2 f_{(t-1)} \nabla_{\vphi} g_{(t-1)} + \frac{1}{2}\tau_\theta^2 g_{(t-1)} \nabla_{\vtheta} g_{(t-1)} + \mathcal{O}(\tau_\theta^3)
\label{eq:sim_ode_theta}
\end{align}

In order to find $f_1$  and $g_1$ such that the continuous dynamics of the modified updates $f + \alpha h f_1$
and $g + \lambda h g_1$ match the discrete updates in Equations~\eqref{eq:sim_disc_phi} and ~\eqref{eq:sim_disc_theta}, we look for the corresponding continuous increments of the discrete updates in the modified continuous system, such that $\norm{\vphi(\tph + \tau_\phi)   - \vphi_t }$ and $\norm{\vtheta(\tph + \tau_\theta)   - \vtheta_t }$ are $\mathcal{O}(h^3)$.

 The first order terms in Equations~\eqref{eq:sim_disc_phi} and ~\eqref{eq:sim_disc_theta} and those in Equations~\eqref{eq:sim_ode_phi} and~\eqref{eq:sim_ode_theta} suggest that:
\begin{align*}
\alpha h = \tau_{\phi} \\
\lambda h = \tau_{\theta}
\end{align*}

We can now proceed to find $f_1$ and $g_1$ from the second order terms:
\begin{align*}
  0  &= \alpha^2 h^2 f_1(\vphi_{t-1}, \vtheta_{t-1}) + \frac{1}{2}  \alpha^2 h^2 f_{(t-1)} \nabla_{\vphi} f_{(t-1)} + \frac{1}{2} \alpha^2 h^2 g_{(t-1)} \nabla_{\vtheta} f_{(t-1)} \\
  f_1(\vphi_{t-1}, \vtheta_{t-1}) &= - \frac{1}{2} f_{(t-1)} \nabla_{\vphi} f_{(t-1)} - \frac{1}{2} g_{(t-1)} \nabla_{\vtheta} f_{(t-1)} \\
\end{align*}

Similarly, for $g_1$ we obtain:
\begin{align*}
g_1(\vphi_{t-1}, \vtheta_{t-1}) &= - \frac{1}{2} f_{(t-1)} \nabla_{\vphi}g_{(t-1)} - \frac{1}{2} g_{(t-1)} \nabla_{\vtheta} g_{(t-1)} \\
\end{align*}

Leading to the first order corrections:
\begin{align}
f_1(\vphi_{t-1}, \vtheta_{t-1}) &= - \frac{1}{2} f_{(t-1)} \nabla_{\vphi}f_{(t-1)} - \frac{1}{2} g_{(t-1)} \nabla_{\vtheta} f_{(t-1)} \\
g_1(\vphi_{t-1}, \vtheta_{t-1}) &= - \frac{1}{2} f_{(t-1)} \nabla_{\vphi}g_{(t-1)} - \frac{1}{2} g_{(t-1)} \nabla_{\vtheta} g_{(t-1)}
\label{eq:sim_geneal_eq}
\end{align}

We have found the functions $f_1$ and $g_1$ such that after one discrete optimization step the ODEs $\dot{\vphi} = f + \alpha h f_1$ and $\dot{\vtheta} = g + \lambda h g_1$ follow the discrete updates up to order $\mathcal{O}(h^3)$, finishing the proof.

\subsection{Alternating updates (Theorem 3.2)}
\label{sec:multiple_updates}

Here we prove Theorem 3.2 in the main paper, which we reproduce here:

For the \textit{alternating Euler updates}, the players take turns to update their parameters, and can perform multiple updates each. We denote the number of alternating updates of the first  player (resp. second player) by $m$ (resp. $k$).
We scale the learning rates by the number of updates, leading to the following updates $ \vphi_{t} := \vphi_{m,t} $ and $\vtheta_{t} := \vtheta_{k, t}$ where
\begin{align}
 \vphi_{i, t} &=  \vphi_{i-1, t} + \frac{\alpha h}{m}   f(\vphi_{i-1,t}, \vtheta_{t-1}) , \hspace{1em} i = 1 \dots m, \label{eq:supp_altup1} \\
 \vtheta_{j, t} &=  \vtheta_{j-1, t} + \frac{\lambda h}{k} g(\vphi_{m, t}, \vtheta_{j-1, t}), \hspace{1em} j = 1 \dots k. \label{eq:supp_altup2}
\end{align}
\begin{customthm}{3.2} \label{thm:supp_2_players_alt}
The discrete \emph{alternating} Euler updates in \eqref{eq:supp_altup1} and \eqref{eq:supp_altup2} follow the continuous system
\begin{align*}
\dot{\vphi} &= f - \frac{\alpha h}{2} \left(\frac{1}{m}f \nabla_{\vphi} f + g \nabla_{\vtheta} f \right) \\
\dot{\vtheta} &= g - \frac{\lambda h}{2} \left((1- \frac {2\alpha} {\lambda})f \nabla_{\vphi} g + \frac{1}{k} g \nabla_{\vtheta} g \right)
\end{align*}
with an error of size $\mathcal O(h^3)$ after one update step.
\end{customthm}

\textbf{Step 1: Discrete updates} \\
In the case of alternating Euler discrete updates, we have:
\begin{align*}
 \vphi_{1,t} &=  \vphi_{t-1} + \frac{\alpha}{m} h f(\vphi_{t-1}, \vtheta_{t-1}) \\
 \vphi_{2,t} &=  \vphi_{1,t} + \frac{\alpha}{m} h f(\vphi_{1,t}, \vtheta_{t-1}) \\
 \dots\\
 \vphi_{m,t} &=  \vphi_{m-1,t} + \frac{\alpha}{m} h f(\vphi_{m-1, t}, \vtheta_{t-1}) \\
 \vtheta_{1,t} &=  \vtheta_{t-1} + \frac{\lambda}{k} h g(\textcolor{black}{\vphi_{m,t}}, \vtheta_{t-1}) \\
 \vtheta_{2,t} &=  \vtheta_{1, t-1} + \frac{\lambda}{k} h g(\vphi_{m,t}, \vtheta_{1, t}) \\
 \dots \\
 \vtheta_{k,t} &=  \vtheta_{k-1, t-1} + \frac{\lambda}{k} h g(\vphi_{m,t}, \vtheta_{k-1, t}) \\
 \vphi_t &= \vphi_{m,t} \\
 \vtheta_t &= \vtheta_{k,t}
\end{align*}

\begin{lemma}
For update with $\vphi_{m,t} = \vphi_{m-1, t} + h f(\vphi_{m-1, t}, \vtheta_{t-1})$ with step size $h$, the $m$-step update has the form:
\begin{align*}
 \vphi_{m,t} &= \vphi_{t-1} + mh f_{(t-1)} + \frac{m (m-1)}{2}h^2 f_{(t-1)}\nabla_{\vphi}f_{(t-1)}  + \mathcal{O}(h^3)\\
\end{align*}

\begin{proof}
Proof by induction.
\label{thm:discrete}
Base step.

\begin{align*}
 \vphi_{2,t} &=  \vphi_{1,t} + h f(\vphi_{1,t}, \vtheta_{t-1}) \\
            &=  \vphi_{t-1} +  h f(\vphi_{t-1}, \vtheta_{t-1}) + h f(\vphi_{1,t}, \vtheta_{t-1}) \\
             &= \vphi_{t-1} + h f_{(t-1)} + h f\big(\vphi_{t-1} + h f_{(t-1)}, \vtheta_{t-1} \big) \\
             &= \vphi_{t-1} + h f_{(t-1)} + h \left(f_{(t-1)} + h f_{(t-1)} \nabla_{\vphi}f_{(t-1)} + \mathcal{O}(h^2) \right) \\
             &= \vphi_{t-1} + 2h f_{(t-1)} + h^2 f_{(t-1)} \nabla_{\vphi}f_{(t-1)} + \mathcal{O}(h^3) \\
\end{align*}

Inductive step:
\begin{align*}
 \vphi_{m+1, t} &= \vphi_{m, t} + h f(\vphi_{m, t}, \vtheta_{t-1}) \\
                 &= \vphi_{t-1} + mh f_{(t-1)} + \frac{m (m-1)}{2}h^2 f_{(t-1)} \nabla_{\vphi}f_{(t-1)} \\
                 &\quad + h f\Big(\vphi_{t-1} + mh f_{(t-1)} + \frac{m (m-1)}{2}h^2 f_{(t-1)} \nabla_{\vphi}f_{(t-1)} + \mathcal{O}(h^3), \vtheta_{t-1}\Big) + \mathcal{O}(h^3) \\
                &= \vphi_{t-1} + mh f_{(t-1)} + \frac{m (m-1)}{2}h^2 f_{(t-1)} \nabla_{\vphi}f_{(t-1)}\\
                &\quad + h \left(f_{(t-1)} + (mh f_{(t-1)} + \frac{m (m-1)}{2}h^2 f_{(t-1)} \nabla_{\vphi}f_{(t-1)}) \nabla_{\vphi}f_{(t-1)} \right) + \mathcal{O}(h^3) \\
                &= \vphi_{t-1} + (m +1)h f_{(t-1)} + \frac{m (m-1)}{2}h^2 f_{(t-1)} \nabla_{\vphi}f_{(t-1)} \\
                &\quad + h \left(mh f_{(t-1)} + \frac{m (m-1)}{2}h^2 f_{(t-1)} \nabla_{\vphi}f_{(t-1)})\right)\nabla_{\vphi}f_{(t-1)} + \mathcal{O}(h^3) \\
                 &= \vphi_{t-1} + (m +1)h f_{(t-1)} + \frac{m (m+1)}{2}h^2 f_{(t-1)} \nabla_{\vphi}f_{(t-1)} + \mathcal{O}(h^3)
\end{align*}
\end{proof}
\end{lemma}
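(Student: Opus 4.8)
The plan is to prove the claim by induction on the number of sub-steps $m$, holding the second argument $\vtheta_{t-1}$ fixed throughout, since the lemma's update rule freezes $\vtheta_{t-1}$ while iterating $\vphi$ (this is the inner loop of the alternating scheme in \eqref{eq:supp_altup1}). First I would dispatch the base case: for $m=1$ we have $\vphi_{1,t} = \vphi_{t-1} + h f_{(t-1)}$, which matches the claimed form because the $h^2$ coefficient $\tfrac{m(m-1)}{2}$ vanishes at $m=1$. The author instead verifies $m=2$, which is equally acceptable and already exhibits the mechanism that generates the $h^2$ term.

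For the inductive step I would assume the formula at level $m$ and write the next sub-step as $\vphi_{m+1,t} = \vphi_{m,t} + h f(\vphi_{m,t}, \vtheta_{t-1})$, substituting the inductive hypothesis into both occurrences of $\vphi_{m,t}$. The crucial computation is the Taylor expansion of $f(\vphi_{m,t}, \vtheta_{t-1})$ about $(\vphi_{t-1}, \vtheta_{t-1})$: since $\vphi_{m,t} - \vphi_{t-1} = mh f_{(t-1)} + \mathcal{O}(h^2)$, we get
\begin{align*}
f(\vphi_{m,t}, \vtheta_{t-1}) = f_{(t-1)} + mh\, f_{(t-1)} \nabla_{\vphi} f_{(t-1)} + \mathcal{O}(h^2).
\end{align*}
Multiplying by $h$ and adding to the hypothesis yields $(m+1)h f_{(t-1)}$ at first order and $\left(\tfrac{m(m-1)}{2} + m\right)h^2 f_{(t-1)}\nabla_{\vphi}f_{(t-1)}$ at second order; the identity $\tfrac{m(m-1)}{2}+m = \tfrac{m(m+1)}{2}$ then closes the induction.

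An equivalent and more transparent route avoids induction: unrolling the recursion gives $\vphi_{m,t} = \vphi_{t-1} + h\sum_{i=0}^{m-1} f(\vphi_{i,t}, \vtheta_{t-1})$, and since $\vphi_{i,t} = \vphi_{t-1} + ih f_{(t-1)} + \mathcal{O}(h^2)$ each summand expands as $f_{(t-1)} + ih\, f_{(t-1)}\nabla_{\vphi}f_{(t-1)} + \mathcal{O}(h^2)$. Summing over $i$ and using $\sum_{i=0}^{m-1} i = \tfrac{m(m-1)}{2}$ reproduces the result. This viewpoint makes clear that the triangular-number coefficient simply accumulates the first-order drift that each sub-step inherits from the preceding ones.

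The main obstacle is not conceptual but bookkeeping of the remainder: I must ensure that the $\mathcal{O}(h^3)$ errors generated at each of the $m$ steps, together with the Taylor remainder of $f$, do not corrupt the $h^2$ coefficient. This forces me to treat $m$ as a fixed constant independent of $h$ and to require that $f$ be at least twice continuously differentiable so that the Taylor expansion with controlled remainder is valid; both are implicit in the surrounding backward-error-analysis setup. Finally, keeping $\vtheta_{t-1}$ frozen is exactly what guarantees that only $\nabla_{\vphi} f$ appears and that no $\nabla_{\vtheta} f$ term contaminates the expansion.
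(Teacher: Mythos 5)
Your proposal is correct and follows essentially the same route as the paper's proof: induction on $m$, Taylor-expanding $f(\vphi_{m,t}, \vtheta_{t-1})$ about $\vphi_{t-1}$ with $\vtheta_{t-1}$ frozen, and using $\tfrac{m(m-1)}{2} + m = \tfrac{m(m+1)}{2}$ to close the induction; starting at $m=1$ rather than the paper's $m=2$ base case is immaterial. Your alternative unrolled-sum argument via $\sum_{i=0}^{m-1} i = \tfrac{m(m-1)}{2}$ is an equivalent reformulation of the same computation, and your remarks on holding $m$ fixed as $h \to 0$ and requiring $f$ to be twice continuously differentiable simply make explicit the regularity bookkeeping that the paper leaves implicit.
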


From Lemma~\ref{thm:discrete} with  $h = \alpha h/m$ we have that:
\begin{align*}
 \vphi_{m,t} &= \phi_{t-1} + \alpha h f_{(t-1)} + \frac{m-1}{2m} \alpha^2 h^2 f_{(t-1)} \nabla_{\vphi}f_{(t-1)}  + \mathcal{O}(h^3)\\
\end{align*}

We now turn our attention to the second player.
We define $g'_t = g(\vphi_{m, t}, \vtheta_{t-1} )$.
This is where we get the difference between simultaneous and alternating updates comes in.
From Lemma~\ref{thm:discrete} with $h = \lambda h/k$ we have that:
\begin{align*}
 \vtheta_{k, t} &= \vtheta_{t-1} + \lambda h g'_t + \frac{(k - 1)}{2k}\lambda^2 h^2 g'_t \nabla_{\vtheta}g'_t + \mathcal{O}(h^3)
\end{align*}

We now expand $g'_t$ by Taylor expansion:
\begin{align*}
g'_t &= g(\vphi_{m, t}, \vtheta_{t-1}) \\
     &= g(\vphi_{t-1} +  \alpha h f_{(t-1)} +  \alpha^2 \frac{m-1}{2m}h^2 f_{(t-1)} \nabla_{\vphi}f_{(t-1)} + \mathcal{O}(h^3), \vtheta_{t-1}) \\
     &= g_{(t-1)} + \left( \alpha h f_{(t-1)} + \frac{(m -1)}{2m} \alpha^2 h^2 f_{(t-1)} \nabla_{\vphi}f_{(t-1)} + \mathcal{O}(h^3)\right) \nabla_{\vphi} g_{(t-1)}
\end{align*}

Thus, if we expand the RHS:
\begin{align*}
 \vtheta_{k, t} &= \vtheta_{t-1} + \lambda h g'_{t-1} + \frac{k-1}{2k}\lambda^2 h^2 g'_{t-1} \nabla_{\vtheta}g'_{t-1} + \mathcal{O}(h^3) \\
                 &= \vtheta_{t-1} + \lambda h \left(g_{(t-1)} + \left(\alpha h f_t + \frac{m -1}{2m}\alpha^2 h^2 f_{(t-1)} \nabla_{\vphi}f_{(t-1)} + \mathcal{O}(h^3)\right) \nabla_{\vphi} g_{(t-1)} \right) + \frac{k-1}{2k}\lambda^2 h^2 g'_{t-1} \nabla_{\vtheta}g'_{t-1} + \mathcal{O}(h^3) \\
                 &= \vtheta_{t-1} + \lambda h g_{(t-1)} + \lambda  h\left(\alpha h f_{(t-1)} + \frac{m -1}{2m}\alpha^2 h^2 f_{(t-1)} \nabla_{\vphi}f_{(t-1)} \right) \nabla_{\vphi} g_{(t-1)} + \frac{k-1}{2k}\lambda^2 h^2 g'_{t-1} \nabla_{\vtheta}g'_{t-1} + \mathcal{O}(h^3) \\
                &= \vtheta_{t-1} + \lambda h g_{(t-1)} + \lambda \alpha h^2 f_{(t-1)} \nabla_{\vphi} g_{(t-1)} + \frac{k-1}{2k}\lambda^2 h^2 g'_{t-1} \nabla_{\vtheta}g'_{t-1} + \mathcal{O}(h^3) \\
                &= \vtheta_{t-1} +  \lambda h g_{(t-1)} + \lambda \alpha h^2 f_{(t-1)} \nabla_{\vphi} g_{(t-1)} \\
                &\quad + \frac{k-1}{2k} h^2 \left(g_t + \left(\alpha h f_{(t-1)} + \frac{(m -1)}{2m} \alpha^2 h^2 f_{(t-1)} \nabla_{\vphi}f_{(t-1)} + \mathcal{O}(h^3)\right) \nabla_{\vphi} g_{(t-1)}\right) \nabla_{\vtheta}g'_{t-1} + \mathcal{O}(h^3) \\
                 &= \vtheta_{t-1} + \lambda h g_{(t-1)} + \lambda \alpha h^2 f_{(t-1)} \nabla_{\vphi} g_{(t-1)} + \frac{k-1}{2k}\lambda^2 h^2 g_{(t-1)} \nabla_{\vtheta}g'_{t-1} + \mathcal{O}(h^3) \\
                &= \vtheta_{t-1} + \lambda h g_{(t-1)} + \lambda \alpha h^2 f_{(t-1)} \nabla_{\vphi} g_{(t-1)} \\
                &\quad + \frac{k-1}{2k}\lambda^2 h^2 g_{(t-1)} \nabla_{\vtheta} \left(g_{(t-1)} + \left(\alpha h f_{(t-1)} + \frac{(m -1)}{2m}\alpha^2 h^2 f_{(t-1)} \nabla_{\vphi}f_{(t-1)} + \mathcal{O}(h^3)\right) \nabla_{\vphi} g_{(t-1)}\right) + \mathcal{O}(h^3) \\
                &= \vtheta_{t-1} +\lambda h g_{(t-1)} +\lambda \alpha h^2 f_{(t-1)} \nabla_{\vphi} g_{(t-1)} + \frac{k-1}{2k}\lambda^2 h^2 g_{(t-1)} \nabla_{\vtheta} g_{(t-1)}  + \mathcal{O}(h^3)
\end{align*}

We then have:
\begin{align}
 \vphi_{m,t} &= \vphi_{t-1} + \alpha h f_{(t-1)} + \frac{m-1}{2m}\alpha^2 h^2 f_{(t-1)} \nabla_{\vphi}f_{(t-1)}  + \mathcal{O}(h^3)
 \label{eq:alt_general_learning_rates_discrete_phi} \\
 \vtheta_{k,t} &= \vtheta_{t-1} + \lambda h g_{(t-1)} + \lambda \alpha h^2 f_{(t-1)} \nabla_{\vphi} g_{(t-1)} + \frac{k-1}{2k} \lambda^2 h^2 g_{(t-1)} \nabla_{\vtheta} g_{(t-1)}  + \mathcal{O}(h^3)
 \label{eq:alt_general_learning_rates_discrete_theta}
\end{align}

\textbf{Step 2: Expand the continuous time changes for the modified ODE given by backward error analysis} \\
(Identical to the simultaneous update case.)

\textbf{Step 3: Matching the discrete and modified continuous updates.} \\
The linear terms are identical to those in the simultaneous updates, which we reproduce here:
\begin{align*}
\tau_\phi = \alpha h  \\
\tau_\theta = \lambda h \\
\end{align*}

We can then obtain $f_1$ from matching the quadratic terms in Equations~\eqref{eq:sim_ode_phi} and Equations~\eqref{eq:alt_general_learning_rates_discrete_phi} --- below we denote $f_1(\vphi_{t-1}, \vtheta_{t-1})$ by $f_1$ and $g_1(\vphi_{t-1}, \vtheta_{t-1})$ by $g_1$, for brevity:
\begin{align*}
\alpha^2 h^2 f_1 + \frac{1}{2} \alpha^2 h^2 f_{(t-1)} \nabla_{\vphi} f_{(t-1)} + \frac{1}{2} \alpha^2 h^2  g_{(t-1)} \nabla_{\vtheta} f_{(t-1)} &=  \frac{m-1}{2m} \alpha^2 h^2 f_{(t-1)} \nabla_{\vphi}f_{(t-1)} \\
 f_1 + \frac{1}{2}  f_{(t-1)} \nabla_{\vphi} f_{(t-1)} + \frac{1}{2}   g_{(t-1)} \nabla_{\vtheta} f_{(t-1)} &=  \frac{m-1}{2m}  f_{(t-1)} \nabla_{\vphi}f_{(t-1)} \\
  f_1 &=  \left(\frac{m-1}{2m} -\frac{1}{2}\right) f_{(t-1)} \nabla_{\vphi}f_{(t-1)}  - \frac{1}{2}g_{(t-1)} \nabla_{\vtheta} f_{(t-1)} \\
f_1 &=  - \frac{1}{2m}f_{(t-1)} \nabla_{\vphi}f_{(t-1)} - \frac{1}{2}g_{(t-1)} \nabla_{\vtheta} f_{(t-1)}
\end{align*}

For $g_1$, from Equations~\eqref{eq:sim_ode_theta} and \eqref{eq:alt_general_learning_rates_discrete_theta}:
\begin{align*}
\lambda^2 h^2 g_1 + \lambda^2 h^2 \frac{1}{2} f_{(t-1)} \nabla_{\vphi} g_{(t-1)} + \lambda^2 h^2 \frac{1}{2} g_{(t-1)} \nabla_{\vtheta} g_{(t-1)} &= \lambda \alpha h^2 f_{(t-1)} \nabla_{\vphi} g_{(t-1)} + \frac{(k-1)}{2k} \lambda^2 h^2 g_{(t-1)} \nabla_{\vtheta} g_{(t-1)} \\
 g_1 +  \frac{1}{2} f_{(t-1)} \nabla_{\vphi} g_{(t-1)} +  \frac{1}{2} g_{(t-1)} \nabla_{\vtheta} g_{(t-1)} &= \frac{\alpha}{\lambda} f_{(t-1)} \nabla_{\vphi} g_{(t-1)} + \frac{(k-1)}{2k}  g_{(t-1)} \nabla_{\vtheta} g_{(t-1)} \\
  g_1 +  \frac{1}{2} f_{(t-1)} \nabla_{\vphi} g_{(t-1)} +  \frac{1}{2} g_{(t-1)} \nabla_{\vtheta} g_{(t-1)} &= \frac{\alpha}{\lambda} f_{(t-1)} \nabla_{\vphi} g_{(t-1)} + \frac{(k-1)}{2k}  g_{(t-1)} \nabla_{\vtheta} g_{(t-1)} \\
   g_1 &= \left(\frac{\alpha}{\lambda} -\frac{1}{2}\right) f_{(t-1)} \nabla_{\vphi} g_{(t-1)} - \frac{1}{2k}  g_{(t-1)} \nabla_{\vtheta} g_{(t-1)}
\end{align*}

We thus have that:
\begin{align}
f_1(\vphi_{t-1}, \vtheta_{t-1}) &=  - \frac{1}{2m}f_{(t-1)} \nabla_{\vphi}f_{(t-1)} - \frac{1}{2}g_{(t-1)} \nabla_{\vtheta} f_{(t-1)}
\label{eq:alternating_f1}\\
g_1(\vphi_{t-1}, \vtheta_{t-1}) &= \left(\frac{\alpha}{\lambda} -\frac{1}{2}\right) f_{(t-1)} \nabla_{\vphi} g_{(t-1)} - \frac{1}{2k}  g_{(t-1)} \nabla_{\vtheta} g_{(t-1)}
\label{eq:alternating_g1}
\end{align}

We have found the functions $f_1$ and $g_1$ such that after one discrete optimization step the ODEs $\dot{\vphi} = f + \alpha h f_1$ and $\dot{\vtheta} = g + \lambda h g_1$ follow the discrete updates up to order $\mathcal{O}(h^3)$, finishing the proof.

\section{Proof of the main corollaries}

In this section, we will write the modified equations in the case of using gradient descent common-payoff games and zero-sum games. This amounts to specialize the following first order corrections we have derived in the previous sections.

\begin{comment}
{\bf Simultaneous Euler Updates (from Theorem 3.1):}

\begin{align*}
f_1 &=  - \frac{\alpha}{2} \left(f \nabla_{\vphi} f + g \nabla_{\vtheta} f \right) \\
g_1 &=  - \frac{\lambda}{2} \left(f \nabla_{\vphi} g + g \nabla_{\vtheta} g \right)
\end{align*}

{\bf Alternating Euler Updates (from Theorem 3.2):}

\begin{align*}
f_1 &= - \frac{\alpha}{2} \left(\frac{1}{m}f \nabla_{\vphi} f + g \nabla_{\vtheta} f \right) \\
g_1 &= - \frac{\lambda}{2} \left((1- \frac {2\alpha} {\lambda})f \nabla_{\vphi} g + \frac{1}{k} g \nabla_{\vtheta} g \right)
\end{align*}
\end{comment}

To do so, we will replace $f$ and $g$ for the values given by gradient descent, eg. in the common pay-off case $f = - \nabla_{\vphi} E$ and $g = -\nabla_{\vtheta} E$ and  $f = \nabla_{\vphi} E$ and $g = -\nabla_{\vtheta} E$ where $E(\vphi, \vtheta)$ is a function of the player parameters. We will use the following identities:
\begin{eqnarray*}
\nabla_{\vphi} E\nabla_{\vphi} \nabla_{\vphi} E        &=  \nabla_{\vphi} \Big(\frac{\|\nabla_{\vphi} E\|^2}2\Big), \hspace{2em}
\nabla_{\vtheta} E\nabla_{\vtheta} \nabla_{\vphi} E    &=  \nabla_{\vphi} \Big(\frac{\|\nabla_{\vtheta} E\|^2}2\Big) \\
\nabla_{\vphi} E\nabla_{\vphi} \nabla_{\vtheta} E     &=  \nabla_{\vtheta} \Big(\frac{\|\nabla_{\vphi} E\|^2}2\Big), \hspace{2em}
\nabla_{\vtheta} E\nabla_{\vtheta} \nabla_{\vtheta} E  &=  \nabla_{\vtheta} \Big(\frac{\|\nabla_{\vtheta} E\|^2}2\Big)
\end{eqnarray*}

\subsection{Common-payoff alternating two player-games (Corollary 5.1)}

\begin{customcor}{5.1} In a two-player common-payoff game with common loss $E$, \textit{alternating} gradient descent -- as described in Equations~\eqref{eq:supp_altup1} and ~\eqref{eq:supp_altup2} - with one update per player follows a gradient flow given by the modified losses
\begin{align}
\tilde L_1&= E + \frac{\alpha h}{4} \left(\norm{\nabla_{\vphi} E}^2  + \norm{\nabla_{\vtheta} E}^2\right) \\
\tilde L_2&=  E + \frac{\lambda h}{4} \left((1 - \frac {2 \alpha} {\lambda}) \norm{\nabla_{\vphi} E}^2  + \norm{\nabla_{\vtheta} E}^2\right)
\end{align}
with an error of size $\mathcal O(h^3)$  after one update step.
\end{customcor}

In the common-payoff case, both players minimize the same loss function $E$.
Substituting $f=-\nabla_{\vphi} E$ and $g=-\nabla_{\vtheta} E$ into the corrections $f_1$ and $g_1$ for the alternating Euler updates in Theorem 3.2
and using the gradient identities above yields
\begin{align*}
f_1 & = - \frac{1}{2} \left(f \nabla_{\vphi} f + g \nabla_{\vtheta} f \right) \\
    &=  -\frac{1}2\left(
                                \frac 1m \nabla_{\vphi} E\nabla_{\vphi} \nabla_{\vphi} E
                                + \nabla_{\vtheta} E\nabla_{\vtheta} \nabla_{\vphi} E
                      \right),\\
        &= -\frac{1}2\left(
                                \frac 1m \nabla_{\vphi} \frac{\|\nabla_{\vphi} E\|^2}2\
                                +\nabla_{\vphi} \frac{\|\nabla_{\vtheta} E\|^2}2
                      \right),\\
    &= -\nabla_{\vphi} \left(\frac{1}{4m} \|\nabla_{\vphi} E\|^2  + \frac{1}{4} \|\nabla_{\vtheta} E\|^2 \right) \\ \\
g_1 &= - \frac{1}{2} \left((1- \frac {2\alpha} {\lambda})f \nabla_{\vphi} g + \frac{1}{k} g \nabla_{\vtheta} g \right) \\
 &= -\frac{1}2\left(
                                (1 - \frac{2\alpha}\lambda) \nabla_{\vphi} E\nabla_{\vphi} \nabla_{\vtheta} E
                                + \frac 1k \nabla_{\vtheta} E\nabla_{\vtheta} \nabla_{\vtheta} E
                       \right),\\
    &= -\frac{1}2\left(
                                (1 - \frac{2\alpha}\lambda) \nabla_{\vtheta} \frac{\|\nabla_{\vphi} E\|^2}2
                                + \frac 1k\nabla_{\vtheta}\frac{\|\nabla_{\vtheta} E\|^2}2
                       \right),\\
    &= -\nabla_{\vtheta} \left(
                                \frac{1}4 (1 - \frac{2\alpha}\lambda)  \|\nabla_{\vphi} E\|^2
                                + \frac {1}k\|\nabla_{\vtheta} E\|^2
                         \right)
\end{align*}

Now, replacing the gradient expressions for $f_1$ and $g_1$ calculated above into the modified equations $\dot \vphi = -\nabla_{\vphi} E + \alpha hf_1$ and $\dot \vtheta = -\nabla_{\vtheta} E + \lambda hg_1$ and factoring out the gradients, we obtain the modified equations in the form of the ODEs:
\begin{align}
\dot \vphi   &= -\nabla_{\vphi} \widetilde L_1, \\
\dot \vtheta &= -\nabla_{\vtheta} \widetilde L_2,
\end{align}
with the following modified losses for each players:
\begin{align}
\widetilde L_1 & = E + \frac{\alpha h}4  \left(\frac 1m\|\nabla_{\vphi} E\|^2 + \|\nabla_{\vtheta} E\|^2\right), \\
\widetilde L_2 & = E + \frac{\lambda h}4 \left( (1 - \frac{2\alpha}{\lambda} )\|\nabla_{\vphi} E\|^2 + \frac 1k \|\nabla_{\vtheta} E\|^2 \right).
\end{align}
We obtain Corollary 5.1 by setting the number of player updates to one: $m=k=1$.

\subsection{Zero-sum simultaneous two player-games (Corollary 6.1)}
\label{sec:zero-sum-sim}

\begin{customcor}{6.1} In a zero-sum two-player differentiable game, \textit{simultaneous} gradient descent updates - as described in Equations~\eqref{eq:supp_simup1} and ~\eqref{eq:supp_simup2} - follows a gradient flow given by the modified losses
\begin{align}
\tilde L_1&= - E + \frac{\alpha h}{4} \norm{\nabla_{\vphi} E}^2  -  \frac{\alpha h}{4} \norm{\nabla_{\vtheta} E}^2
, \\
\tilde L_2&= E -  \frac{\lambda h}{4} \norm{\nabla_{\vphi} E}^2  +  \frac{\lambda h}{4} \norm{\nabla_{\vtheta} E}^2,
\end{align}
with an error of size $\mathcal O(h^3)$ after one update step.
\label{cor:zs-sim}
\end{customcor}

In this case, substituting $f = \nabla_{\vphi} E$ and $g=-\nabla_{\vtheta} E$ into the corrections $f_1$ and $g_1$ for the simultaneous Euler updates and using the gradient identities above yields
\begin{align*}
f_1 & = - \frac{1}{2} \left(f \nabla_{\vphi} f + g \nabla_{\vtheta} f \right) \\
    &=  -\frac{1}2\left(
                                \nabla_{\vphi} E\nabla_{\vphi} \nabla_{\vphi} E
                                - \nabla_{\vtheta} E\nabla_{\vtheta} \nabla_{\vphi} E
                      \right),\\
    &= -\frac{1}2\left(
                                  \nabla_{\vphi} \frac{\|\nabla_{\vphi} E\|^2}2
                                - \nabla_{\vphi} \frac{\|\nabla_{\vtheta} E\|^2}2
                      \right),\\
    &= -\nabla_{\vphi} \left(\frac{1}{4} \|\nabla_{\vphi} E\|^2  - \frac{1}{4} \|\nabla_{\vtheta} E\|^2 \right) \\ \\
g_1 &=  - \frac{1}{2} \left(f \nabla_{\vphi} g + g \nabla_{\vtheta} g \right) \\
    &=  - \frac{1}{2} \left(- \nabla_{\vphi} E\nabla_{\vphi} \nabla_{\vtheta} E + \nabla_{\vtheta} E\nabla_{\vtheta} \nabla_{\vtheta} E\right) \\
    &= -\frac{1}2\left(
                                - \nabla_{\vtheta} \frac{\|\nabla_{\vphi} E\|^2}2
                                + \nabla_{\vtheta}\frac{\|\nabla_{\vtheta} E\|^2}2
                       \right),\\
    &= -\nabla_{\vtheta} \left(
                                - \frac{1}4 \|\nabla_{\vphi} E\|^2
                                + \frac{1}4 \|\nabla_{\vtheta} E\|^2
                         \right)
\end{align*}

Now, replacing the gradient expressions for $f_1$ and $g_1$ calculated above into the modified equations $\dot \vphi = -\nabla_{\vphi}(-E) + \alpha hf_1$ and $\dot \vtheta = -\nabla_{\vtheta} E + \lambda hg_1$ and factoring out the gradients, we obtain the modified equations in the form of the ODEs:
\begin{align}
\dot \vphi   &= -\nabla_{\vphi} \widetilde L_1, \\
\dot \vtheta &= -\nabla_{\vtheta} \widetilde L_2,
\end{align}

with the following modified losses for each players:
 \begin{align}
\widetilde L_1 & = -E + \frac{\alpha h}{4} \|\nabla_{\vphi} E\|^2  - \frac{\alpha h}{4} \|\nabla_{\vtheta} E\|^2 , \\
\widetilde L_2 & = E  - \frac{\lambda h}4 \|\nabla_{\vphi} E\|^2 + \frac{\lambda h}4 \|\nabla_{\vtheta} E\|^2 .
 \end{align}

\subsection{Zero-sum alternating two-player games (Corollary 6.2)}

\begin{customcor}{6.2} In a zero-sum two-player differentiable game, \textit{alternating} gradient descent - as described in Equations~\eqref{eq:supp_altup1} and ~\eqref{eq:supp_altup2} - follows a gradient flow given by the modified losses
\begin{align}
\tilde L_1&= -E + \frac{\alpha h}{4m} \norm{\nabla_{\vphi} E}^2  - \frac{\alpha h}{4} \norm{\nabla_{\vtheta} E}^2
\\
\tilde L_2&= E - \frac{\lambda h}{4} (1 - \frac{2 \alpha}{\lambda}) \norm{\nabla_{\vphi} E}^2  + \frac{\lambda h}{4k}\norm{\nabla_{\vtheta} E}^2
\end{align}
with an error of size $\mathcal O(h^3)$ after one update step.
\end{customcor}

In this last case, substituting $f=\nabla_{\vphi} E$ and $g=-\nabla_{\vtheta} E$ into the corrections $f_1$ and $g_1$ for the alternating Euler updates and using the gradient identities above yields the modified system as well as the modified losses exactly in the same way as for the two previous cases above. (This amounts to a single sign change in the proof of Corollary 5.1)

\subsection{Self and interaction terms in zero-sum games}

\begin{remark}
Throughout the Supplementary Material, we will refer to self terms and interaction terms, as originally defined in our paper (Definition 3.1), and we will also use this terminology to refer to terms in our derivations that originate from the self terms and interaction terms. While a slight abuse of language, we find it useful to emphasize the provenance of these terms in our discussion.\end{remark}

For the case of zero-sum games trained with simultaneous gradient descent, the self terms encourage the minimization of the player's own gradient norm, while the interaction terms encourage the maximization of the other player's gradient norm:
\begin{align*}
\widetilde L_1 & = -E + \underbrace{\frac{\alpha h}{4} \|\nabla_{\vphi} E\|^2}_{self} \underbrace{- \frac{\alpha h}{4} \|\nabla_{\vtheta} E\|^2}_{interaction}, \\
\widetilde L_2 & = E  \underbrace{- \frac{\lambda h}4 \|\nabla_{\vphi} E\|^2}_{interaction} \underbrace{+ \frac{\lambda h}4 \|\nabla_{\vtheta} E\|^2 }_{self}.
\end{align*}

Similar terms are obtained for alternating gradient descent, with the only difference that the sign of the \textit{interaction} term for the second player can change and become positive.

\section{General differentiable two-player games}

Consider now the case where we have two loss functions for the two players respectively $L_1(\vphi, \vtheta): \mathbb{R}^m\times \mathbb{R}^n \rightarrow \mathbb{R}$ and $L_2(\vphi, \vtheta): \mathbb{R}^m\times \mathbb{R}^n \rightarrow \mathbb{R}$.
This leads to the update functions $f = -\nabla_{\vphi} L_1$ and $g = -\nabla_{\vtheta} L_2$.

We show below that in the most general case \textit{we cannot write the modified updates as gradient}. That is, in the general case we cannot write $f + h f_1$ as $\nabla_{\vphi} \tilde{L}_1$, since $f_1$ will not be the gradient of a function, and similarly for $g_1$. Consequently, if we want to study the effect of DD on general games, we have to work at the level of the \textit{modified vector fields} $f + hf_1$ and $g + hg_1$ defining the modified ODEs directly --- as we have done for the stability analysis results --- rather than working with losses.

By using $f = -\nabla_{\vphi} L_1$ and $g = -\nabla_{\vtheta} L_2$, we can rewrite the drift of the simultaneous Euler updates (corresponding to simultaneous gradient descent in this setting) as:

\begin{align}
f_1 &= - \frac{1}{2} f \nabla_{\vphi}f - \frac{1}{2} g \nabla_{\vtheta} f \\
     &= - \frac{1}{2} \nabla_{\vphi} L_1 \nabla_{\vphi} (\nabla_{\vphi} L_1) - \frac{1}{2} (\nabla_{\vtheta} L_2) \nabla_{\vtheta} (\nabla_{\vphi} L_1) \\
 &= - \frac{1}{4}  \nabla_{\vphi} \norm{ \nabla_{\vphi} L_1}^2  - \frac{1}{2} (\nabla_{\vtheta} L_2) \nabla_{\vtheta} (\nabla_{\vphi} L_1)
\end{align}
and similarly
\begin{align}
g_1 &= - \frac{1}{4}  \nabla_{\vtheta} \norm{ \nabla_{\vtheta} L_2}^2  - \frac{1}{2} (\nabla_{\vphi} L_1) \nabla_{\vphi} (\nabla_{\vtheta} L_2)
\end{align}

As we can see here, it is not possible in general to write $f_1$ and $g_1$ as gradient functions, as was possible for a zero-sum game or a common-payoff game.

\section{Discretization drift in Runge-Kutta 4 (RK4)}

\textbf{Runge-Kutta 4 for one player}\\
RK4 is a Runge-Kutta a method of order 4. This means that the discrete steps of RK4 coincide with the exact flow of the original ODE up to $\mathcal O(h^5)$ (i.e., the local error after one step is of order $\mathcal O(h^5)$). The modified equation for a method of order $n$ starts with corrections at order $h^{n+1}$ (i.e., all the lower corrections vanish; see \cite{backward_lifespan} and \cite{GNI} for further details). This means that RK4 has no DD up to order $\mathcal O(h^5)$, and why for small learning rates RK4 can be used as a proxy for the exact flow.

\textbf{Runge-Kutta 4 for two players}\\
When we use equal learning rates and simultaneous updates, the two-player game is always equivalent to the one player case, so Runge-Kutta 4 will have a local error of $\mathcal O(h^5)$. However, in the case of two-players games, we have the additional freedom of having different learning rates for each of the players. We now show that when the learning rates of the two players are different, \textit{RK4 also also has a drift effect of order $2$ and the DD term comes exclusively from the interaction terms}. To do so, we apply the same steps as we have done for the Euler updates.

\textbf{Step 1: Expand the updates per player via a Taylor expansion.} \\
The simultaneous Runge-Kutta 4 updates for two players are:
\begin{align*}
 k_{1, \vphi} &= f(\vphi_{t-1}, \vtheta_{t-1}) \\
 k_{1, \vtheta} &= g(\vphi_{t-1}, \vtheta_{t-1}) \\
 k_{2,\vphi} &= f(\vphi_{t-1} + \frac{\alpha h}{2} k_{1, \vphi}, \vtheta_{t-1} + \frac{\lambda h}{2} k_{1, \vtheta}) \\
 k_{2,\vtheta} &= g(\vphi_{t-1} + \frac{\alpha h}{2} k_{1, \vphi}, \vtheta_{t-1} + \frac{\lambda h}{2} k_{1, \vtheta}) \\
 k_{3,\vphi} &= f(\vphi_{t-1} + \frac{\alpha h}{2} k_{2, \vphi}, \vtheta_{t-1} + \frac{\lambda h}{2} k_{2, \vtheta}) \\
 k_{3,\vtheta} &=  g(\vphi_{t-1} + \frac{\alpha h}{2} k_{2, \vphi}, \vtheta_{t-1} + \frac{\lambda h}{2} k_{2, \vtheta}) \\
 k_{4,\vphi} &= f(\vphi_{t-1} + \frac{\alpha h}{2} k_{3, \vphi}, \vtheta_{t-1} + \frac{\lambda h}{2} k_{3, \vtheta}) \\
 k_{4,\vtheta} &=  g(\vphi_{t-1} + \frac{\alpha h}{2} k_{3, \vphi}, \vtheta_{t-1} + \frac{\lambda h}{2} k_{3, \vtheta}) \\
 \end{align*}
 \begin{align*}
 k_{\vphi} &= \frac{1}{6} \left(k_{1,\vphi} + 2 k_{2,\vphi} + 2 k_{3,\vphi} + k_{4,\vphi}\right) \\
 k_{\vtheta} &= \frac{1}{6} \left(k_{1,\vtheta} + 2 k_{2,\vtheta} + 2 k_{3,\vtheta} + k_{4,\vtheta}\right) \\
 \end{align*}
 \begin{align*}
 \vphi_{t}  &= \vphi_{t-1} + \alpha h  k_{\vphi} \\
 \vtheta_{t}  &= \vtheta_{t-1} + \lambda h k_{\vtheta}
\end{align*}

We expand each intermediate step:
\begin{align*}
 k_{1, \vphi} &= f_{(t-1)} \\
 k_{1, \vtheta} &= g_{(t-1)} \\
 k_{2,\vphi} &= f(\vphi_{t-1} + \frac{\alpha h}{2} k_{1, \vphi}, \vtheta_{t-1} + \frac{\lambda h}{2} k_{1, \vtheta}) = f(\vphi_{t-1},  \vtheta_{t-1} + \frac{\lambda h}{2} k_{1, \vtheta}) + \frac{\alpha h}{2} k_{1,\vphi} \nabla_{\vphi}f(\vphi_{t-1}, \vtheta_{t-1} + \frac{\lambda h}{2} k_{1, \vtheta}) + \mathcal{O}(h^2) \\
    &= f_{(t-1)} + \frac{\lambda h}{2} k_{1, \vtheta} \nabla_{\vtheta} f_{(t-1)} + \frac{\alpha h}{2} k_{1,\vphi} \nabla_{\vphi}f_{(t-1)}  + \mathcal{O}(h^2) \\
    &= f_{(t-1)} + \frac{\lambda h}{2} g_{(t-1)} \nabla_{\vtheta} f_{(t-1)} + \frac{\alpha h}{2} f_{(t-1)}\nabla_{\vphi}f_{(t-1)}  + \mathcal{O}(h^2) \\
 k_{2,\vtheta} &= g(\vphi_{t-1} + \frac{\alpha h}{2} k_{1, \vphi}, \vtheta + \frac{\lambda h}{2} k_{1, \vtheta}) \\
    &=g_{(t-1)} + \frac{\lambda h}{2} k_{1, \vtheta} \nabla_{\vtheta}g_{(t-1)} + \frac{\alpha h}{2} k_{1,\vphi} \nabla_{\vphi}g_{(t-1)}  + \mathcal{O}(h^2) \\
    &=g_{(t-1)} + \frac{\lambda h}{2} g_{(t-1)} \nabla_{\vtheta}g_{(t-1)} + \frac{\alpha h}{2} f_{(t-1)}\nabla_{\vphi}g_{(t-1)}  + \mathcal{O}(h^2) \\
 \end{align*}
 \begin{align*}
 k_{3,\vphi} &= f_{(t-1)} + \frac{\lambda h}{2} k_{2, \vtheta} \nabla_{\vtheta} f_{(t-1)} + \frac{\alpha h}{2} k_{2,\vphi} \nabla_{\vphi}f_{(t-1)}  + \mathcal{O}(h^2) \\
    &= f_{(t-1)} + \frac{\lambda h}{2} g_{(t-1)} \nabla_{\vtheta} f_{(t-1)} + \frac{\alpha h}{2} f_{(t-1)}\nabla_{\vphi}f_{(t-1)}  + \mathcal{O}(h^2) \\
k_{3,\vtheta} &=g_{(t-1)} + \frac{\lambda h}{2} g_{(t-1)} \nabla_{\vtheta}g_{(t-1)} + \frac{\alpha h}{2} f(\vphi, \vtheta)\nabla_{\vphi}g_{(t-1)}  + \mathcal{O}(h^2) \\
k_{4,\vphi} &= f_{(t-1)} + \frac{\lambda h}{2} g_{(t-1)} \nabla_{\vtheta} f_{(t-1)} + \frac{\alpha h}{2} f_{(t-1)}\nabla_{\vphi}f_{(t-1)}  + \mathcal{O}(h^2) \\
k_{4,\vtheta} &=g_{(t-1)} + \frac{\lambda h}{2} g_{(t-1)} \nabla_{\vtheta}g_{(t-1)} + \frac{\alpha h}{2} f_{(t-1)}\nabla_{\vphi} g_{(t-1)}  + \mathcal{O}(h^2) \\
\end{align*}

with the update direction:
\begin{align*}
k_{\vphi} &= \frac{1}{6} \left(k_{1,\vphi} + 2 k_{2,\vphi} + 2 k_{3,\vphi} + k_{4,\vphi}\right) = f_{(t-1)} + \frac{ \lambda h}{2} g_{(t-1)} \nabla_{\vtheta} f_{(t-1)} + \frac{\alpha h}{2}  f_{(t-1)} \nabla_{\vphi} f_{(t-1)} + \mathcal{O}(h^2) \\
k_{\vtheta} &= \frac{1}{6} \left(k_{1,\vtheta} + 2 k_{2,\vtheta} + 2 k_{3,\vtheta} + k_{4,\vtheta}\right) = g_{(t-1)} + \frac{\lambda h}{2}  g_{(t-1)} \nabla_{\vtheta}g_{(t-1)} + \frac{\alpha h}{2} f_{(t-1)} \nabla_{\vphi}g_{(t-1)} + \mathcal{O}(h^2) \\
\end{align*}

and thus the discrete update of the Runge-Kutta 4 for two players are:
\begin{align}
\vphi_{t}  &= \vphi_{t-1} + \alpha h f_{(t-1)} + \frac{1}{2} \alpha \lambda h^2 g_{(t-1)} \nabla_{\vtheta}f_{(t-1)} + \frac{1}{2} \alpha^2 h^2 f_{(t-1)} \nabla_{\vphi}f_{(t-1)} + \mathcal{O}(h^3) \label{eq:rk_1}\\
\vtheta_{t}  &= \vtheta_{t-1} + \lambda h g_{(t-1)} + \frac{1}{2} \lambda^2 h^2 g_{(t-1)} \nabla_{\vtheta} g_{(t-1)} + \frac{1}{2} \alpha\lambda h^2 f_{(t-1)} \nabla_{\vphi} g_{(t-1)} + \mathcal{O}(h^3) \label{eq:rk_2}
\end{align}

\textbf{Step 2: Expand the continuous time changes for the modified ODE }\\
(Identical to the simultaneous Euler updates.)

\textbf{Step 3: Matching the discrete and modified continuous updates.} \\
As in the always in Step 3,  we substitute  $\vphi_{t-1}$ and $\vtheta_{t-1}$ in Lemma~\ref{thm:cont}:
\begin{align}
\vphi(\tph + \tau_\phi) &= \vphi_{t-1} + \tau_\phi f_{(t-1)} + \tau_\phi^2 f_1(\vphi_{t-1}, \vtheta_{t-1}) + \frac{1}{2} \tau_\phi^2  f_{(t-1)} \nabla_{\vphi}  f_{(t-1)} + \frac{1}{2}\tau_\phi^2 g_{(t-1)} \nabla_{\vtheta}  f_{(t-1)} + \mathcal{O}(\tau_\phi^3) \\
\vtheta(\tph + \tau_\theta) &= \vtheta_{t-1} + \tau_\theta g_{(t-1)} + \tau_\theta^2 g_1(\vphi_{t-1}, \vtheta_{t-1}) + \frac{1}{2} \tau_\theta^2 f_{(t-1)} \nabla_{\vphi} g_{(t-1)} + \frac{1}{2}\tau_\theta^2 g_{(t-1)} \nabla_{\vtheta} g_{(t-1)} + \mathcal{O}(\tau_\theta^3)
\end{align}

For the first order terms we obtain $\tau_{\phi} = \alpha h$ and $\tau_{\theta} = \lambda h$. We match the $\mathcal{O}(h^2)$ terms in the equations above with the discrete Runge-Kutta 4 updates shown in Equation~\eqref{eq:rk_1} and~\eqref{eq:rk_2}
and notice that:
\begin{align}
f_1(\vphi_{t-1}, \vtheta_{t-1}) = \frac 1 2 (\frac \lambda \alpha- 1) g_{(t-1)} \nabla_{\vtheta} f_{t-1} \\
g_1(\vphi_{t-1}, \vtheta_{t-1}) = \frac 1 2(\frac \alpha \lambda- 1) f_{(t-1)} \nabla_{\vphi} g_{t-1}
\end{align}

Thus, if $\alpha \ne \lambda$ RK4 has second order drift. This is why, in all our experiments comparing with RK4, we use the same learning rates for the two players $\alpha = \lambda$, to ensure that we use a method which has no DD up to order $\mathcal{O}(h^5)$.

\section{Stability analysis}

In this section, we give all the details of the stability analysis results, to showcase how the modified ODEs we have derived can be used as a tool for stability analysis. We  provide the full computation for the Jacobian of the modified vector fields for simultaneous and alternating Euler updates, as well as the calculation of their trace, and show how this can be used to determine the stability of the modified vector fields. While analyzing the modified vector fields is not equivalent to analyzing the discrete dynamics due to the higher order errors of the drift which we ignore, it provides a better approximation than what is often used in practice, namely the original ODEs, which completely ignore the drift.

\subsection{Simultaneous Euler updates}

Consider a two-player game with dynamics given by $\vphi_t = \vphi_{t-1}  +  \alpha h f_{(t-1)}$ and  $\vtheta_t = \vtheta_{t-1}  +  \lambda h g_{(t-1)} $ (Equations \eqref{eq:supp_simup1} and \eqref{eq:supp_simup2}). The modified dynamics for this game are given by $\dot{\vphi} = \widetilde{f} $, $\dot{\vtheta} = \widetilde{g}$, where $\widetilde{f} = f - \frac{ \alpha h}{2} (f \nabla_{\vphi} f + g \nabla_{\vtheta} f)$ and $\widetilde{g} = g - \frac{ \lambda h}{2} (f \nabla_{\vphi} g + g \nabla_{\vtheta} g )$ (Theorem \ref{thm:supp_2_players_sim}).

The stability of this system can be characterized by the modified Jacobian matrix evaluated at the equilibria of the two-player game. The equilibria that we are interested in for our stability analysis are the steady-state solutions of Equations \eqref{eq:supp_simup1} and \eqref{eq:supp_simup2}, given by $f = \myvec{0}, g = \myvec{0}$. These are also equilibrium solutions for the steady-state modified equations\footnote{There are additional steady-state solutions for the modified equations. However, we can ignore these since they are spurious solutions that do not correspond to steady states of the two-player game, arising instead as an artifact of the $\mathcal{O}(h^3)$ error in backward error analysis.} given by $\tilde{f} = \mathbf{0}, \tilde{g} = \mathbf{0}$.

The modified Jacobian can be written, using block matrix notation as:
\begin{gather}
 \widetilde{J}
 =
  \begin{bmatrix}
    \nabla_{\vphi}\widetilde{f} & \nabla_{\vtheta}\widetilde{f}\\
    \nabla_{\vphi}\widetilde{g} &
   \nabla_{\vtheta}\widetilde{g}
   \end{bmatrix}
\end{gather}
Next, we calculate each term in this block matrix. (In the following analysis, each term is evaluated at an equilibrium solution given by $f = \mathbf{0}, g = \mathbf{0}$). We find:
\begin{align*}
\nabla_{\vphi}\tilde{f} &= \nabla_{\vphi}f - \frac{ \alpha h}{2} \left(( \nabla_{\vphi} f)^2 + f \nabla_{\vphi, \vphi} f + \nabla_{\vphi}g \nabla_{\vtheta}f  + g \nabla_{\vphi,\vtheta} f  \right)\\
&=\nabla_{\vphi}f - \frac{ \alpha h}{2} \left(( \nabla_{\vphi} f)^2 + \nabla_{\vphi}g \nabla_{\vtheta}f  \right) \\
\nabla_{\vtheta}\tilde{f} &= \nabla_{\vtheta}f - \frac{ \alpha h}{2} \left(\nabla_{\vtheta} f \nabla_{\vphi} f + f \nabla_{\vtheta, \vphi} f + \nabla_{\vtheta}g \nabla_{\vtheta}f  + g \nabla_{\vtheta,\vtheta} f  \right)\\
&= \nabla_{\vtheta}f - \frac{ \alpha h}{2} \left(\nabla_{\vtheta} f \nabla_{\vphi} f  + \nabla_{\vtheta}g \nabla_{\vtheta}f \right) \\
\nabla_{\vphi}\tilde{g} &= \nabla_{\vphi}g - \frac{ \lambda h}{2} (\nabla_{\vphi}g \nabla_{\vtheta} g + g \nabla_{\vphi,\vtheta} g + \nabla_{\vphi}f \nabla_{\vphi} g +f \nabla_{\vphi,\vphi} g ) \\
&= \nabla_{\vphi}g - \frac{ \lambda h}{2} ( \nabla_{\vphi}g \nabla_{\vtheta} g + \nabla_{\vphi}f \nabla_{\vphi} g )\\
\nabla_{\vtheta} \tilde{g} &= \nabla_{\vtheta} g - \frac{ \lambda h}{2} \left( (\nabla_{\vtheta} g)^2+ g \nabla_{\vtheta,\vtheta} g +  \nabla_{\vtheta} f \nabla_{\vphi} g +  f \nabla_{\vtheta,\vphi} g\right)\\
&= \nabla_{\vtheta} g - \frac{ \lambda h}{2} \left( (\nabla_{\vtheta} g)^2+  \nabla_{\vtheta} f \nabla_{\vphi} g \right)\\
\end{align*}
Given these calculations, we can now write:
\begin{gather}
 \widetilde{J}
 =
  \begin{bmatrix}
    \nabla_{\vphi}\widetilde{f} & \nabla_{\vtheta}\widetilde{f}\\
    \nabla_{\vphi}\widetilde{g} &
   \nabla_{\vtheta}\widetilde{g}
   \end{bmatrix}
   = J - \frac{h}{2}K_{\text{sim}}
\end{gather}
where $J$ is the Jacobian of the unmodified ODE:
\begin{gather}
 J
 =
  \begin{bmatrix}
    \nabla_{\vphi}f & \nabla_{\vtheta}f\\
    \nabla_{\vphi}g &
   \nabla_{\vtheta}g
   \end{bmatrix}
\end{gather}
and
\begin{gather}
K_{\text{sim}}
 =
  \begin{bmatrix}
    \alpha ( \nabla_{\vphi} f)^2 +  \alpha \nabla_{\vphi}g\nabla_{\vtheta}f   & \alpha \nabla_{\vtheta}f\nabla_{\vphi}f +  \alpha \nabla_{\vtheta}g\nabla_{\vtheta}f\\
   \lambda \nabla_{\vphi}g\nabla_{\vtheta}g +  \lambda \nabla_{\vphi}f\nabla_{\vphi}g&
   \lambda( \nabla_{\vtheta} g)^2 +  \lambda \nabla_{\vtheta}f\nabla_{\vphi}g
   \end{bmatrix}
\end{gather}

The modified system of equations are asymptotically stable if all the real parts of all the eigenvalues of the modified Jacobian are negative. If some of the eigenvalues are zero, the equilibrium may or may not be stable, depending on the non-linearities of the system. If the real parts of any eigenvalues are positive, then the dynamical system is unstable.

A necessary condition for stability is that the trace of the modified Jacobian is less than or equal to zero (i.e. $\Tr(\widetilde{J})\leq0$), since the trace is the sum of eigenvalues. Using the property of trace additivity and the trace cyclic property we see that:\begin{equation}\label{eqn:modified_trace}
\Tr(\widetilde{J}) = \Tr(J) - \frac{ h}{2}\left(\alpha \Tr((\nabla_{\vphi} f)^2)+ \lambda  \Tr((\nabla_{\vtheta} g)^2)    \right) - \frac{ h}{2}(\alpha + \lambda )\Tr( \nabla_{\vphi}g\nabla_{\vtheta}f )
\end{equation}

\subsubsection{Instability caused by discretization drift}

We now use the above analysis to show that the equilibria of a two-player game following the modified ODE obtained simultaneous Euler updates as defined by Equations \eqref{eq:supp_simup1} and \eqref{eq:supp_simup2} can become asymptotically unstable for some games.

There are choices of $f$ and $g$ that have stable equilibrium without DD, but are unstable under DD. For example, consider the zero-sum two-player game with $f=\nabla_{\vphi} E $ and $g=- \nabla_{\vtheta} E $. Now, we see that \begin{equation}\label{eqn:modified_trace_zero_sum}
\Tr(\widetilde{J}) = \Tr(J) - \frac{ h}{2}\left(\alpha {\| \nabla_{\vphi, \vphi} E\|}^2_F  + \lambda {\| \nabla_{\vtheta, \vtheta} E\|}^2_F    \right) + \frac{ h}{2}(\alpha + \lambda ) {\| \nabla_{\vphi, \vtheta} E\|}^2_F
\end{equation}
where $\|.\|_F$ denotes the Frobenius norm. The Dirac-GAN is an example of a zero-sum two-player game that is stable without DD, but becomes unstable under DD with  $\Tr(\widetilde{J})  = h(\alpha + \lambda ) {\| \nabla_{\vphi, \vtheta} E\|}^2_F/2 > 0$ (see Section \ref{sec:dirac-gan-stability}).

\subsection{Alternating Euler updates}

Consider a two-player game with dynamics given by Equations \eqref{eq:supp_altup1} and \eqref{eq:supp_altup2}. The modified dynamics for this game are given by $\dot{\vphi} = f - \frac{\alpha h}{2} \left(\frac{1}{m}f \nabla_{\vphi} f + g \nabla_{\vtheta} f \right)$,
$ \dot{\vtheta} = g - \frac{\lambda h}{2} \left((1- \frac {2\alpha} {\lambda})f \nabla_{\vphi} g + \frac{1}{k} g \nabla_{\vtheta} g \right)$ (Theorem \ref{thm:supp_2_players_alt}).

The stability of this system can be characterized by the modified Jacobian matrix evaluated at the equilibria of the two-player game. The equilibria that we are interested in for our stability analysis are the steady-state solutions of Equations \eqref{eq:supp_simup1} and \eqref{eq:supp_simup2}, given by $f = \myvec{0}, g = \myvec{0}$. These are also equilibrium solutions for the steady-state modified equations\footnote{There are additional steady-state solutions for the modified equations. However, we can ignore these since they are spurious solutions that do not correspond to steady states of the two-player game, arising instead as an artifact of the $\mathcal{O}(h^3)$ error in backward error analysis.} given by $\tilde{f} = \mathbf{0}, \tilde{g} = \mathbf{0}$.

The modified Jacobian can be written, using block matrix notation as:
\begin{gather}
 \widetilde{J}
 =
  \begin{bmatrix}
    \nabla_{\vphi}\widetilde{f} & \nabla_{\vtheta}\widetilde{f}\\
    \nabla_{\vphi}\widetilde{g} &
   \nabla_{\vtheta}\widetilde{g}
   \end{bmatrix}
\end{gather}
Next, we calculate each term in this block matrix. (In the following analysis, each term is evaluated at an equilibrium solution given by $f = \mathbf{0}, g = \mathbf{0}$). We find:
\begin{align*}
\nabla_{\vphi}\tilde{f} &= \nabla_{\vphi}f - \frac{ \alpha h}{2} \left(\frac 1 m ( \nabla_{\vphi} f)^2 +  \frac 1 m f \nabla_{\vphi, \vphi} f + \nabla_{\vphi}g \nabla_{\vtheta}f  + g \nabla_{\vphi,\vtheta} f  \right)\\
&=\nabla_{\vphi}f - \frac{ \alpha h}{2} \left(\frac 1 m ( \nabla_{\vphi} f)^2 + \nabla_{\vphi}g \nabla_{\vtheta}f  \right) \\
\nabla_{\vtheta}\tilde{f} &= \nabla_{\vtheta}f - \frac{ \alpha h}{2} \left(\frac 1 m \nabla_{\vtheta} f \nabla_{\vphi} f + \frac 1 m f \nabla_{\vtheta, \vphi} f + \nabla_{\vtheta}g \nabla_{\vtheta}f  + g \nabla_{\vtheta,\vtheta} f  \right)\\
&= \nabla_{\vtheta}f - \frac{ \alpha h}{2} \left(\frac 1 m \nabla_{\vtheta} f \nabla_{\vphi} f  + \nabla_{\vtheta}g \nabla_{\vtheta}f \right) \\
\nabla_{\vphi}\tilde{g} &= \nabla_{\vphi}g - \frac{ \lambda h}{2} (\frac 1 k \nabla_{\vphi}g \nabla_{\vtheta} g + \frac 1 k g \nabla_{\vphi,\vtheta} g + (1 - \frac{2 \alpha}{\lambda})\nabla_{\vphi}f \nabla_{\vphi} g + (1 - \frac{2 \alpha}{\lambda})f \nabla_{\vphi,\vphi} g ) \\
&= \nabla_{\vphi}g - \frac{ \lambda h}{2} (\frac 1 k \nabla_{\vphi}g \nabla_{\vtheta} g + (1 - \frac{2 \alpha}{\lambda})\nabla_{\vphi}f \nabla_{\vphi} g )\\
\nabla_{\vtheta} \tilde{g} &= \nabla_{\vtheta} g - \frac{ \lambda h}{2} \left(\frac 1 k (\nabla_{\vtheta} g)^2 + \frac 1 k g \nabla_{\vtheta,\vtheta} g + (1 - \frac{2 \alpha}{\lambda}) \nabla_{\vtheta} f \nabla_{\vphi} g + (1 - \frac{2 \alpha}{\lambda})  f \nabla_{\vtheta,\vphi} g\right)\\
&= \nabla_{\vtheta} g - \frac{ \lambda h}{2} \left(\frac 1 k (\nabla_{\vtheta} g)^2+  (1 - \frac{2 \alpha}{\lambda}) \nabla_{\vtheta} f \nabla_{\vphi} g \right)\\
\end{align*}
Given these calculations, we can now write:
\begin{gather}
 \widetilde{J}
 =
  \begin{bmatrix}
    \nabla_{\vphi}\widetilde{f} & \nabla_{\vtheta}\widetilde{f}\\
    \nabla_{\vphi}\widetilde{g} &
   \nabla_{\vtheta}\widetilde{g}
   \end{bmatrix}
   = J - \frac{h}{2}K_{\text{alt}}
\end{gather}
where $J$ is the Jacobian of the unmodified ODE:
\begin{gather}
 J
 =
  \begin{bmatrix}
    \nabla_{\vphi}f & \nabla_{\vtheta}f\\
    \nabla_{\vphi}g &
   \nabla_{\vtheta}g
   \end{bmatrix}
   \label{eq:jacobian}
\end{gather}
and
\begin{gather}
K_{\text{alt}}
 =
  \begin{bmatrix}
    \frac \alpha m ( \nabla_{\vphi} f)^2 + \alpha \nabla_{\vphi}g\nabla_{\vtheta}f   & \frac \alpha m \nabla_{\vtheta}f\nabla_{\vphi}f +  \alpha \nabla_{\vtheta}g\nabla_{\vtheta}f\\
   \frac \lambda k \nabla_{\vphi}g\nabla_{\vtheta}g +  \lambda (1 - \frac{2 \alpha}{\lambda}) \nabla_{\vphi}f\nabla_{\vphi}g&
   \frac \lambda k( \nabla_{\vtheta} g)^2 +  \lambda (1 - \frac{2 \alpha}{\lambda}) \nabla_{\vtheta}f\nabla_{\vphi}g
   \end{bmatrix}
\end{gather}

The modified system of equations are asymptotically stable if all the real parts of all the eigenvalues of the modified Jacobian are negative. If some of the eigenvalues are zero, the equilibrium may or may not be stable, depending on the non-linearities of the system. If the real parts of any eigenvalues are positive, then the dynamical system is unstable.

A necessary condition for stability is that the trace of the modified Jacobian is less than or equal to zero (i.e. $\Tr(\widetilde{J})\leq0$), since the trace is the sum of eigenvalues. Using the property of trace additivity and the trace cyclic property we see that:\begin{equation}\label{eqn:modified_trace_alternating}
\Tr(\widetilde{J}) = \Tr(J) - \frac{ h}{2}\left(\frac \alpha m \Tr((\nabla_{\vphi} f)^2)+ \frac \lambda k  \Tr((\nabla_{\vtheta} g)^2)    \right) - \frac{ h}{2}(\lambda - \alpha)\Tr( \nabla_{\vphi}g\nabla_{\vtheta}f )
\end{equation}

We note that unlike for simultaneous updates, even if $\Tr( \nabla_{\vphi}g\nabla_{\vtheta}f )$ is negative, if $\lambda < \alpha$, the trace of the modified system will stay negative, so the necessary condition for the system to remain stable is still satisfied. However, since this is not a sufficient condition, the modified system could still be unstable.

\subsection{A new tool for stability analysis: different ODEs for simultaneous and alternating Euler updates}

We will now highlight how having access to different systems of ODEs to closely describe the dynamics of simultaneous and alternating Euler updates can be a useful tool for stability analysis. Stability analysis has been used to understand the local convergence properties of games such as GANs~\citep{nagarajan2017gradient}. Thus far, this type of analysis has relied on the original ODEs describing the game and has ignored discretization drift. Moreover, since no ODEs were available to capture the difference in dynamics between simultaneous and alternating Euler updates, alternating updates have not been studied using stability analysis, despite being predominantly used in practice by practitioners. We will use an illustrative example to show how the modified ODEs we have derived using backward error analysis can be used to analyze the local behaviour of Euler updates and uncover the different behavior of simultaneous and alternating updates. As before, we use the ODEs
\begin{align*}
\dot{\phi} = f(\phi, \theta) =  - \epsilon_1 \phi  + \theta; \hspace{2em} \dot{\theta} = g(\phi, \theta) = \epsilon_2 \theta  - \phi
\end{align*}

We set $\epsilon_1 = 0.09, \epsilon_2 = 0.09$ and a learning rate $0.2$ and show the behavior of the original flow as well as simultaneous and alternating Euler updates and their corresponding modified flows in Figure~\ref{fig:supp_igr_divergence}. By replacing the values of $f$ and $g$ into the results for the Jacobian of the modified ODEs obtain above, we obtain the corresponding Jacobians. For simultaneous updates:
\begin{gather}
 \tilde{J}_{\text{sim}}
 =
  \begin{bmatrix}
    -\epsilon_1 - h / 2 \epsilon_1^2 + h /2 & 1 + h / 2  \epsilon_1 - h / 2 \epsilon_2 \\
  -1 - h /2  \epsilon_1 + h /2  \epsilon_2 & \epsilon_2 + h / 2 - h/2  \epsilon_2^2
   \end{bmatrix}
\end{gather}

For alternating updates:
\begin{gather}
 \tilde{J}_{\text{alt}}
 =
  \begin{bmatrix}
  - \epsilon_1 - h / 2   \epsilon_1^2 + h /2 & 1 + h / 2   \epsilon_1 - h / 2  \epsilon_2 \\
  -1 + h /2   \epsilon_1 + h /2  \epsilon_2 & \epsilon_2 - h / 2 - h/2  \epsilon_2 ^2
   \end{bmatrix}
\end{gather}

When we replace the values of $\epsilon_1 = \epsilon_2 = 0.09$ and $h = 0.2$, we obtain that $\Tr({\tilde{J}_{\text{sim}})} = 0.19 > 0$, thus the system of the modified ODEs corresponding to simultaneous Euler updates diverge. For alternating updates, we obtain $\Tr({\tilde{J}_{\text{alt}})} = -0.0016 < 0$ and  $|\tilde{J}_{\text{alt}}| = 0.981 > 0$, leading to a stable system. Our analysis is thus consistent with what we observe empirically in Figure~\ref{fig:supp_igr_divergence}. We observe the same results for other choices of $\epsilon_1, \epsilon_2$ and $h$.

\textit{Benefits and caveats of using the modified ODEs for stability analysis}: Using the modified ODEs we have derived using backward error analysis allows us to study systems which are closer to the discrete Euler updates, as they do not ignore discretization drift. The modified ODEs also provide a tool to discriminate between alternating and simultaneous Euler updates in two-player games when performing stability analysis to understand the discrete behavior of the system. Accounting for the drift in the analysis of two-player games is crucial, since as we have seen, the drift can change a stable equilibrium into an unstable one -- which is not the case for supervised learning~\citep{igr}. These strong benefits of using the modified ODEs we propose for stability analysis also come with caveats: the modified ODEs nonetheless ignore higher order drift terms, and our approximations might not hold for very large learning rates.

\begin{figure}[t]
  \centering
  \includegraphics[width=0.8\columnwidth]{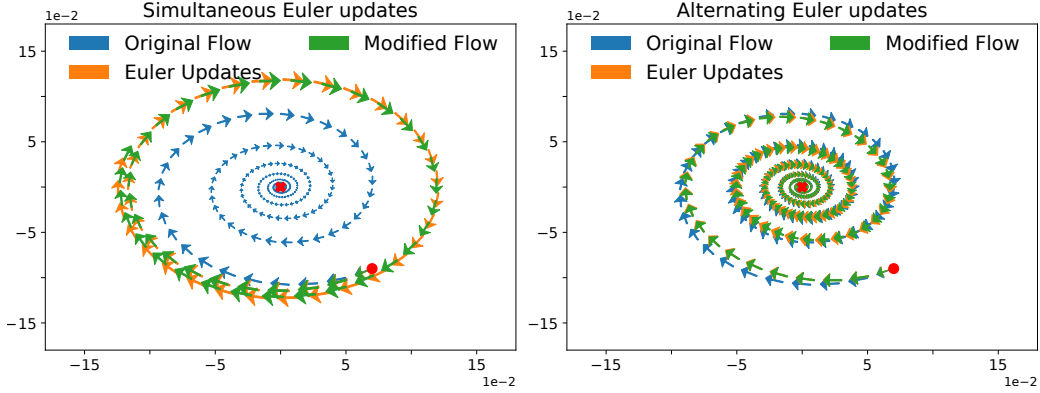}
  \caption{Discretization drift can change the stability of a game. $\epsilon_1 =\epsilon_2 = 0.09$ and a learning rate $0.2$.}
  \label{fig:supp_igr_divergence}
\end{figure}

\section{SGA in two-player games}

For clarity, this section reproduces Symplectic Gradient Adjustment (SGA) from ~\citet{balduzzi2018mechanics} for two-player games, using notations consistent with this paper. If we have two players $\vphi$ and $\vtheta$ minimizing loss functions $L_1$ and $L_2$, SGA defines the vector:
\begin{gather}
 \vxi
 =
  \begin{bmatrix}
    \nabla_{\vphi} L_1 \\
    \nabla_{\vtheta} L_2
   \end{bmatrix}
   \label{eq:sga_neg_vector_field}
\end{gather}

As defined in SGA, $\vxi$ is the \textit{negative of the vector field} that the system dynamics follow. This entails that
according to our notation: \begin{gather}
 \vxi
 =
  \begin{bmatrix}
    - f \\
    - g
   \end{bmatrix}
   \label{eq:correspondance}
\end{gather}

 defining the Jacobian:
\begin{gather}
 J_{ \vxi}
 =
  \begin{bmatrix}
    - \nabla_{\vphi}f & - \nabla_{\vtheta}f\\
   - \nabla_{\vphi}g &
   - \nabla_{\vtheta}g
   \end{bmatrix}
   \label{eq:j}
\end{gather}

Since in SGA $f = \nabla_{\vphi} L_1$ and $g = - \nabla_{\vtheta} L_2$ the Jacobian has as an \emph{anti-symmetric} component
\begin{gather}
 A
 = \frac{1}{2}(J_{\vxi} - J_{\vxi}^T) =
  \frac{1}{2} \begin{bmatrix}
    0 & - \nabla_{\vtheta}f + \nabla_{\vphi}g\\
    - \nabla_{\vphi}g + \nabla_{\vtheta}f &
   0
   \end{bmatrix}
   \label{eq:j_anti_symmetric}
\end{gather}

Ignoring the sign change from alignment \citep{balduzzi2018mechanics} for simplicity, the vector field $\vxi$ is modified according to SGA as
\begin{gather}
 \hat{\vxi}
 = \vxi + A^T \, \vxi = \begin{bmatrix} - f \\ - g \end{bmatrix} +
  \frac{1}{2} \begin{bmatrix}
    \left( \nabla_{\vphi}g - \nabla_{\vtheta}f \right)^T \, g\\
    \left(\nabla_{\vtheta}f - \nabla_{\vphi}g \right)^T \, f
   \end{bmatrix}
   \label{eq:j_sga}
\end{gather}

For zero-sum games, we have that $f  = - \nabla_{\vphi} E $ and $g = \nabla_{\vtheta} E$ and thus:
\begin{align*}
& \nabla_{\vphi}g^T g = -\nabla_{\vtheta}f^T g = \frac{1}{2}\nabla_{\vphi} \norm{\nabla_{\vtheta}E}^2 \\
& \nabla_{\vtheta}f^T f = - \nabla_{\vphi}g^T f = \frac{1}{2}\nabla_{\vtheta} \norm{\nabla_{\vphi}E}^2
\end{align*}

Thus the modified gradient field can be simplified to
\begin{gather}
 \hat{\vxi}
 = \begin{bmatrix} -f \\ - g \end{bmatrix} +
  \begin{bmatrix}
    \nabla_{\vphi}g^T g\\
    \nabla_{\vtheta}f^T f
   \end{bmatrix}
 = \begin{bmatrix} \nabla_{\vphi} E \\ - \nabla_{\vtheta} E \end{bmatrix} +
  \frac{1}{2}\begin{bmatrix}
    \nabla_{\vphi} \norm{\nabla_{\vtheta}E}^2\\
    \nabla_{\vtheta} \norm{\nabla_{\vphi}E}^2
   \end{bmatrix}
   = \begin{bmatrix} \nabla_{\vphi} (E + \frac{1}{2} \norm{\nabla_{\vtheta}E}^2)\\
     \nabla_{\vtheta} (- E + \frac{1}{2} \norm{\nabla_{\vphi}E}^2)
   \end{bmatrix}
   \label{eq:j_zero_su}
\end{gather}
Therefore, since $\hat{\vxi}$ defines the negative of the vector field followed by the system, the modified losses for the two players can be written respectively as:
\begin{align}
\widetilde{L}_1 &= E + \frac{1}{2}\norm{\nabla_{\vtheta}E}^2 \\
\widetilde{L}_2 &= -E + \frac{1}{2}\norm{\nabla_{\vphi}E}^2
\end{align}

The functional form of the modified losses given by SGA is the same used to cancel the interaction terms of DD in the case of simultaneous gradient descent updates in zero sum games. We do however highlight a few differences in our approach compared to SGA: our approach extends to alternating updates and provides the optimal regularization coefficients; canceling the interaction terms of the drift is different compared to SGA for general games (see Equation~\ref{eq:j_sga} for SGA and Theorem~\ref{thm:supp_sim} for the interaction terms of DD).

\section{DiracGAN - an illustrative example}
\label{app:dirac_gan}
In their work assessing the convergence of GAN training~\citet{mescheder2018training}
introduce the example of the DiracGAN, where the GAN is trying to learn a delta distribution with mass at zero.
More specifically, the generator $G_\theta(z) = \theta$ with parameter $\theta$ parametrizes constant functions whose images $\{\theta\}$ correspond to the support of the delta distribution $\delta_\theta$. The discriminator is a linear model $D_\phi(x) = \phi \cdot x$ with
parameter $\phi$.

The loss function is given by:
\begin{align}
E(\theta, \phi) = l(\theta \phi) + l(0)
\label{eq:supp_dirac_gan}
\end{align}
where $f$ depends on the GAN used - for the standard GAN it is $l = - \log (1 + e^{-t})$. As in~\citet{mescheder2018training}, we assume $l$ is continuously differentiable with $l'(x) \ne 0$ for all $x \in \mathbb{R}$. The partial derivatives of the loss function
\begin{align*}
\frac{\partial E}{\partial\phi} = l'(\theta \phi) \theta, \hspace{3em}  \frac{\partial E}{\partial\theta} = l'(\theta \phi) \phi,
\end{align*}
lead to the underlying continuous dynamics:
\begin{align}
\dot{\phi} =  f(\theta, \phi) =  l'(\theta \phi) \theta, \hspace{3em} \dot{\theta} =  g(\theta, \phi) =  -l'(\theta \phi) \phi.
\label{eq:supp_dirac_gan_vector_field}
\end{align}

Thus the only equilibrium of the game is $\theta = 0$ and $\phi = 0$.

\subsection{Reconciling discrete and continuous updates in Dirac-GAN}

\citet{mescheder2018training} observed a discrepancy between the continuous and discrete dynamics.
They show that, for the problem in Equation~\eqref{eq:supp_dirac_gan}, the continuous dynamics preserve $\theta^2 + \phi^2$, and thus cannot converge (Lemma 2.3 in~\citet{mescheder2018training}), since:
\begin{align*}
\frac{d \left(\theta^2 + \phi^2\right)}{dt} = 2 \theta \frac{d \theta}{dt} + 2 \phi \frac{d \phi}{dt} = - 2 \theta l'(\theta \phi) \phi + 2 \phi l'(\theta \phi) \theta = 0.
\end{align*}

They also observe that with the discrete dynamics of simultaneous gradient descent that $\theta^2 + \phi^2$ increases in time (Lemma 2.4 in~\citet{mescheder2018training}). We resolve this discrepancy here, by showing that the modified continuous dynamics given by discretization drift result in behavior consistent with that of the discrete updates.

\begin{proposition} The continuous vector field given by discretization drift for simultaneous Euler updates in DiracGAN increases $\theta^2 + \phi^2$ in time.
\end{proposition}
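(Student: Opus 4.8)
The plan is to specialize the simultaneous-update modified system of Theorem~\ref{thm:supp_sim} to the one-dimensional Dirac-GAN and then track the quantity $V = \theta^2 + \phi^2$ directly along the modified flow. Since $\phi,\theta$ are scalars here, the modified ODEs read
\begin{align*}
\dot\phi &= f - \frac{\alpha h}{2}\left(f\,\partial_\phi f + g\,\partial_\theta f\right),\\
\dot\theta &= g - \frac{\lambda h}{2}\left(f\,\partial_\phi g + g\,\partial_\theta g\right),
\end{align*}
with $f = l'(\theta\phi)\,\theta$ and $g = -l'(\theta\phi)\,\phi$ from Equation~\eqref{eq:supp_dirac_gan_vector_field}. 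I would compute $\dot V = 2\phi\dot\phi + 2\theta\dot\theta$ and argue it is strictly positive away from the equilibrium $(\theta,\phi)=(0,0)$.

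First I would isolate the contribution of the undrifted field: $2\phi f + 2\theta g = 2l'\theta\phi - 2l'\theta\phi = 0$, which is exactly the conservation of $\theta^2+\phi^2$ already recorded for the original continuous dynamics. Hence every nonzero contribution to $\dot V$ comes from the $\mathcal{O}(h)$ drift terms, and the whole question reduces to signing those. This is the conceptual heart of the reconciliation: the drift is precisely what breaks the conservation law responsible for non-convergence of the exact flow.

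Next I would evaluate the four scalar partials (all evaluated at argument $\theta\phi$), namely $\partial_\phi f = l''\theta^2$, $\partial_\theta f = l''\theta\phi + l'$, $\partial_\phi g = -l''\theta\phi - l'$, and $\partial_\theta g = -l''\phi^2$, substitute them into the drift terms, and collect. The key observation is that the outcome splits as
\[
\dot V = (\lambda-\alpha)\,h\,l'l''\,\theta\phi(\theta^2-\phi^2) + \alpha h\,l'^2\phi^2 + \lambda h\,l'^2\theta^2,
\]
so that in the standard Dirac-GAN regime of equal learning rates $\alpha=\lambda$ the $l''$ cross term cancels identically and one is left with $\dot V = \alpha h\,l'(\theta\phi)^2(\theta^2+\phi^2)$. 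Because the assumption $l'\neq 0$ everywhere forces $l'(\theta\phi)^2>0$, and $\alpha h>0$, this is strictly positive whenever $(\theta,\phi)\neq(0,0)$, proving that the drift increases $\theta^2+\phi^2$ in time.

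I expect the main obstacle to be bookkeeping rather than ideas: one must carry the $l''$ terms through carefully and verify the cancellation, which is exactly where a sign slip could hide. It is also worth flagging explicitly that for \emph{unequal} learning rates the surviving term $(\lambda-\alpha)h\,l'l''\,\theta\phi(\theta^2-\phi^2)$ need not be sign-definite, so the clean monotonicity statement leans on $\alpha=\lambda$ (or on a regime where the $l'^2$ terms dominate). I would therefore prove it in the equal-rate case, which is precisely the setting relevant to reconciling with the discrete simultaneous updates analyzed by \citet{mescheder2018training}.
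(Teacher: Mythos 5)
Your proof is correct, and its core is the same computation the paper performs: write the modified simultaneous-update ODEs for the Dirac-GAN, form $\dot V = 2\phi\dot\phi + 2\theta\dot\theta$, note the unmodified field contributes zero (the conservation law of the exact flow), and check that the $l'l''$ cross-terms cancel so that only the positive $l'^2$ terms survive. The differences are in the route and in generality. The paper instantiates the zero-sum Corollary (its modified losses $\tilde L_1, \tilde L_2$), computes $\nabla_\phi\norm{\nabla_\theta E}^2$, $\nabla_\theta\norm{\nabla_\phi E}^2$, etc., and works with equal learning rates $h$ for both players from the outset (as in the Dirac-GAN setting of Mescheder et al.), obtaining $\dot V = h l'^2(\theta^2+\phi^2) > 0$ directly. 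You instead specialize the general vector-field form of Theorem 3.1 via the partials of $f$ and $g$ --- algebraically equivalent, but it lets you keep distinct rates $\alpha h$ and $\lambda h$ throughout, yielding
\begin{align*}
\dot V = (\lambda-\alpha)\,h\,l'l''\,\theta\phi(\theta^2-\phi^2) + \alpha h\,l'^2\phi^2 + \lambda h\,l'^2\theta^2 .
\end{align*}
This buys something the paper's proof does not state: the clean monotonicity of $\theta^2+\phi^2$ relies on the exact cancellation of the $l''$ terms, which holds only when $\alpha=\lambda$ (or in regimes where the $l'^2$ terms dominate), so the equal-learning-rate hypothesis is genuinely load-bearing rather than a mere convenience. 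Your final specialization to $\alpha=\lambda$ then recovers the paper's conclusion $\dot V = \alpha h\, l'(\theta\phi)^2(\theta^2+\phi^2) > 0$ away from the equilibrium, using $l'(x)\neq 0$ for all $x$, exactly as the paper does.
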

\begin{proof}

We assume both the generator and the discriminator use learning rates $h$, as in~\citep{mescheder2018training}. We first compute terms used by the zero-sum Colloraries in the main paper.
\begin{align*}
\norm{\nabla_{\theta} E}^2 = l'(\theta \phi)^2 \phi^2 \\
\norm{\nabla_{\phi} E}^2 = l'(\theta \phi)^2 \theta^2
\end{align*}
and
\begin{align*}
\nabla_{\theta} \norm{\nabla_{\theta} E}^2 &= 2 \phi^3 l'(\theta \phi) l''(\theta \phi)\\
\nabla_{\theta} \norm{\nabla_{\phi} E}^2 &=  2 \theta l'(\theta \phi)^2 + 2 \theta^2 \phi l'(\theta \phi) l''(\theta \phi) \\
\nabla_{\phi} \norm{\nabla_{\theta} E}^2 &= 2 \phi l'(\theta \phi)^2 + 2 \phi^2 \theta l'(\theta \phi) l''(\theta \phi) \\
\nabla_{\phi} \norm{\nabla_{\phi} E}^2 &= 2 \theta^3  l'(\theta \phi) l''(\theta \phi)\\
\end{align*}
Thus, the modified ODEs are given by:
\begin{align*}
\dot{\phi} &=   l'(\theta \phi) \theta + \frac{h}{2} \left[- \theta^3  l'(\theta \phi) l''(\theta \phi) + \phi l'(\theta \phi)^2 +  \phi^2 \theta l'(\theta \phi) l''(\theta \phi)\right] \\
\dot{\theta} &= - l'(\theta \phi) \phi - \frac{h}{2} \left[ - \theta l'(\theta \phi)^2 - \theta^2 \phi l'(\theta \phi) l''(\theta \phi)  +  \phi^3 l'(\theta \phi) l''(\theta \phi) \right]
\end{align*}

By denoting $l'(\theta \phi)$ by $l'$ and $l''(\theta \phi)$ by $l''$, then we have:
\begin{align*}
\frac{d \left(\theta^2 + \phi^2\right)}{dt}
&= 2 \theta \frac{d \theta}{dt} + 2 \phi \frac{d \phi}{dt} \\
&= 2 \theta \left(- l' \phi - \frac{h}{2} \left[ - \theta l'^2 - \theta^2 \phi l'l''  +  \phi^3 l'l'' \right]\right)
  +  2\phi \left(l' \theta + \frac{h}{2} \left[- \theta^3  l'l'' +  \phi l'^2 +  \phi^2 \theta l'l''\right]\right) \\
  &= 2 \theta \left( - \frac{h}{2} \left[ - \theta l'^2 - \theta^2 \phi l'l''  +  \phi^3 l'l'' \right]\right)
    +  2 \phi \left(\frac{h}{2} \left[- \theta^3  l'l'' +  \phi l'^2 + \phi^2 \theta l'l''\right]\right) \\
  & = h \theta^2 l'^2 + h \theta^3 \phi l'l'' - h  \phi^3 \theta l'l'' - h \phi \theta^3 l'l'' + h \phi^2 l'^2 + h \phi^3 \theta l'l'' \\
  & =  h \theta^2 l'^2 + h \phi^2 l'^2 > 0
\end{align*}

for all $\phi,\theta\neq 0$, which shows that $\theta^2 + \phi^2$ is not preserved and it will strictly increase for all values away from the equilibrium (we have used the assumption that $l'(x) \ne 0, \forall x \in \mathbb{R}$).
\end{proof}

We have thus identified a continuous system which exhibits the same behavior as described by Lemma 2.4 in~\citet{mescheder2018training}, where a discrete system is analysed. Figure~\ref{fig:dirac_gan_sup} illustrates that the divergent behavior of simultaneous gradient descent in this case can be predicted from the dynamics of the modified continuous system given by backward error analysis.

\begin{figure}[t]
  \centering
  \includegraphics[width=0.45\columnwidth]{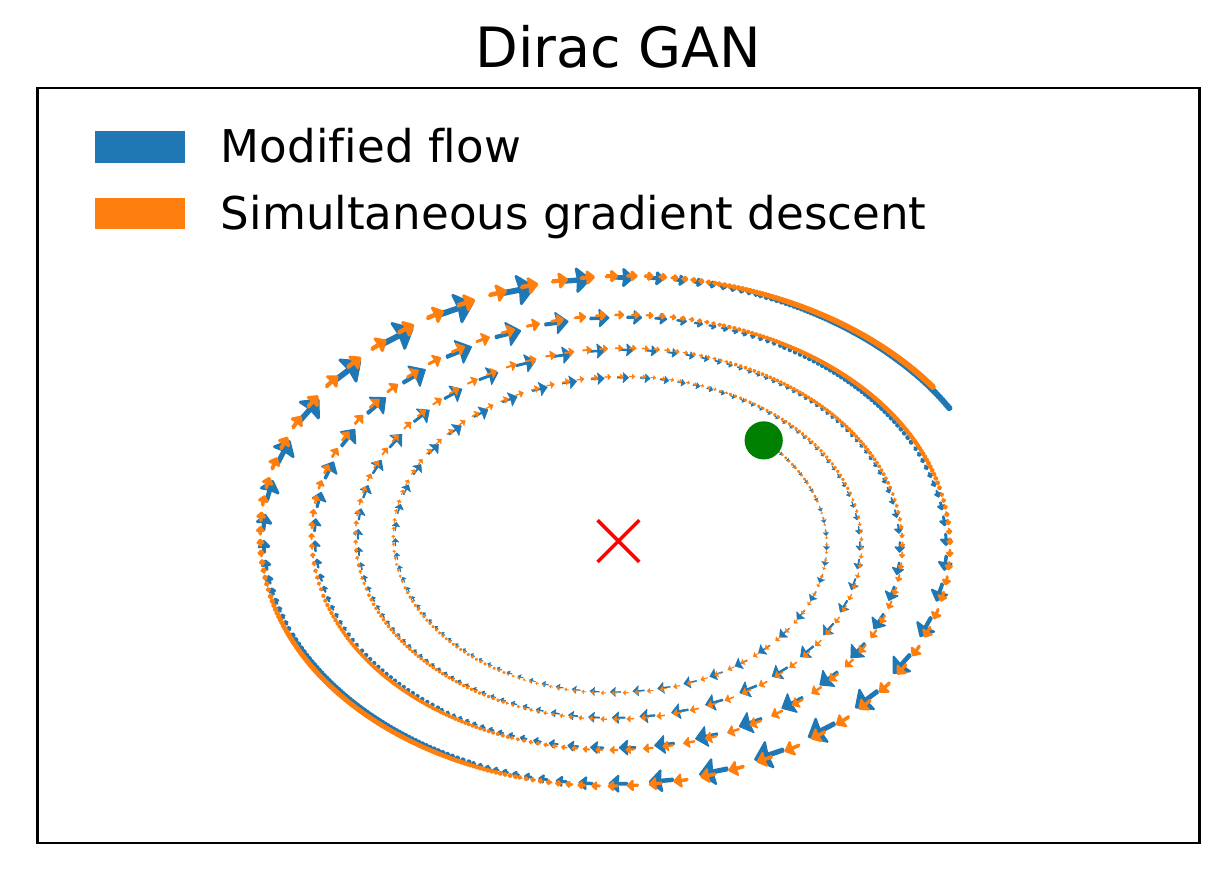}
  \includegraphics[width=0.45\columnwidth]{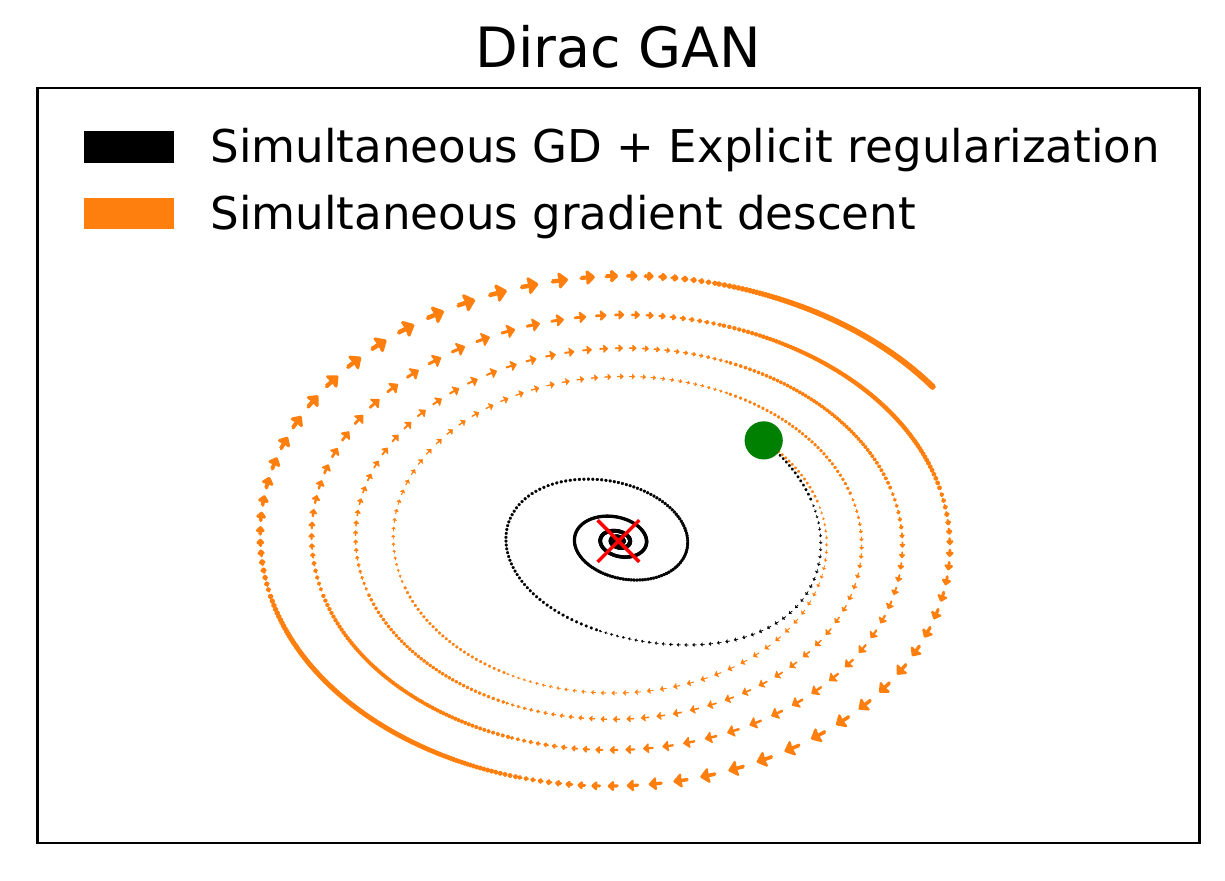}
  \caption{DiracGAN. Left: The dynamics of simultaneous gradient descent updates match the continuous dynamics given by backward error analysis for DiracGAN. Right: Explicit regularization canceling the interaction terms of DD stabilizes the DiracGAN trained with simultaneous gradient descent.}
  \label{fig:dirac_gan_sup}
\end{figure}

\subsection{DD changes the convergence behavior of Dirac-GAN}
\label{sec:dirac-gan-stability}
The Jacobian of the unmodified Dirac-GAN evaluated at the equilibrium solution given by $\phi = 0$, $\theta = 0$ is given by
\begin{gather}
 J
 =
  \begin{bmatrix}
    \nabla_{\phi,\phi}E & \nabla_{\theta,\phi}E\\
   - \nabla_{\phi,\theta}E & - \nabla_{\theta,\theta}E
   \end{bmatrix}
   =
  \begin{bmatrix}
    0&  l'(0) \\
    -l'(0) & 0
   \end{bmatrix}
\end{gather}
We see that $\Tr(J)=0$ and the determinant $|J| = l'(0)^2 $. Therefore, the eigenvalues of this Jacobian are $\lambda_{\pm} = \Tr(J)/2 \pm \sqrt{\Tr(J)^2 - 4|J|}/2 = \pm  il'(0) $ (Reproduced from \citet{mescheder2018training}). This is an example of a stable \emph{center equilibrium}.

Next we calculate the Jacobian of the modified ODEs given by DiracGAN, evaluated at an equilibrium solution and find $\widetilde{J} = J - h\Delta/2$, where
\begin{align*}
 \Delta
 =&
  \begin{bmatrix}
    \alpha ( \nabla_{\phi, \phi} E)^2 -  \alpha \nabla_{\phi,\theta}E\nabla_{\theta, \phi}E   & \alpha \nabla_{\theta, \phi}E\nabla_{\phi, \phi}E -  \alpha \nabla_{\theta,\theta}E\nabla_{\theta,\phi}E\\
   \lambda \nabla_{\phi,\theta}E\nabla_{\theta,\theta}E -  \lambda \nabla_{\phi,\phi}E\nabla_{\phi,\theta}E&
   \lambda( \nabla_{\theta,\theta} E)^2 -  \lambda \nabla_{\theta,\phi}E\nabla_{\phi,\theta}E
   \end{bmatrix}
   \\
   =&
     \begin{bmatrix}
     -  \alpha \nabla_{\phi,\theta}E\nabla_{\theta, \phi}E   & 0\\
   0& -  \lambda \nabla_{\theta,\phi}E\nabla_{\phi,\theta}E
   \end{bmatrix}
   \\
   =&
  \begin{bmatrix}
    -\alpha l'(0)^2 &  0 \\
    0 & -\lambda l'(0)^2
   \end{bmatrix}
\end{align*}
so
\begin{gather}
 \widetilde{J}
 =
  \begin{bmatrix}
    h\alpha l'(0)^2/2&  l'(0) \\
    -l'(0) & h\lambda l'(0)^2/2
   \end{bmatrix}
\end{gather}

Now, we see that the trace of the modified Jacobian for the Dirac-GAN is $\Tr(\widetilde{J})  = (h/2)(\alpha + \lambda ) l'(0)^2 > 0$, so the modified ODEs induced by gradient descent in DiracGAN are unstable.

\subsection{Explicit regularization stabilizes Dirac-GAN}
Here, we show that we can use our stability analysis to identify forms of explicit regularization that can counteract the destabilizing impact of DD. Consider the Dirac-GAN with explicit regularization of the following form:  $L_1  = - E + u{\| \nabla_{\theta} E\|}^2$ and $L_2  = E + \nu {\| \nabla_{\phi} E\|}^2$ where $\phi_t = \phi_{t-1}  -  \alpha h \nabla_{\phi} L_1 $ and  $\theta_t = \theta_{t-1}  -  \lambda h \nabla_{\theta} L_2  $ and with $u, \nu \sim \mathcal{O}(h)$. The modified Jacobian for this system is given by
\begin{align*}
 \widetilde{J}
   =&
  \begin{bmatrix}
    h\alpha/2 \nabla_{\phi,\theta}E\nabla_{\theta, \phi}E - u \nabla_{\phi, \phi} \norm{ \nabla_{\theta}E}^2  & \nabla_{\theta,\phi}E\\
   -\nabla_{\phi,\theta}E & h\lambda/2 \nabla_{\theta, \phi}E\nabla_{\phi, \theta}E -\nu \nabla_{\theta, \theta} \norm{ \nabla_{\phi}E}^2
   \end{bmatrix}
   \\
   =&
  \begin{bmatrix}
    (h\alpha/2 -2u)l'(0)^2  &  l'(0) \\
    -l'(0) &  (h\lambda/2 -2\nu)l'(0)^2
   \end{bmatrix}
\end{align*}
The determinant of the modified Jacobian is $|\widetilde{J}| =  (h\alpha/2 -2u)(h\lambda/2 -2\nu)l'(0)^4 + l'(0)^2$ and the trace is $\Tr(\widetilde{J})  = (h\alpha/2 -2u)l'(0)^2 + (h\lambda/2 -2\nu)l'(0)^2$. A necessary and sufficient condition for asymptotic stability is  $|\widetilde{J}|>0 $ and $\Tr(\widetilde{J}) <0$ (since this guarantees that the eigenvalues of the modified Jacobian have negative real part). Therefore, if $u > h\alpha /4$ and $\nu > h\lambda /4$, the system is asymptotically stable. We note however that in practice, when using discrete updates, the exact threshold for stability will have a $\mathcal{O}(h^3)$ correction, arising from the $\mathcal{O}(h^3)$ error in our backward error analysis. Also, we see that when  $u = h\alpha /4$ and $\nu = h\lambda /4$, the contribution of the cross-terms is cancelled out (up to an $\mathcal{O}(h^3)$ correction). In Figure~\ref{fig:dirac_gan_sup}, we see an example where this explicit regularization stabilizes the DiracGAN, so that it converges toward the equilibrium solution.

\section{Additional experimental results}

\subsection{Additional results using zero-sum GANs}

We show additional results showcasing the effect of DD on zero-sum games in Figure~\ref{fig:supp_idd_zero_sum_games}. We see that not only do simultaneous updates perform worse than alternating updates when using the best hyperparameters, but that simultaneous updating is much more sensitive to hyperparameter choice. We also see that multiple updates can improve the stability of a GAN trained using zero-sum losses, but this strongly depends on the choice of learning rate.

 \begin{figure}[t]
 \centering
  \includegraphics[width=0.35\columnwidth]{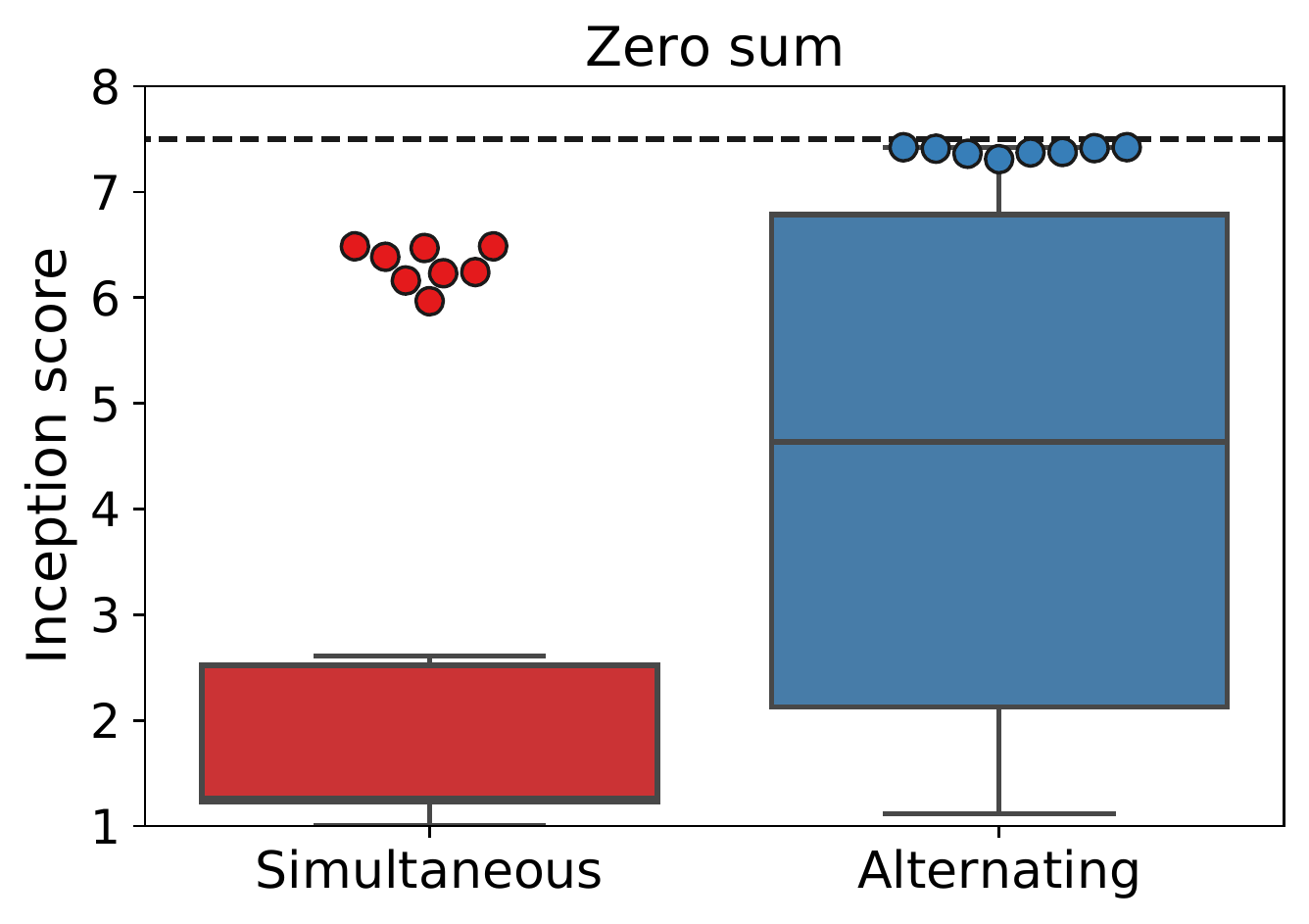}
  \includegraphics[width=0.35\columnwidth]{best_sim_vs_alt_zero_sum_is}\\
  \includegraphics[width=0.35\columnwidth]{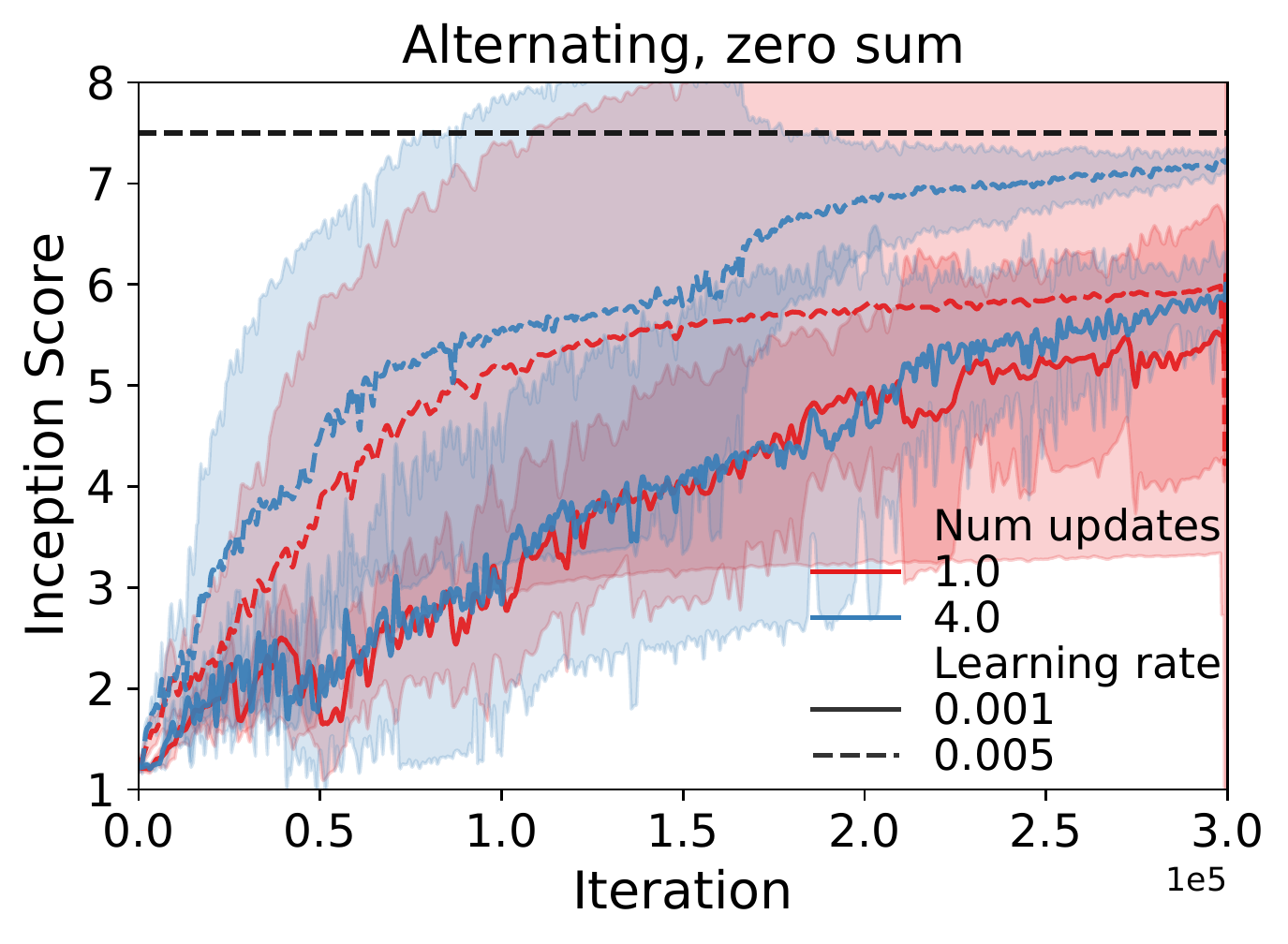}
  \includegraphics[width=0.35\columnwidth]{rk_euler_comparison_saturating_no_regularization}
    \caption{The effect of discretization drift on zero-sum games.}
   \label{fig:supp_idd_zero_sum_games}
\end{figure}

\subsubsection{Least squares GANs}

In order to assess the robustness of our results independent of the GAN loss used, we perform additional experimental results using GANs trained with a least square loss (LS-GAN~\citep{ls_gan}). We show results in Figure~\ref{fig:idd_zero_sum_games_ls_gan}, where we see that for the least square loss too, the learning rate ratios for which the generator drift does not maximize the discriminator norm (learning rate ratios above or equal to 0.5) perform best and exhibit less variance.

 \begin{figure}[t]
 \centering
  \includegraphics[width=0.3\columnwidth]{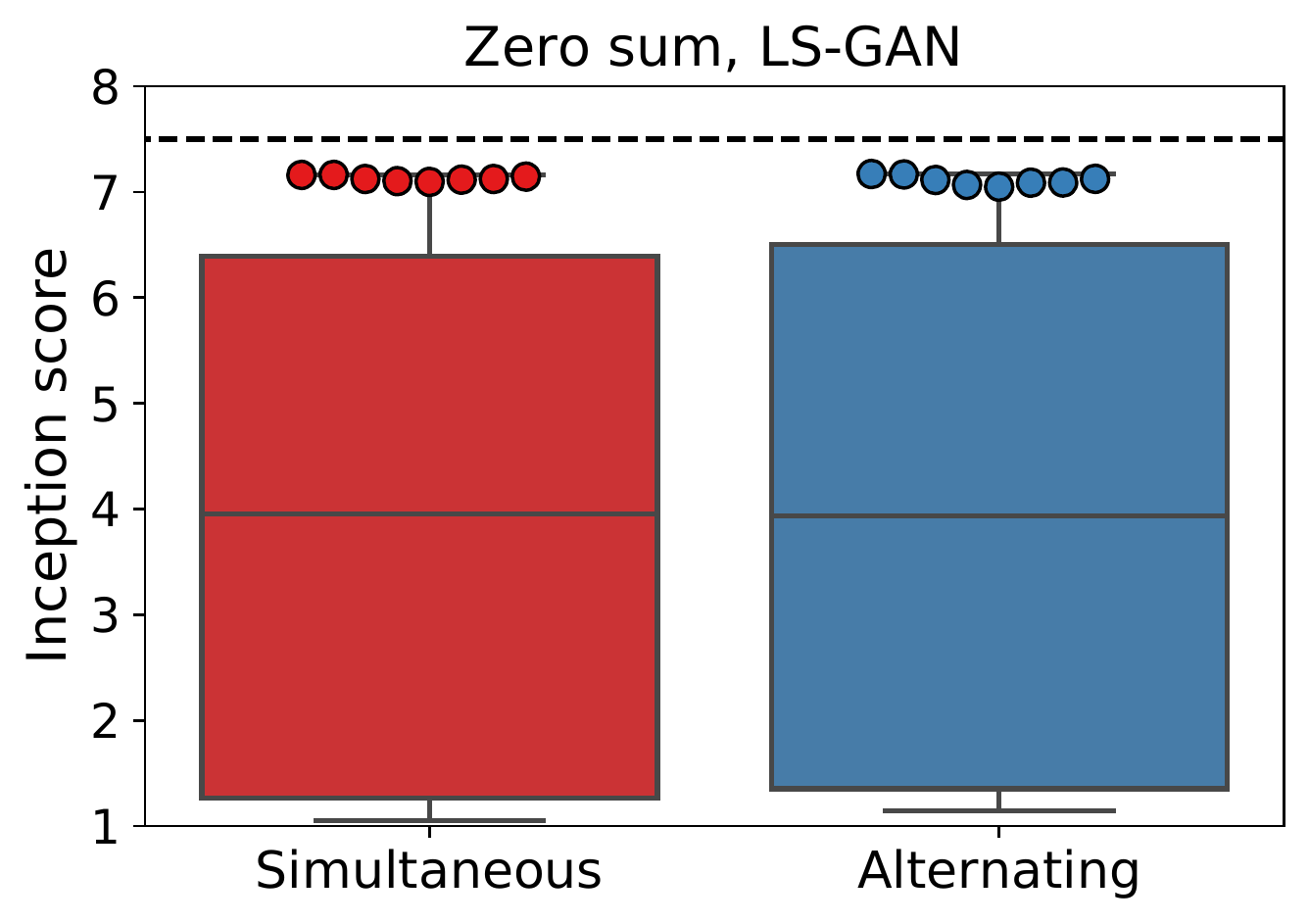}
  \includegraphics[width=0.3\columnwidth]{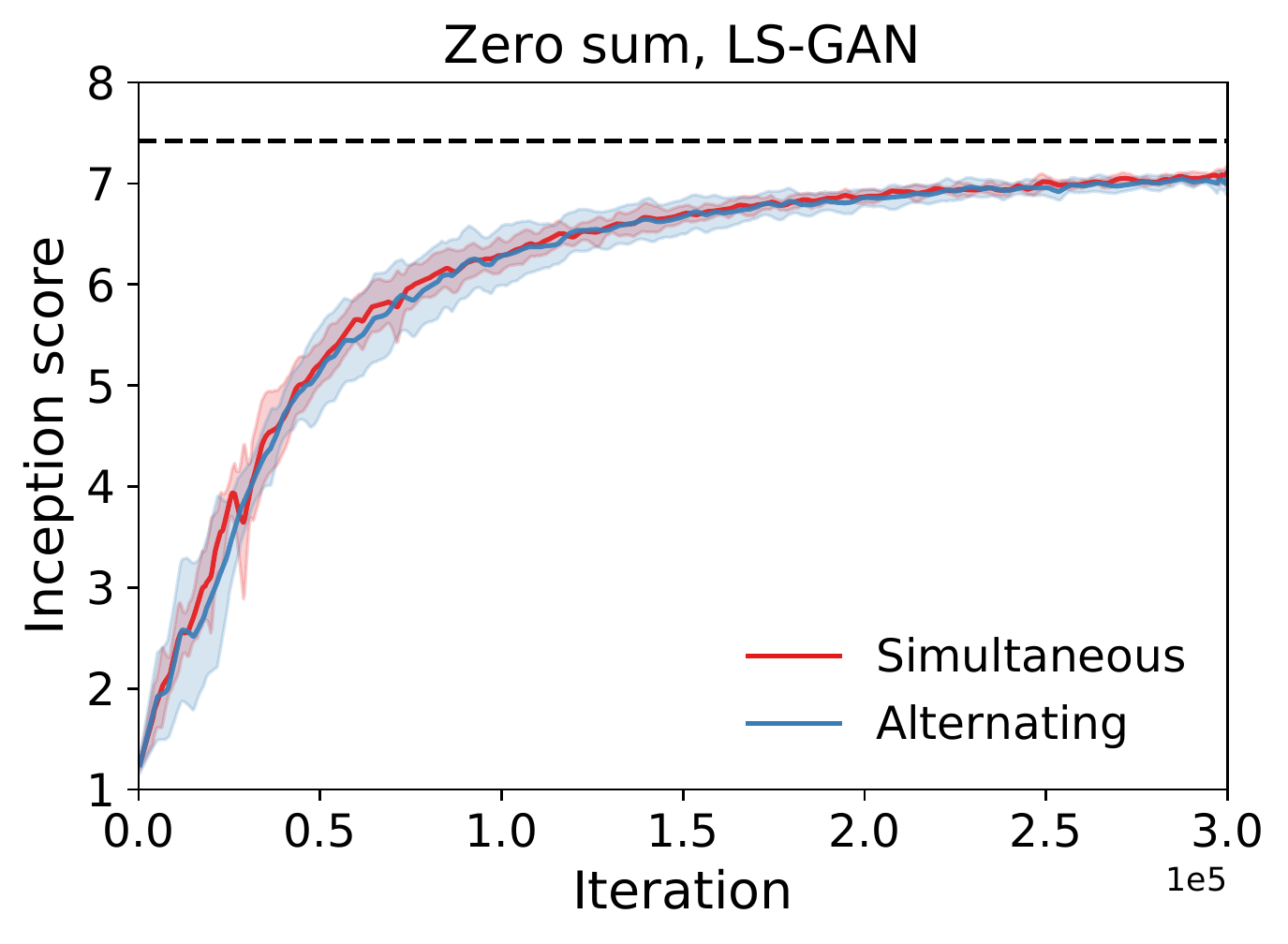}
  \includegraphics[width=0.3\columnwidth]{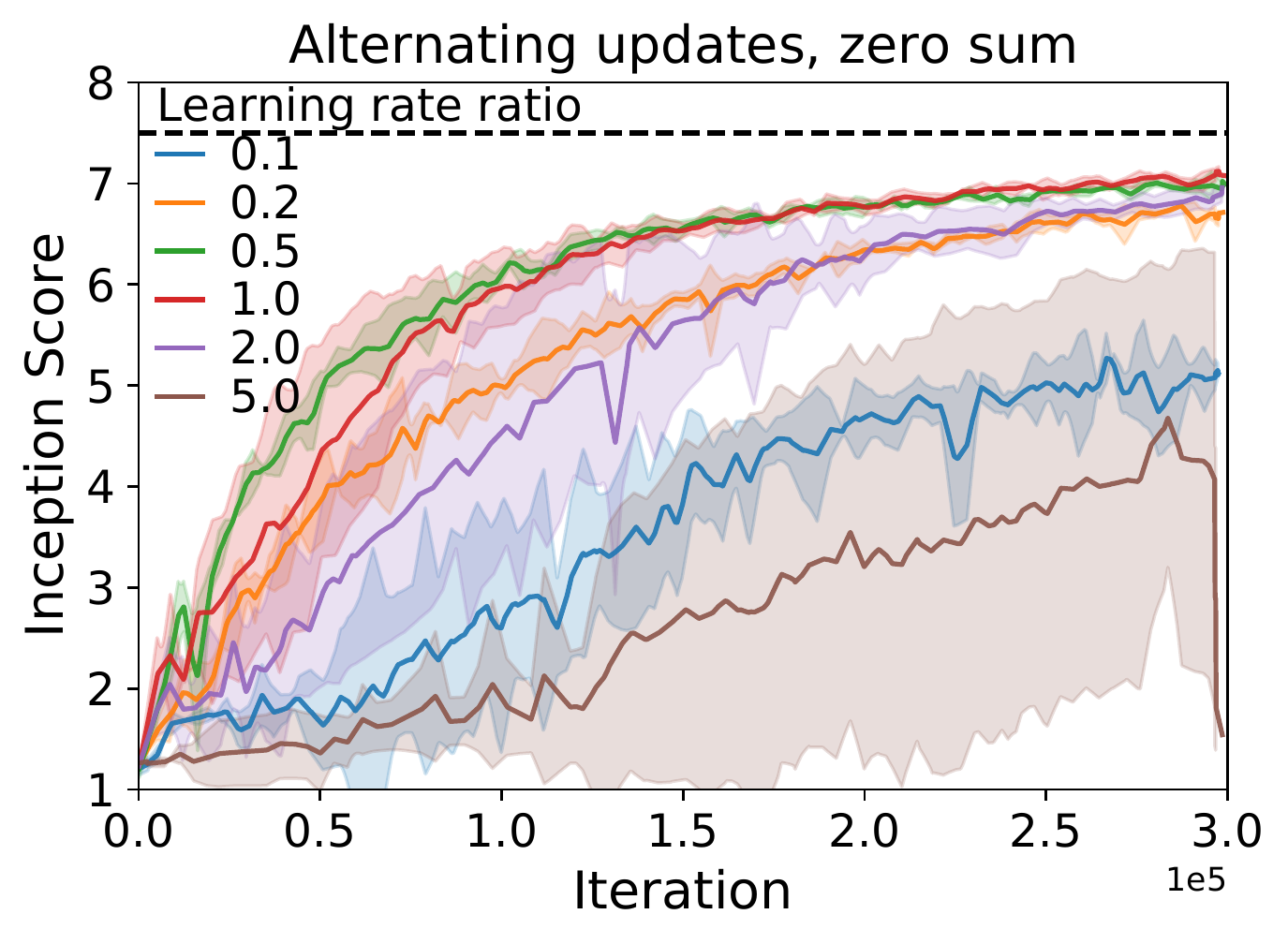}
  \caption{Least squares GAN: The effect of discretization drift on zero-sum games.}
   \label{fig:idd_zero_sum_games_ls_gan}
\end{figure}

\subsection{GANs using the non-saturating loss}

Next, we explore how the strength of the DD depends on the game dynamics. We do this by comparing the \textit{relative effect} that numerical integration schemes have across different games. To this end, we consider the non-saturating loss introduced in the original GAN paper ($- \log D_{\vphi}(G_{\vtheta}(\vz))$). This loss has been extensively used since it helps to avoid problematic gradients early in training. When using this loss, we see that there is little difference between simultaneous and alternating updates (Figure~\ref{fig:supp_non_saturating}), unlike the saturating loss case. These results demonstrate that since DD depends on the underlying dynamics, it is difficult to make general game-independent predictions about which numerical integrator will perform best.

\begin{figure*}[t]
  \centering
  \includegraphics[width=0.35\columnwidth]{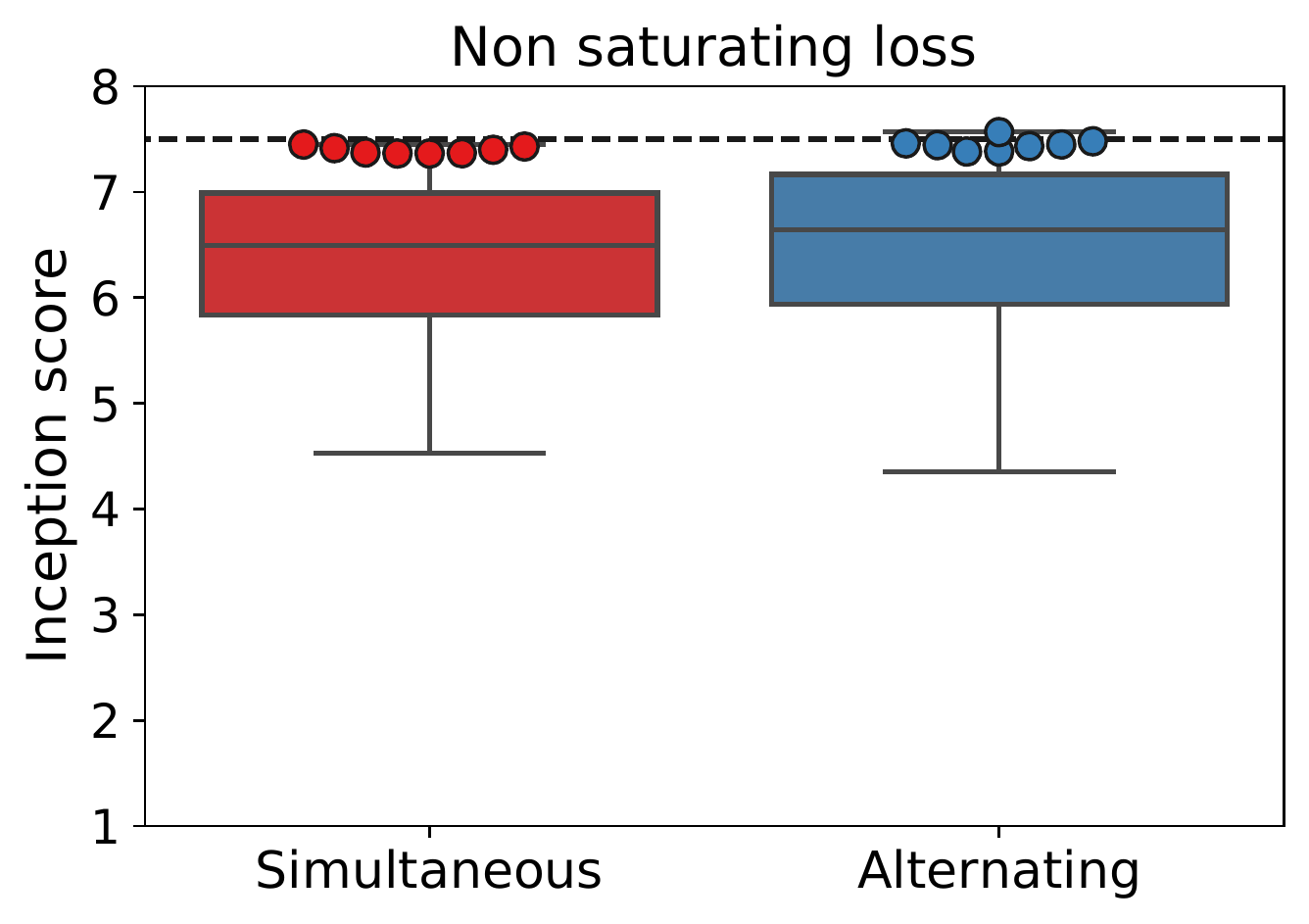}
  \includegraphics[width=0.35\columnwidth]{best_sim_vs_alt_zero_non_satuarting_is}\\
  \includegraphics[width=0.35\columnwidth]{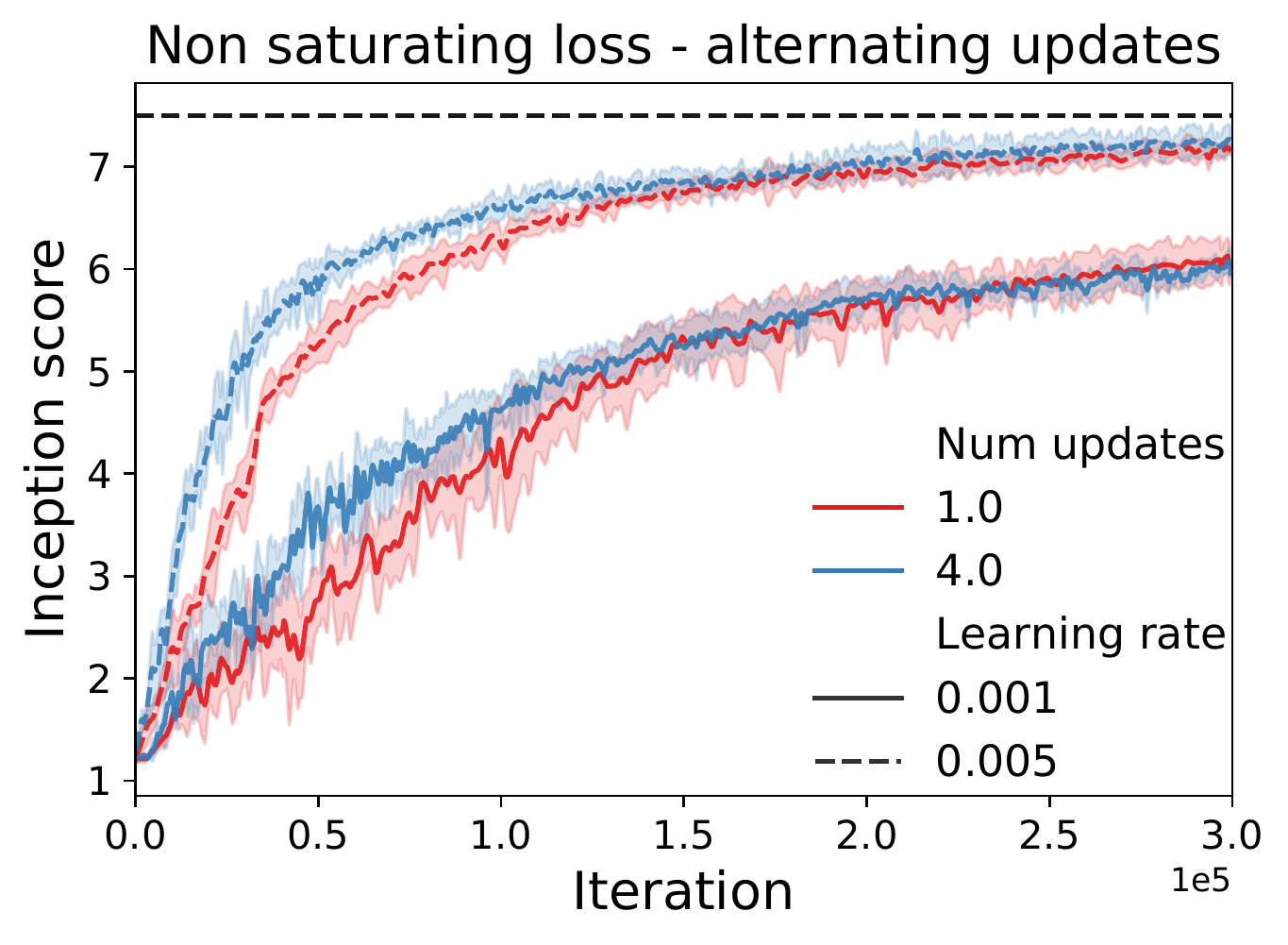}
  \includegraphics[width=0.35\columnwidth]{rk_euler_comparison_non_saturating_no_regularization}
  \caption{The effect of discretization drift depends on the game: with the non saturating loss, the relative performance of different numerical estimators is different compared to the saturating loss, and the effect of explicit regularization is also vastly different.}
  \label{fig:supp_non_saturating}
\end{figure*}

\subsection{Explicit regularization in zero-sum games trained using simultaneous gradient descent}

We now show additional experimental results and visualizations obtained using explicit regularization obtained using the original zero sum GAN objective, as presented in~\citet{goodfellow2014generative}.

We show the improvement that can be obtained compared to gradient descent with simultaneous updates by canceling the interaction terms in Figure~\ref{fig:sgd_vs_cancel_drift_interaction}.
We additionally show results obtained from strengthening the self terms in Figure~\ref{fig:sgd_vs_cancel_drift_interaction_all_types_reg}.

In Figure~\ref{fig:adam_comparison}, we show that by cancelling interaction terms, SGD becomes a competitive optimization algorithm when training the original GAN. We also note that in the case of Adam, while convergence is substantially faster than with SGD, we notice a degrade in performance later in training. This is something that has been observed in other works as well (e.g.~\citep{odegan}).

\begin{figure}[t]
 \centering
  \begin{subfigure}[Inception Score ($\uparrow$).]{
  \includegraphics[width=0.31\columnwidth]{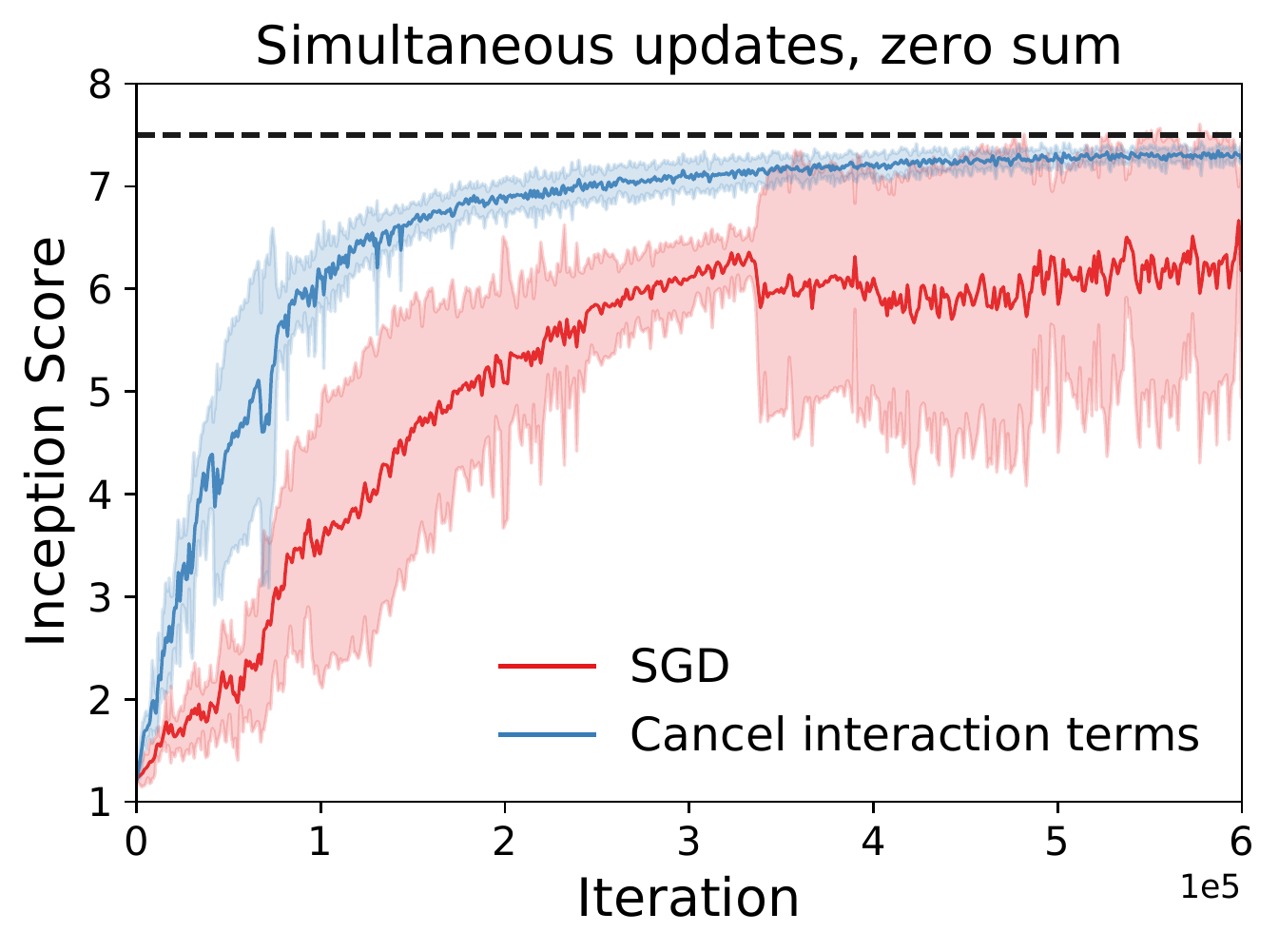}
} \end{subfigure}
 \begin{subfigure}[Frechet Inception Distance ($\downarrow$).]{
  \includegraphics[width=0.31\columnwidth]{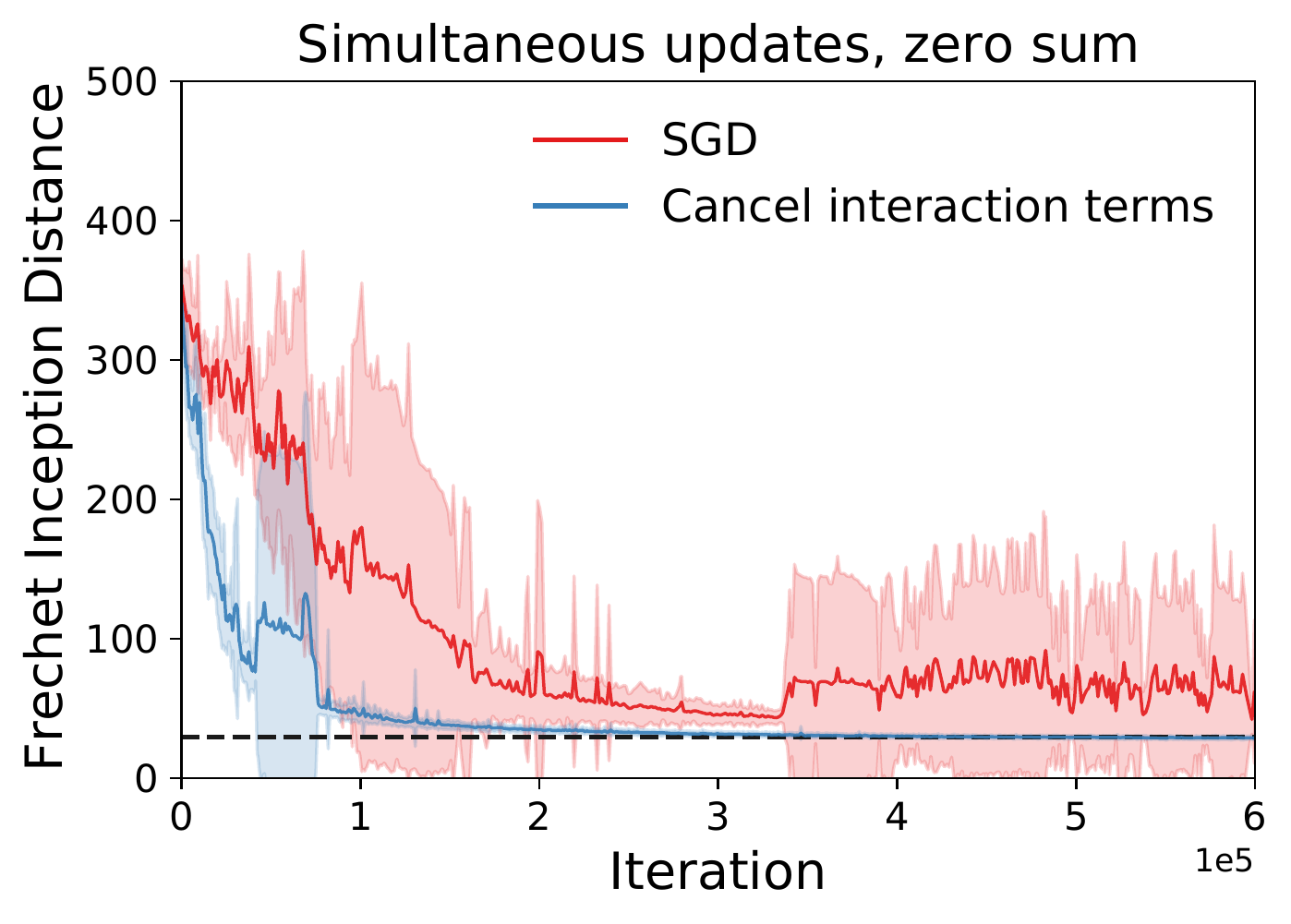}
} \end{subfigure}
  \caption{Using explicit regularization to cancel the effect of the interaction components of drift eads to a substantial improvement compared to SGD without explicit regularization.}
  \label{fig:sgd_vs_cancel_drift_interaction}
\end{figure}

\begin{figure}[t]
 \centering
  \begin{subfigure}[Inception Score ($\uparrow$).]{
  \includegraphics[width=0.31\columnwidth]{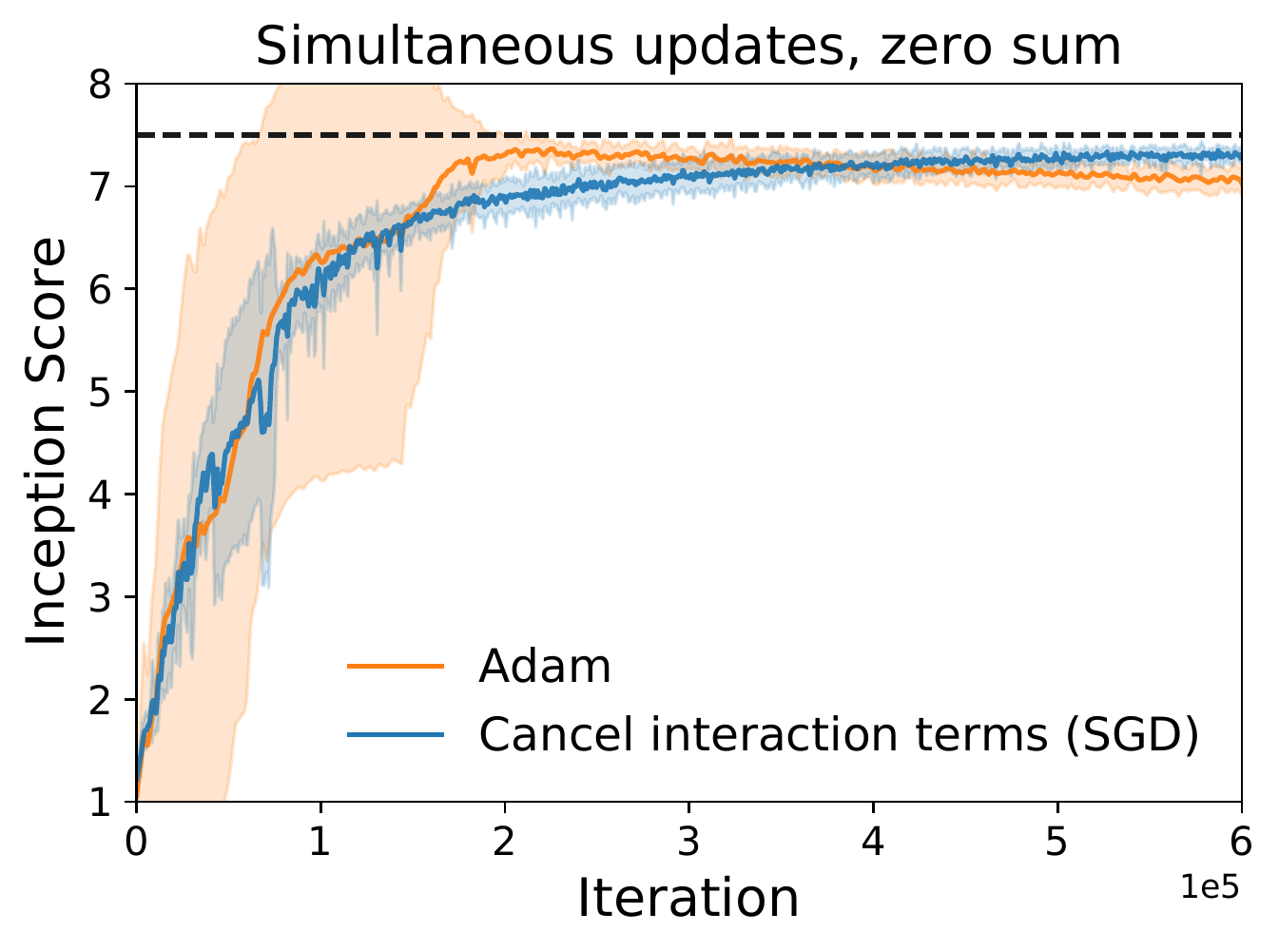}
} \end{subfigure}
 \begin{subfigure}[Frechet Inception Distance ($\downarrow$).]{
  \includegraphics[width=0.31\columnwidth]{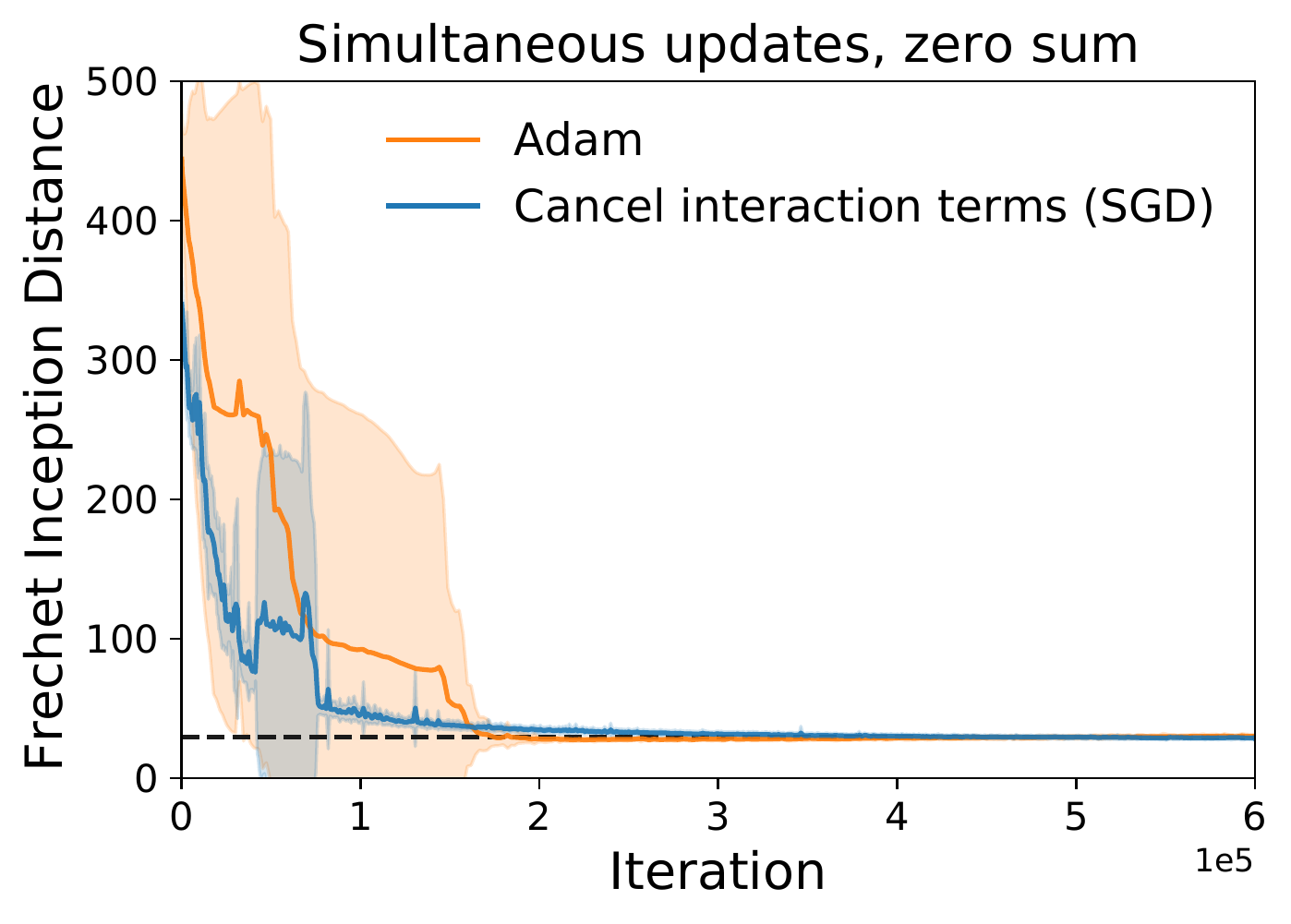}
} \end{subfigure}
  \caption{Using explicit regularization to cancel the effect of the interaction components of drift allows us to obtain the same peak performance as Adam using SGD without momentum.}
  \label{fig:adam_comparison}
\end{figure}

\begin{figure}[t]
 \centering
  \begin{subfigure}[Inception Score ($\uparrow$).]{
  \includegraphics[width=0.31\columnwidth]{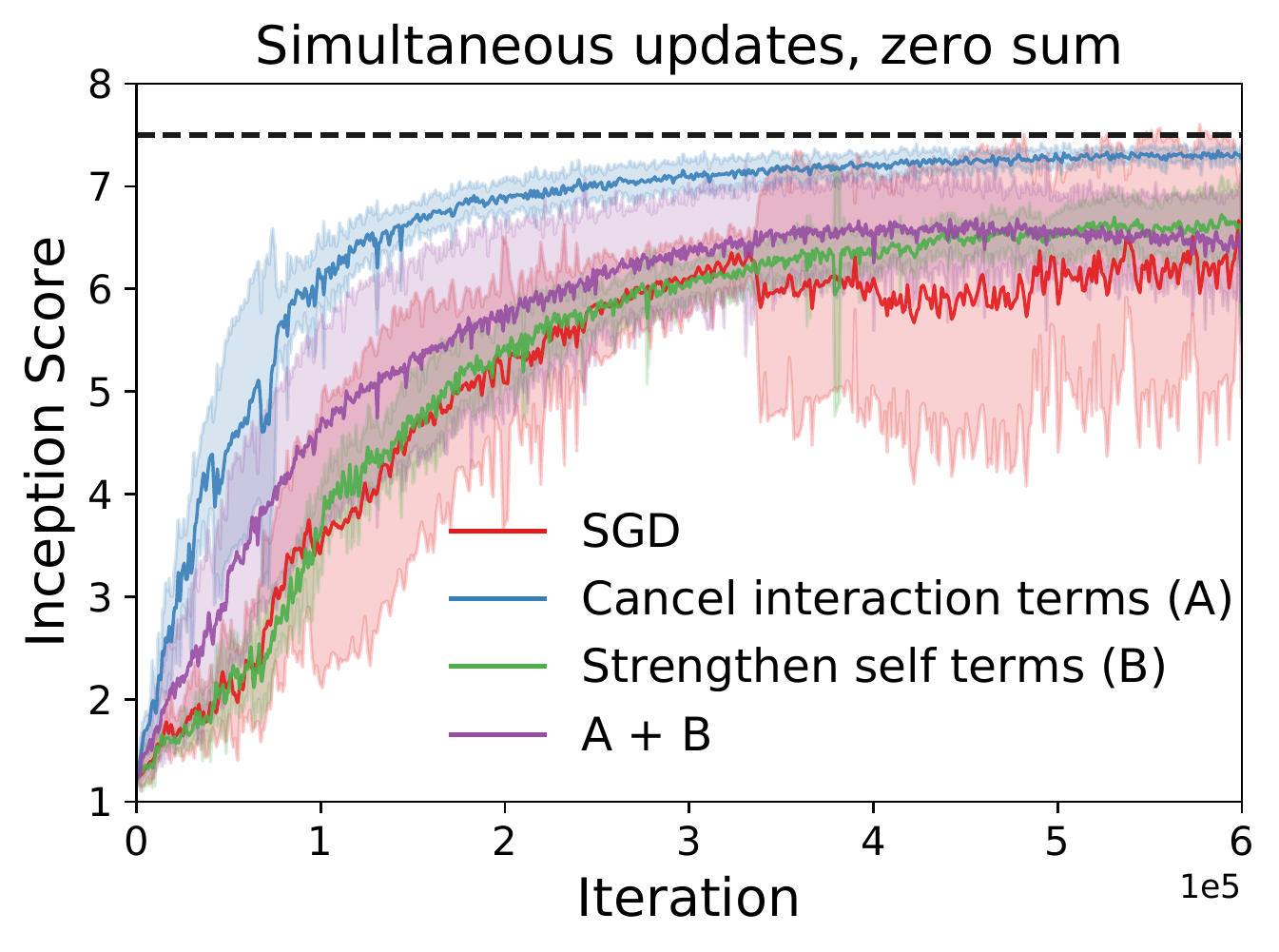}
} \end{subfigure}
 \begin{subfigure}[Frechet Inception Distance ($\downarrow$).]{
  \includegraphics[width=0.31\columnwidth]{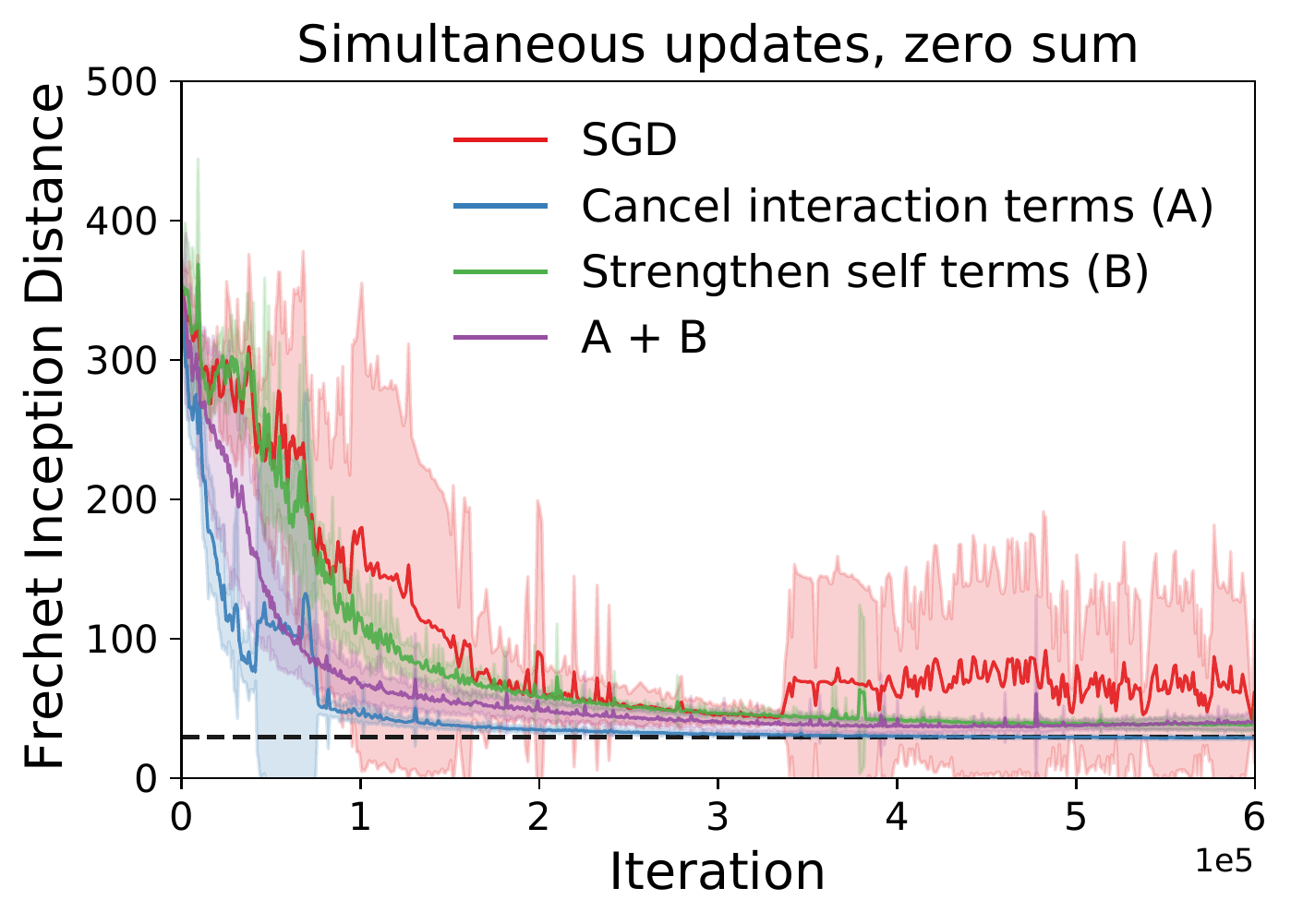}
} \end{subfigure}
  \caption{Using explicit regularization to cancel the effect of the drift eads to a substantial improvement compared to SGD without explicit regularization. Strengthening the self terms -- the terms which minimize the player's own norm -- does not lead to a substantial improvement; this is somewhat expected since while the modified ODEs give us the exact coefficients required to \textit{cancel} the drift, they do not tell us how to strengthen it, and our choice of exact coefficients from the drift might not be optimal.}
  \label{fig:sgd_vs_cancel_drift_interaction_all_types_reg}
\end{figure}

\subsubsection{More percentiles}

Throughout the main paper, we displayed the best $10\%$ performing models for each optimization algorithm used. We now expand that to show performance results across the best $20\%$ and $30\%$ of models in Figure~\ref{fig:multiple_percentages_sgd_cancel_interaction_terms}. We observe a consistent increase in performance obtained by canceling the interaction terms.

\begin{figure}[t]
 \centering
  \begin{subfigure}[Top 10\% models.]{
  \includegraphics[width=0.31\columnwidth]{cancel_interaction_sgd_comparison_simultaneous_is_10}
} \end{subfigure}
 \begin{subfigure}[Top 20\% models.]{
  \includegraphics[width=0.31\columnwidth]{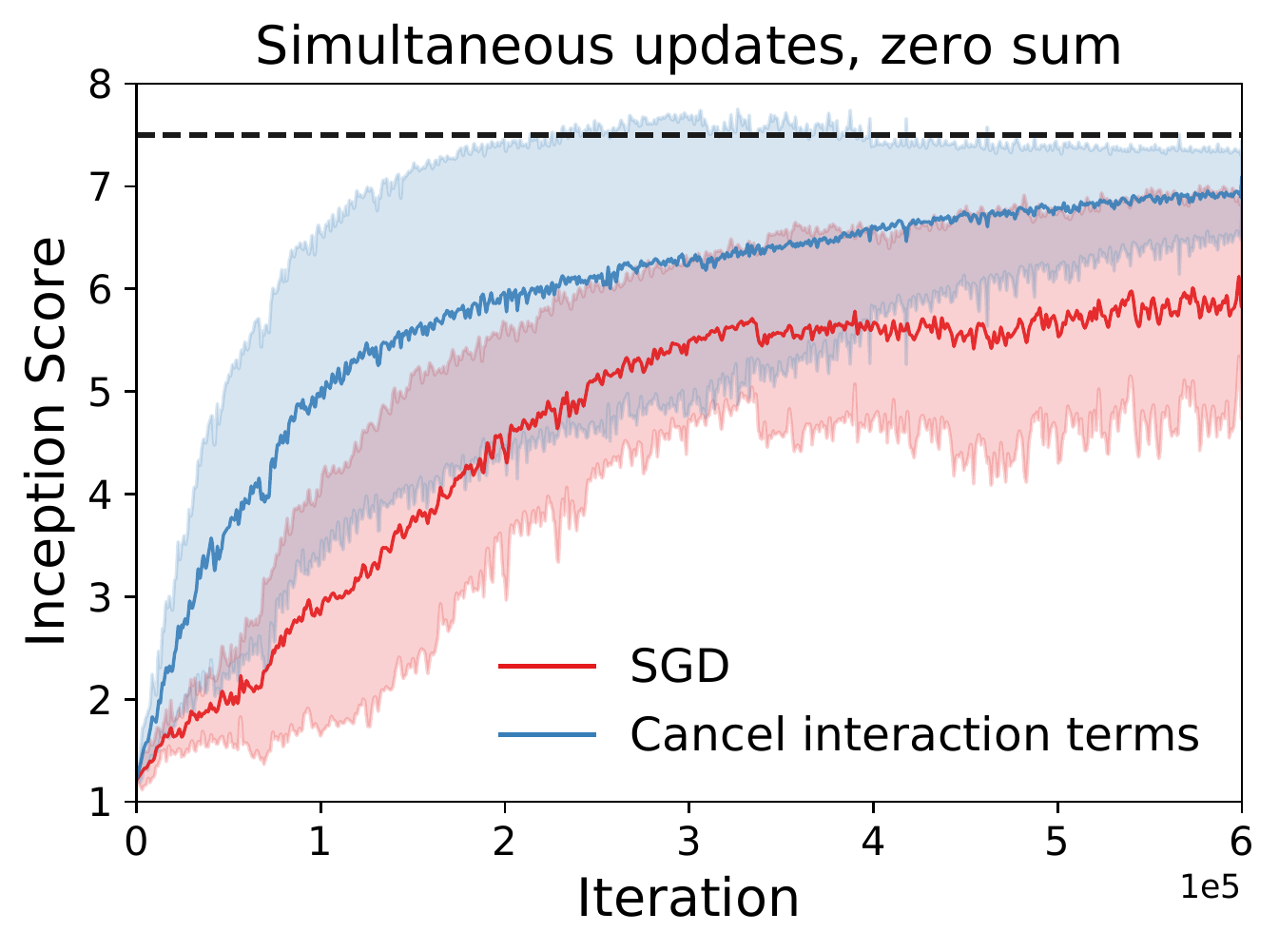}
} \end{subfigure}
\begin{subfigure}[Top 30\% models.]{
  \includegraphics[width=0.31\columnwidth]{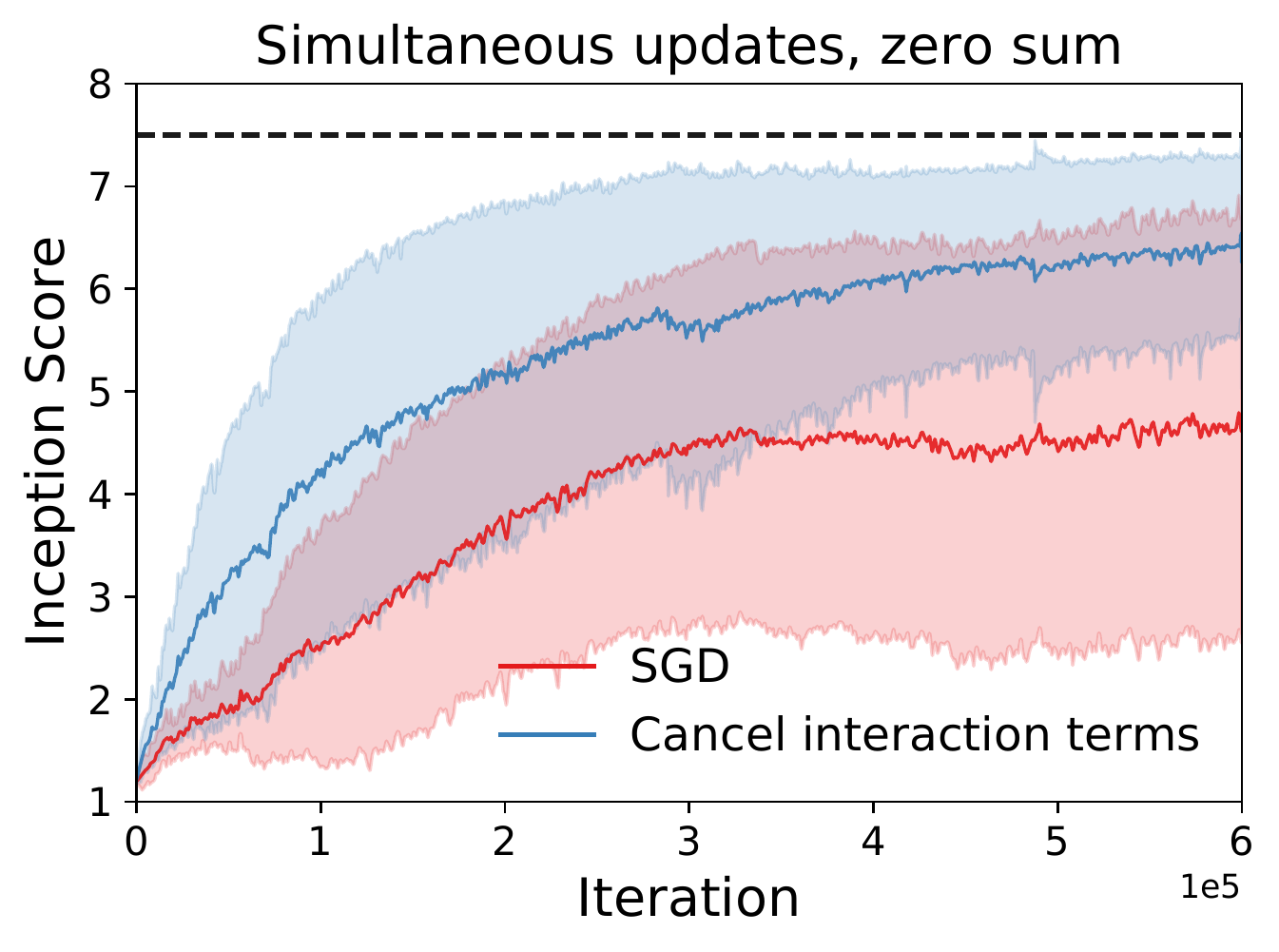}
} \end{subfigure}
  \caption{Performance across the top performing models for vanilla SGD and with canceling the interaction terms. Across all percentages, canceling the interaction terms improves performance.}
  \label{fig:multiple_percentages_sgd_cancel_interaction_terms}
\end{figure}

\subsubsection{Batch size comparison}

We now show that the results which show the efficacy of canceling the interaction terms are resiliant to changes in batch size in Figure~\ref{fig:batch_size_comparison}.

\begin{figure}[t]
 \centering
  \begin{subfigure}[Batch size 64.]{
  \includegraphics[width=0.31\columnwidth]{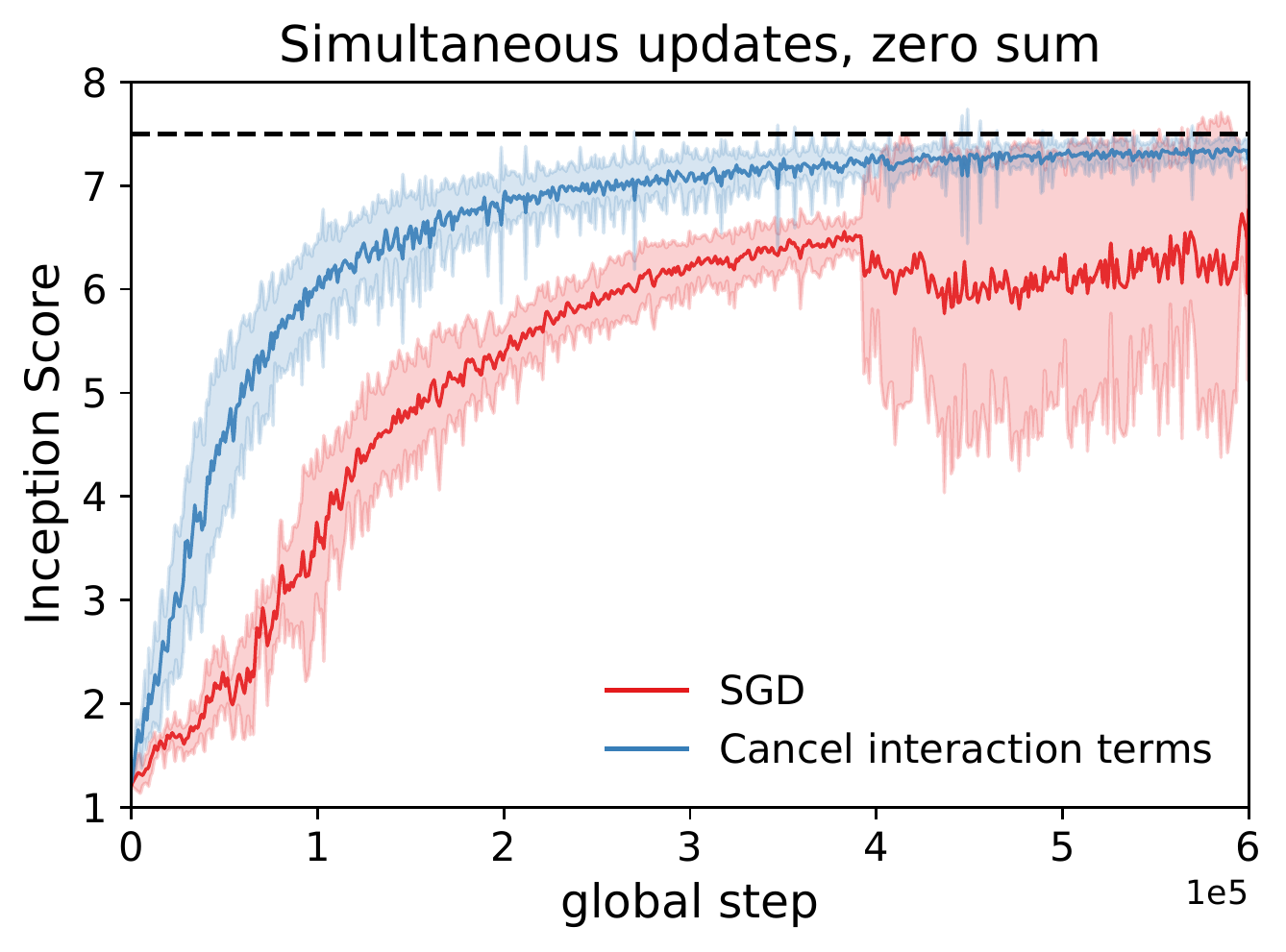}
} \end{subfigure}
 \begin{subfigure}[Batch size 128.]{
  \includegraphics[width=0.31\columnwidth]{cancel_interaction_sgd_comparison_simultaneous_is_10}
} \end{subfigure}
\begin{subfigure}[Batch size 256.]{
  \includegraphics[width=0.31\columnwidth]{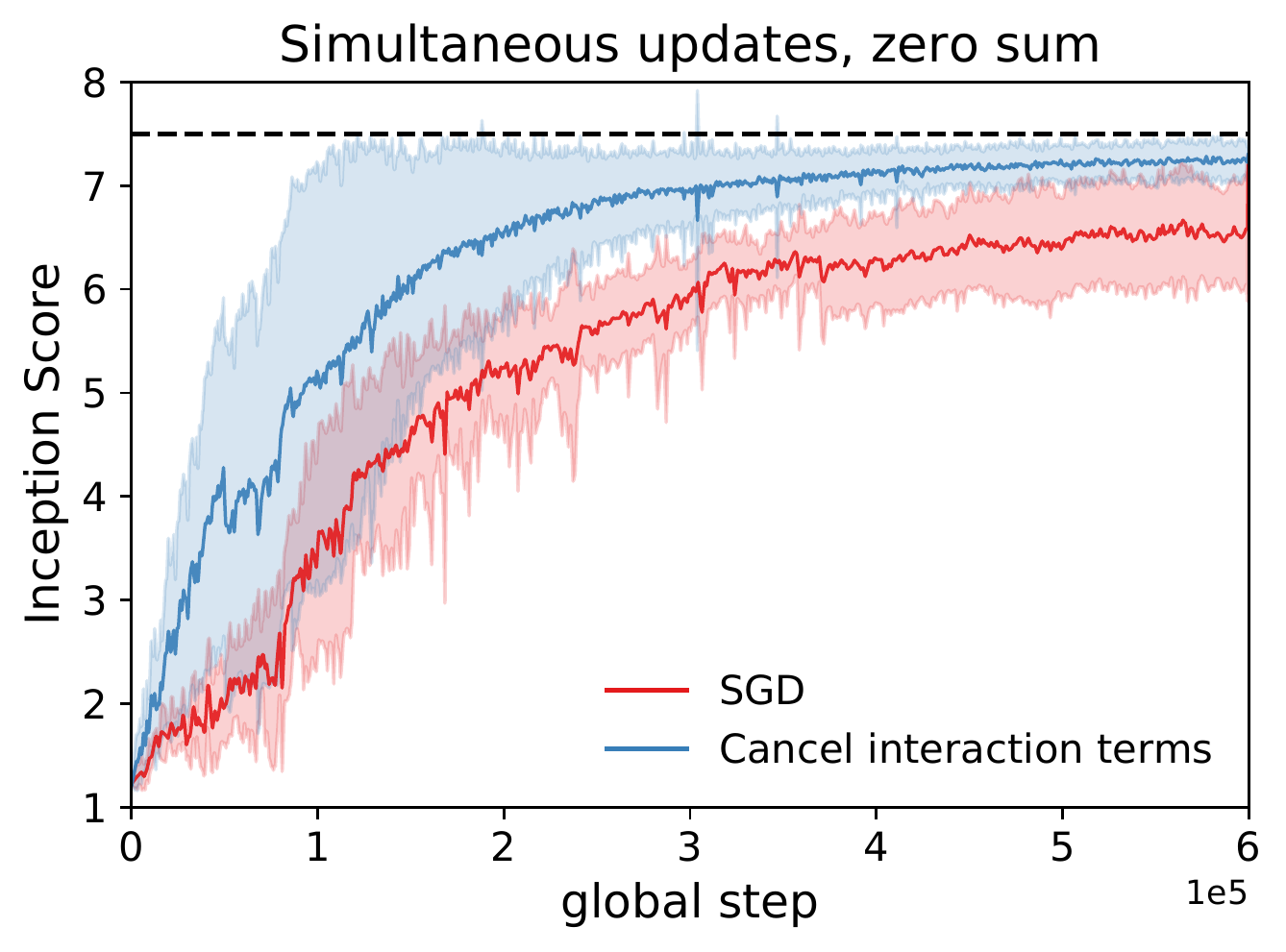}
} \end{subfigure}
  \caption{Performance when changing the batch size. We consistently see that canceling te interaction terms improves performance.}
  \label{fig:batch_size_comparison}
\end{figure}

\subsubsection{Comparison with Symplectic Gradient Adjustment}

We show results comparing with Symplectic Gradient Adjustment (SGA) ~\citep{balduzzi2018mechanics} in Figure~\ref{fig:supp_sga_comp} (best performing models) and Figure~\ref{fig:sga_box_plots} (quantiles showing performance across all hyperparameters and seeds). We observe that despite having the same functional form as SGA, canceling the interaction terms of DD performs better; this is due to the choice of regularization coefficients, which in the case of canceling the interaction terms is provided by Corollary~\ref{cor:zs-sim}.

\begin{figure}[t]
 \centering
  \begin{subfigure}[Inception Score ($\uparrow$).]{
  \includegraphics[width=0.31\columnwidth]{explicit_regularization_sim_updates_comparison_with_sga_is}
} \end{subfigure}
 \begin{subfigure}[Frechet Inception Distance ($\downarrow$).]{
  \includegraphics[width=0.31\columnwidth]{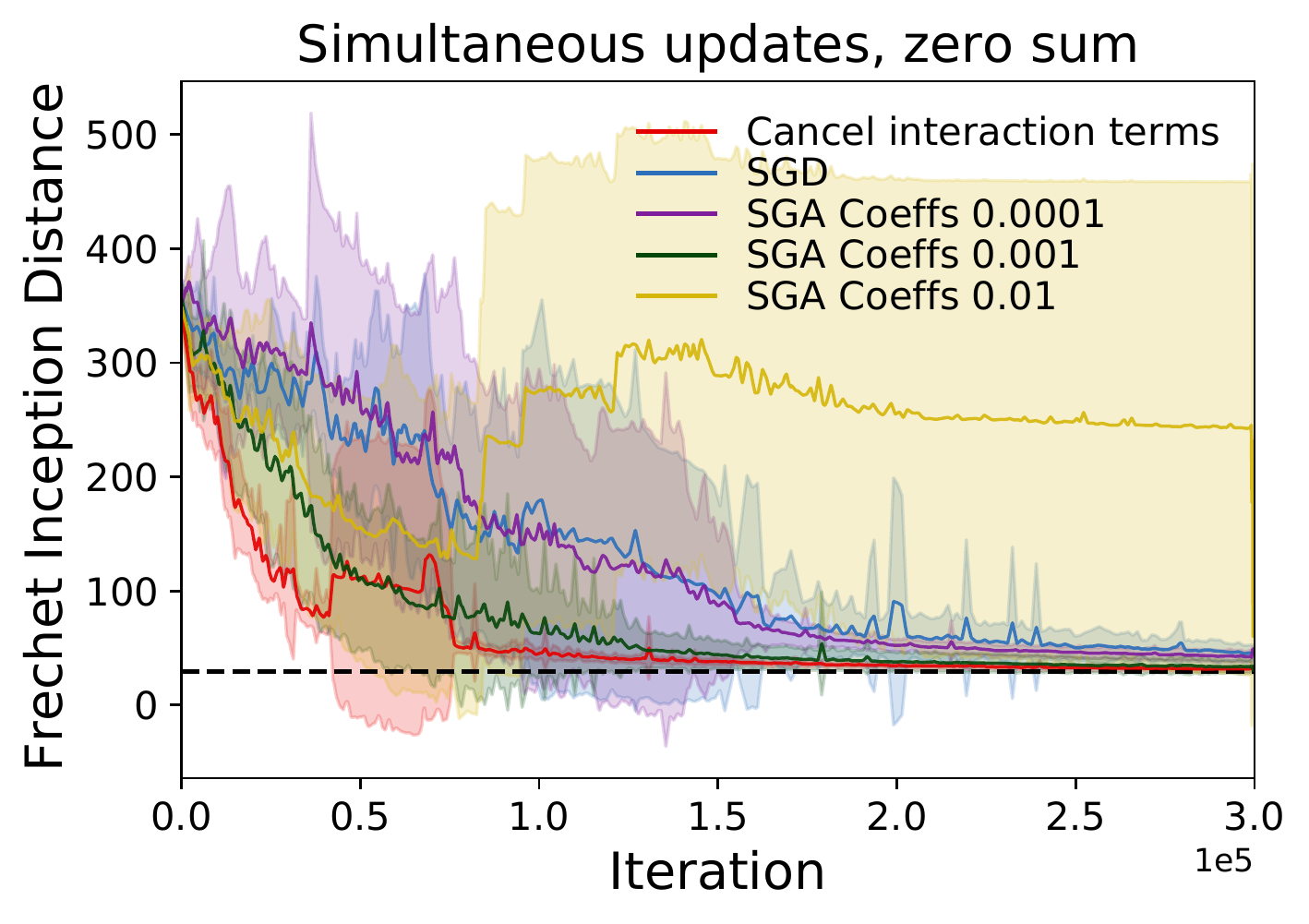}
} \end{subfigure}
  \caption{Comparison with Symplectic Gradient Adjustment (SGA). Canceling the interaction terms results in similar performance, but with less variance. The performance of SGA heavily depends on the strength of regularization, adding another hyperparmeter to the hyperparameter sweep, while canceling the interaction terms of the drift requires no other hyperparameters, since the explicit regularization coefficients strictly depend on learning rates.}
  \label{fig:supp_sga_comp}
\end{figure}

\begin{figure}[t]
 \centering
  \begin{subfigure}{
  \includegraphics[width=0.8\columnwidth]{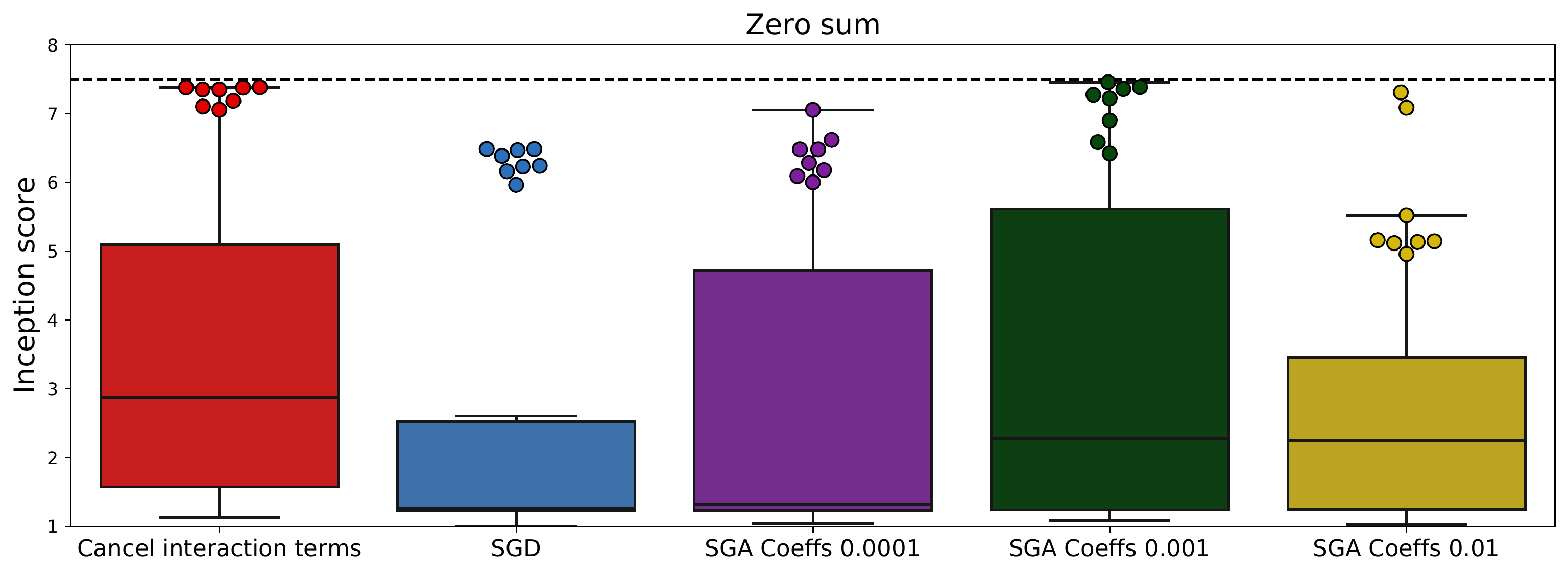}
} \end{subfigure}
  \caption{Comparison with Symplectic Gradient Adjustment (SGA),  obtained from \textit{all models in the sweep}. Without requiring an additional sweep over the regularization coefficient, canceling the interaction terms results in better performance across the learning rate sweep and less sensitivity to hyperparameters.}
  \label{fig:sga_box_plots}
\end{figure}

\subsubsection{Comparison with Consensus Optimization}

We show results comparing with Consensus Optimization (CO)~\citep{mescheder2017numerics} in Figure~\ref{fig:consensus_opt_com} (best performing models) and Figure~\ref{fig:consensus_opt_box_plots} (quantiles showing performance across all hyperparameters and seeds). We observe that canceling the interaction terms performs best, and that additionally strengthening the self terms does not provide a performance benefit.

\begin{figure}[t]
 \centering
  \begin{subfigure}[Inception Score ($\uparrow$).]{
  \includegraphics[width=0.31\columnwidth]{consensus_opt_comp_is}
} \end{subfigure}
 \begin{subfigure}[Frechet Inception Distance ($\downarrow$).]{
  \includegraphics[width=0.31\columnwidth]{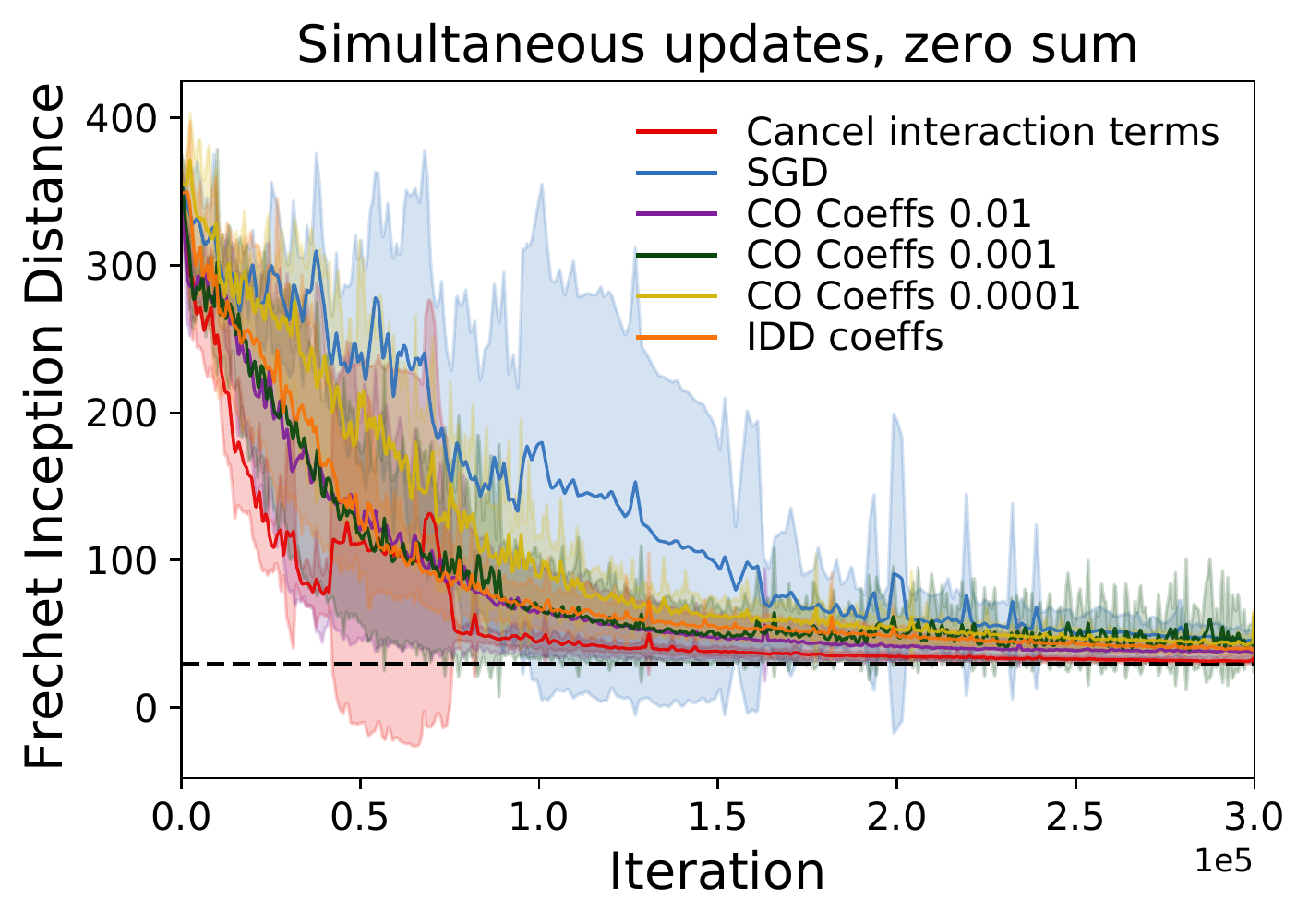}
} \end{subfigure}
  \caption{Comparison with Consensus Optimization (CO). Despite not requiring additional hyperparameters compared to the standard SGD learning rate sweep, canceling the interaction terms of the drift performs better than consensus optimization. Using consensus optimization with a fixed coefficient can perform better than using the drift coefficients when we use them to the strengthen the norms -- this is somewhat expected since while the modified ODEs give us the exact coefficients required to \textit{cancel} the drift, they do not tell us how to strengthen it, and our choice of exact coefficients from the drift might not be optimal.}
  \label{fig:consensus_opt_com}
\end{figure}

\begin{figure}[t]
 \centering
  \begin{subfigure}{
  \includegraphics[width=0.8\columnwidth]{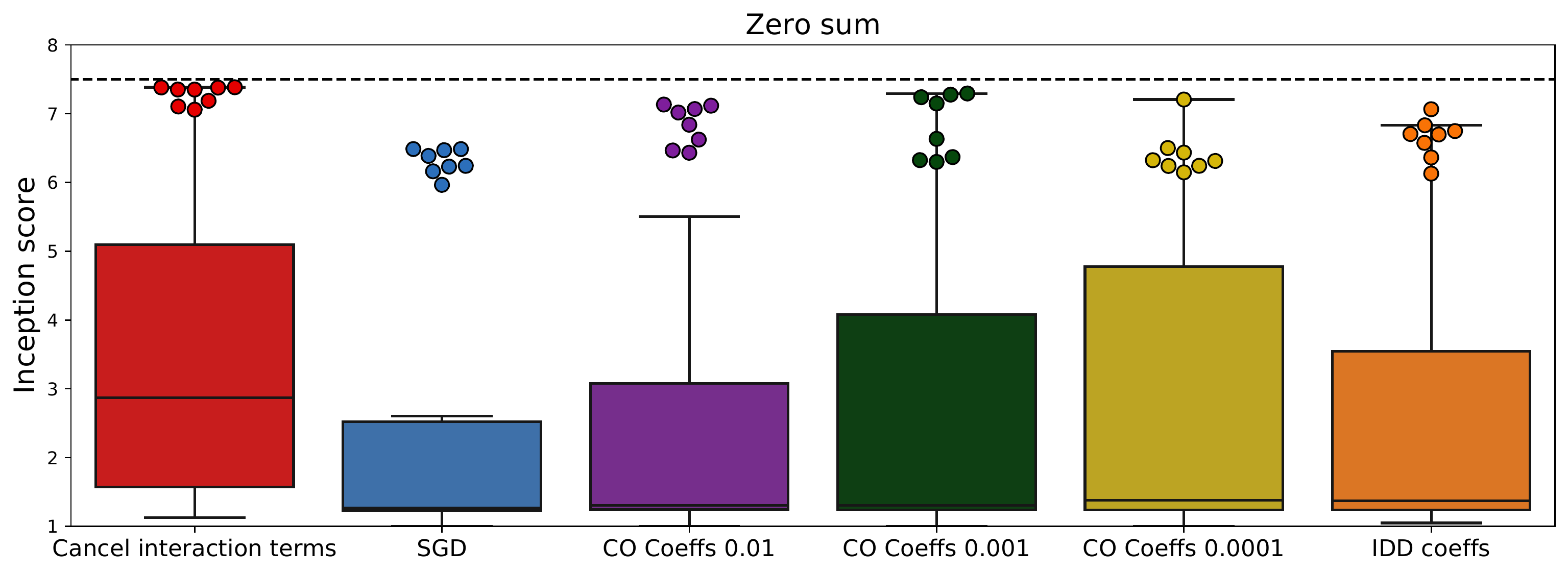}
} \end{subfigure}
  \caption{Comparison with Consensus Optimization (CO), obtained from \textit{all models in the sweep}. Without requiring an additional sweep over the regularization coefficient, canceling the interaction terms results in better performance across the learning rate sweep and less sensitivity to hyperparameters.}
  \label{fig:consensus_opt_box_plots}
\end{figure}

\subsubsection{Variance across seeds}

We have mentioned in the main text the challenge with variance across seeds observed when training GANs with SGD, especially in the case of simultaneous updates in zero sum games. We first notice that performance of simultaneous updates depends strongly on learning rates, with most models not learning. We also notice variance across seeds, both in vanilla SGD and when using explicit regularization to cancel interaction terms. In order to investigate this effect, we ran a sweep of 50 seeds for the best learning rates we obtain when canceling interaction terms in simultaneous updates, namely a discriminator learning rate of $0.01$ and a generator learning rate of $0.005$, we obtain Inception Score results with mean 5.61, but a very large standard deviation of 2.34. Indeed, as shown in Figure~\ref{fig:seed_performance_cancel_interaction_terms}, more than 50\% of the seeds converge to an IS grater than 7.
To investigate the reason for the variability, we repeat the same experiment, but clip the gradient value for each parameter to be in $[-0.1, 0.1]$, and show results in Figure~\ref{fig:seed_performance_cancel_interaction_terms_clip}. We notice that another $10\%$ of jobs converge to an IS score grater than 7, and $10\%$ drop in the number of jobs that do not manage to learn. This makes us postulate that the reason for the variability is due to large gradients, perhaps early in training. We contrast this variability across seeds with the consistent performance we obtain by looking at the best performing \textit{models} across learning rates, where as we have shown in the main paper and throughout the Supplementary Material, we obtain consistent performance which consistently leads to a substantial improvement compared to SGD without explicit regularization, and obtains performance comparable with Adam.

\begin{figure}[t]
 \centering
  \begin{subfigure}[Learning curves.]{
  \includegraphics[width=0.31\columnwidth]{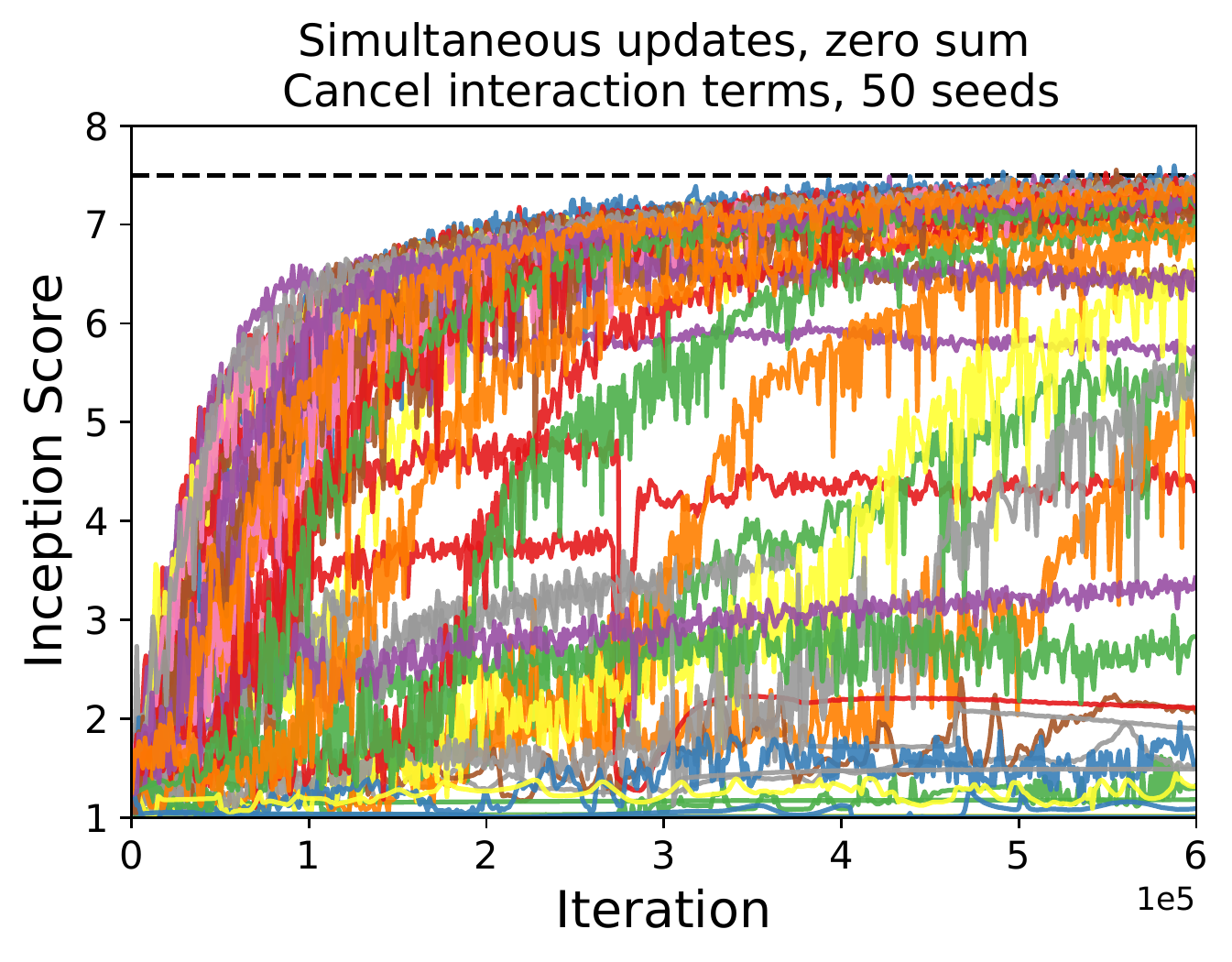}
} \end{subfigure}
 \begin{subfigure}[Performance Histogram.]{
  \includegraphics[width=0.31\columnwidth]{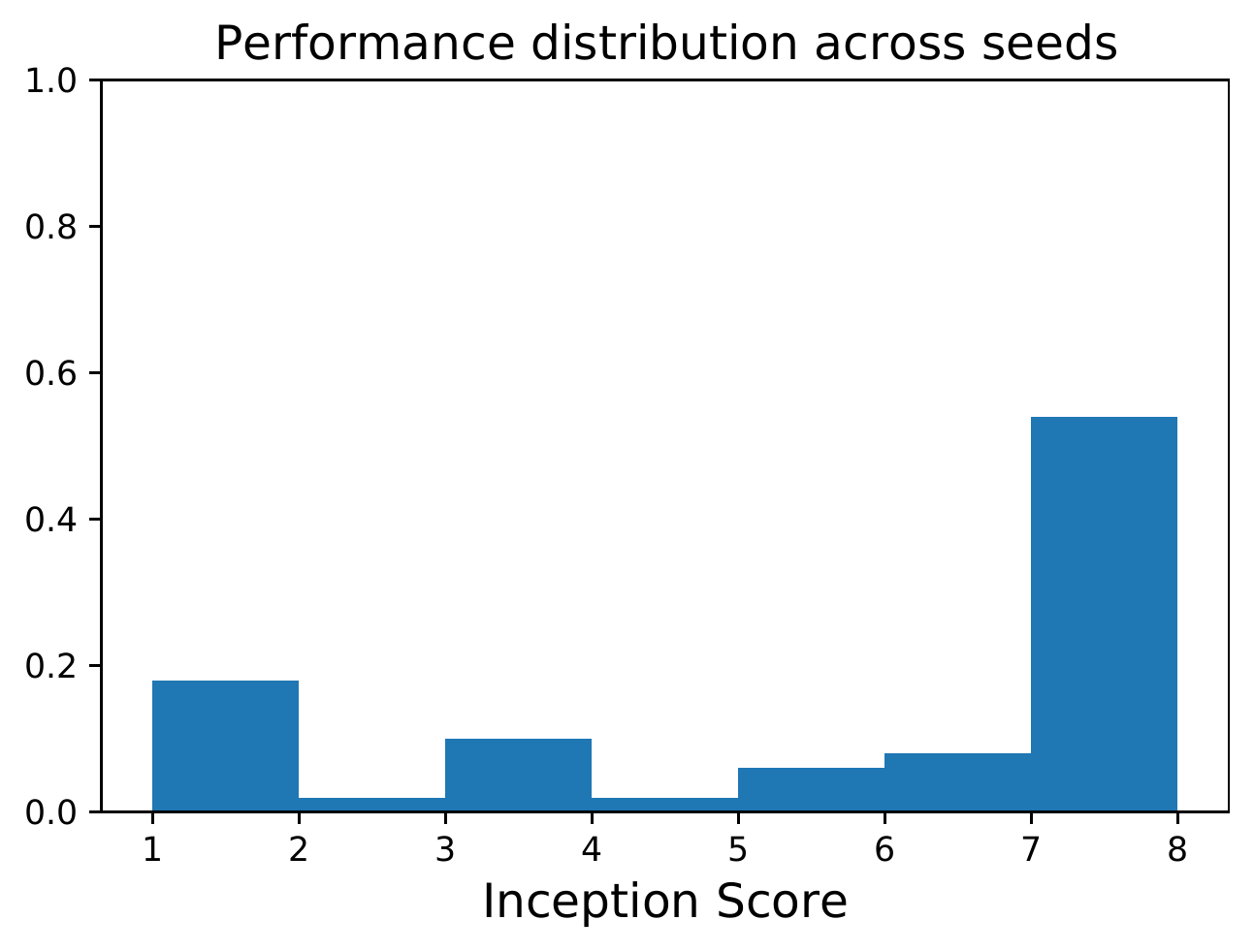}
} \end{subfigure}
  \caption{Variability across seeds for the best performing hyperparameters, when canceling interaction terms in simultaneous updates for the original GAN, with a zero sum loss.}
  \label{fig:seed_performance_cancel_interaction_terms}
\end{figure}

\begin{figure}[t]
 \centering
  \begin{subfigure}[Learning curves.]{
  \includegraphics[width=0.31\columnwidth]{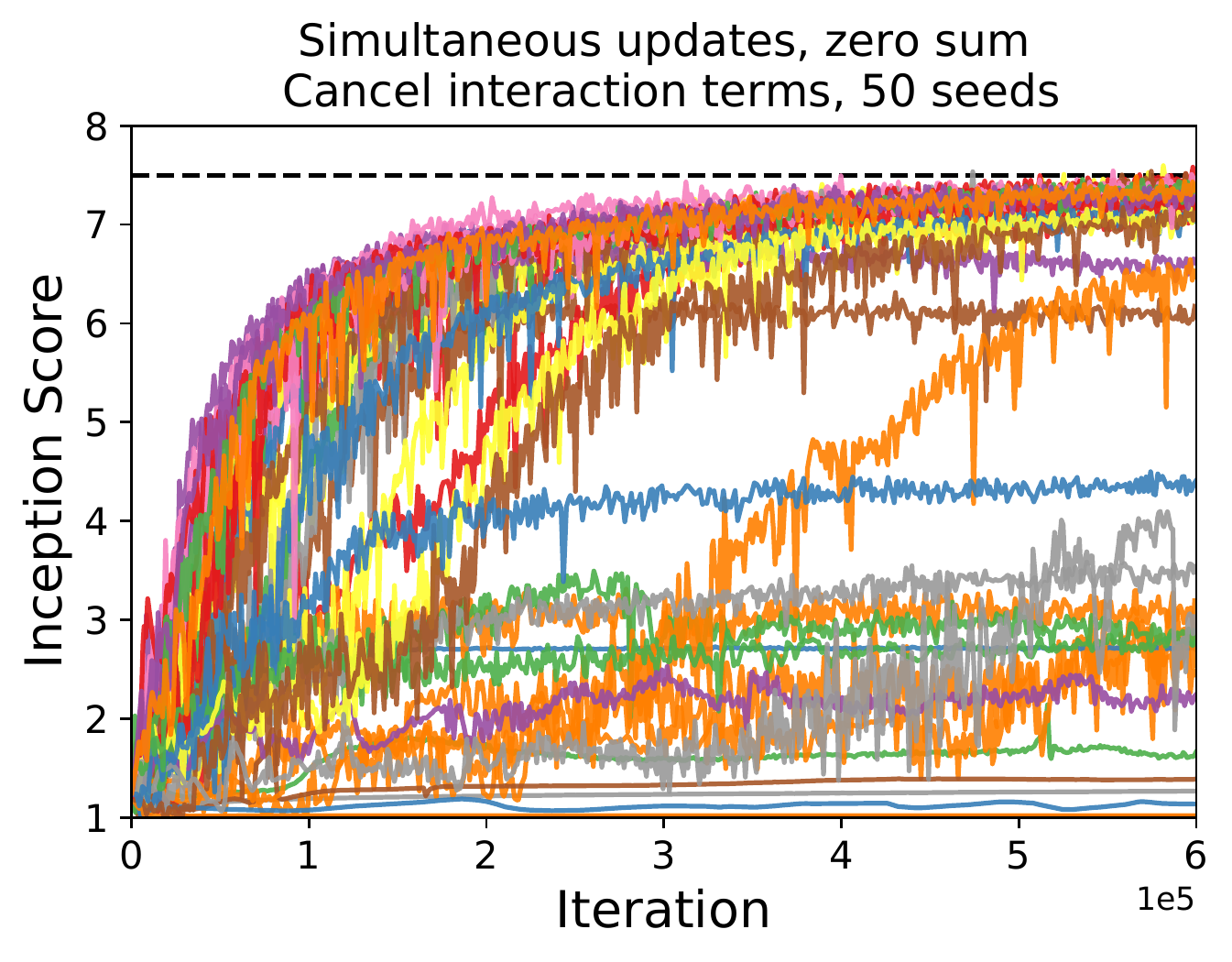}
} \end{subfigure}
 \begin{subfigure}[Performance Histogram.]{
  \includegraphics[width=0.31\columnwidth]{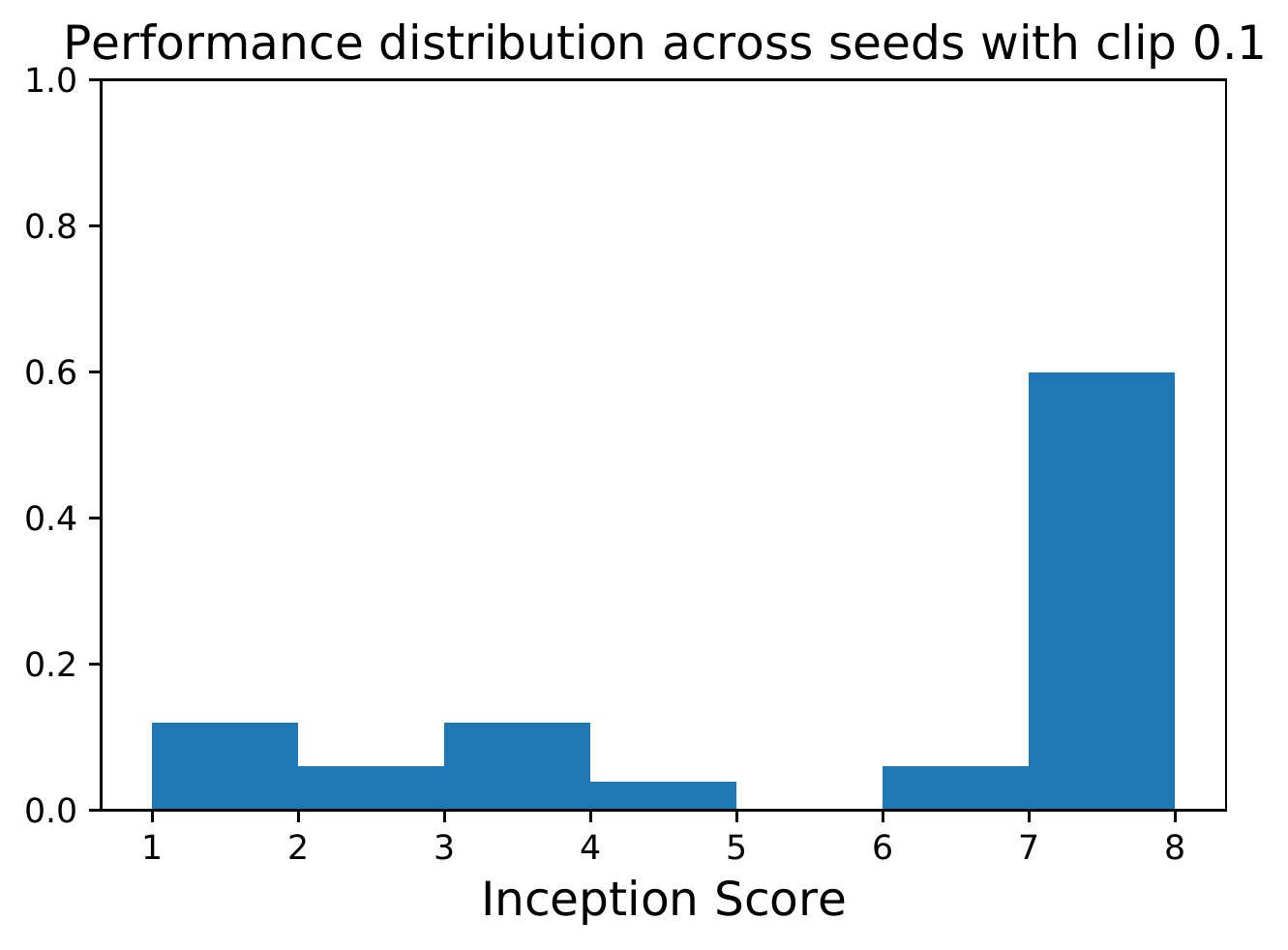}
} \end{subfigure}
  \caption{\textit{With gradient clipping}. Gradient clipping can reduce variability. This suggests that the instabilities observed in gradient descent are caused by large gradient updates.}
  \label{fig:seed_performance_cancel_interaction_terms_clip}
\end{figure}

\subsection{Explicit regularization in zero-sum games trained using alternating gradient descent}

We perform the same experiments as done for simultaneous updates also with alternating updates. To do so, we cancel the effect of DD using the same explicit regularization functional form, but updating the coefficients to be those of alternating updates. We show results in Figure~\ref{fig:sgd_vs_cancel_drift_interaction_alternating}, where we see very little difference in the results compared to vanilla SGD, perhaps apart from less instability early on in training. We postulate that this could be due the effect of DD in alternating updates can can be beneficial, especially for learning rate ratios for which the second player also minimizes the gradient norm of the first player. We additionally show results obtained from strengthening the self terms in Figure~\ref{fig:sgd_vs_cancel_drift_interaction_all_types_reg_alt}.

\subsubsection{More percentiles}

Throughout the main paper, we displayed the best $10\%$ performing models for each optimization algorithm used. We now expand that to show performance results across the $20\%$ and $30\%$ jobs in Figure~\ref{fig:multiple_percentages_sgd_cancel_interaction_terms_alternating}. We observe a consistent increase in performance obtained by canceling the interaction terms.

\begin{figure}[t]
 \centering
  \begin{subfigure}[Inception Score ($\uparrow$).]{
  \includegraphics[width=0.31\columnwidth]{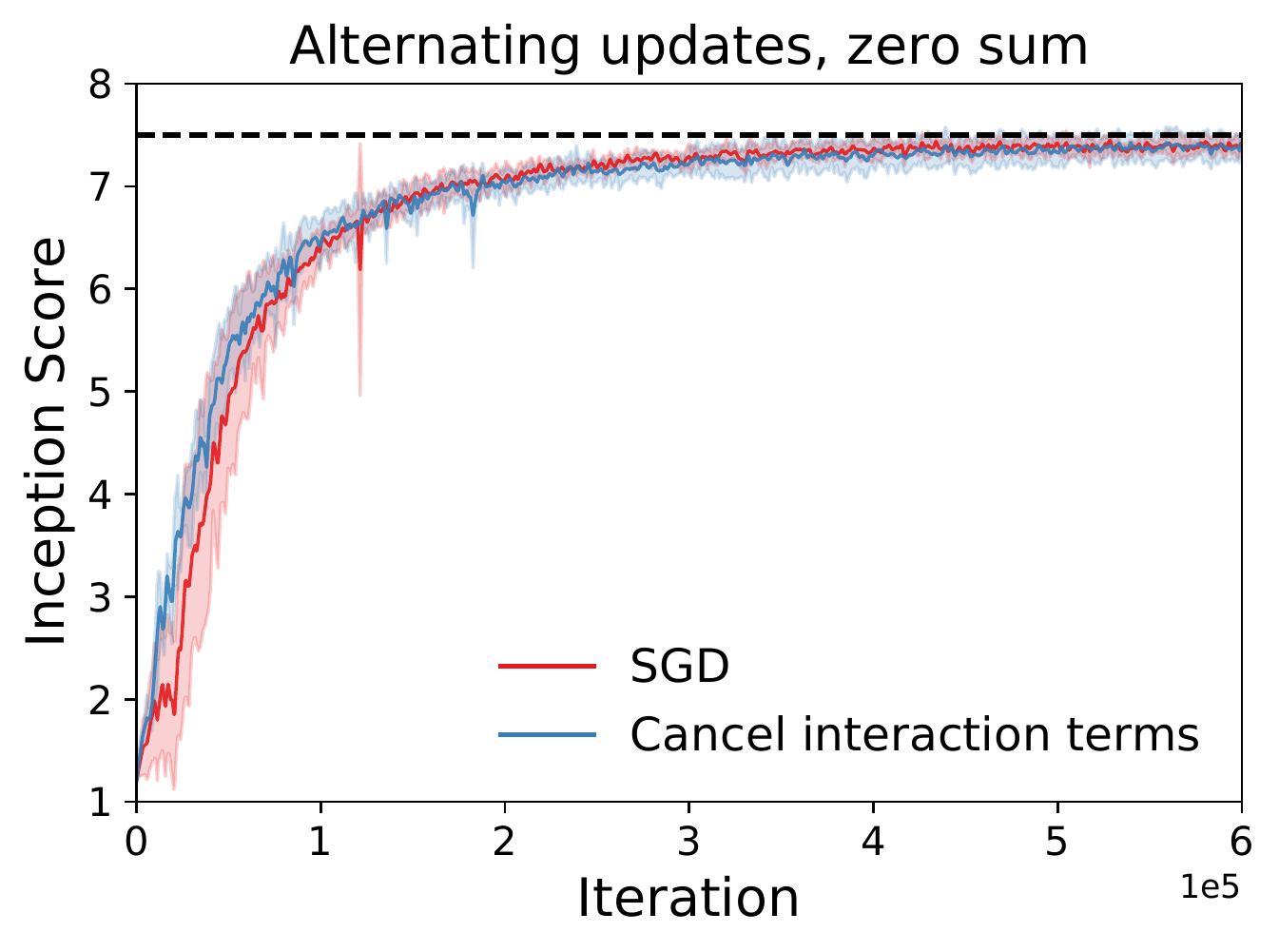}
} \end{subfigure}
 \begin{subfigure}[Frechet Inception Distance ($\downarrow$).]{
  \includegraphics[width=0.31\columnwidth]{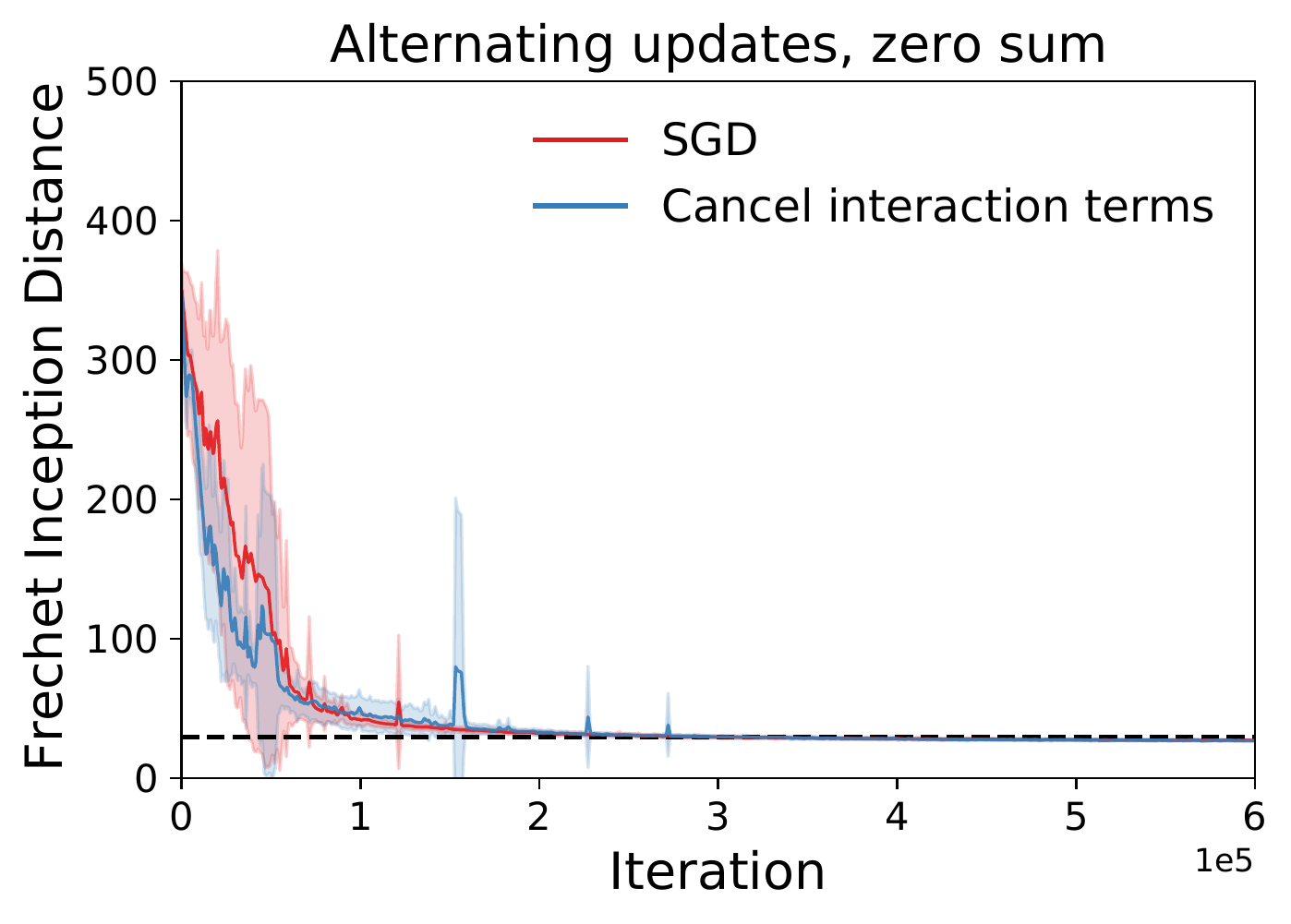}
} \end{subfigure}
  \caption{In alternating updates, using explicit regularization to cancel the effect of the interaction components of drift does not substantially improve performance compared to SGD, but can reduce variance. This is expected, given that the interaction terms for the second player in the case of alternating updates can have a beneficial regularization effect.}
  \label{fig:sgd_vs_cancel_drift_interaction_alternating}
\end{figure}

\begin{figure}[t]
 \centering
  \begin{subfigure}[Inception Score ($\uparrow$).]{
  \includegraphics[width=0.31\columnwidth]{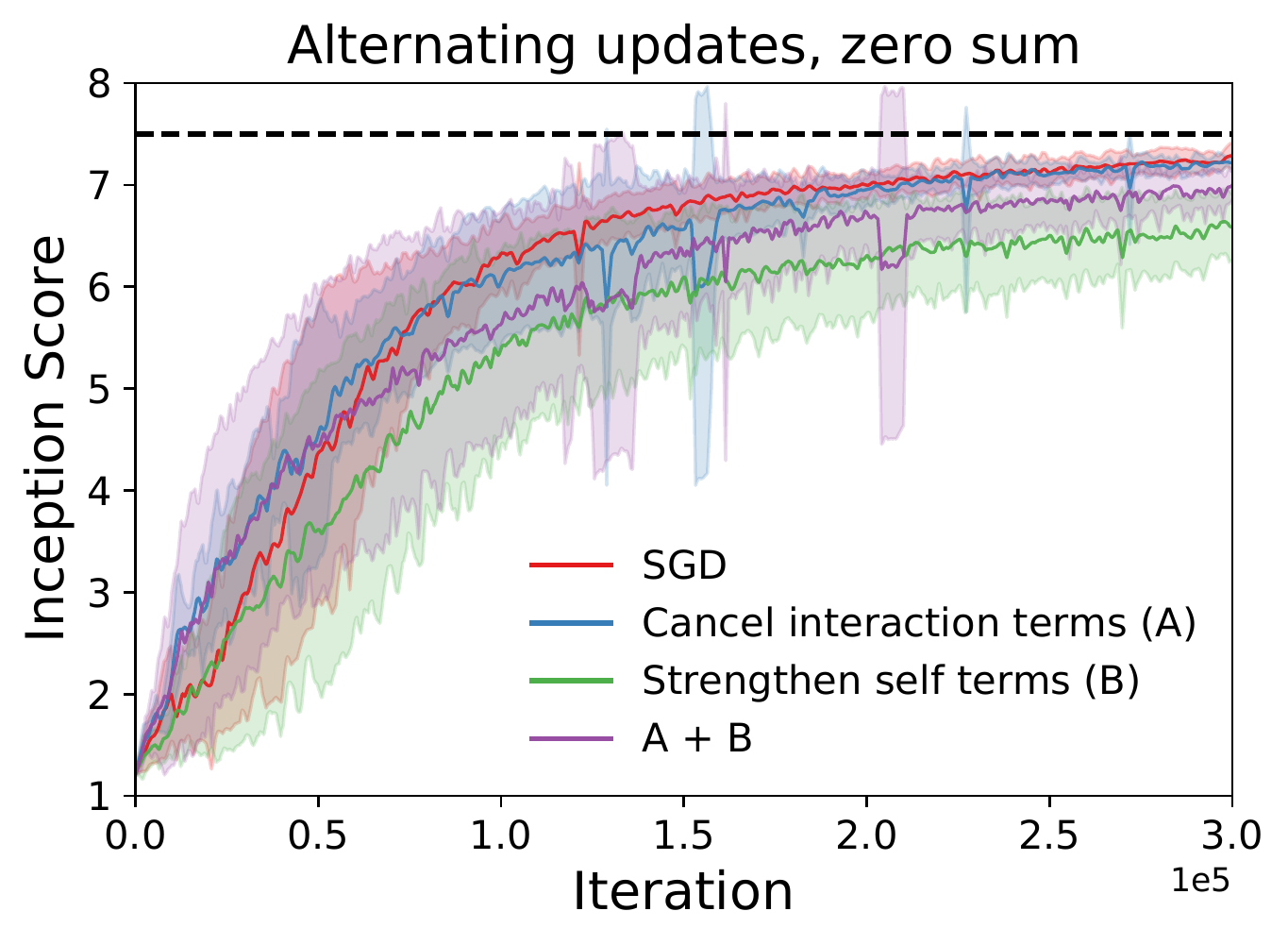}
} \end{subfigure}
 \begin{subfigure}[Frechet Inception Distance ($\downarrow$).]{
  \includegraphics[width=0.31\columnwidth]{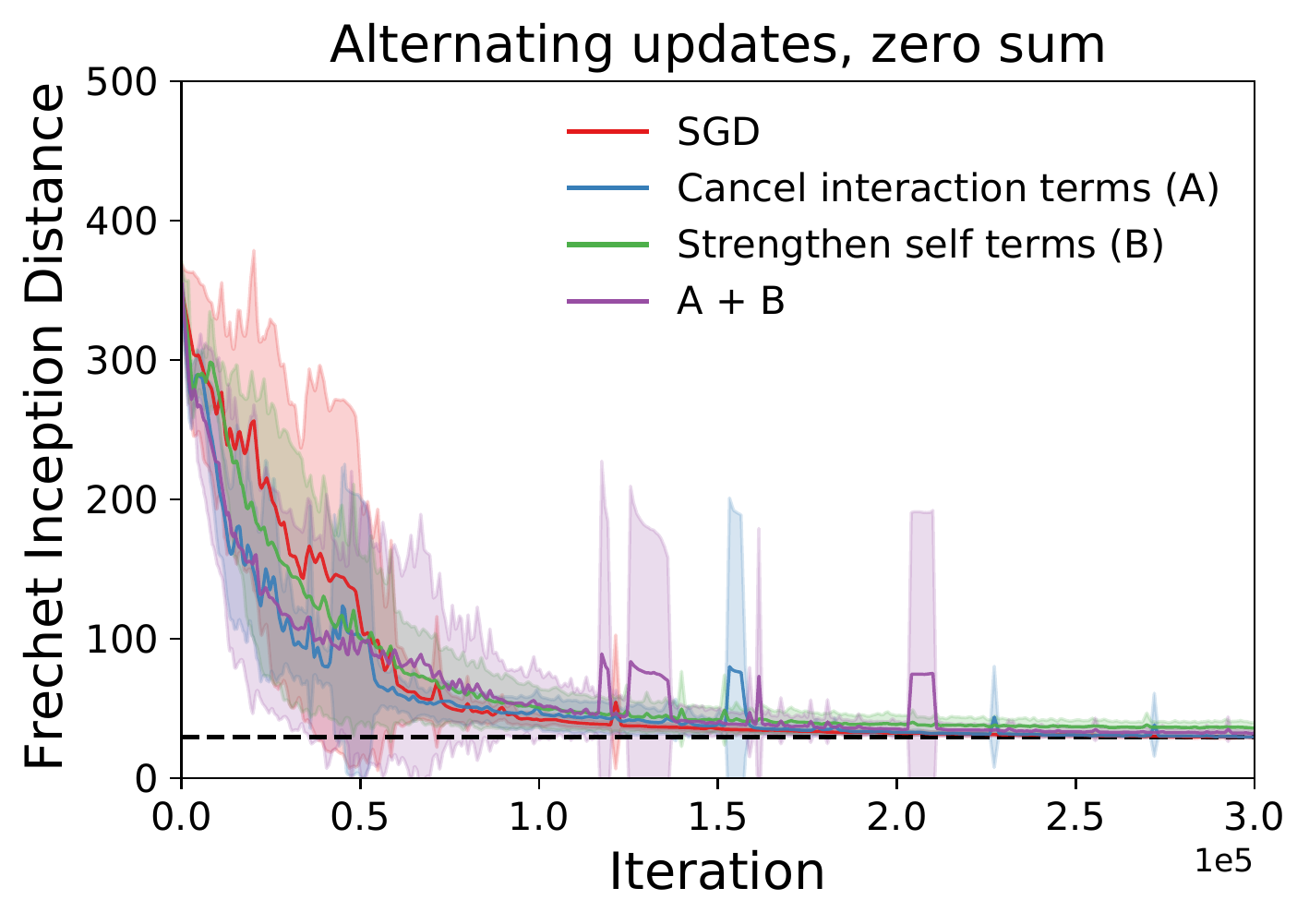}
} \end{subfigure}
  \caption{In alternating updates, using explicit regularization to cancel the effect of the interaction components of drift does not substantially improve performance compared to SGD, but can reduce variance. This is expected, given that the interaction terms for the second player in the case of alternating updates can have a beneficial regularization effect. Strengthening the self terms -- the terms which minimize the player's own norm -- does not lead to a substantial improvement; this is somewhat expected since while the modified ODEs give us the exact coefficients required to \textit{cancel} the drift, they do not tell us how to strengthen it, and our choice of exact coefficients from the drift might not be optimal.}
  \label{fig:sgd_vs_cancel_drift_interaction_all_types_reg_alt}
\end{figure}

\begin{figure}[t]
 \centering
  \begin{subfigure}[Top 10\% models.]{
  \includegraphics[width=0.31\columnwidth]{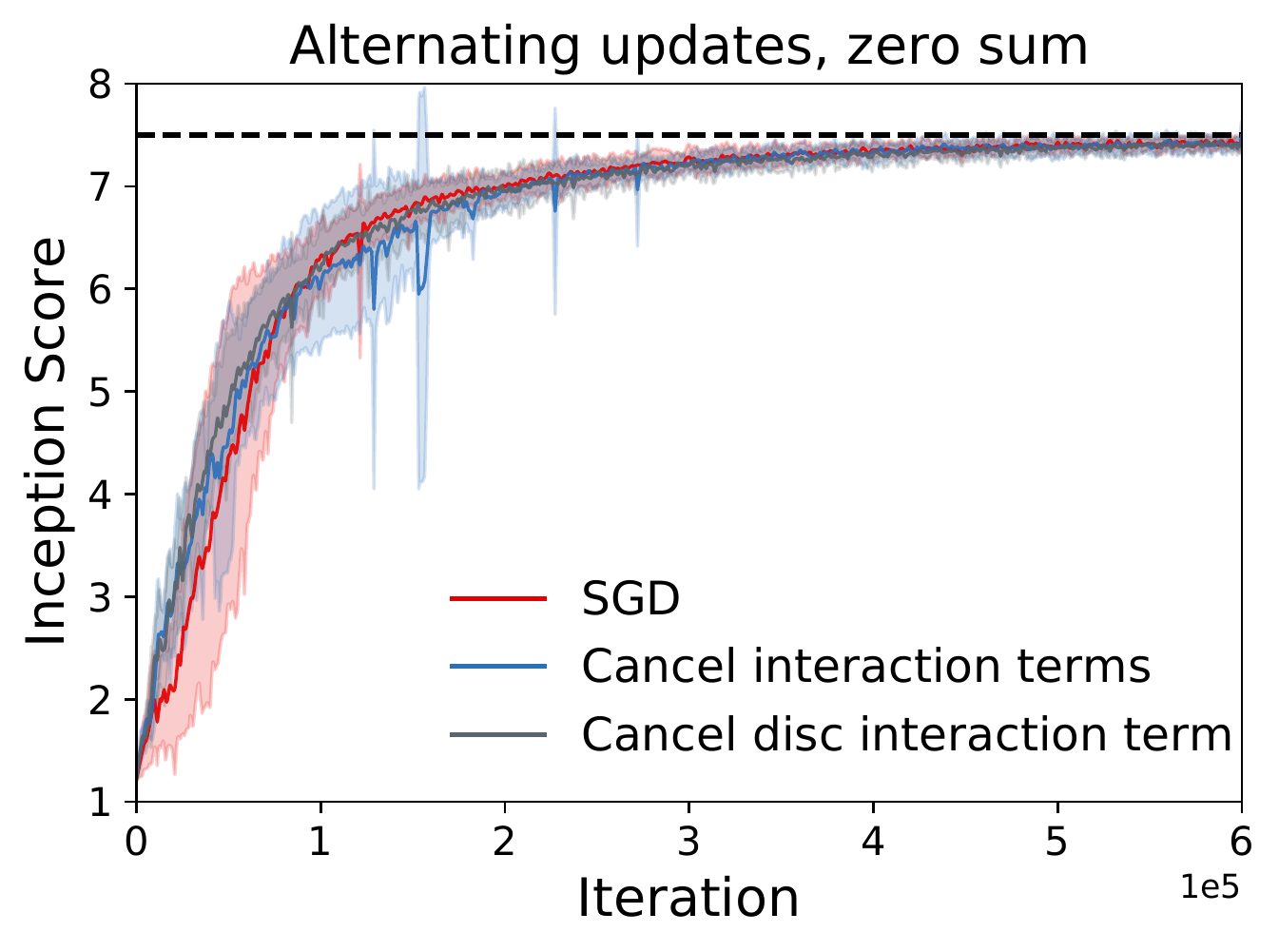}
} \end{subfigure}
 \begin{subfigure}[Top 20\% models.]{
  \includegraphics[width=0.31\columnwidth]{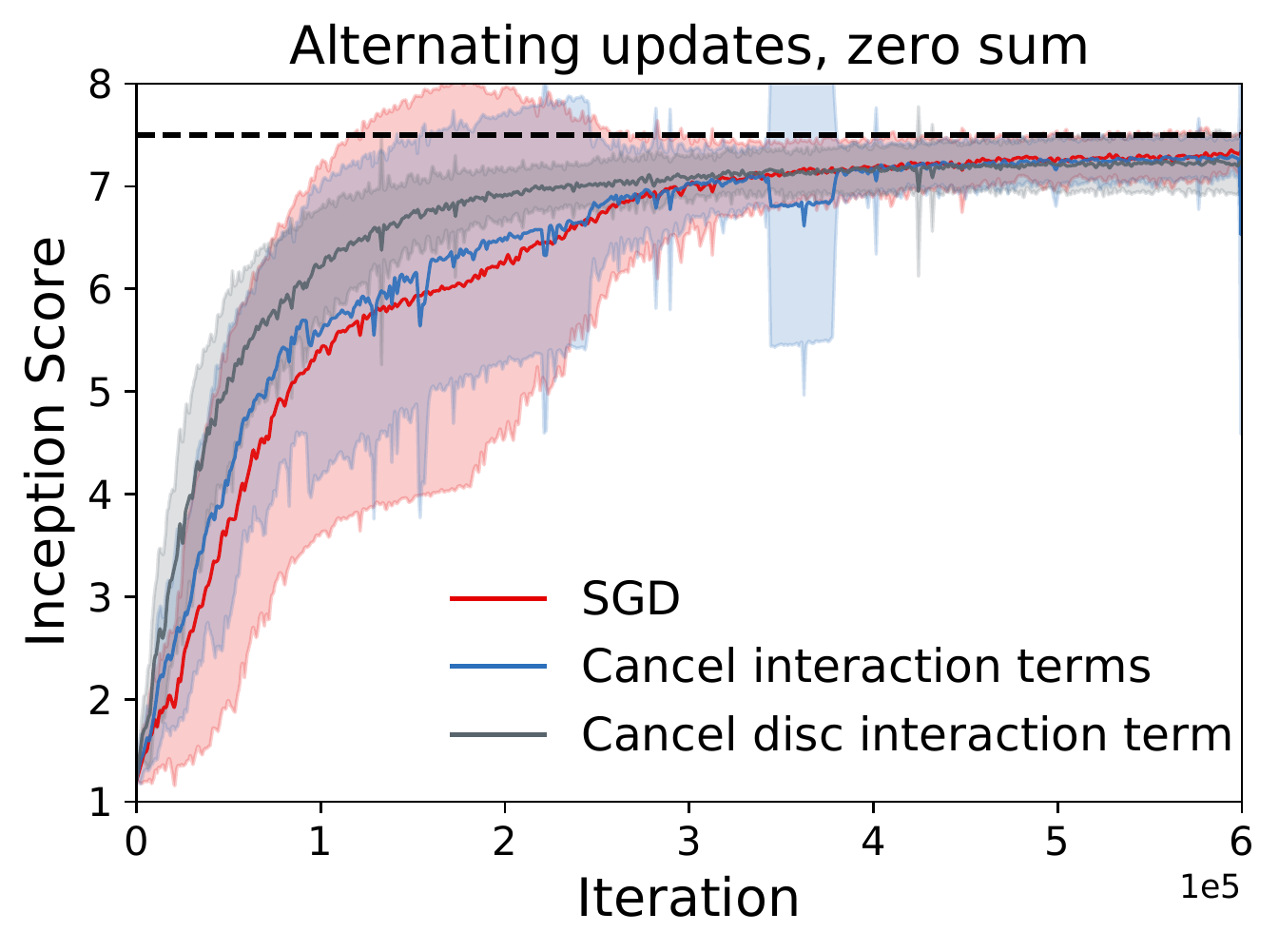}
} \end{subfigure}
\begin{subfigure}[Top 30\% models.]{
  \includegraphics[width=0.31\columnwidth]{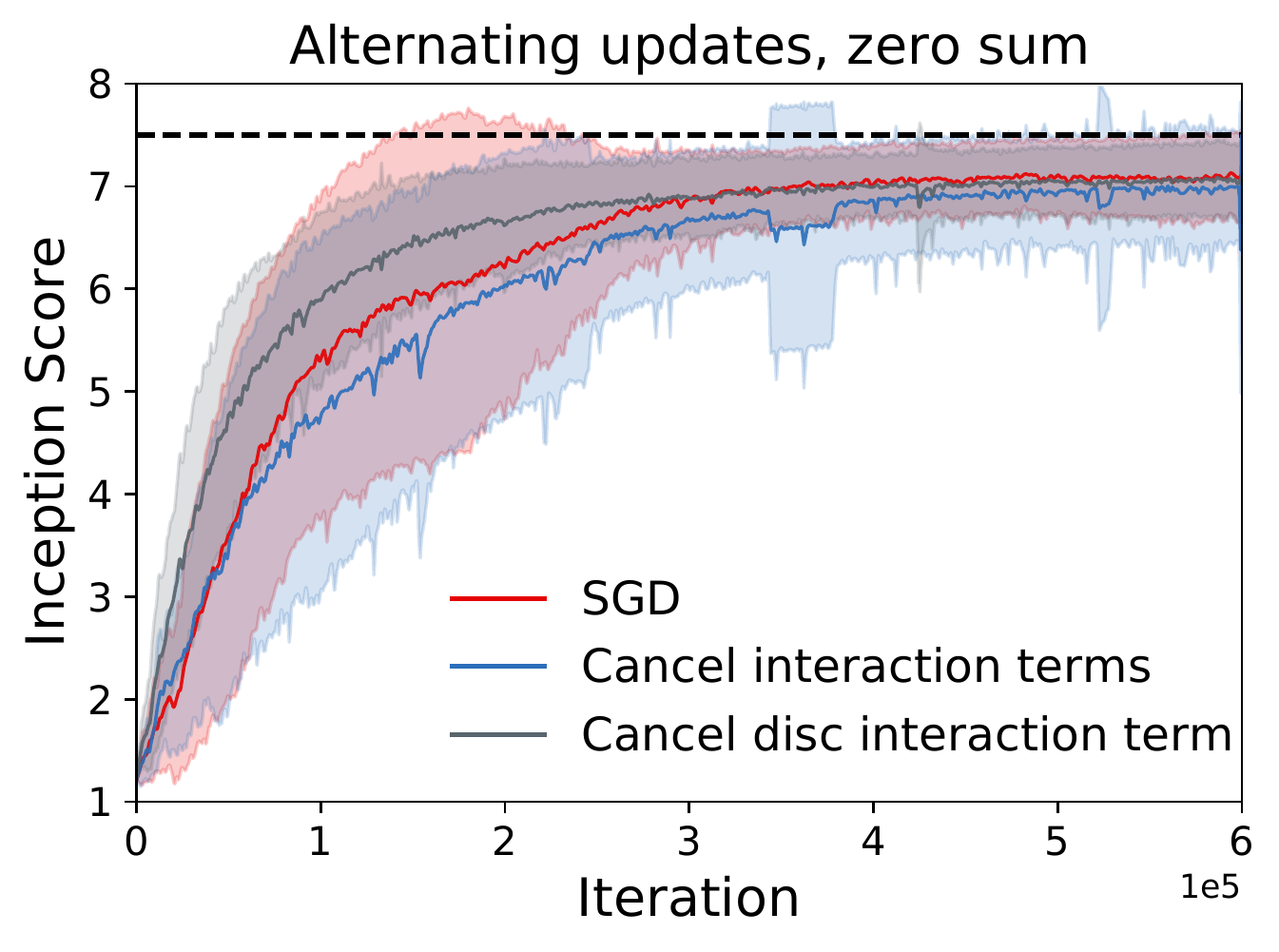}
} \end{subfigure}
  \caption{Performance across the top performing models for vanilla SGD and with canceling the interaction terms and only cancelling the discriminator interaction terms. We notice that canceling only the discriminator interaction terms can result in higher performance across more models, and that the interaction term of the generator can play a positive role, likely due to the smaller strength of the generator interaction term compared to simultaneous updates.}
  \label{fig:multiple_percentages_sgd_cancel_interaction_terms_alternating}
\end{figure}

\clearpage
\section{Experimental details}

\subsection{Classification experiments}

For the MNIST classification results with alternating updates, we use an MLP with layers of size $[100, 100, 100, 10]$ and a learning rate of $0.08$.
The batch size used is $50$ and models are trained for 1000 iterations.
 Error bars are obtained from 5 different seeds.

\subsection{GAN experiments}

\textbf{SGD results}:
All results are obtained using sweep over $\{0.01, 0.005, 0.001, 0.0005\}$ for the discriminator and for the generator learning rates. We restrict the ratio between the two learning rates to be in the interval $[0.1, 10]$ to ensure the validity of our approximations. All experiments use a batch size of 128.

\textbf{Learning rate ratios}: In the learning rate ratios experiments, we control for the number of experiments which have the same learning rate ratio. To do so, we obtain 5 learning rates uniformly sampled from the interval $[0.001, 0.01]$ which we use for the discriminator, we fix the learning rate ratios to be in $\{0.1, 0.2, 0.5, 1., 2., 5.\}$ and we obtain the generator learning rate from the discriminator learning rate and the learning rate ratio.

\textbf{Adam results}:
For Adam, we use the learning rate sweep $\{10^{-4}, 2 \times 10^{-4}, 3 \times 10^{-4}, 4 \times 10^{-4}\}$ for the discriminator and the same for the generator, with $\beta_1 = 0.5$ and $\beta_2 = 0.99$.

\textbf{Explicit regularization coefficients}: For all the experiments where we cancel the interaction terms of the drift, we use the coefficients given by DD. For SGA and consensus optimization, we do a sweep over coefficients in $\{0.01, 0.001, 0.0001\}$.

\textbf{Model Architectures}: All GAN experiments use the CIFAR-10 convolutional SN-GAN architectures -- see Table 3 in Section B.4 in~\citet{miyato2018spectral}.

\textbf{Libraries}: We use JAX~\citep{jax2018github} to implement our models, with Haiku~\citep{haiku2020github} as the neural network library, and Optax~\citep{optax2020github} for optimization.

\textbf{Computer Architectures}: All models are trained on NVIDIA V100 GPUs. Each model is trained on 4 devices.

\subsection{Implementing explicit regularization}

The loss functions we are interested are of the form
\begin{align*}
  L = \mathbb{E}_{p(\vx)} f_{\vtheta}(\vx)
\end{align*}

We then have
\begin{align*}
  \nabla_{\vtheta_i} L &= \mathbb{E}_{p(\vx)}  \nabla_{\vtheta_i} f_{\vtheta}(\vx) \\
\norm{\nabla_{\vtheta} L}^2 &= \sum_{i=1}^{i = |\vtheta|} \left(\nabla_{\vtheta_i} L\right)^2 =  \sum_{i=1}^{i = |\vtheta|} \left(\mathbb{E}_{p(\vx)}  \nabla_{\vtheta_i} f_{\vtheta}(\vx)\right)^2
\end{align*}

to obtain an unbiased estimate of the above using samples, we have that:
\begin{align*}
\norm{\nabla_{\vtheta} L}^2 &=  \sum_{i=1}^{i = |\vtheta|} \left(\mathbb{E}_{p(\vx)}  \nabla_{\vtheta_i} f_{\vtheta}(\vx)\right)^2 \\
& = \sum_{i=1}^{i = |\vtheta|} (\mathbb{E}_{p(\vx)}  \nabla_{\vtheta_i} f_{\vtheta}(\vx)) (\mathbb{E}_{p(\vx)}  \nabla_{\vtheta_i} f_{\vtheta}(\vx)) \\
& \approx \sum_{i=1}^{i = |\vtheta|} \left(\frac{1}{N} \sum_{k=1}^{N} \nabla_{\vtheta_i} f_{\vtheta}(\widehat{\vx_{1,k}})\right) \left(\frac{1}{N} \sum_{j=1}^{N}\nabla_{\vtheta_i} f_{\vtheta}(\widehat{\vx_{2,j}})\right)
\end{align*}

so we have to use two sets of samples $\vx_{1,k} \sim p(\vx)$ and $\vx_{2,j} \sim p(\vx)$ from the true distribution (by splitting the batch into two or using a separate batch) to obtain the correct norm. To compute an estimator for $\nabla_{\vphi} \norm{\nabla_{\vtheta} L}^2$, we can compute the gradient of the above unbiased estimator of $\norm{\nabla_{\vtheta} L}^2$. However, to avoid computing gradients for two sets of samples, we derive another unbiased gradient estimator, which we use in all our experiments:
\begin{align}
\frac{2}{N} \sum_{i=1}^{i = |\vtheta|} \sum_{k=1}^N \nabla_{\vphi} \nabla_{\vtheta_i} f_{\vtheta}(\widehat{\vx_{1, j}}) \nabla_{\vtheta_i} f_{\vtheta}(\widehat{\vx_{2, j}})
\label{eq:unbiased_grads_one_backprop_N_terms}
\end{align}

\end{document}


}%
           \typeout{*******************************************************}%
 	    \typeout{}%
           \typeout{}%
	   \chead{\small\bf Title Suppressed Due to Excessive Size}%
    \else
  	   \chead{\small\bf\@icmltitlerunning}%
    \fi

  \thispagestyle{empty}


  {\center\baselineskip 18pt
                       \toptitlebar{\Large\bf #1}\bottomtitlebar}
}

\gdef\icmlfullauthorlist{}
\newcommand\addstringtofullauthorlist{\g@addto@macro\icmlfullauthorlist}
\newcommand\addtofullauthorlist[1]{%
  \ifdefined\icmlanyauthors%
    \addstringtofullauthorlist{, #1}%
  \else%
    \addstringtofullauthorlist{#1}%
    \gdef\icmlanyauthors{1}%
  \fi%
  \ifdefined\nohyperref\else\ifdefined\hypersetup%
    \hypersetup{pdfauthor=\icmlfullauthorlist}%
  \fi\fi}

\def\toptitlebar{\hrule height1pt \vskip .25in} 
\def\bottomtitlebar{\vskip .22in \hrule height1pt \vskip .3in} 

\newenvironment{icmlauthorlist}{%
  \setlength\topsep{0pt}
  \setlength\parskip{0pt}
  \begin{center}
}{%
  \end{center}
}

\newcounter{@affiliationcounter}
\newcommand{\@pa}[1]{%
\ifcsname the@affil#1\endcsname
\else
  \ifcsname @icmlsymbol#1\endcsname
  \else
  \stepcounter{@affiliationcounter}%
  \newcounter{@affil#1}%
  \setcounter{@affil#1}{\value{@affiliationcounter}}%
  \fi
\fi%
\ifcsname @icmlsymbol#1\endcsname
  \textsuperscript{\csname @icmlsymbol#1\endcsname\,}%
\else
  \textsuperscript{\arabic{@affil#1}\,}%
\fi
}

\newcommand{\icmlauthor}[2]{%
  \ifdefined\isaccepted
    \mbox{\bf #1}\,\@for\theaffil:=#2\do{\@pa{\theaffil}} \addtofullauthorlist{#1}%
   \else
    \ifdefined\@icmlfirsttime
    \else
      \gdef\@icmlfirsttime{1}
      \mbox{\bf Anonymous Authors}\@pa{@anon} \addtofullauthorlist{Anonymous Authors}
     \fi
    \fi
}

\newcommand{\icmlsetsymbol}[2]{%
  \expandafter\gdef\csname @icmlsymbol#1\endcsname{#2}
 }

\newcommand{\icmlaffiliation}[2]{%
\ifdefined\isaccepted
\ifcsname the@affil#1\endcsname
 \expandafter\gdef\csname @affilname\csname the@affil#1\endcsname\endcsname{#2}%
\else
  {\bf AUTHORERR: Error in use of \textbackslash{}icmlaffiliation command. Label ``#1'' not mentioned in some \textbackslash{}icmlauthor\{author name\}\{labels here\} command beforehand. }
  \typeout{}%
  \typeout{}%
  \typeout{*******************************************************}%
  \typeout{Affiliation label undefined. }%
  \typeout{Make sure \string\icmlaffiliation\space follows }
  \typeout{all of \string\icmlauthor\space commands}%
  \typeout{*******************************************************}%
  \typeout{}%
  \typeout{}%
\fi
\else 
 \expandafter\gdef\csname @affilname1\endcsname{Anonymous Institution, Anonymous City, Anonymous Region, Anonymous Country}
\fi
}

\newcommand{\icmlcorrespondingauthor}[2]{
\ifdefined\isaccepted
 \ifdefined\icmlcorrespondingauthor@text
   \g@addto@macro\icmlcorrespondingauthor@text{, #1 \textless{}#2\textgreater{}}
 \else
   \gdef\icmlcorrespondingauthor@text{#1 \textless{}#2\textgreater{}}
 \fi
\else
\gdef\icmlcorrespondingauthor@text{Anonymous Author \textless{}anon.email@domain.com\textgreater{}}
\fi
}

\newcommand{\icmlEqualContribution}{\textsuperscript{*}Equal contribution }

\newcounter{@affilnum}
\newcommand{\printAffiliationsAndNotice}[1]{%
\stepcounter{@affiliationcounter}%
{\let\thefootnote\relax\footnotetext{\hspace*{-\footnotesep}\ifdefined\isaccepted #1\fi%
\forloop{@affilnum}{1}{\value{@affilnum} < \value{@affiliationcounter}}{
\textsuperscript{\arabic{@affilnum}}\ifcsname @affilname\the@affilnum\endcsname%
\csname @affilname\the@affilnum\endcsname%
\else
{\bf AUTHORERR: Missing \textbackslash{}icmlaffiliation.}
\fi
}.
\ifdefined\icmlcorrespondingauthor@text
Correspondence to: \icmlcorrespondingauthor@text.
\else
{\bf AUTHORERR: Missing \textbackslash{}icmlcorrespondingauthor.}
\fi

\ \\
\Notice@String
}
}
}


\long\def\icmladdress#1{%
 {\bf The \textbackslash{}icmladdress command is no longer used.  See the example\_paper PDF .tex for usage of \textbackslash{}icmlauther and \textbackslash{}icmlaffiliation.}
}

\def\icmlkeywords#1{%
  \ifdefined\nohyperref\else\ifdefined\hypersetup
    \hypersetup{pdfkeywords={#1}}
  \fi\fi
}

\setcitestyle{authoryear,round,citesep={;},aysep={,},yysep={;}}

\renewenvironment{abstract}
   {%
\centerline{\large\bf Abstract}
    \vspace{-0.12in}\begin{quote}}
   {\par\end{quote}\vskip 0.12in}


\def\@startsection#1#2#3#4#5#6{\if@noskipsec \leavevmode \fi
   \par \@tempskipa #4\relax
   \@afterindenttrue
   \ifdim \@tempskipa <\z@ \@tempskipa -\@tempskipa \fi
   \if@nobreak \everypar{}\else
     \addpenalty{\@secpenalty}\addvspace{\@tempskipa}\fi \@ifstar
     {\@ssect{#3}{#4}{#5}{#6}}{\@dblarg{\@sict{#1}{#2}{#3}{#4}{#5}{#6}}}}

\def\@sict#1#2#3#4#5#6[#7]#8{\ifnum #2>\c@secnumdepth
     \def\@svsec{}\else 
     \refstepcounter{#1}\edef\@svsec{\csname the#1\endcsname}\fi
     \@tempskipa #5\relax
      \ifdim \@tempskipa>\z@
        \begingroup #6\relax
          \@hangfrom{\hskip #3\relax\@svsec.~}{\interlinepenalty \@M #8\par}
        \endgroup
       \csname #1mark\endcsname{#7}\addcontentsline
         {toc}{#1}{\ifnum #2>\c@secnumdepth \else
                      \protect\numberline{\csname the#1\endcsname}\fi
                    #7}\else
        \def\@svsechd{#6\hskip #3\@svsec #8\csname #1mark\endcsname
                      {#7}\addcontentsline
                           {toc}{#1}{\ifnum #2>\c@secnumdepth \else
                             \protect\numberline{\csname the#1\endcsname}\fi
                       #7}}\fi
     \@xsect{#5}}

\def\@sect#1#2#3#4#5#6[#7]#8{\ifnum #2>\c@secnumdepth
     \def\@svsec{}\else 
     \refstepcounter{#1}\edef\@svsec{\csname the#1\endcsname\hskip 0.4em }\fi
     \@tempskipa #5\relax
      \ifdim \@tempskipa>\z@ 
        \begingroup #6\relax
          \@hangfrom{\hskip #3\relax\@svsec}{\interlinepenalty \@M #8\par}
        \endgroup
       \csname #1mark\endcsname{#7}\addcontentsline
         {toc}{#1}{\ifnum #2>\c@secnumdepth \else
                      \protect\numberline{\csname the#1\endcsname}\fi
                    #7}\else
        \def\@svsechd{#6\hskip #3\@svsec #8\csname #1mark\endcsname
                      {#7}\addcontentsline
                           {toc}{#1}{\ifnum #2>\c@secnumdepth \else
                             \protect\numberline{\csname the#1\endcsname}\fi
                       #7}}\fi
     \@xsect{#5}}

\def\thesection {\arabic{section}}
\def\thesubsection {\thesection.\arabic{subsection}}
\def\section{\@startsection{section}{1}{\z@}{-0.12in}{0.02in}
             {\large\bf\raggedright}}
\def\subsection{\@startsection{subsection}{2}{\z@}{-0.10in}{0.01in}
                {\normalsize\bf\raggedright}}
\def\subsubsection{\@startsection{subsubsection}{3}{\z@}{-0.08in}{0.01in}
                {\normalsize\sc\raggedright}}
\def\paragraph{\@startsection{paragraph}{4}{\z@}{1.5ex plus
  0.5ex minus .2ex}{-1em}{\normalsize\bf}}
\def\subparagraph{\@startsection{subparagraph}{5}{\z@}{1.5ex plus
  0.5ex minus .2ex}{-1em}{\normalsize\bf}}
 
\footnotesep 6.65pt %
\skip\footins 9pt 
\def\footnoterule{\kern-3pt \hrule width 0.8in \kern 2.6pt } 
\setcounter{footnote}{0} 
 
\parindent 0pt 
\topsep 4pt plus 1pt minus 2pt 
\partopsep 1pt plus 0.5pt minus 0.5pt 
\itemsep 2pt plus 1pt minus 0.5pt 
\parsep 2pt plus 1pt minus 0.5pt 
\parskip 6pt
 
\leftmargin 2em \leftmargini\leftmargin \leftmarginii 2em 
\leftmarginiii 1.5em \leftmarginiv 1.0em \leftmarginv .5em  
\leftmarginvi .5em 
\labelwidth\leftmargini\advance\labelwidth-\labelsep \labelsep 5pt 
 
\def\@listi{\leftmargin\leftmargini} 
\def\@listii{\leftmargin\leftmarginii 
   \labelwidth\leftmarginii\advance\labelwidth-\labelsep 
   \topsep 2pt plus 1pt minus 0.5pt 
   \parsep 1pt plus 0.5pt minus 0.5pt 
   \itemsep \parsep} 
\def\@listiii{\leftmargin\leftmarginiii 
    \labelwidth\leftmarginiii\advance\labelwidth-\labelsep 
    \topsep 1pt plus 0.5pt minus 0.5pt  
    \parsep \z@ \partopsep 0.5pt plus 0pt minus 0.5pt 
    \itemsep \topsep} 
\def\@listiv{\leftmargin\leftmarginiv 
     \labelwidth\leftmarginiv\advance\labelwidth-\labelsep} 
\def\@listv{\leftmargin\leftmarginv 
     \labelwidth\leftmarginv\advance\labelwidth-\labelsep} 
\def\@listvi{\leftmargin\leftmarginvi 
     \labelwidth\leftmarginvi\advance\labelwidth-\labelsep} 
 
\abovedisplayskip 7pt plus2pt minus5pt%
\belowdisplayskip \abovedisplayskip 
\abovedisplayshortskip  0pt plus3pt%
\belowdisplayshortskip  4pt plus3pt minus3pt%
 
\def\@normalsize{\@setsize\normalsize{11pt}\xpt\@xpt} 
\def\small{\@setsize\small{10pt}\ixpt\@ixpt} 
\def\footnotesize{\@setsize\footnotesize{10pt}\ixpt\@ixpt} 
\def\scriptsize{\@setsize\scriptsize{8pt}\viipt\@viipt} 
\def\tiny{\@setsize\tiny{7pt}\vipt\@vipt} 
\def\large{\@setsize\large{14pt}\xiipt\@xiipt} 
\def\Large{\@setsize\Large{16pt}\xivpt\@xivpt} 
\def\LARGE{\@setsize\LARGE{20pt}\xviipt\@xviipt} 
\def\huge{\@setsize\huge{23pt}\xxpt\@xxpt} 
\def\Huge{\@setsize\Huge{28pt}\xxvpt\@xxvpt} 

\newsavebox\newcaptionbox\newdimen\newcaptionboxwid

\long\def\@makecaption#1#2{
 \vskip 10pt 
        \baselineskip 11pt
        \setbox\@tempboxa\hbox{#1. #2}
        \ifdim \wd\@tempboxa >\hsize
        \sbox{\newcaptionbox}{\small\sl #1.~}
        \newcaptionboxwid=\wd\newcaptionbox
        \usebox\newcaptionbox {\footnotesize #2}
        \else 
          \centerline{{\small\sl #1.} {\small #2}} 
        \fi}

\def\fnum@figure{Figure \thefigure}
\def\fnum@table{Table \thetable}

\def\abovestrut#1{\rule[0in]{0in}{#1}\ignorespaces}
\def\belowstrut#1{\rule[-#1]{0in}{#1}\ignorespaces}

\def\abovespace{\abovestrut{0.20in}}
\def\aroundspace{\abovestrut{0.20in}\belowstrut{0.10in}}
\def\belowspace{\belowstrut{0.10in}}

\def\texitem#1{\par\noindent\hangindent 12pt
               \hbox to 12pt {\hss #1 ~}\ignorespaces}
\def\icmlitem{\texitem{$\bullet$}}

\long\def\comment#1{}


\makeatletter
\newbox\icmlrulerbox
\newcount\icmlrulercount
\newdimen\icmlruleroffset
\newdimen\cv@lineheight
\newdimen\cv@boxheight
\newbox\cv@tmpbox
\newcount\cv@refno
\newcount\cv@tot
\newcount\cv@tmpc@ \newcount\cv@tmpc
\def\fillzeros[#1]#2{\cv@tmpc@=#2\relax\ifnum\cv@tmpc@<0\cv@tmpc@=-\cv@tmpc@\fi
\cv@tmpc=1 %
\loop\ifnum\cv@tmpc@<10 \else \divide\cv@tmpc@ by 10 \advance\cv@tmpc by 1 \fi
   \ifnum\cv@tmpc@=10\relax\cv@tmpc@=11\relax\fi \ifnum\cv@tmpc@>10 \repeat
\ifnum#2<0\advance\cv@tmpc1\relax-\fi
\loop\ifnum\cv@tmpc<#1\relax0\advance\cv@tmpc1\relax\fi \ifnum\cv@tmpc<#1 \repeat
\cv@tmpc@=#2\relax\ifnum\cv@tmpc@<0\cv@tmpc@=-\cv@tmpc@\fi \relax\the\cv@tmpc@}%
\def\makevruler[#1][#2][#3][#4][#5]{
	\begingroup\offinterlineskip
		\textheight=#5\vbadness=10000\vfuzz=120ex\overfullrule=0pt%
		\global\setbox\icmlrulerbox=\vbox to \textheight{%
			{
				\parskip=0pt\hfuzz=150em\cv@boxheight=\textheight
				\cv@lineheight=#1\global\icmlrulercount=#2%
				\cv@tot\cv@boxheight\divide\cv@tot\cv@lineheight\advance\cv@tot2%
				\cv@refno1\vskip-\cv@lineheight\vskip1ex%
				\loop\setbox\cv@tmpbox=\hbox to0cm{					 
					\hfil {\hfil\fillzeros[#4]\icmlrulercount}
				}%
				\ht\cv@tmpbox\cv@lineheight\dp\cv@tmpbox0pt\box\cv@tmpbox\break
				\advance\cv@refno1\global\advance\icmlrulercount#3\relax
				\ifnum\cv@refno<\cv@tot\repeat
			}
		}
	\endgroup
}%
\makeatother

\def\icmlruler#1{\makevruler[12pt][#1][1][3][\textheight]\usebox{\icmlrulerbox}}
\AddToShipoutPicture{%
\icmlruleroffset=\textheight
\advance\icmlruleroffset by 5.2pt 
  \color[rgb]{.7,.7,.7}
  \ifdefined\isaccepted \else
	  \AtTextUpperLeft{%
	    \put(\LenToUnit{-35pt},\LenToUnit{-\icmlruleroffset}){
	      \icmlruler{\icmlrulercount}}
	  }
	 \fi
}
\endinput


}%
           \typeout{*******************************************************}%
 	    \typeout{}%
           \typeout{}%
	   \chead{\small\bf Title Suppressed Due to Excessive Size}%
    \else
  	   \chead{\small\bf\@icmltitlerunning}%
    \fi

  \thispagestyle{empty}


  {\center\baselineskip 18pt
                       \toptitlebar{\Large\bf #1}\bottomtitlebar}
}

\gdef\icmlfullauthorlist{}
\newcommand\addstringtofullauthorlist{\g@addto@macro\icmlfullauthorlist}
\newcommand\addtofullauthorlist[1]{%
  \ifdefined\icmlanyauthors%
    \addstringtofullauthorlist{, #1}%
  \else%
    \addstringtofullauthorlist{#1}%
    \gdef\icmlanyauthors{1}%
  \fi%
  \ifdefined\nohyperref\else\ifdefined\hypersetup%
    \hypersetup{pdfauthor=\icmlfullauthorlist}%
  \fi\fi}

\def\toptitlebar{\hrule height1pt \vskip .25in} 
\def\bottomtitlebar{\vskip .22in \hrule height1pt \vskip .3in} 

\newenvironment{icmlauthorlist}{%
  \setlength\topsep{0pt}
  \setlength\parskip{0pt}
  \begin{center}
}{%
  \end{center}
}

\newcounter{@affiliationcounter}
\newcommand{\@pa}[1]{%
\ifcsname the@affil#1\endcsname
\else
  \ifcsname @icmlsymbol#1\endcsname
  \else
  \stepcounter{@affiliationcounter}%
  \newcounter{@affil#1}%
  \setcounter{@affil#1}{\value{@affiliationcounter}}%
  \fi
\fi%
\ifcsname @icmlsymbol#1\endcsname
  \textsuperscript{\csname @icmlsymbol#1\endcsname\,}%
\else
  \textsuperscript{\arabic{@affil#1}\,}%
\fi
}

\newcommand{\icmlauthor}[2]{%
  \ifdefined\isaccepted
    \mbox{\bf #1}\,\@for\theaffil:=#2\do{\@pa{\theaffil}} \addtofullauthorlist{#1}%
   \else
    \ifdefined\@icmlfirsttime
    \else
      \gdef\@icmlfirsttime{1}
      \mbox{\bf Anonymous Authors}\@pa{@anon} \addtofullauthorlist{Anonymous Authors}
     \fi
    \fi
}

\newcommand{\icmlsetsymbol}[2]{%
  \expandafter\gdef\csname @icmlsymbol#1\endcsname{#2}
 }

\newcommand{\icmlaffiliation}[2]{%
\ifdefined\isaccepted
\ifcsname the@affil#1\endcsname
 \expandafter\gdef\csname @affilname\csname the@affil#1\endcsname\endcsname{#2}%
\else
  {\bf AUTHORERR: Error in use of \textbackslash{}icmlaffiliation command. Label ``#1'' not mentioned in some \textbackslash{}icmlauthor\{author name\}\{labels here\} command beforehand. }
  \typeout{}%
  \typeout{}%
  \typeout{*******************************************************}%
  \typeout{Affiliation label undefined. }%
  \typeout{Make sure \string\icmlaffiliation\space follows }
  \typeout{all of \string\icmlauthor\space commands}%
  \typeout{*******************************************************}%
  \typeout{}%
  \typeout{}%
\fi
\else 
 \expandafter\gdef\csname @affilname1\endcsname{Anonymous Institution, Anonymous City, Anonymous Region, Anonymous Country}
\fi
}

\newcommand{\icmlcorrespondingauthor}[2]{
\ifdefined\isaccepted
 \ifdefined\icmlcorrespondingauthor@text
   \g@addto@macro\icmlcorrespondingauthor@text{, #1 \textless{}#2\textgreater{}}
 \else
   \gdef\icmlcorrespondingauthor@text{#1 \textless{}#2\textgreater{}}
 \fi
\else
\gdef\icmlcorrespondingauthor@text{Anonymous Author \textless{}anon.email@domain.com\textgreater{}}
\fi
}

\newcommand{\icmlEqualContribution}{\textsuperscript{*}Equal contribution }

\newcounter{@affilnum}
\newcommand{\printAffiliationsAndNotice}[1]{%
\stepcounter{@affiliationcounter}%
{\let\thefootnote\relax\footnotetext{\hspace*{-\footnotesep}\ifdefined\isaccepted #1\fi%
\forloop{@affilnum}{1}{\value{@affilnum} < \value{@affiliationcounter}}{
\textsuperscript{\arabic{@affilnum}}\ifcsname @affilname\the@affilnum\endcsname%
\csname @affilname\the@affilnum\endcsname%
\else
{\bf AUTHORERR: Missing \textbackslash{}icmlaffiliation.}
\fi
}.
\ifdefined\icmlcorrespondingauthor@text
Correspondence to: \icmlcorrespondingauthor@text.
\else
{\bf AUTHORERR: Missing \textbackslash{}icmlcorrespondingauthor.}
\fi

\ \\
\Notice@String
}
}
}


\long\def\icmladdress#1{%
 {\bf The \textbackslash{}icmladdress command is no longer used.  See the example\_paper PDF .tex for usage of \textbackslash{}icmlauther and \textbackslash{}icmlaffiliation.}
}

\def\icmlkeywords#1{%
  \ifdefined\nohyperref\else\ifdefined\hypersetup
    \hypersetup{pdfkeywords={#1}}
  \fi\fi
}

\setcitestyle{authoryear,round,citesep={;},aysep={,},yysep={;}}

\renewenvironment{abstract}
   {%
\centerline{\large\bf Abstract}
    \vspace{-0.12in}\begin{quote}}
   {\par\end{quote}\vskip 0.12in}


\def\@startsection#1#2#3#4#5#6{\if@noskipsec \leavevmode \fi
   \par \@tempskipa #4\relax
   \@afterindenttrue
   \ifdim \@tempskipa <\z@ \@tempskipa -\@tempskipa \fi
   \if@nobreak \everypar{}\else
     \addpenalty{\@secpenalty}\addvspace{\@tempskipa}\fi \@ifstar
     {\@ssect{#3}{#4}{#5}{#6}}{\@dblarg{\@sict{#1}{#2}{#3}{#4}{#5}{#6}}}}

\def\@sict#1#2#3#4#5#6[#7]#8{\ifnum #2>\c@secnumdepth
     \def\@svsec{}\else 
     \refstepcounter{#1}\edef\@svsec{\csname the#1\endcsname}\fi
     \@tempskipa #5\relax
      \ifdim \@tempskipa>\z@
        \begingroup #6\relax
          \@hangfrom{\hskip #3\relax\@svsec.~}{\interlinepenalty \@M #8\par}
        \endgroup
       \csname #1mark\endcsname{#7}\addcontentsline
         {toc}{#1}{\ifnum #2>\c@secnumdepth \else
                      \protect\numberline{\csname the#1\endcsname}\fi
                    #7}\else
        \def\@svsechd{#6\hskip #3\@svsec #8\csname #1mark\endcsname
                      {#7}\addcontentsline
                           {toc}{#1}{\ifnum #2>\c@secnumdepth \else
                             \protect\numberline{\csname the#1\endcsname}\fi
                       #7}}\fi
     \@xsect{#5}}

\def\@sect#1#2#3#4#5#6[#7]#8{\ifnum #2>\c@secnumdepth
     \def\@svsec{}\else 
     \refstepcounter{#1}\edef\@svsec{\csname the#1\endcsname\hskip 0.4em }\fi
     \@tempskipa #5\relax
      \ifdim \@tempskipa>\z@ 
        \begingroup #6\relax
          \@hangfrom{\hskip #3\relax\@svsec}{\interlinepenalty \@M #8\par}
        \endgroup
       \csname #1mark\endcsname{#7}\addcontentsline
         {toc}{#1}{\ifnum #2>\c@secnumdepth \else
                      \protect\numberline{\csname the#1\endcsname}\fi
                    #7}\else
        \def\@svsechd{#6\hskip #3\@svsec #8\csname #1mark\endcsname
                      {#7}\addcontentsline
                           {toc}{#1}{\ifnum #2>\c@secnumdepth \else
                             \protect\numberline{\csname the#1\endcsname}\fi
                       #7}}\fi
     \@xsect{#5}}

\def\thesection {\arabic{section}}
\def\thesubsection {\thesection.\arabic{subsection}}
\def\section{\@startsection{section}{1}{\z@}{-0.12in}{0.02in}
             {\large\bf\raggedright}}
\def\subsection{\@startsection{subsection}{2}{\z@}{-0.10in}{0.01in}
                {\normalsize\bf\raggedright}}
\def\subsubsection{\@startsection{subsubsection}{3}{\z@}{-0.08in}{0.01in}
                {\normalsize\sc\raggedright}}
\def\paragraph{\@startsection{paragraph}{4}{\z@}{1.5ex plus
  0.5ex minus .2ex}{-1em}{\normalsize\bf}}
\def\subparagraph{\@startsection{subparagraph}{5}{\z@}{1.5ex plus
  0.5ex minus .2ex}{-1em}{\normalsize\bf}}
 
\footnotesep 6.65pt %
\skip\footins 9pt 
\def\footnoterule{\kern-3pt \hrule width 0.8in \kern 2.6pt } 
\setcounter{footnote}{0} 
 
\parindent 0pt 
\topsep 4pt plus 1pt minus 2pt 
\partopsep 1pt plus 0.5pt minus 0.5pt 
\itemsep 2pt plus 1pt minus 0.5pt 
\parsep 2pt plus 1pt minus 0.5pt 
\parskip 6pt
 
\leftmargin 2em \leftmargini\leftmargin \leftmarginii 2em 
\leftmarginiii 1.5em \leftmarginiv 1.0em \leftmarginv .5em  
\leftmarginvi .5em 
\labelwidth\leftmargini\advance\labelwidth-\labelsep \labelsep 5pt 
 
\def\@listi{\leftmargin\leftmargini} 
\def\@listii{\leftmargin\leftmarginii 
   \labelwidth\leftmarginii\advance\labelwidth-\labelsep 
   \topsep 2pt plus 1pt minus 0.5pt 
   \parsep 1pt plus 0.5pt minus 0.5pt 
   \itemsep \parsep} 
\def\@listiii{\leftmargin\leftmarginiii 
    \labelwidth\leftmarginiii\advance\labelwidth-\labelsep 
    \topsep 1pt plus 0.5pt minus 0.5pt  
    \parsep \z@ \partopsep 0.5pt plus 0pt minus 0.5pt 
    \itemsep \topsep} 
\def\@listiv{\leftmargin\leftmarginiv 
     \labelwidth\leftmarginiv\advance\labelwidth-\labelsep} 
\def\@listv{\leftmargin\leftmarginv 
     \labelwidth\leftmarginv\advance\labelwidth-\labelsep} 
\def\@listvi{\leftmargin\leftmarginvi 
     \labelwidth\leftmarginvi\advance\labelwidth-\labelsep} 
 
\abovedisplayskip 7pt plus2pt minus5pt%
\belowdisplayskip \abovedisplayskip 
\abovedisplayshortskip  0pt plus3pt%
\belowdisplayshortskip  4pt plus3pt minus3pt%
 
\def\@normalsize{\@setsize\normalsize{11pt}\xpt\@xpt} 
\def\small{\@setsize\small{10pt}\ixpt\@ixpt} 
\def\footnotesize{\@setsize\footnotesize{10pt}\ixpt\@ixpt} 
\def\scriptsize{\@setsize\scriptsize{8pt}\viipt\@viipt} 
\def\tiny{\@setsize\tiny{7pt}\vipt\@vipt} 
\def\large{\@setsize\large{14pt}\xiipt\@xiipt} 
\def\Large{\@setsize\Large{16pt}\xivpt\@xivpt} 
\def\LARGE{\@setsize\LARGE{20pt}\xviipt\@xviipt} 
\def\huge{\@setsize\huge{23pt}\xxpt\@xxpt} 
\def\Huge{\@setsize\Huge{28pt}\xxvpt\@xxvpt} 

\newsavebox\newcaptionbox\newdimen\newcaptionboxwid

\long\def\@makecaption#1#2{
 \vskip 10pt 
        \baselineskip 11pt
        \setbox\@tempboxa\hbox{#1. #2}
        \ifdim \wd\@tempboxa >\hsize
        \sbox{\newcaptionbox}{\small\sl #1.~}
        \newcaptionboxwid=\wd\newcaptionbox
        \usebox\newcaptionbox {\footnotesize #2}
        \else 
          \centerline{{\small\sl #1.} {\small #2}} 
        \fi}

\def\fnum@figure{Figure \thefigure}
\def\fnum@table{Table \thetable}

\def\abovestrut#1{\rule[0in]{0in}{#1}\ignorespaces}
\def\belowstrut#1{\rule[-#1]{0in}{#1}\ignorespaces}

\def\abovespace{\abovestrut{0.20in}}
\def\aroundspace{\abovestrut{0.20in}\belowstrut{0.10in}}
\def\belowspace{\belowstrut{0.10in}}

\def\texitem#1{\par\noindent\hangindent 12pt
               \hbox to 12pt {\hss #1 ~}\ignorespaces}
\def\icmlitem{\texitem{$\bullet$}}

\long\def\comment#1{}


\makeatletter
\newbox\icmlrulerbox
\newcount\icmlrulercount
\newdimen\icmlruleroffset
\newdimen\cv@lineheight
\newdimen\cv@boxheight
\newbox\cv@tmpbox
\newcount\cv@refno
\newcount\cv@tot
\newcount\cv@tmpc@ \newcount\cv@tmpc
\def\fillzeros[#1]#2{\cv@tmpc@=#2\relax\ifnum\cv@tmpc@<0\cv@tmpc@=-\cv@tmpc@\fi
\cv@tmpc=1 %
\loop\ifnum\cv@tmpc@<10 \else \divide\cv@tmpc@ by 10 \advance\cv@tmpc by 1 \fi
   \ifnum\cv@tmpc@=10\relax\cv@tmpc@=11\relax\fi \ifnum\cv@tmpc@>10 \repeat
\ifnum#2<0\advance\cv@tmpc1\relax-\fi
\loop\ifnum\cv@tmpc<#1\relax0\advance\cv@tmpc1\relax\fi \ifnum\cv@tmpc<#1 \repeat
\cv@tmpc@=#2\relax\ifnum\cv@tmpc@<0\cv@tmpc@=-\cv@tmpc@\fi \relax\the\cv@tmpc@}%
\def\makevruler[#1][#2][#3][#4][#5]{
	\begingroup\offinterlineskip
		\textheight=#5\vbadness=10000\vfuzz=120ex\overfullrule=0pt%
		\global\setbox\icmlrulerbox=\vbox to \textheight{%
			{
				\parskip=0pt\hfuzz=150em\cv@boxheight=\textheight
				\cv@lineheight=#1\global\icmlrulercount=#2%
				\cv@tot\cv@boxheight\divide\cv@tot\cv@lineheight\advance\cv@tot2%
				\cv@refno1\vskip-\cv@lineheight\vskip1ex%
				\loop\setbox\cv@tmpbox=\hbox to0cm{					 
					\hfil {\hfil\fillzeros[#4]\icmlrulercount}
				}%
				\ht\cv@tmpbox\cv@lineheight\dp\cv@tmpbox0pt\box\cv@tmpbox\break
				\advance\cv@refno1\global\advance\icmlrulercount#3\relax
				\ifnum\cv@refno<\cv@tot\repeat
			}
		}
	\endgroup
}%
\makeatother

\def\icmlruler#1{\makevruler[12pt][#1][1][3][\textheight]\usebox{\icmlrulerbox}}
\AddToShipoutPicture{%
\icmlruleroffset=\textheight
\advance\icmlruleroffset by 5.2pt 
  \color[rgb]{.7,.7,.7}
  \ifdefined\isaccepted \else
	  \AtTextUpperLeft{%
	    \put(\LenToUnit{-35pt},\LenToUnit{-\icmlruleroffset}){
	      \icmlruler{\icmlrulercount}}
	  }
	 \fi
}
\endinput